\icmltitlerunning{Non-uniform Analysis for Non-convex Optimization}
\crefname{proposition}{Proposition}{Propositions}
\crefname{theorem}{Theorem}{Theorems}
\crefname{lemma}{Lemma}{Lemmas}
\crefname{update_rule}{Update}{Updates}
\crefname{algorithm}{Algorithm}{Algorithms}
\crefname{figure}{Figure}{Figures}
\def\eqref#1{equation~\ref{#1}}
\def\1{\bm{1}}
\DeclareMathAlphabet{\mathsfit}{\encodingdefault}{\sfdefault}{m}{sl}
\SetMathAlphabet{\mathsfit}{bold}{\encodingdefault}{\sfdefault}{bx}{n}
\def\gA{{\mathcal{A}}}
\def\gD{{\mathcal{D}}}
\def\gL{{\mathcal{L}}}
\def\gM{{\mathcal{M}}}
\def\gN{{\mathcal{N}}}
\def\gP{{\mathcal{P}}}
\def\gR{{\mathcal{R}}}
\def\gS{{\mathcal{S}}}
\def\gW{{\mathcal{W}}}
\def\gX{{\mathcal{X}}}
\def\gY{{\mathcal{Y}}}
\def\gZ{{\mathcal{Z}}}
\def\sR{{\mathbb{R}}}
\newcommand{\R}{\mathbb{R}}
\newcommand{\softmax}{\mathrm{softmax}}
\newcommand{\sigmoid}{\sigma}
\newcommand{\KL}{D_{\mathrm{KL}}}
\newcommand{\Var}{\mathrm{Var}}
\DeclareMathOperator*{\argmax}{arg\,max}
\DeclareMathOperator*{\argmin}{arg\,min}
\DeclareMathOperator{\sign}{sign}
\newtheorem{theorem}{Theorem}
\newtheorem{lemma}{Lemma}
\newtheorem{definition}{Definition}
\newtheorem{proposition}{Proposition}
\newtheorem{remark}{Remark}
\newtheorem{assumption}{Assumption}
\newtheorem{claim}[theorem]{Claim}
\DeclareMathOperator*{\expectation}{\mathbb{E}}
\def\rvone{{\mathbf{1}}}
\def\rvzero{{\mathbf{0}}}
\def\identitymatrix{\mathbf{Id}}
\def\diagonalmatrix{\text{diag}}
\DeclareMathOperator*{\probability}{Pr}
\newcommand{\cS}{\mathcal{S}}
\begin{document}

\twocolumn[
\icmltitle{Leveraging Non-uniformity in First-order Non-convex Optimization}



\icmlsetsymbol{equal}{*}

\begin{icmlauthorlist}
\icmlauthor{Jincheng Mei}{ualberta,google_brain,equal}
\icmlauthor{Yue Gao}{ualberta,equal}
\icmlauthor{Bo Dai}{google_brain}
\icmlauthor{Csaba Szepesv{\'a}ri}{deepmind,ualberta}
\icmlauthor{Dale Schuurmans}{google_brain,ualberta}
\end{icmlauthorlist}

\icmlaffiliation{ualberta}{University of Alberta}
\icmlaffiliation{deepmind}{DeepMind}
\icmlaffiliation{google_brain}{Google Research, Brain Team}

\icmlcorrespondingauthor{Jincheng Mei}{jmei2@ualberta.ca} \icmlcorrespondingauthor{Yue Gao}{gao12@ualberta.ca}

\icmlkeywords{Machine Learning, ICML}

\vskip 0.3in
]



\printAffiliationsAndNotice{\textsuperscript{*}Equal contribution} 

\begin{abstract}
Classical global convergence results for first-order methods 
rely on uniform smoothness and the
\L{}ojasiewicz inequality.
Motivated by properties of objective functions that arise in machine learning,
we propose a non-uniform refinement of these notions, leading to
\emph{Non-uniform Smoothness} (NS)
and 
\emph{Non-uniform \L{}ojasiewicz inequality} (N\L{}).
The new definitions
inspire new 
geometry-aware first-order methods that are able to
converge to global optimality faster than
the classical $\Omega(1/t^2)$ lower bounds.
%
To illustrate the power of these geometry-aware methods
and their corresponding non-uniform analysis,
we consider two important problems in machine learning:
policy gradient optimization in reinforcement learning (PG),
and generalized linear model training in supervised learning (GLM).
For PG, 
we find that normalizing the gradient ascent method
can accelerate convergence to $O(e^{- c \cdot t})$ (where $c > 0$)
while incurring less overhead than existing algorithms.
For GLM, 
we show that geometry-aware normalized gradient descent can also achieve a linear
convergence rate, which significantly improves the best known results.
We additionally show that the proposed geometry-aware gradient descent methods
escape landscape plateaus faster than standard gradient descent.
Experimental results are used to illustrate and complement the theoretical findings.

\end{abstract}

\setlength{\abovedisplayskip}{3pt}
\setlength{\abovedisplayshortskip}{3pt}
\setlength{\belowdisplayskip}{3pt}
\setlength{\belowdisplayshortskip}{3pt}
\setlength{\jot}{2pt}

\setlength{\floatsep}{2ex}
\setlength{\textfloatsep}{2ex}


\section{Introduction}
\label{sec:introduction}

The optimization of non-convex objective functions
is a topic of key interest in modern-day machine learning.
Recent, intriguing results show that
simple gradient-based optimization can achieve \emph{globally} optimal solutions in certain 
non-convex problems arising in machine learning,
such as in reinforcement learning~(RL) \citep{agarwal2020optimality}, supervised learning~(SL) \citep{hazan2015beyond}, and deep learning \citep{allen2019convergence}.
While gradient-based algorithms remain the method of choice in machine learning, 
the convergence of such algorithms to global minimizers has still only been established in restrictive settings where one can assert two strong assumptions about the objective function:
(1) that the objective is smooth,
and (2) that the objective satisfies a gradient dominance over sub-optimality such as the \L{}ojasiewicz inequality. We will find it beneficial to recall the definitions of these properties. For the remainder of this paper let $\Theta = \R^d$. 

\begin{definition}[Smoothness]
\label{def:smoothness}
The function $f: \Theta \to \sR$ is $\beta$-smooth ($\beta > 0$) if it is differentiable and 
for all $\theta,\theta^\prime \in \Theta$,
\begin{align}
    \left| f(\theta^\prime) - f(\theta) - \Big\langle \frac{d f(\theta)}{d \theta}, \theta^\prime - \theta \Big\rangle \right| \le \frac{\beta}{2} \cdot \| \theta^\prime - \theta \|_2^2.
    \label{eq:uniform smoothness}
\end{align}
\end{definition}

\begin{definition}[]
\citep{lojasiewicz1963propriete,polyak1963gradient,kurdyka1998gradients} 
\label{def:lojasiewicz}
The differentiable function $f: \Theta \to \sR$ satisfies the $(C, \xi)$-\L{}ojasiewicz inequality if for all $\theta \in  \Theta$,
\begin{align}
    \left\| \frac{d f(\theta)}{d \theta} \right\|_2 \ge C \cdot \Big( f(\theta) - \inf_{\theta \in \Theta}{ f(\theta) } \Big)^{1 - \xi},
    \label{eq:lojasiewicz}
\end{align}
where $C > 0$ and $\xi \in [0, 1]$.
\end{definition}

In particular, if an objective function $f$ satisfies both assumptions, gradient-based optimization can be shown to converge to a global minimizer by noting first that uniform smoothness \cref{eq:uniform smoothness} ensures the gradient updates achieve monotonic improvement with an appropriate step size (i.e., $f(\theta_{t+1}) \le f(\theta_t) - \frac{1}{2 \beta} \cdot \left\| \nabla f(\theta_t) \right\|_2^2$, if $\theta_{t+1} \gets \theta_t - \frac{1}{\beta} \cdot \nabla f(\theta_t)$),
while the \L{}ojasiewicz inequality \cref{eq:lojasiewicz} ensures the gradient does not vanish before a global minimizer is reached.
Several global convergence results have recently been achieved in the machine learning literature by exploiting assumptions of this kind. 
For example, in reinforcement learning it has recently been shown that policy gradient (PG) methods converge to a globally optimal policy \citep{agarwal2020optimality,mei2020global};
in supervised learning it has been shown that gradient descent (GD) methods converge to global minimizers of certain non-convex problems \citep{hazan2015beyond};
and in deep learning theory it has been shown that (stochastic) GD can converge to a global minimizer with an over-parameterized neural network \citep{allen2019convergence}. 

However, previous work that relies on the two \emph{uniform} conditions in \cref{def:smoothness,def:lojasiewicz} assumes \emph{universal constants} $\beta$ and $C$, which ignores important problem structure and limits both the applicability of the results and the strength of the results that can be obtained.


In this paper, we expand the class of problems for which gradient-based optimization is globally convergent, 
develop novel gradient-based methods that better exploit local structure,
and improve the convergence rate analysis. 
We achieve these results by first defining then investigating a new set of \emph{non-uniform}
%
smoothness and \L{o}jasiewicz inequalities,
which generalize the classical definitions and allow a refined characterization of the space of objectives.
Given these refined notions,
we then tailor novel gradient-based algorithms that improve previous methods for these new problem classes,
and extend the analysis to exploit these new forms of non-uniformity, achieving significantly stronger convergence rates in many cases.
Importantly, these improvements are achieved in non-convex optimization problems that arise in relevant machine learning problems.

The remainder of the paper is organized as follows.
First, in \cref{sec:motivation} we illustrate how natural optimization problems, including those in machine learning, exhibit interesting \emph{local} structure that cannot be adequately captured by the
uniform smoothness and \L{}ojasiewicz inequalities.
Then, \cref{sec:non_uniform_propterties} introduces the 
the Non-uniform Smoothness (NS) property and the Non-uniform \L{}ojasiewicz (N\L{}) inequality,
based on which \cref{sec:non_uniform_analysis} provides non-uniform analyses. \cref{sec:policy_gradient,sec:generalized_linear_model} then present new results for policy gradient and generalized linear models respectively. \cref{sec:conclusions_future_work} concludes the paper and discusses some future directions. 
Due to space limits, we relegate most of the proofs to the appendix. 

\if0

We have $f$ is convex but not strongly convex, since $f^{\prime\prime}(x) = 12 \cdot x^2 \ge 0$ and $f^{\prime\prime}(x)$ is arbitrarily close to $0$ when $x$ is close to $0$. Therefore, using a universal constant $\beta$ to upper bound $f^{\prime\prime}(x)$ everywhere is too coarse. A natural question is whether it benefits optimization if we use more sophisticated landscape characterization? Simulations show promising evidences.
\begin{figure}[ht]
\centering
\includegraphics[width=0.95\linewidth]{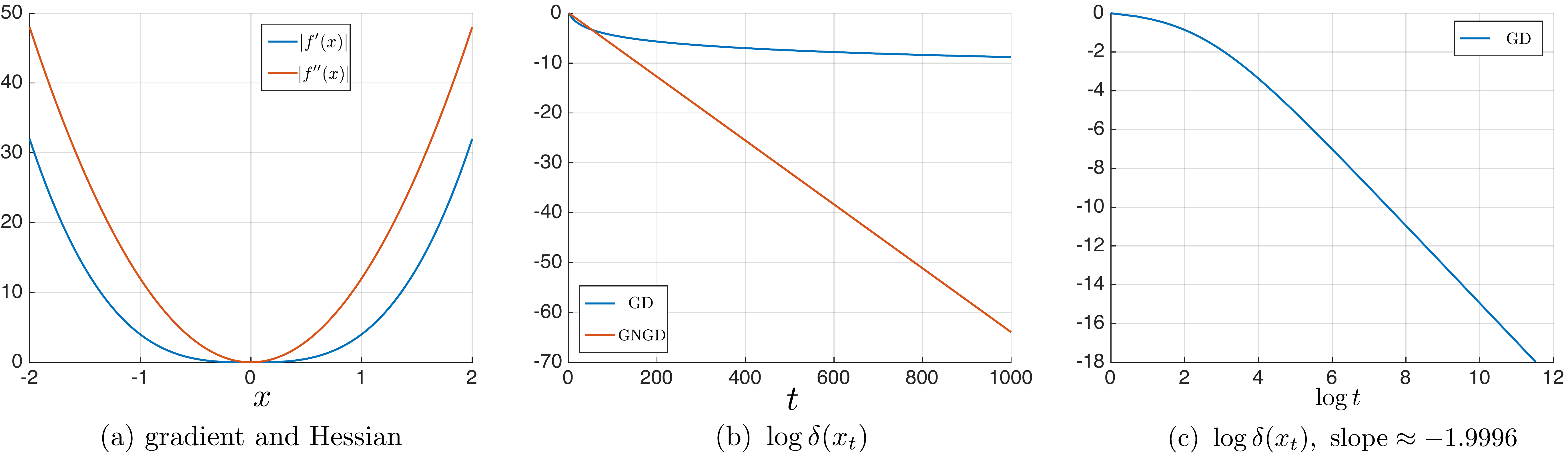}
\vskip -0.1in
\caption{Two updates on $f: x \mapsto x^4$.} 
\label{fig:example_power_4}
\vskip -0.2in
\end{figure}

We compare the convergence rates of two updates in terms of sub-optimality $\delta(x_t) \coloneqq x_t^4$, as shown in \cref{fig:example_power_4}. The standard gradient descent (GD): $x_{t+1} \gets x_t - \eta \cdot f'(x_t)$ achieves sublinear rate about $O(1/t^2)$, while another update we call Geometry-aware Normalized GD (GNGD): $x_{t+1} \gets x_t - \eta \cdot f'(x_t) / \left\| f'(x_t) \right\|_2^{2/3}$ enjoys linear rate $O(e^{-t})$.

Two questions arise from the results, (1) GD on convex smooth function achieves $O(1/t)$ rate \citep{nesterov2003introductory}, while in subfigure(b) the rate is about $O(1/t^2)$; (2) GNGD achieves $O(e^{-t})$, which is faster than the $\Omega(1/t^2)$ lower bound \citep{nemirovski1983problem}.

\fi
\section{Motivation}
\label{sec:motivation}

To illustrate the significance of non-uniformity in 
machine learning problems,
we consider examples motivated by recent theoretical \citep{zhang2019gradient,wilson2019accelerating,mei2020global} and empirical studies \citep{cohen2021gradient}.

Regarding smoothness, it is clear that a uniform smoothness constant $\beta$ cannot always adequately characterize an objective over its entire domain.
For example, the convex function $f:x\mapsto x^4$ cannot be informatively characterized by a uniform smoothness constant $\beta$
because its Hessian $f^{\prime\prime}:x \mapsto 12 \cdot x^2$ has the property that
$f^{\prime\prime}(x)\to\infty$ as $|x|\to\infty$, and $f^{\prime\prime}(x)\to0$ as $|x|\to0$. 
Varying smoothness of this kind has motivated the study of alternative definitions to explain, for example, the effectiveness of gradient clipping in training neural networks and normalization in optimization \citep{zhang2019gradient,wilson2019accelerating}.
Meanwhile \citet{cohen2021gradient} present neural network training results that cannot be well explained using the standard smoothness condition of \cref{def:smoothness}. 

Regarding the \L{}ojasiewicz inequality, a recent study of policy gradient optimization in reinforcement learning has shown that,
with the standard softmax parameterization, the expected return objective cannot satisfy \emph{any} \L{}ojasiewicz inequality with a universal constant $C$ \citep{mei2020global}, which 
removes the possibility of using \cref{def:lojasiewicz} to prove convergence.
By introducing a \emph{non-uniform} version of the \L{ojasiewicz} inequality, the same authors were able to show a global convergence rate for the same problem.

\begin{figure}[ht]
\centering
\includegraphics[width=1.0\linewidth]{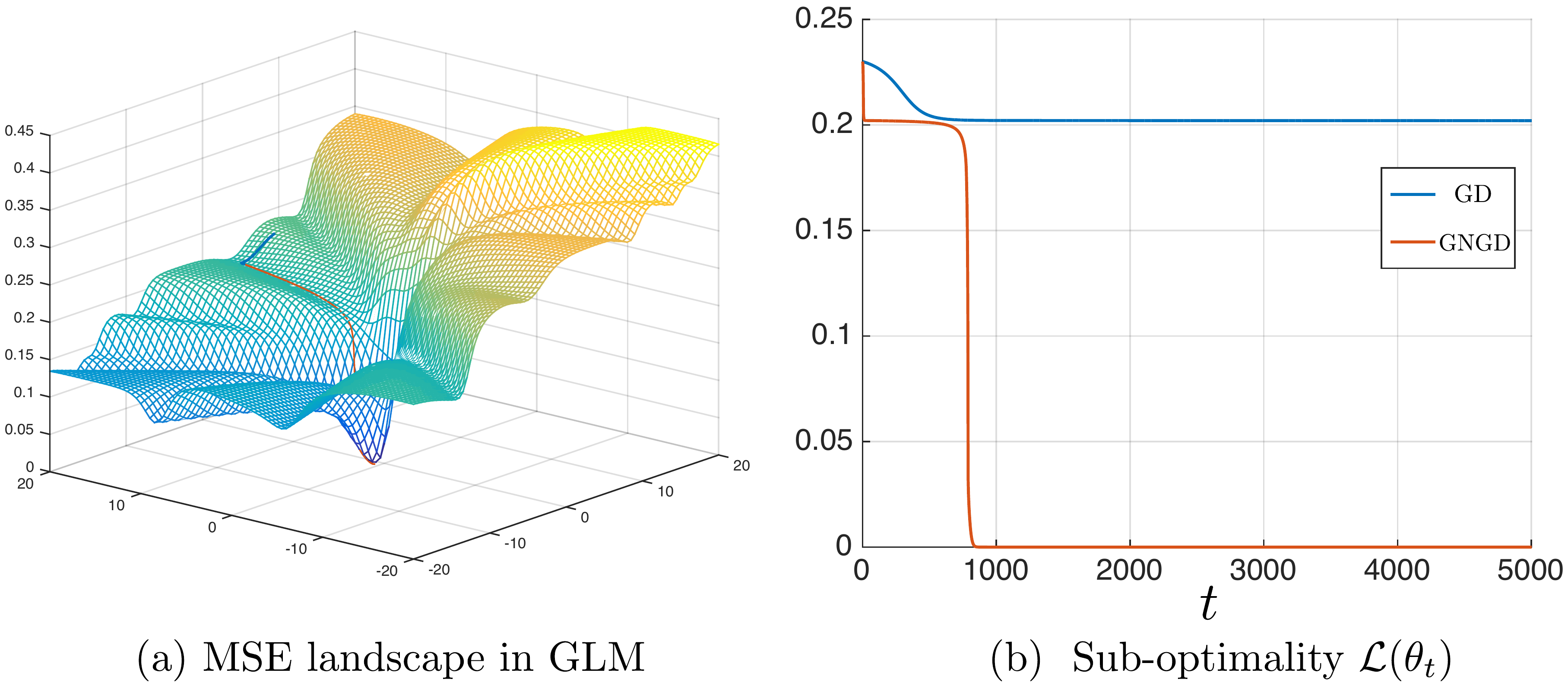}
\caption{Non-uniform landscape of non-convex function.}
\label{fig:example_glm_sigmoid}
\end{figure}

\cref{fig:example_glm_sigmoid} illustrates another example of a  non-convex objective, which arises in supervised learning. Subfigure 1(a) visualizes the mean squared error (MSE) of a generalized linear model (GLM) \citep{hazan2015beyond}, which is not only non-convex but also highly non-uniform. 
As a ``teaser'', subfigure 1(b) compares the convergence behavior of two algorithms: standard gradient descent (GD), which suffers from slow convergence on the plateaus due to the non-uniformity of the objective, and an alternative algorithm (GNGD), soon to be introduced. 
This figure previews how proper handling of non-uniformity in the optimization landscape can enable significant acceleration of optimization progress, including
a quick escape from plateaus.

\section{Non-uniform Properties and Algorithms}
\label{sec:non_uniform_propterties}

The main results in this paper depend on
two core concepts, Non-uniform Smoothness (NS) and Non-uniform \L{}ojasiewicz (N\L{}) inequality.
The NS property is a new, intuitive generalization of smoothness. 
The N\L{} inequality is a recent proposal of \citet{mei2020global} and generalizes previous \L{}ojasiewicz inequalities.
Our key contribution is to show that the \emph{combination} of these two non-uniform concepts is particularly powerful, applicable to important non-convex objectives in machine learning, and allows the development of improved algorithms and analysis. 

\subsection{Non-uniform Smoothness}

The first main concept we leverage is a generalized notion of smoothness that depends on the parameters non-uniformly.

\begin{definition}[Non-uniform Smoothness (NS)]
\label{def:non_uniform_smoothness}
The function $f: \Theta \to \sR$ satisfies $\beta(\theta)$ non-uniform smoothness if $f$ is differentiable and for all 
$\theta,\theta'\in\Theta$,
\begin{align*}
    \left| f(\theta^\prime) - f(\theta) - \Big\langle \frac{d f(\theta)}{d \theta}, \theta^\prime - \theta \Big\rangle \right| \le \frac{{\color{red}\beta(\theta)}}{2} \cdot \| \theta^\prime - \theta \|_2^2,
\end{align*}
where $\beta$ is a positive valued function: $\beta: \Theta \to (0,\infty)$.
\end{definition}

We will refer to $\beta(\theta)$ in \cref{def:non_uniform_smoothness}  as the \textit{NS coefficient}. 
This alternative definition generalizes and unifies several smoothness concepts from the recent literature.
First, NS clearly reduces to \cref{eq:uniform smoothness} with $\beta(\theta) = \beta$. 
However, NS also generalizes the notion of $(L_0, L_1)$ smoothness from \citet{zhang2019gradient} by using $\beta(\theta) = L_0 + L_1 \cdot \left\| \nabla f(\theta) \right\|_2$. 
By using $\beta(\theta) = c \cdot \left\| \nabla f(\theta) \right\|_2^{\frac{p-2}{p-1}}$, NS also reduces to the notion of strong smoothness of order $p$ proposed in \citet{wilson2019accelerating}. 
Finally, with $\beta(\theta) = c / \left\| \theta \right\|_p^2$, NS reduces to a special form of non-uniform smoothness considered in \citet{mei2020escaping}. 
We will show later that NS also covers other previously unstudied smoothness variants. 
Below we will demonstrate the key benefits of \cref{def:non_uniform_smoothness} in terms of its \emph{generality}, \emph{better convergence results}, and \emph{practical implications}
in conjunction with the N\L{} inequality.

\subsection{Non-uniform \L{}ojasiewicz Inequality}

The second main concept we leverage is a generalized \L{}ojasiewicz inequality introduced by \citet{mei2020global}:

\begin{definition}[Non-uniform \L{}ojasiewicz (N\L{})]
\label{def:non_uniform_lojasiewicz}
The differentiable function $f : \Theta \to \sR$ satisfies the $(C(\theta), \xi)$ non-uniform \L{}ojasiewicz inequality if for all $\theta \in \Theta$,
\begin{align}
    \left\| \frac{ d f(\theta) }{d \theta} \right\|_2 \ge {\color{red}C(\theta)} \cdot \left| f(\theta) - f(\theta^*) \right|^{1-\xi},
\end{align}
where $\xi \in (-\infty, 1]$, and $C(\theta): \Theta \to \sR > 0$ holds for all $\theta \in \Theta$.
In this definition, either $\theta^*=\arg\min_{\theta\in\Theta}f(\theta)$,
or
$f(\theta^*)$ is replaced with $\inf_{\theta}{ f(\theta)}$ 
if the global optimum is not achieved within the domain $\Theta$. 
\end{definition}
\cref{def:non_uniform_lojasiewicz} extends the classical ``uniform'' \L{}ojasiewicz inequalities in optimization literature, such as the Polyak-\L{}ojasiewicz (P\L{}) inequality with $C(\theta) = C > 0$ and $\xi = 1/2$ \citep{lojasiewicz1963propriete,polyak1963gradient}; and the Kurdyka-\L{}ojasiewicz (K\L{}) inequality\footnote{The K\L{} inequality is violated at bad local optima, since vanishing gradient norm cannot dominate non-zero sub-optimality gap. Therefore \cref{def:non_uniform_lojasiewicz} actually recovers global K\L{} inequality.} by setting $C(\theta) = C > 0$ \citep{kurdyka1998gradients}.
Following \citep{mei2020global,mei2020escaping}, we refer to $\xi$ as the \textit{N\L{} degree} and $C(\theta)$ as the \textit{N\L{} coefficient}.
Generally speaking, a larger N\L{} degree $\xi$ and N\L{} coefficient $C(\theta)$ indicate faster convergence for gradient based algorithms. 
\cref{sec:examples_non_convex_nl_inequality} provides an overview of remarkable non-convex functions that satisfy the N\L{} inequality for various $\xi$ and $C(\theta)$.
As stated, our main contribution in this paper is to show how,
when \emph{combined} with NS, N\L{} becomes a powerful tool for both algorithm design and analysis, which is a novel direction of investigation.



\subsection{Geometry-aware Gradient Descent}

A key benefit of the non-uniform definitions is that we can introduce stepsize 
rules that make gradient descent adapt to the local ``geometry'' of the optimization objective.
First consider the classical gradient decent update.

\begin{definition}[Gradient Descent (GD)]
\label{def:gd}
\begin{align}
\theta_{t+1} \gets \theta_t - \eta \cdot \nabla f(\theta_t).
\end{align}
\end{definition}

The key challenge with deploying GD is choosing the step size $\eta$;
if $\eta$ is too large, instability ensues,
if too small, progress becomes slow.
Recall from the introduction that $\eta=1/\beta$ is a canonical choice for assuring convergence in \emph{uniformly} $\beta$ smooth objectives.
This suggests that in the presence of \emph{non-uniform} smoothness $\beta(\theta)$ given in NS,
the stepsize should be adapted to $1/\beta(\theta)$. This leads to
 a new variant of normalized gradient descent.

\begin{definition}[Geometry-aware Normalized GD (GNGD)]
\label{def:gngd}
\begin{align}
\theta_{t+1} \gets \theta_t - \eta \cdot \frac{ \nabla f(\theta_t) }{ \color{red}{\beta(\theta_t)} }.
\end{align}
\end{definition}

Key to making this approach practical will be efficient ways to measure (or bound) $\beta(\theta)$.
Below we will show how in the context of NS and N\L{} properties, GNGD can be made both practical and extremely efficient at solving various 
global optimization problems in machine learning.
These results also broaden our fundamental knowledge of the set of objectives that admit efficient global optimization.


\section{Non-uniform Analysis}
\label{sec:non_uniform_analysis}

\begin{figure*}[ht]
\centering
\includegraphics[width=0.8\linewidth]{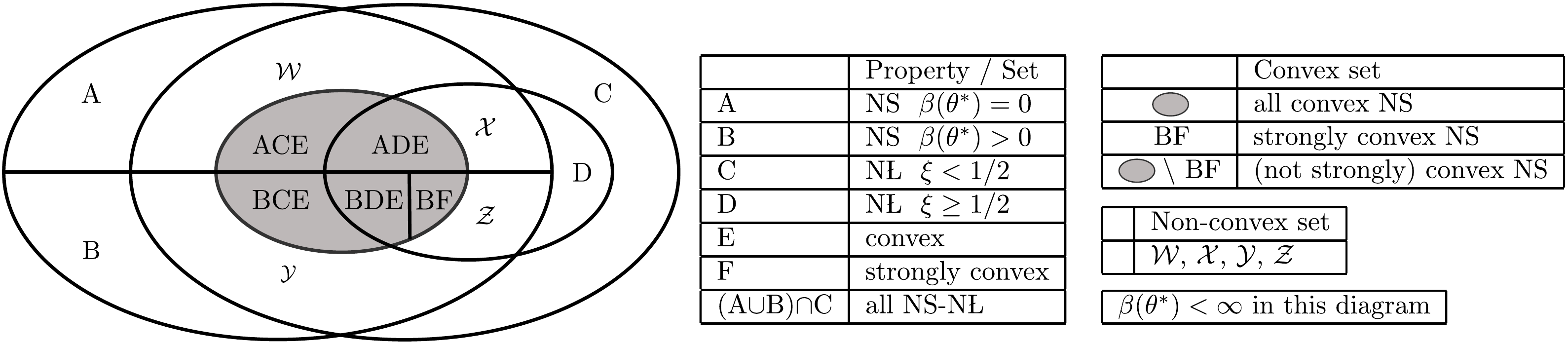}
\caption{Different function classes for $\beta(\theta^*) < \infty$. We use a label notation where, e.g., C denotes the set of all functions that satisfy property C, and $\text{ACE} \coloneqq \text{A} \cap \text{C} \cap \text{E}$. The two largest ellipsoids correspond to $\text{A} \cup \text{B}$ and C. We study the following four non-convex function classes in $( \text{A} \cup \text{B} ) \cap C$, i.e., $\gW \coloneqq \text{AC} \setminus ( \text{AD} \cup \text{ACE} )$, $\gX \coloneqq \text{AD} \setminus \text{ADE}$, $\gY \coloneqq \text{BC} \setminus ( \text{BD} \cup \text{BCE} )$, and $\gZ \coloneqq \text{BD} \setminus ( \text{BDE} \cup \text{BF} )$.
} 
\label{fig:non_uniform_function_classes_diagram_table}
\end{figure*}

Our first main contribution is an analysis for GD and GNGD based in the presence of non-uniform properties. For minimization problems, we assume $\inf_{\theta}{ f(\theta) } > - \infty$ ($\sup_{\theta}{ f(\theta) } < \infty$ for maximization problems).
\begin{theorem}
\label{thm:general_optimization_main_result_1}
Suppose $f: \Theta \to \sR$ satisfies NS with $\beta(\theta)$ and 
the N\L{} inequality with $(C(\theta), \xi)$. 
Suppose $C \coloneqq \inf_{t \ge 1}{ C(\theta_t) } > 0$ for GD and GNGD. Let $\delta(\theta) \coloneqq f(\theta) - f(\theta^*)$ be the sub-optimality gap. The following hold:

\begin{description}[topsep=0pt,parsep=0pt]

\item[(1a)] if $\beta(\theta) \le c \cdot  \delta(\theta)^{1-2\xi}$ with $\xi \in (- \infty, 1/2)$, then the conclusions of (1b) hold;
\item[(1b)] if $\beta(\theta) \le c \cdot \left\| \nabla f(\theta) \right\|_2^{\frac{1 - 2 \xi}{1 - \xi}}$ with $\xi \in (- \infty, 1/2)$, then GD with $\eta \in O(1)$ achieves $\delta(\theta_t) \in  \Theta(1/t^{\frac{1}{1 - 2 \xi}})$,
and GNGD achieves $\delta(\theta_t) \in O(e^{- t})$.
%
\item[(2a)] if $\beta(\theta) \le L_0 + L_1 \cdot \left\| \nabla f(\theta) \right\|_2$, then  the conclusions of (2b) hold;
\item[(2b)] if $\beta(\theta) \le L_0 \cdot \frac{\left\| \nabla f(\theta) \right\|^2}{ \delta(\theta)^{2 - 2 \xi} } + L_1 \cdot \left\| \nabla f(\theta) \right\|_2$, then GD and GNGD both achieve $\delta(\theta_t) \in O(1/t^{\frac{1}{1 - 2 \xi}})$ when $\xi \in (- \infty, 1/2)$, and $O(e^{-t})$ when $\xi = 1/2$. GNGD has strictly better constant than GD ($1 > C > C^2$).
%
\item[(3a)] if $\beta(\theta)\!\le\!c\!\cdot\!\left\| \nabla f(\theta) \right\|_2^{\frac{1 - 2 \xi}{1 - \xi}}\!$ with $\xi \in (1/2, 1)$, then  the conclusions of (3b) hold;
\item[(3b)] if $\beta(\theta) \le c \cdot \delta(\theta)^{1 - 2 \xi}$ with $\xi \in (1/2, 1)$, then GD with $\eta \in \Theta(1)$ does not converge, while GNGD achieves $\delta(\theta_t) \in O(e^{-t})$.
\end{description}
\end{theorem}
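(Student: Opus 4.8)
The plan is to reduce every claim to two one-step ``descent lemmas'' coming directly from NS, and then feed the resulting scalar recursions into standard estimates. First I would record the one-step guarantees. Applying \cref{def:non_uniform_smoothness} with $\theta'=\theta_{t+1}$ to the GD update gives the two-sided bound
\begin{align*}
\eta\left(1 - \tfrac{\eta \beta(\theta_t)}{2}\right)\|\nabla f(\theta_t)\|_2^2 \le \delta(\theta_t) - \delta(\theta_{t+1}) \le \eta\left(1 + \tfrac{\eta \beta(\theta_t)}{2}\right)\|\nabla f(\theta_t)\|_2^2,
\end{align*}
while the GNGD update yields $\delta(\theta_t) - \delta(\theta_{t+1}) \ge \eta\left(1 - \tfrac{\eta}{2}\right)\frac{\|\nabla f(\theta_t)\|_2^2}{\beta(\theta_t)}$. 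The crucial structural point is that normalizing by $\beta(\theta_t)$ removes $\beta$ from the descent condition, so any $\eta \in (0,2)$ forces monotone progress for GNGD no matter how wildly $\beta$ varies. I would also record the NS-only consequence $\|\nabla f(\theta)\|_2^2 \le 2\beta(\theta)\,\delta(\theta)$, obtained by evaluating the upper descent bound at the idealized step $\theta - \nabla f(\theta)/\beta(\theta)$ and using $f(\theta^*)\le f(\cdot)$; this reverse gradient bound is what pins down the matching lower bound in (1b).

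Next I would dispatch the ``(a) implies (b)'' reductions, which are pure applications of N\L{}. Since N\L{} gives $\delta(\theta) \le (\|\nabla f(\theta)\|_2/C)^{1/(1-\xi)}$, raising both sides to the power $1-2\xi$ and tracking the sign of that exponent converts a $\delta$-based bound on $\beta$ into a gradient-based one in case 1 (where $1-2\xi>0$, so the power map is increasing) and runs in the opposite direction in case 3 (where $1-2\xi<0$ flips the inequality). The monotone power functions do all the work, so afterwards it suffices to treat the (b) hypotheses.

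For GNGD I would substitute the NS hypothesis into $\frac{\|\nabla f\|_2^2}{\beta}$ and use N\L{}. In (1b) this gives $\frac{\|\nabla f\|_2^2}{\beta} \ge \frac1c \|\nabla f\|_2^{1/(1-\xi)} \ge \frac{C^{1/(1-\xi)}}{c}\,\delta$, and in (3b), using $\beta \le c\,\delta^{1-2\xi}$, the exponents cancel to give $\frac{\|\nabla f\|_2^2}{\beta} \ge \frac{C^2}{c}\,\delta$; either way $\delta(\theta_{t+1}) \le (1-\rho)\,\delta(\theta_t)$ for some $\rho\in(0,1)$, i.e. $O(e^{-t})$. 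For GD in (1b), once $\|\nabla f\|_2$ is small enough that $\tfrac{\eta\beta}{2}\le\tfrac12$, the lower descent bound reads $\delta_t-\delta_{t+1}\ge \tfrac{\eta}{2}\|\nabla f\|_2^2 \ge \tfrac{\eta C^2}{2}\,\delta_t^{2(1-\xi)}$; the scalar recursion $u_{t+1}\le u_t - a\,u_t^{1+q}$ with $q=1-2\xi>0$ yields the $O(t^{-1/(1-2\xi)})$ upper bound, and the companion $u_{t+1}\ge u_t - b\,u_t^{1+q}$, obtained from the upper descent bound together with $\|\nabla f\|_2^2\le 2\beta\delta$, gives the matching $\Omega$, hence $\Theta$. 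Case (2b) is the same recursion for both methods, since the mixed $\beta$ makes both $\frac{\|\nabla f\|_2^2}{\beta}$ and $\|\nabla f\|_2^2$ of order $\delta^{2(1-\xi)}$, collapsing to $O(e^{-t})$ at $\xi=1/2$; I would simply carry the leading constants through the two recursions to read off the advertised $C$ versus $C^2$ gap.

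The main obstacle is twofold, and both issues concern GD rather than GNGD. First, for GD one must certify that a single fixed $\eta\in O(1)$ keeps the descent condition $\tfrac{\eta\beta(\theta_t)}{2}<1$ valid for all $t$; this is not automatic because $\beta$ is state-dependent, and I would close it by an induction showing that $\delta_t$, and hence via $\|\nabla f\|_2^2\le 2\beta\delta$ and the NS hypothesis also $\|\nabla f\|_2$ and $\beta$, stays bounded and eventually shrinks along the trajectory, with the base case fixing $\eta$ relative to the initial gradient. Second, the non-convergence claim in (3b) is genuinely different in character: here $1-2\xi<0$ forces $\beta(\theta)\to\infty$ as $\delta\to0$, so no fixed step can stay stable near the optimum. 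Establishing that GD actually fails, rather than merely that the sufficient descent condition breaks, is the delicate part: I would use the two-sided NS bound to show that once $\delta_t$ is small the fixed-$\eta$ step overshoots, i.e. $\delta_{t+1}\ge\delta_t$, trapping the iterates away from optimality. This is the step I expect to require the most care, and likely a concrete worst-case instance to make fully rigorous.
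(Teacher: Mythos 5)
Your core machinery coincides with the paper's own proof: the one-step bounds you extract from NS are exactly the paper's descent lemmas (\cref{lem:descent_lemma_smooth_function,lem:descent_lemma_NS_function}), the reverse bound $\|\nabla f(\theta)\|_2^2 \le 2\beta(\theta)\cdot\delta(\theta)$ is the same device the paper uses to get its $\Omega(1/t^{\frac{1}{1-2\xi}})$ lower bounds, and your scalar recursions $u_{t+1}\le u_t - a\cdot u_t^{2-2\xi}$ (geometric decay for GNGD) are what the paper solves via its auxiliary \cref{lem:auxiliary_lemma_1,lem:auxiliary_lemma_2}. Where you genuinely differ is organization: the paper proves (1a), (1b), (2a), (2b), (3a), (3b) each from scratch with parallel arguments, whereas you derive the (b)-hypotheses from the (a)-hypotheses by pushing N\L{} through monotone power maps and then treat only the (b) cases. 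That reduction is valid along the iterates (where $C(\theta_t)\ge C>0$) at the cost of constants degraded by powers of $C$; just note that in case 2 it is not a power map but the bound $L_0 \le (L_0/C^2)\cdot\|\nabla f(\theta)\|_2^2/\delta(\theta)^{2-2\xi}$. Your induction that fixes $\eta$ from the initial sub-optimality so that monotone decrease of $\delta$ keeps $\beta(\theta_t)$ uniformly bounded is, if anything, cleaner than the paper's treatment of (1b)/(2b), which defines $\beta\coloneqq\sup_{t\ge1}\beta_t$ over a trajectory that itself depends on the chosen $\eta$.

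The one step you flag as delicate --- the (3b) claim that GD with $\eta\in\Theta(1)$ does not converge --- is also the one claim the paper never proves: its appendix establishes only the GNGD $O(e^{-t})$ bounds for cases (3a)/(3b) and defers the GD failure to the $|x|^{3/2}$ simulation in \cref{sec:additional_simulations_experiments}. So your instinct is correct, and you should not expect to close this gap by mimicking the paper. Two observations make your plan workable. First, the ingredient your overshoot argument needs is forced by the hypotheses: combining N\L{} with the reverse bound gives $\beta(\theta)\ge \|\nabla f(\theta)\|_2^2/(2\delta(\theta)) \ge \tfrac12 C(\theta)^2\cdot\delta(\theta)^{1-2\xi}$ for \emph{every} valid NS coefficient, so with $\xi>1/2$ the curvature must blow up near $\theta^*$ at exactly the rate your instability argument requires. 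Second, the claim as stated is most cleanly substantiated by a concrete witness: for $f:x\mapsto|x|^{p}$ with $p=1/\xi\in(1,2)$ (\cref{prop:absulte_power_p}), the iteration $x_{t+1}=x_t-\eta p|x_t|^{p-1}\sign(x_t)$ satisfies $|x_{t+1}|>|x_t|$ whenever $0<|x_t|<(\eta p/2)^{1/(2-p)}$, so the iterates are repelled from $0$ and $\delta(x_t)\not\to0$ for any initialization that never lands exactly on the minimizer.
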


\begin{remark}
The cases (1)-(3) cover all three possibilities of $\beta(\theta^*)$. Since $\theta^*$ is the global minimum, $\nabla^2 f(\theta^*)$ is positive semi-definite (negative if $\theta^*$ is maximum) if it exists. 
\begin{itemize}[noitemsep, topsep=0pt,parsep=0pt]
	\item[(1)] If $\nabla^2 f(\theta^*) = \rvzero$, then $\beta(\theta) \to 0$ as $\theta, \theta^\prime \to \theta^*$, which means the landscape around $\theta^*$ is flat; 
	\item[(2)] If $\nabla^2 f(\theta^*)$ has at least one strictly positive (negative) eigenvalue, then $\beta(\theta) \to \beta > 0$ as $\theta, \theta^\prime \to \theta^*$. 
\end{itemize}
The cases (1)-(2) also cover the situations where the Hessian $\nabla^2 f(\theta^*)$ does not exist but one can find a finite $\beta(\theta^*) > 0$ to upper bound the l.h.s. of \cref{def:non_uniform_smoothness}. 
\begin{itemize}[noitemsep, topsep=0pt,parsep=0pt]
	\item[(3)] The case (3) is for blow-up type non-existence of $\nabla^2 f(\theta^*)$, where $\beta(\theta^*)$ is unbounded.
\end{itemize}
\end{remark}
\begin{remark}
In \cref{thm:general_optimization_main_result_1}, $C \coloneqq \inf_{t \ge 1}{ C(\theta_t) }$ is related to the early optimization and
plateau escaping behavior studied in \citet{mei2020escaping}. It remains open to study whether GNGD can be combined with different parameterizations in \citet{mei2020escaping} to further improve $C$.
\end{remark}

Note that
(1b) recovers the strong smoothness of order $p$ with $p = 1 / \xi$ in \citet{wilson2019accelerating}, 
and (2a) recovers the $(L_0, L_1)$ smoothness of \citet{zhang2019gradient}. 
The results here consider more general N\L{} functions and establish faster rates of convergence. 
The other cases have not been studied in literature to our knowledge.
%
In \cref{sec:policy_gradient,sec:generalized_linear_model} below we study practical machine learning examples that are covered by cases (1) and (2) in \cref{thm:general_optimization_main_result_1}. 
Other cases of different $\beta(\theta)$ and $\xi$ are discussed in \cref{sec:proofs_non_uniform_analysis} for completeness.  

\subsection{Function Classes and Existing Lower Bounds}

Before applying these results to problems in machine learning, we first provide a refined characterization of function classes organized by their NS and N\L{} properties. 
This also clarifies the relation between the non-uniform properties and standard notions of convexity and smoothness; see \cref{fig:non_uniform_function_classes_diagram_table}. 

\begin{proposition}
\label{prop:function_class_relation}
The following 
hold for an objective $f$: 
\begin{itemize}[leftmargin=*,noitemsep, topsep=0pt,parsep=0pt]
\item[(1)] $\text{D} \subseteq \text{C}$. 
If 
$f$ satisfies N\L{} with degree $\xi$, it satisfies N\L{} with degree $\xi^\prime < \xi$;
\item[(2)] $\text{F}\!\subseteq\!\text{D}$. 
A strongly convex 
$f$ satisfies N\L{} with $\xi\!=\!1/2$; 
\item[(3)] $\text{F} \cap \text{A} = \emptyset$. 
A strongly convex 
$f$ cannot satisfy NS with $\beta(\theta) \to 0$ as $\theta, \theta^\prime \to \theta^*$;
\item[(4)] $\text{E}\!\subseteq\!\text{C}$. 
A (not strongly) convex 
$f$ satisfies N\L{} with $\xi\!=\!0$.
\end{itemize}
\end{proposition}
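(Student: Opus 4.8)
The plan is to verify each inclusion by pairing the definitions of NS (\cref{def:non_uniform_smoothness}) and N\L{} (\cref{def:non_uniform_lojasiewicz}) with the first-order characterizations of (strong) convexity. I would treat the items in the order (1), (4), (2), (3), since the first few are the softest and (3) needs the most care with the quantifiers hidden in the set membership.

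For (1), the crucial observation is that the N\L{} coefficient is permitted to be an \emph{arbitrary} positive function of $\theta$, so reducing the degree is merely a rebalancing of powers of $\delta$. Starting from $\| \nabla f(\theta) \|_2 \ge C(\theta) \cdot \delta(\theta)^{1-\xi}$ and factoring $\delta(\theta)^{1-\xi} = \delta(\theta)^{1-\xi'} \cdot \delta(\theta)^{\xi'-\xi}$ for any $\xi' < \xi$, I would absorb the surplus power into the new coefficient $C'(\theta) \coloneqq C(\theta) \cdot \delta(\theta)^{\xi'-\xi}$, which is strictly positive whenever $\theta \ne \theta^*$, while at $\theta = \theta^*$ the inequality is trivial (both sides vanish, as $1-\xi' > 0$) so $C'(\theta^*)$ may be set to any positive value. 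This yields N\L{} with degree $\xi'$ and hence $\text{D} \subseteq \text{C}$.

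For (4), first-order convexity $f(\theta^*) \ge f(\theta) + \langle \nabla f(\theta), \theta^* - \theta \rangle$ rearranges to $\delta(\theta) \le \langle \nabla f(\theta), \theta - \theta^* \rangle \le \| \nabla f(\theta) \|_2 \cdot \| \theta - \theta^* \|_2$ by Cauchy--Schwarz, which is exactly N\L{} with $\xi = 0$ and $C(\theta) = 1 / \| \theta - \theta^* \|_2$; taking $\theta^*$ to be the projection of $\theta$ onto the (convex) minimizer set keeps $C(\theta)$ finite and positive. For (2), I would minimize the $\mu$-strong-convexity inequality over $\theta'$: the right-hand side is minimized at $\theta' = \theta - \nabla f(\theta)/\mu$, yielding $f(\theta^*) \ge f(\theta) - \tfrac{1}{2\mu} \| \nabla f(\theta) \|_2^2$, i.e. $\| \nabla f(\theta) \|_2 \ge \sqrt{2\mu} \cdot \delta(\theta)^{1/2}$, which is N\L{} with $\xi = 1/2$ and constant coefficient, so $\text{F} \subseteq \text{D}$.

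Finally, for (3) I would play the NS upper bound against a strong-convexity lower bound: $\mu$-strong convexity forces the Bregman term $f(\theta') - f(\theta) - \langle \nabla f(\theta), \theta' - \theta \rangle \ge \tfrac{\mu}{2} \| \theta' - \theta \|_2^2$, while NS bounds the same (nonnegative) quantity above by $\tfrac{\beta(\theta)}{2} \| \theta' - \theta \|_2^2$. Choosing any single $\theta' \ne \theta$ and cancelling $\| \theta' - \theta \|_2^2$ gives $\beta(\theta) \ge \mu$ for every $\theta$. The step I expect to be the main obstacle is the correct reading of $\text{F} \cap \text{A} = \emptyset$: it asserts that \emph{no} admissible NS coefficient can tend to $0$, so I must argue that the bound $\beta(\theta) \ge \mu$ constrains \emph{every} valid coefficient, thereby ruling out $\beta(\theta) \to 0$ as $\theta, \theta' \to \theta^*$. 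A secondary care point is preserving positivity of the N\L{} coefficients in (1) and (4) — in (1) the rebalanced $C'(\theta)$ is positive only away from $\theta^*$, and in (4) one must select $\theta^*$ as the nearest minimizer so that $\| \theta - \theta^* \|_2$ does not degenerate when the minimizer is non-unique.
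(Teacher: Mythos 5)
Your proposal is correct, and items (1), (2), and (4) follow the paper's own proof essentially verbatim: the same rebalancing of powers of $\delta(\theta)$ into the coefficient for $\text{D} \subseteq \text{C}$ (with the same caveat that positivity at $\theta^*$ is handled by triviality of the inequality there), the same minimization of the strong-convexity inequality over $\theta^\prime$ at $\theta^\prime = \theta - \nabla f(\theta)/\mu$ for $\text{F} \subseteq \text{D}$, and the same first-order-convexity-plus-Cauchy--Schwarz argument with $C(\theta) = 1/\|\theta - \theta^*\|_2$ for $\text{E} \subseteq \text{C}$ (your projection onto the minimizer set is a small refinement the paper does not bother with).

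Item (3) is where you genuinely diverge. The paper argues through second derivatives: strong convexity gives $\left| z^\top \nabla^2 f(\theta) z \right| \ge \mu \cdot \| z \|_2^2$ for all $z$, and membership in $\text{A}$ would force $\sup_z \left| z^\top \nabla^2 f(\theta^*) z \right| = 0$, a contradiction. You instead sandwich the Bregman remainder directly: strong convexity bounds $f(\theta^\prime) - f(\theta) - \langle \nabla f(\theta), \theta^\prime - \theta \rangle$ below by $\tfrac{\mu}{2}\|\theta^\prime - \theta\|_2^2$, NS bounds its absolute value above by $\tfrac{\beta(\theta)}{2}\|\theta^\prime - \theta\|_2^2$, and cancellation yields $\beta(\theta) \ge \mu$ pointwise for \emph{every} admissible NS coefficient, which rules out $\beta(\theta) \to 0$. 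Your route buys two things: it is more elementary, and it never assumes $f$ is twice differentiable --- notable because the paper itself emphasizes (in the remark following its Theorem 1) that NS is meant to cover functions whose Hessians need not exist, so the paper's Hessian-based proof of (3) is technically narrower than its own framework; your argument closes that gap. It also handles the quantifier issue you flagged (that \emph{no} valid coefficient may vanish) more transparently than the paper's implicit identification of $\beta(\theta^*)$ with the Hessian quadratic form. The only thing the paper's version buys in exchange is brevity when $f$ is $C^2$.
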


The next proposition provides concrete examples for each convex function class in $(A \cup B) \cap C$ in \cref{fig:non_uniform_function_classes_diagram_table}.

\begin{proposition}
\label{prop:non_empty_convex_function_classes}
The following results hold: (1) $\text{ACE} \not= \emptyset$. (2) $\text{ADE} \not= \emptyset$. (3) $\text{BCE} \not= \emptyset$. (4) $\text{BDE} \not= \emptyset$. (5) $\text{BF} \not= \emptyset$.
\end{proposition}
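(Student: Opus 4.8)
The plan is to prove the five non-emptiness claims by exhibiting, for each labelled region, a single explicit witness function and verifying three properties of it: (i) its convexity type, separating $\text{E}$ (convex but not strongly convex) from $\text{F}$ (strongly convex); (ii) the limit of a valid NS coefficient $\beta(\theta)$ as $\theta\to\theta^*$, separating $\text{A}$ ($\beta(\theta)\to 0$) from $\text{B}$ ($\beta(\theta)\to\beta^*\in(0,\infty)$); and (iii) the sharp polynomial N\L{} degree $\xi$ near the optimum, i.e. the exponent with $\|\nabla f(\theta)\|_2\asymp\delta(\theta)^{1-\xi}$, where a witness for a $\text{C}$-region has $\xi<1/2$ (so it lies in $\text{C}$ but not $\text{D}$) and a witness for a $\text{D}$-region has $\xi=1/2$. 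Because $\text{F}\cap\text{A}=\emptyset$ by \cref{prop:function_class_relation}, only the five regions $\text{ACE},\text{ADE},\text{BCE},\text{BDE},\text{BF}$ occur, so one witness per claim suffices.

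Four of the witnesses are elementary. For (5) $\text{BF}$ I would take $f(x)=x^2$, which is uniformly $2$-smooth with $\beta(\theta^*)=2>0$ ($\text{B}$) and strongly convex ($\text{F}$). For (4) $\text{BDE}$ I would take $f(x)=\sqrt{1+x^2}$: it is uniformly smooth with $f''(0)=1>0$, so $\beta(\theta^*)=1$ ($\text{B}$), yet $f''(x)\to 0$, so it is convex but not strongly convex ($\text{E}$); since $f(x)-1\approx x^2/2$ near the minimizer its sharp degree is $\xi=1/2$ ($\text{D}$). For (1) $\text{ACE}$ I would take a saturating convex loss such as $f(x)=\log(1+e^{-x})$, whose optimum is approached as $x\to\infty$: its curvature vanishes there so a valid $\beta(\theta)$ can be driven to $0$ ($\text{A}$), it is convex but not strongly convex ($\text{E}$), and since $\|\nabla f(\theta)\|_2$ and $\delta(\theta)$ both decay like $e^{-x}$ the sharp degree is $\xi=0$ ($\text{C}$, not $\text{D}$). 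For (3) $\text{BCE}$, which cannot occur in one dimension (a strictly positive Hessian eigenvalue at the optimum already forces degree $1/2$ along that coordinate), I would use a separable two-dimensional function such as $f(x,y)=x^2+\log(1+e^{-y})$, where the $x$-block supplies the positive eigenvalue ($\text{B}$) while the $y$-block pulls the overall sharp degree down to $\xi=0$ ($\text{C}$, not $\text{D}$), the sum being convex but not strongly convex ($\text{E}$).

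A subtlety that governs these choices is that \cref{def:non_uniform_smoothness} is a \emph{global} condition over all $\theta'$. First, a function whose deviation from its tangent at a fixed $\theta$ grows faster than quadratically — such as $x^4$ on all of $\R$ — admits no finite $\beta(\theta)$, so each witness must have at most quadratic growth. Second, and more delicately, membership in $\text{A}$ requires not merely $\nabla^2 f(\theta^*)=\vzero$ but also sub-quadratic growth, so that the supremum defining $\beta(\theta)$ still tends to $0$ as $\theta\to\theta^*$; this is exactly why saturating, linearly growing losses (logistic above) are the natural $\text{A}$-witnesses, whereas a function that is flat at $\theta^*$ but grows quadratically far away would still have $\beta(\theta^*)>0$ and land in $\text{B}$.

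I expect case (2) $\text{ADE}$ to be the main obstacle, because $\text{A}$ and $\text{D}$ are in genuine tension. Evaluating \cref{def:non_uniform_smoothness} at $\theta'=\theta-\nabla f(\theta)/\beta(\theta)$ and using $f(\theta')\ge\inf_\theta f(\theta)$ gives the descent bound $\|\nabla f(\theta)\|_2^2\le 2\,\beta(\theta)\,\delta(\theta)$, so any degree-$1/2$ N\L{} coefficient obeys $C(\theta)\le\sqrt{2\beta(\theta)}$; hence a flat optimum ($\beta(\theta)\to 0$) forces $C(\theta)\to 0$, which is precisely why a function with a uniform P\L{} constant cannot be flat and why $x^4$ falls in $\text{ACE}$ rather than $\text{ADE}$. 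To keep the \emph{polynomial} degree at exactly $1/2$ while its coefficient vanishes, the witness must satisfy $\|\nabla f(\theta)\|_2\asymp\delta(\theta)^{1/2}$ up to a factor decaying only sub-polynomially (e.g. logarithmically). I would therefore construct it by prescribing the sub-optimality profile near the (unattained) optimum through an ODE of the form $-\delta'=\delta^{1/2}/\log(1/\delta)$, which yields $f''\sim 1/\log^2(1/\delta)\to 0$ (flatness, $\text{A}$) with sharp degree $\xi=1/2$ ($\text{D}$), glue it to linear growth on the far side to preserve both global NS and sub-quadratic growth, and finally verify convexity throughout. Establishing these properties rigorously for such a tailored function is the crux of the argument.
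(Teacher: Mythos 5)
Your cases (1), (3), (4) and (5) go through, but case (2) is a genuine gap: what you give for $\text{ADE}$ is a program, not a proof. You prescribe the sub-optimality profile through the ODE $-\delta' = \delta^{1/2}/\log(1/\delta)$, propose to glue the resulting arc to a linearly growing branch, and then defer exactly the items that would constitute the proof---existence and regularity at the singular endpoint, convexity of the glued function, validity of the NS bound globally, and the N\L{} inequality with positive coefficient---by calling them ``the crux.'' Moreover, the construction is unnecessary. The tension you correctly identify (the descent bound $\|\nabla f(\theta)\|_2^2 \le 2\beta(\theta)\delta(\theta)$ forces any degree-$1/2$ coefficient to obey $C(\theta)\le\sqrt{2\beta(\theta)}$, hence to vanish where the landscape flattens) only rules out a coefficient \emph{bounded below near the optimum}; \cref{def:non_uniform_lojasiewicz} merely requires $C(\theta)>0$ pointwise. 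The paper exploits exactly this slack: its $\text{ADE}$ witness is $f(\theta)=\KL(y\,\|\,\softmax(\theta))$ with $y$ one-hot, which is convex but not strongly convex, has Hessian $\diagonalmatrix(\pi_\theta)-\pi_\theta\pi_\theta^\top \to \diagonalmatrix(y)-yy^\top = \rvzero$ as $\pi_\theta\to y$ (class A), and satisfies $\|\pi_\theta-y\|_2 \ge \min_a\sqrt{\pi_\theta(a)}\cdot \KL(y\|\pi_\theta)^{1/2}$, a degree-$1/2$ N\L{} whose coefficient vanishes at the optimum yet is everywhere positive (class D). A closed-form witness with a short verification replaces your tailored function, and your construction, even if completed, would prove nothing stronger in the sense of the proposition as stated.

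The reason your route became this hard is an interpretive choice at the outset. You reject $x\mapsto x^4$ (the paper's ACE witness) because, reading \cref{def:non_uniform_smoothness} literally---$\beta$ a function of $\theta$ alone, the bound required for every $\theta'$---no finite $\beta(\theta)$ exists for a quartic. That literal reading is defensible, but it is not the one the paper operates with: in \cref{lem:non_uniform_smoothness_softmax_special,lem:non_uniform_smoothness_softmax_general}, and in its own proof of this proposition, the NS coefficient is evaluated at an intermediate point $\theta_\zeta$ of the segment $[\theta,\theta']$ (which is why the class-A/B limits are written ``as $\theta,\theta'\to\theta^*$''), and under that reading $\beta(x_\zeta)=12 x_\zeta^2\to 0$ makes $x^4$ a valid ACE witness with an \emph{attained} optimum. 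Your substitutes (the logistic loss for ACE, $x^2+\log(1+e^{-y})$ for BCE) instead push $\theta^*$ to infinity, which sits awkwardly with the paper's discussion of $\nabla^2 f(\theta^*)$---though the paper's own ADE witness also has its optimum at infinity, so this is tolerable. Finally, your remark that BCE ``cannot occur in one dimension'' is true only for the stricter region $\text{B}\cap(\text{C}\setminus\text{D})\cap\text{E}$; the proposition only asserts $\text{B}\cap\text{C}\cap\text{E}\ne\emptyset$, and the paper's one-dimensional Huber witness ($x^2$ near the origin, linear tails) already suffices for that.
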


A more interesting result considers examples in the classes of non-convex functions $(A \cup B) \cap C$ in \cref{fig:non_uniform_function_classes_diagram_table}. 
The non-uniform analysis above largely still applies to these 
problems, even when standard convex analysis cannot apply.
\begin{proposition}
\label{prop:non_empty_non_convex_function_classes}
The following results hold: (1) $\gW \coloneqq \text{AC} \setminus ( \text{AD} \cup \text{ACE} ) \not= \emptyset$.  (2) $\gX \coloneqq \text{AD} \setminus \text{ADE} \not= \emptyset$. (3) $\gY \coloneqq \text{BC} \setminus ( \text{BD} \cup \text{BCE} ) \not= \emptyset$. (4) $\gZ \coloneqq \text{BD} \setminus ( \text{BDE} \cup \text{BF} ) \not= \emptyset$.
\end{proposition}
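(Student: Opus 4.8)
The plan is to prove all four claims constructively: for each of $\gW,\gX,\gY,\gZ$ I will exhibit an explicit witness function and verify it lands in the stated set. Since every target is a set difference inside $(\text{A}\cup\text{B})\cap\text{C}$, each witness requires two \emph{positive} memberships — the correct NS behaviour at the minimiser (separating $\text{A}$ from $\text{B}$) and the correct N\L{} degree (separating $\text{C}$ from $\text{D}$) — together with the relevant \emph{exclusions}: non-convexity (to leave $\text{E}$) in all four cases, failure of strong convexity (to leave $\text{F}$) for $\gZ$, and failure of degree $\xi=1/2$ (to leave $\text{D}$) for $\gW$ and $\gY$. A convenient organising idea is to start from the convex witnesses already supplied by \cref{prop:non_empty_convex_function_classes} for $\text{ACE},\text{ADE},\text{BCE},\text{BDE}$ and perturb them so as to destroy convexity while preserving their NS and N\L{} classes.

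The construction device I will use is a one-dimensional profile whose derivative is sign-definite away from the optimum, so that the \emph{only} stationary point is the global minimiser. This is exactly what \cref{def:non_uniform_lojasiewicz} needs: a spurious critical point would send the left-hand side to $0$ while the suboptimality on the right stays positive, so any N\L{} witness must be free of them. Concretely I take $f'(\theta)=\theta^{\,2k-1}\,(2+\cos\theta)$, where the factor $2+\cos\theta\in[1,3]$ keeps $\sign f'(\theta)=\sign\theta$ (unique minimiser at $0$), while the oscillating term drives $f''$ negative for large $|\theta|$, supplying non-convexity. For $k=1$ one gets $f''(0)=3>0$ (class $\text{B}$) and local quadratic growth, hence N\L{} degree $\xi=1/2$ (class $\text{D}$), a candidate for $\gZ$; for $k\ge 2$ one gets $f''(0)=0$ (class $\text{A}$) and local growth $\theta^{2k}$, hence degree $1/(2k)<1/2$ (class $\text{C}\setminus\text{D}$), a candidate for $\gW$. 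A nondegenerate profile that additionally slows down far from the optimum, so that the natural N\L{} degree drops below $1/2$ in that region, will serve for $\gY$; the GLM/sigmoid landscape of \cref{fig:example_glm_sigmoid} is the prototype, restricted to a domain on which it still obeys some N\L{} inequality.

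For each candidate I then run three checks. (i) NS: produce an admissible $\beta(\theta)$ and compute $\lim_{\theta\to\theta^*}\beta(\theta)$, reading off $\text{A}$ ($\beta\to 0$) versus $\text{B}$ ($\beta\to\beta_0>0$) exactly as in the dichotomy recorded in the remark after \cref{thm:general_optimization_main_result_1}. (ii) N\L{}: analyse the ratio $\|\nabla f(\theta)\|_2/\delta(\theta)^{1-\xi}$ at the target $\xi$, showing it admits a positive coefficient $C(\theta)$ for membership, and — for $\gW,\gY$ — that at $\xi=1/2$ no positive coefficient survives (the ratio collapses near $\theta^*$ for the flat $\gW$ profiles and in the slow region for $\gY$), which is what expels them from $\text{D}$. (iii) Non-convexity: locate a point of negative curvature, and for $\gZ$ rule out strong convexity.

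The hard part is $\gX=\text{AD}\setminus\text{ADE}$: a flat optimum, $\nabla^2 f(\theta^*)=\mathbf{0}$ (class $\text{A}$), that nonetheless realises N\L{} degree $\xi=1/2$ (class $\text{D}$) while being non-convex. The difficulty is a genuine tension — a plain flat profile $|\theta|^p$ with $p>2$ has degree $1/p<1/2$, and a lower bound $\|\nabla f\|\ge c\,\delta^{1/2}$ integrates to quadratic growth $\delta\gtrsim\theta^2$, which a vanishing Hessian forbids for a $C^2$ profile. Threading this needle is where the real work lies: I will exploit that \cref{def:non_uniform_lojasiewicz} only demands a \emph{pointwise}-positive, non-uniform coefficient $C(\theta)$ (not one bounded away from $0$), so the witness can attain degree $1/2$ with $C(\theta)\downarrow 0$ at $\theta^*$ yet still be separated from the $\gW$ profiles; alternatively, in higher dimension I can let the curvature degenerate only along some directions so that $\nabla^2 f(\theta^*)$ is the zero matrix in the chosen NS sense while the growth governing $\delta$ stays compatible with $\xi=1/2$. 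Pinning down which convention for the NS coefficient and the N\L{} degree makes $\text{A}\cap\text{D}$ simultaneously non-trivial and disjoint from the $\text{C}\setminus\text{D}$ witnesses, and then certifying non-convexity and the $\text{E}$-exclusion for that witness, is the crux of the argument.
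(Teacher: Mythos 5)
Your witnesses for $\gW$ and $\gZ$ are sound and take a genuinely different, more elementary route than the paper. The profile $f'(\theta)=\theta^{2k-1}(2+\cos\theta)$ gives $f(\theta)\in\big[\tfrac{1}{2k}|\theta|^{2k},\,\tfrac{3}{2k}|\theta|^{2k}\big]$ with a unique minimizer at $0$, so for $k\ge 2$ you get class $\text{A}$ ($f''\to 0$), N\L{} degree $1/(2k)$ with a uniform coefficient, exclusion from $\text{D}$ via the collapsing ratio $|f'|/\delta^{1/2}\lesssim|\theta|^{k-1}\to 0$, and non-convexity from $f''(\theta_m)=\theta_m^{2k-2}\left(2(2k-1)-\theta_m\right)<0$ at $\theta_m=2\pi m+\pi/2$; for $k=1$ you get class $\text{B}$, degree $1/2$ with uniform coefficient $\sqrt{2/3}$, and the same non-convexity, which handles all exclusions for $\gZ$. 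The paper instead uses softmax-based objectives: $\pi_\theta^\top r$ for $\gW$ (leaning on the unimprovability of $\xi=0$ from \citet{mei2020global}) and $\|\softmax(\theta)-y\|_2^2$ with $y$ in the simplex interior for $\gZ$; your one-dimensional constructions avoid that machinery entirely.

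However, the proposal does not prove claims (2) and (3). For $\gX=\text{AD}\setminus\text{ADE}$ you correctly diagnose the tension and correctly identify the only escape hatch — \cref{def:non_uniform_lojasiewicz} tolerates $C(\theta)\downarrow 0$ at the optimum — but you stop at declaring this "the crux" without exhibiting a function. The paper's witness is $f(\theta)=\|\softmax(\theta)-y\|_2^2$ with $y$ \emph{one-hot}: then $\|\nabla f(\theta)\|_2\ge 2\min_a\pi_\theta(a)\cdot f(\theta)^{1/2}$ (degree $1/2$ with coefficient $2\min_a\pi_\theta(a)\to 0$), the Hessian quadratic form vanishes as $\pi_\theta\to y$ (class $\text{A}$), and non-convexity follows from an explicit three-point computation. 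Note that the optimum is attained only as $\theta$ diverges, so your integration argument ($\|\nabla f\|\ge c\,\delta^{1/2}$ forces quadratic growth) has no finite neighborhood of a minimizer to run in; also, your higher-dimensional fallback of degenerating the curvature "only along some directions" cannot work, since class $\text{A}$ requires the full NS coefficient (the spectral radius) to vanish, and partial degeneracy leaves it positive. For $\gY$, your instantiation fails as stated: the GLM objective satisfies the N\L{} inequality with $\xi=1/2$ globally (\cref{lem:non_uniform_lojasiewicz_glm_sigmoid_realizable}), and restricting it to a subdomain does not remove that inequality, so the restricted function remains in $\text{BD}$ instead of landing in $\text{BC}\setminus\text{BD}$. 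What is needed — and what the paper does — is to modify the function itself so that it is quadratic in $\pi_\theta=\sigmoid(\theta)$ near the optimum but quartic away from it (a piecewise, Huber-like splice), which keeps the limiting curvature positive while capping the achievable degree away from the optimum at $1/4$.
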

We next apply the techniques to a class of convex functions, 
achieving
results that cannot be explained by classical convex-smooth analysis.
\begin{proposition}
\label{prop:absulte_power_p}
    The convex function $f: x \mapsto |x|^p$ with $p > 1$ satisfies the N\L{} inequality with $\xi = 1/p$ and the NS property with $\beta(x) \le c_1 \cdot \delta(x)^{1 - 2 \xi}$.
\end{proposition}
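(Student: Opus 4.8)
The plan is to treat the two claims separately, using that the minimizer is $x^*=0$ with $f(x^*)=0$, so the sub-optimality gap is simply $\delta(x)=|x|^p$, together with the positive homogeneity $f(\lambda x)=\lambda^p f(x)$ for $\lambda>0$.

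I would dispatch the N\L{} inequality by direct computation. For $x\neq 0$, $f'(x)=p\,\sign(x)\,|x|^{p-1}$, so $|f'(x)|=p\,|x|^{p-1}$; and with $\xi=1/p$, $\delta(x)^{1-\xi}=(|x|^p)^{1-1/p}=|x|^{p-1}$. Hence $|f'(x)|=p\cdot\delta(x)^{1-\xi}$, i.e. N\L{} holds with equality, degree $\xi=1/p$ and a constant coefficient $C(x)\equiv p>0$; the point $x=0$ is trivial since both sides vanish.

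For NS, the natural coefficient is the second derivative $f''(x)=p(p-1)\,|x|^{p-2}$ ($x\neq 0$), and I would first confirm it has the advertised form: $|x|^{p-2}=(|x|^p)^{(p-2)/p}=\delta(x)^{1-2/p}=\delta(x)^{1-2\xi}$, so the pointwise NS coefficient is $p(p-1)\,\delta(x)^{1-2\xi}$. The substantive step is upgrading this Hessian bound to the descent inequality of \cref{def:non_uniform_smoothness}. Here homogeneity does the work: for fixed $x>0$ write $x'=u\,x$, so that $f(x')-f(x)-f'(x)(x'-x)=x^p\,[\,|u|^p-1-p(u-1)\,]$ while $\tfrac12(x'-x)^2=\tfrac12 x^2(u-1)^2$. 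The $x$-dependence factors out exactly as $x^{p-2}$, and the inequality reduces to bounding the single dimensionless ratio $\phi(u):=\frac{|u|^p-1-p(u-1)}{(u-1)^2}$ by $c_1/2$; an identical reduction covers $x<0$, and the opposite-sign case is subsumed. One checks $\phi\ge 0$ (convexity of the remainder) and that $\phi$ is continuous with a removable singularity at $u=1$ where $\phi(1)=\tfrac12 p(p-1)$, so on any bounded range of $u$ the supremum is finite and yields a valid $c_1$.

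The main obstacle is the tail behaviour $\phi(u)\sim|u|^{p-2}$ as $|u|\to\infty$. For $1<p\le 2$ (equivalently $\xi\ge 1/2$) this forces $\sup_u\phi(u)<\infty$, so NS holds globally for all $x'$ with $c_1=2\sup_u\phi(u)$. For $p>2$ ($\xi<1/2$, the flat-minimum case with $\beta(x)\to 0$ as $x\to x^*$) the supremum is infinite, so a finite $\beta(x)=c_1|x|^{p-2}$ can only satisfy the descent inequality for $u$ bounded, i.e. for $x'$ in a neighbourhood of $x$ (or within a fixed sub-level set). I would therefore present the NS claim as a statement about the pointwise/Hessian coefficient, flag this locality explicitly, and note that it is exactly the regime used by \cref{thm:general_optimization_main_result_1}, whose monotone GD/GNGD iterates stay in the bounded sub-level set $\{x:|x|\le|x_0|\}$ with consecutive iterates close.
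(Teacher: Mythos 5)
Your proposal is correct, and its core computations are exactly the paper's: for N\L{}, the identity $|f'(x)| = p|x|^{p-1} = p\cdot\delta(x)^{1-1/p}$ (so $C(x)\equiv p$, $\xi=1/p$), and for NS, the Hessian identity $|f''(x)| = p(p-1)|x|^{p-2} = p(p-1)\cdot\delta(x)^{1-2/p}$. The difference is that the paper \emph{stops} at the Hessian identity: its proof of \cref{prop:absulte_power_p} is two displayed computations, with the NS claim understood in the paper's working convention (visible in \cref{lem:non_uniform_smoothness_softmax_special}, \cref{lem:non_uniform_smoothness_softmax_general}, and the $x^4$ example inside the proof of \cref{prop:non_empty_convex_function_classes}) that the NS coefficient is evaluated at the Taylor intermediate point $x_\zeta = x + \zeta(x'-x)$, for which Taylor's theorem plus the Hessian bound is the entire argument. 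Your additional homogeneity analysis addresses the stronger, literal reading of \cref{def:non_uniform_smoothness}, where $\beta$ is evaluated at the base point $x$ and the inequality must hold for every $x'$; your conclusion there is right and is a genuine clarification the paper omits. The reduction to $\phi(u)=\bigl(|u|^p-1-p(u-1)\bigr)/(u-1)^2$ shows the base-point form holds globally precisely when $\sup_u\phi(u)<\infty$, i.e.\ for $1<p\le 2$, while for $p>2$ it must fail globally (fix $x\neq 0$ and let $|x'|\to\infty$: the left side grows like $|x'|^p$ while the right side grows like $|x|^{p-2}|x'|^2$), so only a local or sub-level-set version survives. Your closing remark is also the correct resolution: the locality is harmless for \cref{thm:general_optimization_main_result_1}, since GD/GNGD take bounded steps and remain in a sub-level set --- which is exactly the role played in the paper by the intermediate-point convention and by bridging results such as \cref{lem:non_uniform_smoothness_intermediate_policy_gradient_norm_special}. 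In short: same computation as the paper, plus a correct and worthwhile sharpening of what the NS claim can and cannot mean when $p>2$.
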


Consider any $p > 2$, such as $p = 4$,
where it follows that $f$ satisfies N\L{} with degree $\xi = 1/4 < 1/2$. 
According to (1a) in \cref{thm:general_optimization_main_result_1}, GD will achieve $\delta(x_t) \in \Theta(1/t^2)$, while GNGD attains $\delta(x_t)\in O(e^{-c \cdot t})$ (where $c > 0$). 
Note that standard convex analysis can only give a $O(1/t)$ rate on (not strongly) convex smooth functions. 
The $\Theta(1/t^2)$ rate for GD here follows from using N\L{} degree $\xi = 1/4$, which improves on $\xi = 0$ from mere convexity 
((4) in \cref{prop:function_class_relation}). 
The $O(e^{-c \cdot t})$ rate has also been observed for this example by exploiting strong smoothness ((1b), as noted) \citep{wilson2019accelerating}.  \cref{fig:non_uniform_function_classes_diagram_table} provides a more general understanding of when this happens.

\paragraph{$\Omega(1/t^2)$ lower bound for convexity-smoothness.} 
Note that
GNGD satisfies $x_{t+1} = x_1 - \sum_{i=1}^{t}{ \frac{\eta}{\beta(x_i)} \cdot \nabla f(x_i) } \in \text{Span} \left\{x_1, \nabla f(x_1), \dots, \nabla f(x_t) \right\}$, which is a first-order oracle \citep{nesterov2003introductory}. 
Thus
there exists a worst-case objective in the convex-smooth class 
that forces
$\delta(x_t)\in\Omega(1/t^2)$ for 
$t \in O(n)$, where $n$ is the parameter dimension \citep{nemirovski1983problem,nesterov2003introductory,bubeck2014convex}. 
This is not a contradiction, since the
lower bound is established by constructing
a convex smooth function with a \emph{constant} $\beta > 0$ \citep{bubeck2014convex}, and $\beta(x) \to \beta > 0$ as $x, x^\prime \to x^*$ in \cref{def:non_uniform_smoothness}. Hence, the $\Omega(1/t^2)$ result covers \textit{some} functions in BCE in \cref{fig:non_uniform_function_classes_diagram_table}. 
Meanwhile $f: x \mapsto |x|^p$ with $p > 2$ satisfies $\beta(x) \to 0$ as $x, x^\prime \to 0$ in \cref{def:non_uniform_smoothness} (ACE in \cref{fig:non_uniform_function_classes_diagram_table}),
which implies that the standard convex-smooth class consists of two subclasses. 
One
subclass (BCE) admits first-order sub-linear lower bounds, while the other 
(ACE) allows linear convergence using first-order methods. 
This illustrates the \emph{necessity} of non-uniformity in 
subdividing the NS 
class as $\text{A} \cup \text{B}$ in \cref{fig:non_uniform_function_classes_diagram_table}.
This partition also inspires 
geometry-aware 
GD.

\paragraph{$\Omega(1/\sqrt{t})$ lower bound for $(L_0, L_1)$-smoothness.} 
For $\beta(\theta) = L_0 + L_1 \cdot \| \nabla f(\theta) \|_2$ ((2a) in \cref{thm:general_optimization_main_result_1}) with $L_0, L_1 \ge 1$, standard normalized GD is subject to a $\Omega(1/\sqrt{t})$ lower bound \citep{zhang2019gradient}. However, in \cref{sec:policy_gradient}, we will show that normalized policy gradient (PG) method achieves a linear rate of $O(e^{- c \cdot t})$. 
Again, this is not a contradiction for similar reasons.
With $L_0 \ge 1$, $\beta(\theta) \to L_0 > 0$ as $\theta, \theta^\prime \to \theta^*$,  
the $\Omega(1/\sqrt{t})$ lower bound will hold for \textit{some} functions in $\text{BCE} \cup \gY$ in \cref{fig:non_uniform_function_classes_diagram_table}.  While in \cref{sec:policy_gradient} the 
objective satisfies $L_0 = 0$ and $L_1 > 0$, hence $\beta(\theta) \to 0$ as $\theta, \theta^\prime \to \theta^*$ ($\text{ACE} \cup \gW$ in \cref{fig:non_uniform_function_classes_diagram_table}). 
This shows a similar separation of rates for first-order methods will also occur based on NS conditions. 
Furthermore, in \cref{sec:generalized_linear_model}, we will show that both GD and GNGD achieve a $O(e^{- c \cdot t})$ rate for GLM,
but here the objective is in $\gZ$ in \cref{fig:non_uniform_function_classes_diagram_table} so the lower bounds do not apply.

\paragraph{Unbounded Hessian.} 
Consider any $p \in (1, 2)$, such as $p = 3/2$
where $f$ satisfies $\xi = 2 / 3$. 
According to \cref{thm:general_optimization_main_result_1}(3a), GD diverges since the Hessian is unbounded near $0$. 
This makes it \textit{necessary} to introduce geometry-aware normalization to ensure convergence, which is verified in \cref{sec:additional_simulations_experiments}.
This has practical implications for RL, 
for example ensuring exploration using state distribution entropy, which has unbounded Hessian near probability simplex boundary \citep{hazan2019provably} and alternative probability transforms \citep{mei2020escaping}.


\section{Policy Gradient}
\label{sec:policy_gradient}

Our second main contribution is to show that the expected return objective considered in direct policy optimization in RL 
falls under the 
function class $\gW$ in \cref{fig:non_uniform_function_classes_diagram_table}, in particular satisfying case (1) of   \cref{thm:general_optimization_main_result_1} with N\L{} degree $\xi = 0$. The key point is that value functions in Markov decision processes (MDPs) satisfy NS properties with coefficient being the PG norm (\cref{lem:non_uniform_smoothness_softmax_special,lem:non_uniform_smoothness_softmax_general}). This novel finding not only provides a much more precise characterization than existing standard smoothness results \citep{agarwal2020optimality,mei2020global}, but also enables PG with normalization to use the N\L{} inequalities (\cref{lem:non_uniform_lojasiewicz_softmax_special,lem:non_uniform_lojasiewicz_softmax_general}) differently than for standard PG ($\left\| \nabla f(\theta) \right\|_2$ vs. $\left\| \nabla f(\theta) \right\|_2^2$), which leads to faster convergence as well as plateau escaping.

\subsection{RL Settings and Notations}

For a finite set $\gN$, let $\Delta(\gN)$ denote the set of all probability distributions on $\gN$. A finite MDP $\gM \coloneqq (\gS, \gA, \gP, r, \gamma)$ is determined by a finite state space $\gS$, a finite action space $\gA$, a transition function $\gP: \gS \times \gA \to \Delta(\gS)$, a scalar reward function $r: \gS \times \gA \to \sR$, and a discount factor $\gamma \in [0 , 1)$. 

In policy-based RL, an agent interacts with the environment, i.e., the MDP $\gM$, using a policy $\pi: \gS \to \Delta(\gA)$. Given a state $s_t$, the agent takes an action $a_t \sim \pi(\cdot | s_t)$, receives a one-step scalar reward $r(s_t, a_t)$ and a next-state $s_{t+1} \sim \gP( \cdot | s_t, a_t)$. The long-term expected reward, also known as the value function of $\pi$ under $s$, is defined as
\begin{align}
\label{eq:state_value_function}
    V^\pi(s) \coloneqq \expectation_{\substack{s_0 = s, a_t \sim \pi(\cdot | s_t), \\ s_{t+1} \sim \gP( \cdot | s_t, a_t)}}{\left[ \sum_{t=0}^{\infty}{\gamma^t r(s_t, a_t)} \right]}.
\end{align}
The state-action value of $\pi$ at $(s,a) \in \gS \times \gA$ is defined as $Q^\pi(s, a) \coloneqq r(s, a) + \gamma \sum_{s^\prime}{ \gP( s^\prime | s, a) V^\pi(s^\prime) }$, and $A^\pi(s,a) \coloneqq Q^\pi(s, a) - V^\pi(s)$ is the advantage function of $\pi$.
The state distribution of $\pi$ is defined as,
\begin{align}
     d_{s_0}^{\pi}(s) \coloneqq (1 - \gamma) \sum_{t=0}^{\infty}{ \gamma^t \probability(s_t = s | s_0, \pi, \gP) }.
\end{align}
Given an initial state distribution $\rho \in \Delta(\gS)$, we denote $V^\pi(\rho) \coloneqq \expectation_{s \sim \rho}{ \left[ V^\pi(s) \right]}$ and $d_{\rho}^{\pi}(s) \coloneqq \expectation_{s_0 \sim \rho}{\left[ d_{s_0}^{\pi}(s) \right]}$. There exists an optimal policy $\pi^*$ such that $V^{\pi^*}(\rho) = \sup_{\pi : \gS \to \Delta(\gA) }{ V^{\pi}(\rho)}$. For convenience, we denote $V^* \coloneqq V^{\pi^*}$. 
Consider a tabular representation, i.e., $\theta(s,a) \in \sR$ for all $(s,a)$, so that the policy $\pi_\theta$ can be parameterized by $\theta$ as $\pi_\theta(\cdot | s) = \softmax(\theta(s, \cdot))$; 
that is, for all $(s,a)$,
\begin{align}
    \pi_\theta(a | s) = \frac{ \exp\{ \theta(s, a) \} }{ \sum_{a^\prime \in \gA}{ \exp\{ \theta(s, a^\prime) \} } }.
\end{align}
When there is only one state the policy $\pi_\theta = \softmax(\theta)$ is defined as $\pi_\theta(a) = \exp\{ \theta(a) \} / \sum_{a^\prime \in \gA}{ \exp\{ \theta(a^\prime) } $. The problem of policy-based RL is then to find a policy $\pi_\theta$ that maximizes the value function, i.e.,
\begin{align}
    \sup_{\theta : \gS \times \gA \to \sR}{ V^{\pi_\theta}(\rho) }. 
\end{align}
For convenience, and without loss of generality, we assume $r(s,a) \in [0, 1]$ for all $(s, a) \in \gS \times \gA$.

\subsection{One-state MDPs}

We first illustrate some key insights for  one-state MDPs with $K$ actions and $\gamma = 0$. The value function \cref{eq:state_value_function} reduces to expected reward $\pi_\theta^\top r$, where $r \in [0 ,1]^K$, $\theta \in \sR^K$, and $\pi_\theta = \softmax(\theta)$. 
\citet{mei2020global} have shown that even though $\max_{\theta}{ \pi_\theta^\top r}$ is a non-concave maximization, global convergence can be achieved with a $O(1/t)$ rate using uniform smoothness and the N\L{} inequality:

\begin{lemma}[N\L{}, \citet{mei2020global}, Lemma 3]
\label{lem:non_uniform_lojasiewicz_softmax_special}
Let $a^*$ be the optimal action.
Denote $\pi^* = \argmax_{\pi \in \Delta}{ \pi^\top r}$. Then, 
\begin{align}
    \left\| \frac{d \pi_\theta^\top r}{d \theta} \right\|_2 \ge \pi_\theta(a^*) \cdot ( \pi^* - \pi_\theta )^\top r.
\end{align}
\end{lemma}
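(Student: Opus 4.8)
The plan is to compute the policy gradient in closed form using the Jacobian of the softmax map, and then to extract the claimed lower bound by retaining only a single, carefully chosen coordinate of the gradient vector. The key observation will be that the coordinate associated with the optimal action $a^*$ already encodes the sub-optimality gap $(\pi^* - \pi_\theta)^\top r$, so that no manipulation of the remaining coordinates is needed.

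First I would differentiate. Writing $\pi_\theta = \softmax(\theta)$, the softmax Jacobian satisfies $\partial \pi_\theta(a) / \partial \theta(a') = \pi_\theta(a) \left( \delta_{a a'} - \pi_\theta(a') \right)$, where $\delta_{a a'}$ is the Kronecker delta. Applying the chain rule to $\pi_\theta^\top r = \sum_a \pi_\theta(a) r(a)$ then gives, for each coordinate $a'$,
\begin{align*}
\left[ \frac{d \pi_\theta^\top r}{d \theta} \right]_{a'} = \pi_\theta(a') \left( r(a') - \pi_\theta^\top r \right),
\end{align*}
because the mean-subtraction term $\pi_\theta(a') \sum_a \pi_\theta(a) r(a)$ collapses to $\pi_\theta(a') \cdot \pi_\theta^\top r$.

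Next I would lower bound the Euclidean norm by the magnitude of the single coordinate $a' = a^*$,
\begin{align*}
\left\| \frac{d \pi_\theta^\top r}{d \theta} \right\|_2 \ge \left| \pi_\theta(a^*) \left( r(a^*) - \pi_\theta^\top r \right) \right| = \pi_\theta(a^*) \left( r(a^*) - \pi_\theta^\top r \right),
\end{align*}
where the absolute value is dropped since $\pi_\theta(a^*) > 0$ and $r(a^*) = \max_a r(a) \ge \pi_\theta^\top r$, the latter holding because $\pi_\theta^\top r$ is a convex combination of the entries of $r$. Finally, the optimal distribution attains $\pi^{*\top} r = \max_a r(a) = r(a^*)$, so $r(a^*) - \pi_\theta^\top r = (\pi^* - \pi_\theta)^\top r$, and the claim follows.

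The argument is short, and the only genuine decision is the choice of coordinate: retaining $a^*$ is precisely what reproduces the sub-optimality gap, whereas a different coordinate or a cruder bound would only weaken the inequality. Hence I do not expect a substantive obstacle; the entire content of the lemma lies in recognizing that the $a^*$-coordinate of the softmax policy gradient factors exactly as $\pi_\theta(a^*)$ times the sub-optimality gap, which is what makes the non-uniform coefficient $\pi_\theta(a^*)$ appear.
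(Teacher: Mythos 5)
Your proposal is correct and is essentially identical to the paper's proof: the paper likewise writes the gradient norm as $\bigl( \sum_{a} [\pi_\theta(a)(r(a) - \pi_\theta^\top r)]^2 \bigr)^{1/2}$ and lower bounds it by the single $a^*$ coordinate $\pi_\theta(a^*)(r(a^*) - \pi_\theta^\top r)$, which equals $\pi_\theta(a^*)(\pi^* - \pi_\theta)^\top r$ since $\pi^*$ places all mass on $a^*$. Your derivation of the coordinate formula via the softmax Jacobian and the justification for dropping the absolute value are both sound, so there is nothing to correct.
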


Note that \cref{lem:non_uniform_lojasiewicz_softmax_special} is not improvable in terms of the coefficients $C(\theta) = \pi_\theta(a^*)$ and $\xi = 0$ \citep[Remark 1 and Lemma 17]{mei2020global}. 
However, this result is based on only using a \emph{uniform} smoothness coefficient $\beta = 5/2$ \citep[Lemma 2]{mei2020global}, which even empirical evidence suggests can be significantly refined.
To illustrate, we run standard policy gradient (PG) on a $3$-action one-state MDP. As shown in \cref{fig:expected_reward_non_uniform_smoothness}(a), PG first goes through a long suboptimal plateau, and then eventually escapes to approach $\pi^*$. \cref{fig:expected_reward_non_uniform_smoothness}(b) presents the spectral radius of the Hessian and the PG norm $3 \cdot \Big\| \frac{d \pi_{\theta_t}^\top r}{d \theta_t} \Big\|_2$ as functions of time $t$. 
It is evident that the smoothness behaves non-uniformly: it is close to zero at the suboptimal plateau and near $\pi^*$, highly aligned with the PG norm. 
Compared to any universal constant $\beta$, the PG  norm characterizes the non-uniform landscape information far more precisely. We formalize this observation by proving the following key result:
\begin{lemma}[NS]
\label{lem:non_uniform_smoothness_softmax_special}
Denote $\theta_\zeta\coloneqq \theta + \zeta \cdot (\theta^\prime - \theta)$ with some $\zeta\in [0,1]$. For any $r \in \left[ 0, 1\right]^K$, $\theta \mapsto \pi_\theta^\top r$ satisfies $\beta(\theta_\zeta)$ non-uniform smoothness with $\beta(\theta_\zeta) = 3 \cdot \Big\| \frac{d \pi_{\theta_\zeta}^\top r}{d {\theta_\zeta}} \Big\|_2 $.
\end{lemma}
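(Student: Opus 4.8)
The plan is to reduce the non-uniform smoothness inequality to a bound on the spectral norm of the Hessian evaluated at the intermediate point, and then to compute that Hessian explicitly. Writing $s \coloneqq \theta^\prime - \theta$ and $g(\zeta) \coloneqq \pi_{\theta + \zeta s}^\top r$, the map $g$ is smooth (a softmax composed with an affine map, paired with the fixed vector $r$), so Taylor's theorem with the Lagrange remainder supplies some $\zeta \in (0,1)$ with $g(1) - g(0) - g^\prime(0) = \frac{1}{2} g^{\prime\prime}(\zeta)$. Since $g(1) - g(0) - g^\prime(0)$ is exactly the quantity inside the absolute value in \cref{def:non_uniform_smoothness}, and $g^{\prime\prime}(\zeta) = s^\top \nabla^2 (\pi_{\theta_\zeta}^\top r)\, s$, it suffices to prove the pointwise bound $\| \nabla^2 (\pi_{\theta_\zeta}^\top r) \|_2 \le 3 \, \| \nabla (\pi_{\theta_\zeta}^\top r) \|_2$, where $\| \cdot \|_2$ on a matrix denotes the spectral norm. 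Combining this with $|g^{\prime\prime}(\zeta)| \le \|\nabla^2 f(\theta_\zeta)\|_2 \, \|s\|_2^2$ and the leading factor $\tfrac12$ then produces precisely the NS coefficient $\beta(\theta_\zeta) = 3 \, \|\nabla f(\theta_\zeta)\|_2$.

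Second, I would compute the first two derivatives of $f \coloneqq \pi_\theta^\top r$ at an arbitrary $\theta$ using the softmax Jacobian $\partial \pi(a) / \partial \theta(b) = \pi(a)(\delta_{ab} - \pi(b))$. This gives the gradient $\nabla f(b) = \pi(b)\,(r(b) - \bar r)$ with $\bar r \coloneqq \pi^\top r$; denote this vector $v \coloneqq \nabla f$. Differentiating once more and using the key identity $\partial \bar r / \partial \theta(c) = \pi(c)(r(c) - \bar r) = v(c)$ — so that the family closes under differentiation — I expect the Hessian to collapse to the compact rank-structured form
\begin{align*}
\nabla^2 f = \text{diag}(v) - \pi\, v^\top - v\, \pi^\top .
\end{align*}
The cleanliness of this expression, in which the gradient itself reappears as $v$, is exactly what makes the target coefficient $3\|\nabla f\|_2$ natural rather than an artifact of loose estimation.

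Finally, I would bound the spectral norm by the triangle inequality over the three summands. The diagonal term contributes $\|\text{diag}(v)\|_2 = \max_a |v(a)| \le \|v\|_2$, while each rank-one term $\pi v^\top$ and $v \pi^\top$ has spectral norm $\|\pi\|_2 \, \|v\|_2 \le \|\pi\|_1 \, \|v\|_2 = \|v\|_2$, using that $\pi$ is a probability vector so $\|\pi\|_1 = 1$. Summing gives $\|\nabla^2 f\|_2 \le 3\|v\|_2 = 3\|\nabla f\|_2$, which is the claimed coefficient; specializing to $\theta = \theta_\zeta$ completes the proof. The main obstacle I anticipate is the careful and correct derivation of the Hessian, in particular tracking the cross terms $-\pi v^\top - v\pi^\top$ that arise from differentiating $\bar r$; a sign or symmetry slip there would destroy the factor-$3$ bound. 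Everything downstream — the Taylor reduction and the rank-one spectral estimates — is routine once the Hessian is established in the stated form.
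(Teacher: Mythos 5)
Your proof is correct and follows essentially the same route as the paper's: a Taylor--Lagrange reduction to bounding the Hessian at the intermediate point $\theta_\zeta$, the same Hessian computation, and the same three-term estimate producing the factor $3$. Your compact form $\nabla^2 f = \mathrm{diag}(v) - \pi v^\top - v\pi^\top$ with $v = \nabla f$ is just a cleaner packaging of what the paper writes entrywise and bounds via H\"older on the quadratic form $y^\top S y$ (the diagonal piece corresponds to the $(H(\pi_\theta)r)^\top (y \odot y)$ term and the two rank-one pieces to the cross term), so the decomposition and constants are identical.
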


\begin{figure}[ht]
\centering
\includegraphics[width=1.0\linewidth]{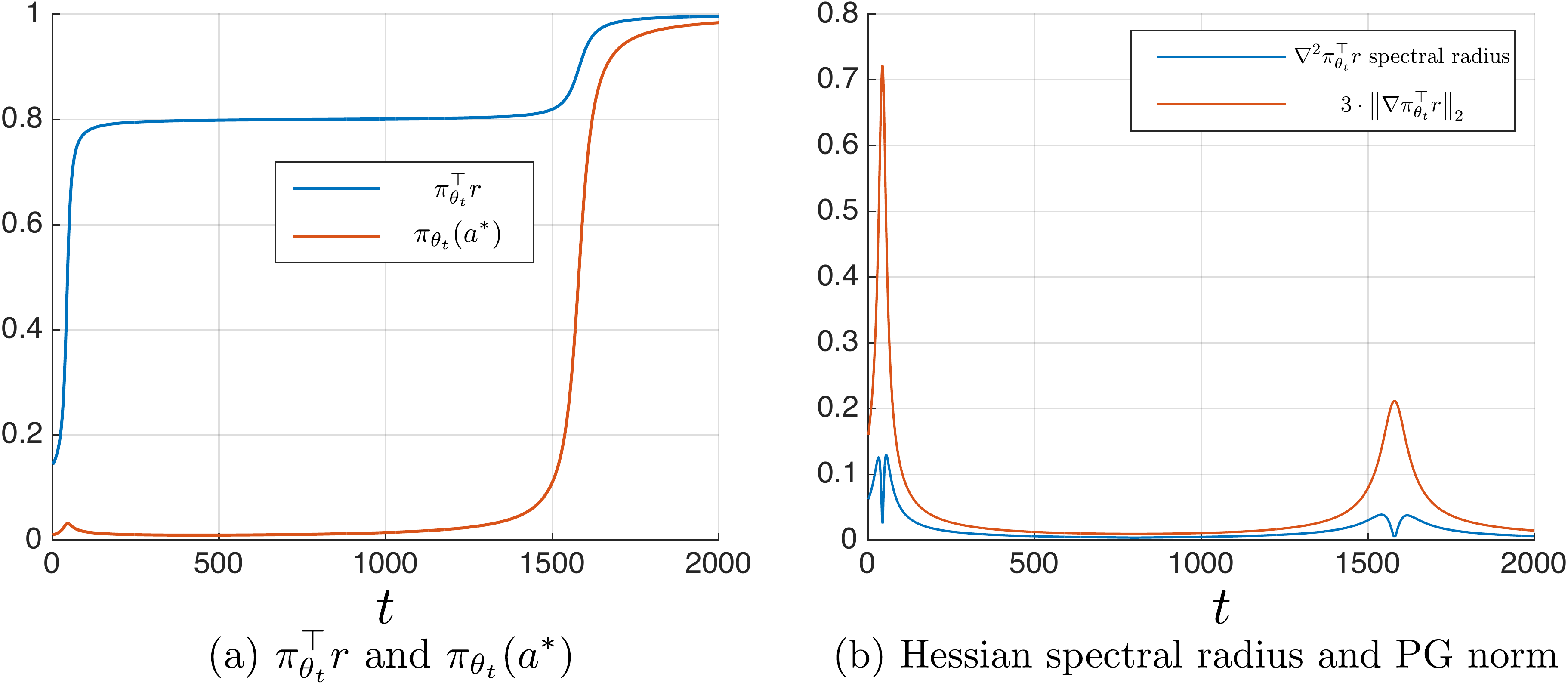}
\caption{PG results on $r = (1.0, 0.8, 0.1)^\top$.} 
\label{fig:expected_reward_non_uniform_smoothness}
\end{figure}

Comparing \cref{lem:non_uniform_smoothness_softmax_special} with (1b) in \cref{thm:general_optimization_main_result_1}, we have $\xi = 0$, and GNGD requires normalizing $\beta(\theta_\zeta)$, which is the PG norm of $\theta_\zeta$ rather than $\theta$. However, $\zeta$ is unknown. Fortunately, the next lemma shows that, if we still normalize the PG norm of $\theta$, the $\beta(\theta_\zeta)$ in \cref{lem:non_uniform_smoothness_softmax_special} can be upper bounded by $\Big\| \frac{d \pi_{\theta}^\top r}{d {\theta}} \Big\|_2$, given the learning rate is small enough:

\begin{lemma}
\label{lem:non_uniform_smoothness_intermediate_policy_gradient_norm_special}
Let $\theta^\prime = \theta + \eta \cdot \frac{d \pi_{\theta}^\top r}{d {\theta}} \Big/ \left\| \frac{d \pi_{\theta}^\top r}{d {\theta}} \right\|_2$. Denote $\theta_\zeta \coloneqq \theta + \zeta \cdot (\theta^\prime - \theta)$ with some $\zeta \in [0,1]$. We have, for all $\eta \in ( 0 , 1/3)$,
\begin{align}
    \bigg\| \frac{d \pi_{\theta_{\zeta}}^\top r}{d {\theta_{\zeta}}} \bigg\|_2 \le \frac{1}{1 - 3 \eta} \cdot \left\| \frac{d \pi_{\theta}^\top r}{d {\theta}} \right\|_2.
\end{align}
\end{lemma}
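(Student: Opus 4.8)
The plan is to control how fast the policy-gradient norm can vary along the normalized step by converting the non-uniform smoothness of \cref{lem:non_uniform_smoothness_softmax_special} into a self-referential (Gr\"onwall-type) integral inequality for the gradient norm, and then to close that inequality with an elementary supremum argument that produces exactly the factor $1/(1-3\eta)$. Write $f(\theta) \coloneqq \pi_\theta^\top r$; this map is $C^\infty$, so for the purpose of the argument \cref{lem:non_uniform_smoothness_softmax_special} is equivalent to the pointwise spectral bound $\| \nabla^2 f(\vartheta) \|_2 \le 3 \, \| \nabla f(\vartheta) \|_2$ holding at every $\vartheta$ (take $\theta^\prime \to \vartheta$ in the NS inequality with base point $\vartheta$, use the Taylor expansion of the left-hand side, and note that the statement is uniform over base points; the symmetry of the Hessian then turns the quadratic-form bound into a spectral-norm bound). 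I would parametrize the segment by $\theta_s \coloneqq \theta + s \, (\theta^\prime - \theta)$ for $s \in [0,1]$ and set $\phi(s) \coloneqq \| \nabla f(\theta_s) \|_2$. Since $\theta^\prime - \theta = \eta \cdot \nabla f(\theta) / \| \nabla f(\theta) \|_2$ is $\eta$ times a unit vector, we have $\| \theta^\prime - \theta \|_2 = \eta$, which is precisely where the step size enters the estimate.

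The key steps, in order, are: (i) differentiate along the segment, $\frac{d}{ds} \nabla f(\theta_s) = \nabla^2 f(\theta_s)\,(\theta^\prime - \theta)$, and bound its norm by $\| \nabla^2 f(\theta_s) \|_2 \cdot \| \theta^\prime - \theta \|_2 \le 3 \eta \, \phi(s)$ via the spectral bound above; (ii) integrate from $0$ to $\zeta$ and apply the triangle inequality to obtain $\phi(\zeta) \le \phi(0) + 3 \eta \int_0^\zeta \phi(s)\, ds$; (iii) close the loop with the supremum trick — letting $M \coloneqq \sup_{s \in [0,1]} \phi(s)$ (finite and attained, by continuity on a compact set), the integral inequality gives $M \le \phi(0) + 3 \eta\, M$, hence $M \le \phi(0)/(1-3\eta)$ precisely because $\eta < 1/3$ keeps $1-3\eta$ positive. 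Since $\phi(\zeta) \le M$ and $\phi(0) = \| \nabla f(\theta) \|_2$, this is exactly the claimed bound.

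The main obstacle is step (i): extracting a usable Hessian (equivalently, local gradient-Lipschitz) estimate from the NS statement of \cref{lem:non_uniform_smoothness_softmax_special}, whose coefficient $3 \| \nabla f(\theta_s) \|_2$ is evaluated at the \emph{moving} point $\theta_s$ rather than at the fixed base point $\theta$. This self-reference is exactly what rules out a one-line Lipschitz estimate and forces the Gr\"onwall/supremum machinery; it is also why the restriction $\eta < 1/3$ is both natural and essentially sharp, as the factor $1-3\eta$ must remain positive for the self-bounding inequality in step (iii) to be invertible. I note in passing that one could instead integrate the differential inequality $\phi^\prime(s) \le 3\eta\,\phi(s)$ directly to obtain the even sharper bound $\phi(\zeta) \le \phi(0)\, e^{3\eta}$ (indeed $e^{3\eta} < 1/(1-3\eta)$ on $(0,1/3)$), but the elementary supremum argument already delivers the stated constant without invoking Gr\"onwall's lemma, and matches the form used downstream when feeding this estimate back into \cref{lem:non_uniform_smoothness_softmax_special}.
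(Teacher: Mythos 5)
Your proof is correct, and it rests on the same two estimates as the paper's proof: the pointwise spectral bound $\|\nabla^2 f(\vartheta)\|_2 \le 3\,\|\nabla f(\vartheta)\|_2$ (which the paper cites as \cref{eq:non_uniform_smoothness_softmax_special_Hessian_spectral_radius} from inside the proof of \cref{lem:non_uniform_smoothness_softmax_special}), and the integral inequality $\phi(\zeta) \le \phi(0) + 3\eta \int_0^\zeta \phi(s)\,ds$ obtained by integrating the derivative of the gradient along the segment, using $\|\theta^\prime - \theta\|_2 = \eta$. The genuine difference is in how the self-referential inequality is closed. The paper re-applies the same inequality at each intermediate point $\theta_{\zeta_2}$, then at $\theta_{\zeta_3}$, and so on, nesting sub-segments ad infinitum so that the bound unrolls into the geometric series $\sum_{i \ge 0} (3\eta)^i = 1/(1-3\eta)$; this implicitly requires the iterated remainder $(3\eta)^n \int \cdots \int \phi$ to vanish, which holds because $\phi$ is bounded on the segment but is never stated. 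You instead close the loop in one step: with $M \coloneqq \sup_{s \in [0,1]} \phi(s)$ finite by continuity on a compact set, the inequality yields $M \le \phi(0) + 3\eta M$, hence $M \le \phi(0)/(1-3\eta)$. Your version is more elementary and makes explicit the boundedness fact the paper leaves tacit. Two further remarks. First, since you argue from the \emph{statement} of \cref{lem:non_uniform_smoothness_softmax_special} rather than its proof, your limiting argument $\theta^\prime \to \vartheta$ recovering the pointwise Hessian bound is a necessary bridge --- the statement, with its mean-value point ``some $\zeta$'', is formally weaker than the pointwise bound --- and you carry it out correctly (the intermediate point collapses to $\vartheta$, and symmetry of the Hessian upgrades the quadratic-form bound to a spectral-norm bound). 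Second, your observation that integrating the differential inequality directly would give the sharper constant $e^{3\eta}$ is accurate, and, as you note, immaterial here since the $1/(1-3\eta)$ form is what \cref{thm:final_rates_normalized_softmax_pg_special} consumes.
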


Next, the N\L{} coefficient $\pi_\theta(a^*)$ is bounded away from $0$, which provides constants in the convergence rate results.

\begin{lemma}[Non-vanishing N\L{} coefficient]
\label{lem:optimal_action_prob_positive}
\label{lem:lower_bound_cT_softmax_special}
Using normalized policy gradient method, we have $\inf_{t\ge 1} \pi_{\theta_t}(a^*) > 0$. 
\end{lemma}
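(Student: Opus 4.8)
The plan is to reduce the claim to a uniform upper bound on the log-odds of the suboptimal actions against $a^*$, and to control these log-odds using two monotonicities together with the reward gap. Writing $\bar r_t \coloneqq \pi_{\theta_t}^\top r$ and using the softmax gradient identity $\frac{\partial (\pi_{\theta}^\top r)}{\partial \theta(a)} = \pi_\theta(a)\,(r(a) - \pi_\theta^\top r)$, the coordinate of the gradient at $a^*$ equals $\pi_{\theta_t}(a^*)\,(r(a^*) - \bar r_t) \ge 0$, because $a^*$ has the largest reward. Normalization multiplies the gradient by a positive scalar and so preserves the sign of every coordinate; hence $\theta_t(a^*)$ is non-decreasing in $t$. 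Setting $z_t(a) \coloneqq \theta_t(a) - \theta_t(a^*)$ for $a \neq a^*$, we have $\pi_{\theta_t}(a^*) = 1/\bigl(1 + \sum_{a \neq a^*} e^{z_t(a)}\bigr) \ge 1/\bigl(1 + (K-1)\,e^{\,\max_{a \neq a^*} z_t(a)}\bigr)$, so it suffices to prove that $\sup_{t}\max_{a\neq a^*} z_t(a) < \infty$.

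Next I would exploit that the value $\bar r_t$ is monotonically non-decreasing along GNGD. This is the ascent guarantee supplied by the NS property (\cref{lem:non_uniform_smoothness_softmax_special}) together with \cref{lem:non_uniform_smoothness_intermediate_policy_gradient_norm_special}: for $\eta \in (0,1/3)$ the normalized step with NS coefficient equal to the PG norm improves the objective by at least $\tfrac12\eta\,\| \nabla (\pi_{\theta_t}^\top r) \|_2$ each iteration. Since $\bar r_t \le r(a^*)$, the sequence converges to some limit $v_\infty$, and summability of the per-step improvements forces $\| \nabla (\pi_{\theta_t}^\top r) \|_2 \to 0$. A one-line sign check shows the increment $z_{t+1}(a) - z_t(a) = \eta\,(\hat g_t(a) - \hat g_t(a^*))$ is non-positive whenever $r(a) \le \bar r_t$ (then $\hat g_t(a) \le 0 \le \hat g_t(a^*)$, where $\hat g_t$ denotes the normalized gradient). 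Because $\bar r_t$ is non-decreasing, once it exceeds $r(a)$ it stays above, so each $z_t(\cdot)$ can only grow during the initial phase in which $r(a) > \bar r_t$, and its running maximum is attained by the first time $\tau_a$ that $\bar r_t \ge r(a)$.

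The heart of the argument—and the main obstacle—is to show simultaneously that every $\tau_a$ is finite and that the growth of $z_t(a)$ accumulated before $\tau_a$ is bounded; equivalently, that the iterates are not trapped near a deterministic suboptimal policy with $v_\infty < r(a^*)$. The decisive mechanism is the interaction between normalization and the reward gap $\Delta \coloneqq r(a^*) - \max_{a\neq a^*} r(a) > 0$: whenever $\bar r_t < r(a)$ one has $r(a^*) - \bar r_t > \Delta$, so the \emph{normalized} optimal-action drift $\hat g_t(a^*) = \pi_{\theta_t}(a^*)(r(a^*)-\bar r_t)/\| \nabla (\pi_{\theta_t}^\top r)\|_2$ remains active rather than vanishing, which is exactly the advantage that raw PG lacks. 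I would make this quantitative by a layered argument over the distinct reward values from the top down: bounding $\| \nabla(\pi_{\theta_t}^\top r)\|_2 \ge \pi_{\theta_t}(a^*)\,\Delta$ during the growth phase, converting this (via the $\tfrac12\eta\|\nabla\|$ value improvement) into an upper bound on the total normalized mass that can flow into any suboptimal coordinate before $\bar r_t$ clears $r(a)$, and thereby closing the loop: a bounded $z_t(a)$ keeps $\pi_{\theta_t}(a^*)$ away from $0$, which in turn keeps $\hat g_t(a^*)$ active and forces $v_\infty = r(a^*)$ and $\tau_a < \infty$. This is where most of the technical work (and the care needed to avoid circular dependence between ``$z$ bounded'' and ``$v_\infty = r(a^*)$'') resides.

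Finally I would assemble the pieces: a uniform bound $\max_{a\neq a^*} z_t(a) \le M$ gives $\pi_{\theta_t}(a^*) \ge 1/(1 + (K-1)e^{M}) > 0$ for all $t$, which is the claim. As a sanity check and an alternative route for the post-$\tau$ regime, I would also record the companion monotonicity that once $a^*$ is the modal action the inequality $\pi_{\theta_t}(a^*)(r(a^*)-\bar r_t) \ge \sum_a \pi_{\theta_t}(a)^2 (r(a)-\bar r_t)$ holds (split the sum by the sign of $r(a)-\bar r_t$ and use $\pi_{\theta_t}(a) \le \pi_{\theta_t}(a^*)$ with $\sum_{r(a) > \bar r_t}\pi_{\theta_t}(a)(r(a)-\bar r_t) \le r(a^*) - \bar r_t$), whose discrete $e^{\eta\hat g}$ version shows $\pi_{\theta_t}(a^*)$ is non-decreasing thereafter and hence bounded below by its value at the first modal time.
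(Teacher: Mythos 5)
Your first two paragraphs are correct and are a genuinely different starting point from the paper's proof: $\theta_t(a^*)$ is indeed non-decreasing, $z_t(a)$ is non-increasing at any step with $r(a) \le \bar r_t$, and \cref{lem:non_uniform_smoothness_softmax_special,lem:non_uniform_smoothness_intermediate_policy_gradient_norm_special} do give per-step improvement proportional to $\big\| \frac{d \pi_{\theta_t}^\top r}{d\theta_t} \big\|_2$ (with your constant $1/2$ valid for $\eta \le 1/6$, not on all of $(0,1/3)$). However, there is a genuine gap exactly where you place ``the heart of the argument,'' and the mechanism you sketch cannot close it. Suppose, for contradiction, that $\tau_a = \infty$ for a second-best action $a$, so $\bar r_t < r(a) \le r(a^*) - \Delta$ for all $t$. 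Your bound $\|\nabla_t\|_2 \ge \pi_{\theta_t}(a^*)\,\Delta$ plus the ascent guarantee only yields $\sum_t \pi_{\theta_t}(a^*) \le c/\Delta < \infty$, which is perfectly consistent with the contradiction hypothesis: the per-step growth of $z_t(a)$ is a \emph{normalized} quantity of size up to $\eta$ even when $\|\nabla_t\|_2$ is tiny, so $z_t(a)$ can grow linearly in the number of growth steps while $\pi_{\theta_t}(a^*)$ shrinks geometrically, making your improvement lower bound summable rather than divergent. In other words, the loop ``bounded $z$ $\Rightarrow$ $\pi(a^*)$ bounded below $\Rightarrow$ drift active $\Rightarrow$ $\tau_a < \infty$ $\Rightarrow$ bounded $z$'' does not force a contradiction, because the two quantities you control ($\sum_t \|\nabla_t\|_2 < \infty$ and per-step $z$-increments $\le \eta$) are scale-invariant in exactly the wrong way; nothing in the plan rules out the iterates drifting toward a suboptimal corner where both sides of the loop degrade together. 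Ruling out this trapping \emph{is} the content of the lemma, and it is left open.

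A secondary error: your closing ``companion monotonicity'' does not give the discrete-time claim. From the averaged inequality $\langle \pi_{\theta_t}, \hat g_t \rangle \le \hat g_t(a^*)$ one cannot conclude $\sum_a \pi_{\theta_t}(a)\, e^{\eta \hat g_t(a)} \le e^{\eta \hat g_t(a^*)}$ — Jensen's inequality runs in the opposite direction. What does work is \emph{coordinatewise} dominance: if $\hat g_t(a) \le \hat g_t(a^*)$ for every $a$ (which holds whenever $\pi_{\theta_t}(a^*)$ is modal), then $\pi_{\theta_{t+1}}(a^*) \ge \pi_{\theta_t}(a^*)$ follows directly from the softmax form. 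This is precisely how the paper proceeds: it defines the region $\gR_1$ of coordinatewise gradient dominance, proves $\gR_1$ is invariant under the normalized update with $\pi_{\theta_t}(a^*)$ non-decreasing inside it, shows that the modal region $\gR_2$ and the high-probability region $\gN_c$ are contained in $\gR_1$, and then establishes finite-time entry into $\gN_c$ (following \citet[Lemma 5]{mei2020global}, which rests on asymptotic global convergence), so that $\inf_{t \ge 1} \pi_{\theta_t}(a^*)$ is a minimum over a finite prefix and hence positive. To salvage your route you would need an independent analogue of that last step — an argument that the normalized iterates cannot converge to a suboptimal corner — since neither your monotonicity facts nor the summability of the gradient norms supplies it.
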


To this point, we demonstrate that the non-concave function $\pi_\theta^\top r$ satisfies (1b) in \cref{thm:general_optimization_main_result_1} with $\xi = 0$ \textit{in each iteration of normalized PG}\footnote{This essentially means we prove that a uniform \L{}ojasiewicz inequality holds for the entire sequence $\left\{ \theta_t \right\}_{t\ge 1}$, but this does not
imply that the N\L{} condition is unnecessary. As shown in \citet[Remark 1]{mei2020global}, \L{}ojasiewicz-type inequalities with constant
$C > 0$ cannot hold. It can only become uniform after specifying an initialization $\theta_1$ and an algorithm (in this case, PG). Otherwise, uniform \L{}ojasiewicz cannot hold since initialization can make the N\L{}
coefficient $\pi_\theta(a^*)$ arbitrarily close to $0$.}: \cref{lem:non_uniform_smoothness_softmax_special,lem:non_uniform_smoothness_intermediate_policy_gradient_norm_special} show that the NS coefficient $\beta(\theta_{\zeta_t}) \le c_1 \cdot \Big\| \frac{d \pi_{\theta_t}^\top r}{d {\theta_t}} \Big\|_2$, while \cref{lem:non_uniform_lojasiewicz_softmax_special,lem:lower_bound_cT_softmax_special} guarantee $\Big\| \frac{d \pi_{\theta_t}^\top r}{d {\theta_t}} \Big\|_2 \ge c_2 \cdot ( \pi^* - \pi_{\theta_t} )^\top r$. Therefore, combining \cref{lem:non_uniform_smoothness_softmax_special,lem:non_uniform_lojasiewicz_softmax_special,lem:optimal_action_prob_positive,lem:non_uniform_smoothness_intermediate_policy_gradient_norm_special}, we prove the global linear convergence rate $O(e^{- c \cdot t})$ of normalized PG:

\begin{theorem}
\label{thm:final_rates_normalized_softmax_pg_special}
Using normalized PG $\theta_{t+1} = \theta_t + \eta \cdot \frac{d \pi_{\theta_t}^\top r}{d {\theta_t}} \Big/ \Big\| \frac{d \pi_{\theta_t}^\top r}{d {\theta_t}} \Big\|_2$, with $\eta = 1/6$, for all $t \ge 1$, we have, 
\begin{align}
    ( \pi^* - \pi_{\theta_t} )^\top r \le e^{ - \frac{  c \cdot (t-1) }{12} } \cdot \left( \pi^* - \pi_{\theta_{1}}\right)^\top r,
\end{align}
where $c = \inf_{t\ge 1} \pi_{\theta_t}(a^*) > 0$ is from \cref{lem:lower_bound_cT_softmax_special}, and $c$ is a constant that depends on $r$ and $\theta_1$, but not on the time $t$.
\end{theorem}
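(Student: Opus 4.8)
The plan is to chain the four preceding lemmas into a single one-step geometric contraction of the sub-optimality gap $\delta_t \coloneqq ( \pi^* - \pi_{\theta_t} )^\top r$ and then unroll the resulting recursion. Throughout I write $g_t \coloneqq \frac{d \pi_{\theta_t}^\top r}{d \theta_t}$ for the policy gradient; since the task is maximization, I use the ascent (lower-bound) form of non-uniform smoothness and track the per-step \emph{increase} of $\pi_{\theta_t}^\top r$.

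First I would establish a per-step ascent guarantee. Writing $\theta_{\zeta_t} \coloneqq \theta_t + \zeta (\theta_{t+1} - \theta_t)$ for the intermediate point supplied by \cref{lem:non_uniform_smoothness_softmax_special} and $g_{\zeta_t}$ for the gradient evaluated there, that lemma furnishes the NS coefficient $\beta(\theta_{\zeta_t}) = 3 \| g_{\zeta_t} \|_2$, so its ascent form gives
\begin{align*}
\pi_{\theta_{t+1}}^\top r - \pi_{\theta_t}^\top r \ge \langle g_t, \theta_{t+1} - \theta_t \rangle - \tfrac{3}{2} \| g_{\zeta_t} \|_2 \cdot \| \theta_{t+1} - \theta_t \|_2^2 .
\end{align*}
For the normalized update $\theta_{t+1} - \theta_t = \eta \cdot g_t / \| g_t \|_2$, so $\| \theta_{t+1} - \theta_t \|_2 = \eta$ and $\langle g_t, \theta_{t+1} - \theta_t \rangle = \eta \| g_t \|_2$. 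Substituting and bounding the intermediate norm via \cref{lem:non_uniform_smoothness_intermediate_policy_gradient_norm_special}, namely $\| g_{\zeta_t} \|_2 \le \frac{1}{1 - 3\eta} \| g_t \|_2$, yields
\begin{align*}
\pi_{\theta_{t+1}}^\top r - \pi_{\theta_t}^\top r \ge \eta \Big( 1 - \tfrac{3 \eta}{2 ( 1 - 3 \eta )} \Big) \| g_t \|_2 .
\end{align*}
Plugging in $\eta = 1/6$ makes $1 - 3\eta = 1/2$ and the bracket equal to $1/2$, so the increase is at least $\frac{1}{12} \| g_t \|_2$.

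Next I would convert this gradient-norm progress into progress on the gap. The N\L{} inequality \cref{lem:non_uniform_lojasiewicz_softmax_special} gives $\| g_t \|_2 \ge \pi_{\theta_t}(a^*) \cdot \delta_t$, and since $\delta_t - \delta_{t+1} = \pi_{\theta_{t+1}}^\top r - \pi_{\theta_t}^\top r$, we obtain $\delta_{t+1} \le \big( 1 - \tfrac{1}{12} \pi_{\theta_t}(a^*) \big) \delta_t$. By \cref{lem:optimal_action_prob_positive} the trajectory-wide constant $c \coloneqq \inf_{t \ge 1} \pi_{\theta_t}(a^*)$ is strictly positive, hence $\delta_{t+1} \le ( 1 - c/12 ) \delta_t$. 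Unrolling from $t=1$ and using $1 - x \le e^{-x}$ then gives $\delta_t \le e^{ - c (t-1)/12 } \delta_1$, which is the claimed bound.

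The one genuine subtlety — rather than an obstacle — is that the NS coefficient is evaluated at the unknown intermediate point $\theta_{\zeta_t}$ instead of at $\theta_t$; this is exactly what \cref{lem:non_uniform_smoothness_intermediate_policy_gradient_norm_special} is designed to remove, and it is the reason the step size must stay below $1/3$ (the choice $\eta = 1/6$ keeps $1 - 3\eta = 1/2$ and produces the clean factor $1/12$). The only remaining care is bookkeeping: checking the sign convention of the ascent form and confirming that $c$ depends on $r$ and $\theta_1$ through the entire sequence but not on $t$, which is precisely the content of \cref{lem:optimal_action_prob_positive}. Given the four lemmas, no further inequality is required.
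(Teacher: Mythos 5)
Your proposal is correct and takes essentially the same route as the paper's proof: it chains \cref{lem:non_uniform_smoothness_softmax_special} (NS), \cref{lem:non_uniform_smoothness_intermediate_policy_gradient_norm_special} (bounding the gradient norm at the intermediate point $\theta_{\zeta_t}$), \cref{lem:non_uniform_lojasiewicz_softmax_special} (N\L{}), and \cref{lem:lower_bound_cT_softmax_special} (non-vanishing coefficient) into the one-step contraction $\delta_{t+1} \le (1 - c/12)\,\delta_t$ and then unrolls with $1 - x \le e^{-x}$. The only difference is cosmetic: you write the smoothness step in ascent form, while the paper bounds $\pi_{\theta_t}^\top r - \pi_{\theta_{t+1}}^\top r$ from above; the arithmetic, including the factor $1/12$ produced by $\eta = 1/6$, is identical.
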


\begin{remark}
If $\pi_{\theta_1}$ is uniform, i.e., $\pi_{\theta_1}(a) = 1/K$, $\forall \ a \in [K]$, then we have $c \ge 1/K$ in \cref{thm:final_rates_normalized_softmax_pg_special}. This can be proved by showing that $\pi_{\theta_{t+1}}(a^*) \ge \pi_{\theta_{t}}(a^*)$, similar to \citet[Proposition 2]{mei2020global}.
\end{remark}

\subsection{Geometry-aware Normalized PG (GNPG)}

Next, we generalize from one-state to finite MDPs, using the GNPG\footnote{We use GNPG as the name of \cref{alg:normalized_policy_gradient_softmax}, since NPG is usually used to refer to the natural PG algorithm in RL literature.} on value function, as shown in \cref{alg:normalized_policy_gradient_softmax}.
\begin{algorithm}[ht]
   \caption{Geometry-aware Normalized Policy Gradient}
   \label{alg:normalized_policy_gradient_softmax}
\begin{algorithmic}
   \STATE {\bfseries Input:} Learning rate $\eta > 0$.
   \STATE Initialize parameter $\theta_1(s,a)$ for all $(s,a)$.
   \WHILE{$t \ge 1$}
   \STATE $\theta_{t+1} \gets \theta_{t} + \eta \cdot \frac{\partial V^{\pi_{\theta_t}}(\mu)}{\partial \theta_t} \Big/ \left\| \frac{\partial V^{\pi_{\theta_t}}(\mu)}{\partial \theta_t} \right\|_2 $.
   \ENDWHILE
\end{algorithmic}
\end{algorithm}

\subsection{General MDPs}

For general finite MDPs, we assume ``sufficient exploration'' for the initial state distribution $\mu$, which is also adapted in literature \citep{agarwal2020optimality,mei2020global}.

\begin{assumption}[Sufficient exploration]\label{ass:posinit}
The initial state distribution satisfies $\min_s \mu(s)>0$.
\end{assumption}

Given \cref{ass:posinit}, \citet{agarwal2020optimality} prove asymptotic global convergence for PG on the non-concave $\max_{\theta}{ V^{\pi_\theta}(\rho)}$ problem,
while \citet{mei2020global} strengthens this to a $O(1/t)$ rate using uniform smoothness and the following N\L{} inequality that generalizes \cref{lem:non_uniform_lojasiewicz_softmax_special}.

\begin{lemma}[N\L{}, \citet{mei2020global}, Lemma 8] Denote $S \coloneqq | \gS| $ as the total number of states. We have, $\forall \ \theta \in \sR^{\gS \times \gA}$,
\label{lem:non_uniform_lojasiewicz_softmax_general}
\begin{align*}
    \left\| \frac{\partial V^{\pi_\theta}(\mu)}{\partial \theta }\right\|_2 \ge \frac{ \min_s{ \pi_\theta(a^*(s)|s) } }{ \sqrt{S} \cdot  \left\| d_{\rho}^{\pi^*} / d_{\mu}^{\pi_\theta} \right\|_\infty } \cdot \left( V^*(\rho) - V^{\pi_\theta}(\rho) \right),
\end{align*}
where $a^*(s)$ is the action that $\pi^*$ selects in state $s$.
\end{lemma}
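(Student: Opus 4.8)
The plan is to chain the softmax policy-gradient expression, two elementary norm inequalities, and the performance difference lemma, using the distribution-mismatch ratio to bridge the gradient (computed under the initial distribution $\mu$) and the sub-optimality gap (measured under $\rho$). First I would invoke the policy gradient theorem for the softmax parameterization, which gives the coordinatewise closed form
\[
  \frac{\partial V^{\pi_\theta}(\mu)}{\partial \theta(s,a)} = \frac{1}{1-\gamma}\, d_\mu^{\pi_\theta}(s)\, \pi_\theta(a \mid s)\, A^{\pi_\theta}(s,a),
\]
so the gradient factorizes through the discounted visitation distribution, the action probabilities, and the advantage function.

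Next I would lower-bound the $\ell_2$ norm of the full gradient by discarding every coordinate except those indexed by the optimal actions, i.e. keeping only the set $\{(s, a^*(s))\}_{s \in \gS}$. The resulting $S$-dimensional sub-vector has $\ell_2$ norm at least $1/\sqrt{S}$ times its $\ell_1$ norm (Cauchy--Schwarz), which turns the bound into a sum of absolute values over states. Factoring out the uniform lower bound $\min_s \pi_\theta(a^*(s) \mid s)$ on the action-probability terms then yields
\[
  \left\| \frac{\partial V^{\pi_\theta}(\mu)}{\partial \theta} \right\|_2 \ge \frac{\min_s \pi_\theta(a^*(s) \mid s)}{\sqrt{S}} \cdot \frac{1}{1-\gamma} \sum_s d_\mu^{\pi_\theta}(s) \left| A^{\pi_\theta}(s, a^*(s)) \right|,
\]
where I deliberately retain the absolute values.

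The final, and most delicate, step relates this surviving sum to the sub-optimality gap. By the performance difference lemma \citep{agarwal2020optimality} together with the fact that $\pi^*$ deterministically plays $a^*(s)$, one has $V^*(\rho) - V^{\pi_\theta}(\rho) = \frac{1}{1-\gamma} \sum_s d_\rho^{\pi^*}(s)\, A^{\pi_\theta}(s, a^*(s))$. Bounding each term by its absolute value, rewriting $d_\rho^{\pi^*}(s) = d_\mu^{\pi_\theta}(s) \cdot \big( d_\rho^{\pi^*}(s) / d_\mu^{\pi_\theta}(s) \big)$, and pulling out $\left\| d_\rho^{\pi^*} / d_\mu^{\pi_\theta} \right\|_\infty$ gives
\[
  V^*(\rho) - V^{\pi_\theta}(\rho) \le \frac{1}{1-\gamma} \left\| \frac{d_\rho^{\pi^*}}{d_\mu^{\pi_\theta}} \right\|_\infty \sum_s d_\mu^{\pi_\theta}(s) \left| A^{\pi_\theta}(s, a^*(s)) \right|.
\]
The sum on the right is exactly the quantity controlled from below by the gradient norm, so combining the two displays and rearranging produces the claimed inequality.

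I expect the main obstacle to be the handling of the advantage signs: the individual terms $A^{\pi_\theta}(s, a^*(s))$ need not be nonnegative, so one cannot directly compare the $\mu$-weighted and $\rho$-weighted advantage sums term by term. Keeping absolute values on both the gradient-norm side and the performance-difference side is precisely what makes the two inequalities orient the same way, allowing the mismatch coefficient $\left\| d_\rho^{\pi^*} / d_\mu^{\pi_\theta} \right\|_\infty$ to absorb the change of measure. A secondary technical point is guaranteeing this coefficient is finite, which follows from \cref{ass:posinit} ($\min_s \mu(s) > 0$) and the elementary bound $d_\mu^{\pi_\theta}(s) \ge (1-\gamma)\, \mu(s) > 0$.
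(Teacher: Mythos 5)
Your proposal is correct and follows essentially the same route as the paper's proof: the softmax policy gradient theorem, restriction of the gradient to the optimal-action coordinates, Cauchy--Schwarz to pass from the $\ell_2$ norm to an $\ell_1$ sum at cost $1/\sqrt{S}$, the distribution-mismatch coefficient $\left\| d_{\rho}^{\pi^*} / d_{\mu}^{\pi_\theta} \right\|_\infty$ for the change of measure, and the performance difference lemma combined with the determinism of $\pi^*$. The only difference is organizational rather than mathematical: you apply the change of measure on the performance-difference side and meet in the middle at the pivot quantity $\sum_s d_\mu^{\pi_\theta}(s) \left| A^{\pi_\theta}(s, a^*(s)) \right|$, whereas the paper runs a single chain of inequalities starting from the gradient norm and performs the identical change of measure there.
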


Here, the N\L{} degree $\xi = 0$ is not improvable \citet[Lemma 28]{mei2020global}. In one-state MDPs with $S = 1$, \cref{lem:non_uniform_lojasiewicz_softmax_general} recovers \cref{lem:non_uniform_lojasiewicz_softmax_special} with the same N\L{} coefficient $C(\theta) = \pi_\theta(a^*)$, indicating that $C(\theta)$ in \cref{lem:non_uniform_lojasiewicz_softmax_general} might also be unimprovable. On the other hand, the uniform smoothness considered in \citep{agarwal2020optimality,mei2020global} $\beta = 8 / (1 - \gamma)^3$ is too conservative, particularly when $\gamma$ is close to $1$. Our next key result shows that the policy value also satisfies a stronger NS property, with the NS coefficient being the PG norm, generalizing \cref{lem:non_uniform_smoothness_softmax_special}:
\begin{lemma}[NS]
\label{lem:non_uniform_smoothness_softmax_general}
Let \cref{ass:posinit} hold and denote $\theta_\zeta \coloneqq \theta + \zeta \cdot (\theta^\prime - \theta)$ with some $\zeta \in [0,1]$. $\theta \mapsto V^{\pi_\theta}(\mu)$ satisfies $\beta(\theta_\zeta)$ non-uniform smoothness with
\begin{align*}
    \beta(\theta_\zeta) = \left[ 3 + \frac{ 4 \cdot \left( C_\infty - (1 - \gamma) \right) }{ 1 - \gamma } \right] \cdot \sqrt{S} \cdot \left\| \frac{\partial V^{\pi_{\theta_\zeta}}(\mu)}{\partial {\theta_\zeta} }\right\|_2,
\end{align*}
where $C_\infty \coloneqq \max_{\pi}{ \left\| \frac{d_{\mu}^{\pi}}{ \mu} \right\|_\infty} \le \frac{1}{ \min_s \mu(s) } < \infty$.
\end{lemma}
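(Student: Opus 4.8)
The plan is to prove the statement by a second-order Taylor argument, reducing non-uniform smoothness to a pointwise spectral bound on the Hessian of $\theta \mapsto V^{\pi_\theta}(\mu)$. Since $V^{\pi_\theta}(\mu)$ is an analytic function of $\theta$ (a rational function of the softmax probabilities composed with the resolvent $(I-\gamma P_{\pi_\theta})^{-1}$), Taylor's theorem with Lagrange remainder gives, for each pair $\theta,\theta'$, some $\zeta\in[0,1]$ with
\[
V^{\pi_{\theta'}}(\mu) - V^{\pi_\theta}(\mu) - \Big\langle \frac{\partial V^{\pi_\theta}(\mu)}{\partial \theta}, \theta' - \theta \Big\rangle = \frac{1}{2}(\theta' - \theta)^\top \frac{\partial^2 V^{\pi_{\theta_\zeta}}(\mu)}{\partial \theta_\zeta^2}(\theta' - \theta).
\]
Thus it suffices to establish the \emph{pointwise} bound $\big\|\partial^2 V^{\pi_\phi}(\mu)/\partial\phi^2\big\|_2 \le \beta(\phi)$ for every $\phi\in\Theta$, and then specialize to $\phi=\theta_\zeta$. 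I would in turn bound the spectral norm by $\sup_{\|u\|_2=1}\big|\frac{d^2}{d\alpha^2}V^{\pi_{\phi+\alpha u}}(\mu)\big|_{\alpha=0}\big|$, so the whole task becomes controlling a scalar second directional derivative along an arbitrary unit direction $u\in\sR^{\gS\times\gA}$.

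The template is the one-state computation underlying \cref{lem:non_uniform_smoothness_softmax_special}. Writing $\theta_\alpha=\theta+\alpha u$ and differentiating the softmax twice, one obtains
\[
\frac{d^2}{d\alpha^2}\big(\pi_{\theta_\alpha}^\top r\big)\Big|_{\alpha=0} = \sum_a \pi_\theta(a)\, A_\theta(a)\,(u_a - \bar u)^2, \qquad \bar u=\pi_\theta^\top u,
\]
with $A_\theta(a)=r_a-\pi_\theta^\top r$. Here the key is that the advantage-weighting, together with the mean-zero identity $\sum_a \pi_\theta(a)A_\theta(a)=0$, kills the $\bar u^2$ cross term when $(u_a-\bar u)^2$ is expanded, leaving $\sum_a\pi_\theta(a)A_\theta(a)u_a^2 - 2\bar u\sum_a\pi_\theta(a)A_\theta(a)u_a$; Cauchy--Schwarz on each piece (using $\sum_a u_a^4\le\|u\|_2^4=1$ and $|\bar u|\le\|u\|_2=1$) bounds this by $3\|\partial(\pi_\theta^\top r)/\partial\theta\|_2$. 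This is exactly what converts a second-derivative estimate into a \emph{gradient-norm} estimate rather than a uniform constant. For general MDPs I would use the policy-gradient identity $\frac{\partial V^{\pi_\theta}(\mu)}{\partial\theta(s,a)}=\frac{1}{1-\gamma}d_\mu^{\pi_\theta}(s)\,\pi_\theta(a|s)\,A^{\pi_\theta}(s,a)$ and differentiate it once more along $u$; the term coming from the second-order variation of the softmax probabilities (holding $d_\mu$ fixed) reproduces the one-state bound per state $s$, weighted by $d_\mu^{\pi_\theta}(s)$, and summing over $s$ with Cauchy--Schwarz across $\gS\times\gA$ produces both the coefficient $3$ and the $\sqrt{S}$ factor.

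The main obstacle is the remaining contributions, namely the $\alpha$-derivative of the visitation distribution $d_\mu^{\pi_{\theta_\alpha}}$ and of the future value $A^{\pi_{\theta_\alpha}}$; these are absent in the one-state case and are precisely the source of the $\frac{4\,(C_\infty-(1-\gamma))}{1-\gamma}$ factor. A crude bound on the resolvent-derivative terms $\gamma\, M\,(dP/d\alpha)\,M$, where $M=(I-\gamma P_{\pi_{\theta_\alpha}})^{-1}$, would only yield a uniform $1/(1-\gamma)^{c}$ constant as in \citet{agarwal2020optimality,mei2020global}; the delicate point is to re-express these terms so that the factor $\pi_{\theta}(a|s)A^{\pi_{\theta}}(s,a)$ appearing in $\partial V/\partial\theta$ is factored out, keeping the estimate proportional to the PG norm. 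The ingredients I expect to need are: $|V^\pi|,|Q^\pi|\le 1/(1-\gamma)$; the softmax Jacobian identity $\partial\pi(a|s)/\partial\alpha=\pi(a|s)(u(s,a)-\bar u_s)$ with its variance bound; and the identity $\mu^\top M=\frac{1}{1-\gamma}d_\mu^{\pi}$, which routes the dependence on $\mu$ through $C_\infty=\max_\pi\|d_\mu^{\pi}/\mu\|_\infty$. As a calibration check, setting $S=1$ and $\gamma=0$ forces $d_\mu^{\pi}=\mu$, hence $C_\infty=1$ and $\frac{4(C_\infty-(1-\gamma))}{1-\gamma}=0$, collapsing $\beta(\theta_\zeta)$ to $3\,\|\partial V^{\pi_{\theta_\zeta}}(\mu)/\partial\theta_\zeta\|_2$ and recovering \cref{lem:non_uniform_smoothness_softmax_special}; I would use this limit throughout to pin down the constants as the bookkeeping accumulates.
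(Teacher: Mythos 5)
Your plan follows the same route as the paper's proof: reduce NS to a pointwise spectral bound on the Hessian via Taylor's theorem, control the spectral norm by the scalar second directional derivative $\frac{d^2}{d\alpha^2}V^{\pi_{\theta+\alpha u}}(\mu)\big|_{\alpha=0}$ along unit directions $u$, split that derivative into a ``second variation of the softmax'' term and a cross term involving the variation of the visitation distribution, and handle the first term exactly as in the one-state case. Your one-state identity $\frac{d^2}{d\alpha^2}\big(\pi_{\theta_\alpha}^\top r\big)\big|_{\alpha=0}=\sum_a\pi_\theta(a)A_\theta(a)(u_a-\bar u)^2$ and the resulting bound $3\,\|d(\pi_\theta^\top r)/d\theta\|_2$ are correct and match \cref{lem:non_uniform_smoothness_softmax_special}; weighting per state by $d_\mu^{\pi_\theta}(s)/(1-\gamma)$ and applying Cauchy--Schwarz over states together with \cref{lem:policy_gradient_norm_softmax} does yield the $3\sqrt{S}\,\big\|\partial V^{\pi_\theta}(\mu)/\partial\theta\big\|_2$ contribution, and your calibration at $S=1$, $\gamma=0$ is the right sanity check.

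The gap is that the cross term --- in the paper's notation $2\gamma\,\mu^\top M(\alpha)\frac{\partial P(\alpha)}{\partial\alpha}M(\alpha)\frac{\partial\Pi(\alpha)}{\partial\alpha}Q^{\pi_{\theta_\alpha}}$ --- is only named as an obstacle, not bounded, and your stated strategy for it mis-locates where the gradient norm must come from. You propose to ``re-express'' the resolvent-derivative part so that the factor $\pi_\theta(a|s)A^{\pi_\theta}(s,a)$ is extracted from it; in the paper no such extraction is needed, because the PG content is already carried by the \emph{other} factor: $\big|[M\frac{\partial\Pi}{\partial\alpha}Q]_{(s')}\big|\le\frac{\|u\|_2}{1-\gamma}\sum_s d_{s'}^{\pi_\theta}(s)\,\|H(\pi_\theta(\cdot|s))Q^{\pi_\theta}(s,\cdot)\|_2$, a visitation-weighted sum of per-state gradient norms. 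The transition-derivative factor is then bounded \emph{crudely}, but crucially by a multiple of $\mu$ itself: using the recursion $d_\mu^{\pi_\theta}(s')=(1-\gamma)\mu(s')+\gamma\sum_s d_\mu^{\pi_\theta}(s)\sum_a\pi_\theta(a|s)\gP(s'|s,a)$ together with $d_\mu^{\pi_\theta}\le C_\infty\,\mu$, one gets $\big|[\mu^\top M\frac{\partial P}{\partial\alpha}]_{(s')}\big|\le\frac{2\|u\|_2}{(1-\gamma)\gamma}\left(C_\infty-(1-\gamma)\right)\mu(s')$. The two factors then recombine precisely because $\sum_{s'}\mu(s')\,d_{s'}^{\pi_\theta}(s)=d_\mu^{\pi_\theta}(s)$, after which the same Cauchy--Schwarz/\cref{lem:policy_gradient_norm_softmax} step as in your diagonal term produces the $\frac{4(C_\infty-(1-\gamma))}{1-\gamma}\sqrt{S}\,\big\|\partial V^{\pi_\theta}(\mu)/\partial\theta\big\|_2\|u\|_2^2$ contribution. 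Without this mechanism --- bounding the $\partial P/\partial\alpha$ factor against $\mu$ rather than hunting for PG structure inside it --- your plan stalls exactly at the step you flagged as delicate, and a reader could not complete the proof from what you have written.
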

In one-state MDPs with $\gamma = 0$ and $S = 1$, we have $C_\infty = 1 - \gamma$. Thus \cref{lem:non_uniform_smoothness_softmax_general} reduces to \cref{lem:non_uniform_smoothness_softmax_special} with the same NS coefficient $\beta(\theta_\zeta) = 3 \cdot \Big\| \frac{d \pi_{\theta_\zeta}^\top r}{d {\theta_\zeta}} \Big\|_2$. Similar to \cref{lem:non_uniform_smoothness_intermediate_policy_gradient_norm_special}, if we use \cref{alg:normalized_policy_gradient_softmax} with small enough learning rate, then $\beta(\theta_\zeta)$ in \cref{lem:non_uniform_smoothness_softmax_general} is upper bounded by the PG norm of $\theta$:
\begin{lemma}
\label{lem:non_uniform_smoothness_intermediate_policy_gradient_norm_general}
Let $\eta = \frac{1 - \gamma }{ 6 \cdot ( 1 - \gamma)  + 8 \cdot \left( C_\infty - (1 - \gamma) \right) } \cdot \frac{ 1}{ \sqrt{S} }$ and $\theta^\prime = \theta + \eta \cdot \frac{\partial V^{\pi_\theta}(\mu)}{\partial \theta} \Big/ \left\| \frac{\partial V^{\pi_\theta}(\mu)}{\partial \theta} \right\|_2$. Denote $\theta_\zeta \coloneqq \theta + \zeta \cdot (\theta^\prime - \theta)$ with some $\zeta \in [0,1]$. We have,
\begin{align}
    \left\| \frac{\partial V^{\pi_{\theta_\zeta}}(\mu)}{\partial \theta_\zeta} \right\|_2 \le 2 \cdot \left\| \frac{\partial V^{\pi_\theta}(\mu)}{\partial \theta} \right\|_2.
\end{align}
\end{lemma}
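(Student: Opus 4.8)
The plan is to mirror the one-state argument of \cref{lem:non_uniform_smoothness_intermediate_policy_gradient_norm_special}, replacing the constant $3$ there with the general NS coefficient of \cref{lem:non_uniform_smoothness_softmax_general}. First I would note that the GNPG update direction is a unit vector, so $\| \theta^\prime - \theta \|_2 = \eta$, and the intermediate points $\theta_s := \theta + s \cdot (\theta^\prime - \theta)$, $s \in [0,1]$, trace the segment from $\theta$ to $\theta^\prime$ with $\| \theta_s - \theta \|_2 = s \eta \le \eta$. The key input is that the proof of \cref{lem:non_uniform_smoothness_softmax_general} actually controls the spectral norm of the Hessian at the intermediate point, namely
\begin{align*}
\left\| \frac{\partial^2 V^{\pi_{\theta_s}}(\mu)}{\partial \theta_s^2} \right\|_2 \le \tilde{\beta} \cdot \left\| \frac{\partial V^{\pi_{\theta_s}}(\mu)}{\partial \theta_s} \right\|_2, \qquad \tilde{\beta} := \left[ 3 + \frac{4 \cdot ( C_\infty - (1 - \gamma) )}{1 - \gamma} \right] \cdot \sqrt{S},
\end{align*}
because under the softmax parameterization $V^{\pi_\theta}(\mu)$ is smooth in $\theta$, and the NS bound of \cref{lem:non_uniform_smoothness_softmax_general} is precisely the Taylor remainder governed by this spectral norm.

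Writing $h(s) := \| \partial V^{\pi_{\theta_s}}(\mu) / \partial \theta_s \|_2$, I would apply the fundamental theorem of calculus to the vector map $s \mapsto \partial V^{\pi_{\theta_s}}(\mu)/\partial\theta_s$ and then take norms, which avoids any non-differentiability of $h$ at critical points and yields $h(\zeta) \le h(0) + \tilde\beta \eta \int_0^\zeta h(s)\, ds$. Rather than invoke Gr\"onwall, I would use the cleaner ``maximum'' argument that produces the exact constant: letting $M := \max_{s \in [0,1]} h(s)$ (attained since $h$ is continuous on the compact segment), the integral bound gives $M \le h(0) + \tilde\beta \eta \cdot M$, hence $M \le h(0) / (1 - \tilde\beta \eta)$ whenever $\tilde\beta \eta < 1$. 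The final step is arithmetic: writing $A := 1 - \gamma$ and $B := C_\infty - (1-\gamma)$, the prescribed step size is tuned so that $\tilde\beta \eta = (3A + 4B)/(6A + 8B) = 1/2$, whence $M \le h(0)/(1 - 1/2) = 2 h(0)$, which gives the claim $h(\zeta) \le 2 h(0)$ for every $\zeta \in [0,1]$.

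The main obstacle is the Hessian estimate in the first paragraph: extracting a clean spectral-norm bound on $\partial^2 V^{\pi_{\theta_s}}(\mu)/\partial\theta_s^2$ that is proportional to the PG norm at the \emph{same} point. If one cites the internal estimate underlying \cref{lem:non_uniform_smoothness_softmax_general} this is immediate; otherwise it must be re-derived through the policy gradient theorem together with careful second-derivative bounds, where the factors $1/(1-\gamma)$ and $C_\infty$ enter via the discounted state-visitation distribution $d_\mu^\pi$ and \cref{ass:posinit}. Everything downstream---the fundamental-theorem-of-calculus bound, the maximum argument, and the exact cancellation making $\tilde\beta\eta = 1/2$---is routine once that bound is in hand.
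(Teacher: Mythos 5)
Your proposal is correct and follows essentially the same route as the paper: the paper's proof of this lemma simply reruns the one-state argument of \cref{lem:non_uniform_smoothness_intermediate_policy_gradient_norm_special} with the constant $3$ replaced by $\tilde\beta = \left[3 + \frac{4\,(C_\infty-(1-\gamma))}{1-\gamma}\right]\sqrt{S}$, i.e., the same fundamental-theorem-of-calculus step along the segment, driven by the pointwise Hessian-spectral-norm bound established inside the proof of \cref{lem:non_uniform_smoothness_softmax_general}, with $\eta$ tuned exactly so that $\tilde\beta\eta = 1/2$. The only difference is cosmetic: where you close the self-referential integral inequality by taking the maximum of the gradient norm over the compact segment, the paper unrolls the bound recursively into the geometric series $\sum_{i\ge 0}(\tilde\beta\eta)^i = 1/(1-\tilde\beta\eta) = 2$; both arguments yield the identical constant.
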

Next, the N\L{} coefficient $\min_s{ \pi_\theta(a^*(s)|s) }$ in \cref{lem:non_uniform_lojasiewicz_softmax_general} is lower bounded away from $0$:
\begin{lemma}[Non-vanishing N\L{} coefficient]
\label{lem:lower_bound_cT_softmax_general}
Let \cref{ass:posinit} hold. We have,
$c:=\inf_{s\in \cS,t\ge 1} \pi_{\theta_t}(a^*(s)|s) > 0$, where $\left\{ \theta_t \right\}_{t\ge 1}$ is generated by \cref{alg:normalized_policy_gradient_softmax}.
\end{lemma}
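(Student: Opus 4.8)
The plan is to prove the stronger statement that $\pi_{\theta_t}(a^*(s)|s)\to 1$ for every $s$, and then read off the lemma. First I would reduce the problem: since $\gS$ is finite, it suffices to show $\inf_{t\ge 1}\pi_{\theta_t}(a^*(s)|s)>0$ for each fixed $s$ and take the minimum over $s$. Because the softmax parameterization makes $\pi_{\theta_t}(a^*(s)|s)>0$ at every finite $t$, a positive infimum over $t$ is equivalent to $\liminf_{t\to\infty}\pi_{\theta_t}(a^*(s)|s)>0$: only finitely many early iterates can dip below any fixed positive threshold, and each of those is strictly positive, so their minimum is positive. This mirrors the one-state argument behind \cref{lem:lower_bound_cT_softmax_special}, but the coupling across states in a general MDP forces a more global argument.

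The core step is to establish asymptotic global convergence $V^{\pi_{\theta_t}}(\mu)\to V^*(\mu)$ for \cref{alg:normalized_policy_gradient_softmax}. Combining the NS property (\cref{lem:non_uniform_smoothness_softmax_general}) with the intermediate-gradient bound (\cref{lem:non_uniform_smoothness_intermediate_policy_gradient_norm_general}) and the stated step size, the NS descent inequality applied to the normalized ascent update yields monotone improvement $V^{\pi_{\theta_{t+1}}}(\mu)-V^{\pi_{\theta_t}}(\mu)\ge c_0\cdot\big\|\frac{\partial V^{\pi_{\theta_t}}(\mu)}{\partial\theta_t}\big\|_2\ge 0$ for some $c_0>0$; telescoping against the bounded optimum $V^*(\mu)$ forces $\big\|\frac{\partial V^{\pi_{\theta_t}}(\mu)}{\partial\theta_t}\big\|_2\to 0$. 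This vanishing-gradient conclusion is then upgraded to value convergence by invoking the fact that, under \cref{ass:posinit}, every stationary point of $\theta\mapsto V^{\pi_\theta}(\mu)$ is globally optimal \citep{agarwal2020optimality}. \textbf{I expect this to be the main obstacle.} The asymptotic result of \citet{agarwal2020optimality} is stated for unnormalized PG, so it must be adapted to the normalized dynamics; crucially, one cannot shortcut it by plugging a uniform constant into \cref{lem:non_uniform_lojasiewicz_softmax_general}, because a limit value $V_\infty<V^*$ is a priori consistent with $\min_s\pi_{\theta_t}(a^*(s)|s)\to 0$ (which sends the N\L{} coefficient to zero) --- exactly the degenerate behaviour we are trying to exclude. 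The argument must therefore certify that the normalized iterates do not stall at a suboptimal policy \emph{independently} of the N\L{} coefficient.

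Given value convergence, I would transfer it down to the per-action level. Since $V^{\pi_\theta}(s)\le V^*(s)$ for all $s$ and $\min_s\mu(s)>0$ by \cref{ass:posinit}, convergence of the $\mu$-weighted average forces $V^{\pi_{\theta_t}}(s)\to V^*(s)$ for every $s$. Applying the performance difference lemma \citep{agarwal2020optimality} with a point mass at $s$ gives $V^*(s)-V^{\pi_{\theta_t}}(s)=\frac{1}{1-\gamma}\sum_{s'}d^{\pi_{\theta_t}}_s(s')\sum_a\pi_{\theta_t}(a|s')\big(-A^{\pi^*}(s',a)\big)$, where every summand is non-negative since $A^{\pi^*}(s',a)\le 0$ by Bellman optimality. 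Keeping only the $s'=s$ term, using $d^{\pi_{\theta_t}}_s(s)\ge 1-\gamma>0$, and letting $\Delta:=\min\{-A^{\pi^*}(s',a):A^{\pi^*}(s',a)<0\}>0$ be the suboptimality gap of the finite MDP, I obtain $\sum_{a:\,A^{\pi^*}(s,a)<0}\pi_{\theta_t}(a|s)\le\frac{1}{\Delta}\big(V^*(s)-V^{\pi_{\theta_t}}(s)\big)\to 0$. Hence the mass on strictly suboptimal actions vanishes and $\pi_{\theta_t}(a^*(s)|s)\to 1$, where $a^*(s)$ is the action chosen by $\pi^*$ (ties among optimal actions are absorbed into this definition).

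Finally, a strictly positive real sequence converging to $1$ has a strictly positive infimum, so $\inf_{t\ge 1}\pi_{\theta_t}(a^*(s)|s)>0$ for each $s$, and minimizing over the finitely many $s\in\gS$ gives $c:=\inf_{s\in\gS,\,t\ge 1}\pi_{\theta_t}(a^*(s)|s)>0$. As a byproduct, note that once $V^{\pi_{\theta_t}}(s)$ is close enough to $V^*(s)$ the suboptimal advantages are negative, which makes $\pi_{\theta_t}(a^*(s)|s)$ eventually non-decreasing --- the general-MDP analogue of the monotonicity used in the one-state case --- so the limit is approached from below and the bound is not merely asymptotic.
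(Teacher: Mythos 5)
Your reduction steps are sound, but the proposal has a genuine gap exactly where you flag it: the asymptotic global convergence of \cref{alg:normalized_policy_gradient_softmax}, which is the engine of your whole argument, is left unproven. The part you do sketch—monotone improvement via \cref{lem:non_uniform_smoothness_softmax_general,lem:non_uniform_smoothness_intermediate_policy_gradient_norm_general}, hence $\big\|\partial V^{\pi_{\theta_t}}(\mu)/\partial\theta_t\big\|_2\to 0$ by telescoping—does not get you to $V^{\pi_{\theta_t}}(\mu)\to V^*(\mu)$. ``Every stationary point is globally optimal'' is a statement about exact critical points of $\theta\mapsto V^{\pi_\theta}(\mu)$, and it cannot be invoked along a sequence whose gradient merely vanishes: under softmax the iterates diverge in parameter space, and the gradient also vanishes on plateaus approaching suboptimal \emph{deterministic} policies, which is precisely the degenerate limit the lemma must exclude. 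Ruling that out is the content of the asymptotic convergence theorem of \citet{agarwal2020optimality}, whose proof is a lengthy limit/case analysis tied to the unnormalized update with constant stepsize; adapting it to the normalized dynamics (where the per-step displacement has constant length $\eta$ rather than scaling with the gradient) is nontrivial and is not attempted. So the proposal reduces \cref{lem:lower_bound_cT_softmax_general} to an unproven statement at least as hard as the lemma itself; the steps you do supply (value convergence $\Rightarrow$ per-state convergence $\Rightarrow$ vanishing mass on suboptimal actions via the performance difference lemma $\Rightarrow$ positive infimum) are correct but are the easy part.

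For contrast, the paper does not go ``convergence first.'' Its update-rule-specific content is an invariance/monotonicity argument (Claim I in the appendix): for each $s$ there is a region $\gR_1(s)\cap\gR_2(s)\cap\gR_3(s)$ (the optimal action's partial derivative dominates, plus two value-gap conditions) that is forward-invariant under the normalized update and inside which $\pi_{\theta_t}(a^*(s)|s)$ never decreases; then $\inf_{t}\pi_{\theta_t}(a^*(s)|s)=\min_{1\le t\le t_0(s)}\pi_{\theta_t}(a^*(s)|s)>0$ once a finite entry time $t_0(s)$ is secured (Claims II--IV, imported from \citet[Lemma~9]{mei2020global}). Conditional on asymptotic convergence your performance-difference shortcut is cleaner than the region machinery, but it contributes nothing toward the one ingredient that genuinely requires new work for the normalized update. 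A smaller point: ``ties among optimal actions are absorbed into this definition'' does not work—with two optimal actions at $s$, your argument only shows the total mass on optimal actions tends to one, and it could concentrate on the other optimal action, sending $\pi_{\theta_t}(a^*(s)|s)\to 0$ for the specific $a^*(s)$ appearing in \cref{lem:non_uniform_lojasiewicz_softmax_general}; like the paper (whose optimal value gap $\Delta^*(s)>0$ presumes a unique optimal action per state), you must assume uniqueness.
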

Now we have the non-concave function $V^{\pi_\theta}(\rho)$ satisfies (1b) in \cref{thm:general_optimization_main_result_1} with $\xi = 0$ \textit{in each iteration of \cref{alg:normalized_policy_gradient_softmax}}. Therefore, combining \cref{lem:non_uniform_smoothness_softmax_general,lem:non_uniform_lojasiewicz_softmax_general,lem:lower_bound_cT_softmax_general,lem:non_uniform_smoothness_intermediate_policy_gradient_norm_general}, we prove the global linear convergence rate $O(e^{-c \cdot t})$ of \cref{alg:normalized_policy_gradient_softmax}:
\begin{theorem}
\label{thm:final_rates_normalized_softmax_pg_general}
Let \cref{ass:posinit} hold and
let $\left\{ \theta_t \right\}_{t\ge 1}$ be generated using  \cref{alg:normalized_policy_gradient_softmax} with $\eta = \frac{ 1 - \gamma }{ 6 \cdot ( 1 - \gamma)  + 8 \cdot \left( C_\infty - (1 - \gamma) \right) } \cdot \frac{ 1}{ \sqrt{S} }$, where $C_\infty \coloneqq \max_{\pi}{ \left\| \frac{d_{\mu}^{\pi}}{ \mu} \right\|_\infty}$. Denote $C_\infty^\prime \coloneqq \max_{\pi}{ \left\| \frac{d_{\rho}^{\pi}}{ \mu} \right\|_\infty}$. Let $c$ be the positive constant from \cref{lem:lower_bound_cT_softmax_general}.
We have, for all $t\ge 1$,
\begin{align*}
    V^*(\rho) - V^{\pi_{\theta_t}}(\rho) \le \frac{ \left( V^*(\mu) - V^{\pi_{\theta_1}}(\mu) \right) \cdot C_\infty^\prime }{ 1 - \gamma} \cdot e^{ - C \cdot (t-1)},
\end{align*}
where $C = \frac{ ( 1 - \gamma)^2 \cdot c }{ 12 \cdot ( 1 - \gamma)  + 16 \cdot \left( C_\infty - (1 - \gamma) \right) } \cdot \frac{ 1}{ S } \cdot \Big\| \frac{d_{\mu}^{\pi^*}}{\mu} \Big\|_\infty^{-1} $.
\end{theorem}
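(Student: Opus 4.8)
The plan is to combine the four supporting lemmas to establish that GNPG (\cref{alg:normalized_policy_gradient_softmax}) achieves linear convergence, first proving a per-iteration multiplicative decrease of the sub-optimality gap measured under $\mu$, then transferring the bound to the gap under $\rho$. First I would apply the NS property from \cref{lem:non_uniform_smoothness_softmax_general} along the update direction: since $\theta_{t+1} = \theta_t + \eta \cdot g_t / \|g_t\|_2$ where $g_t \coloneqq \partial V^{\pi_{\theta_t}}(\mu)/\partial\theta_t$, the non-uniform descent (ascent) inequality evaluated at the intermediate point $\theta_{\zeta_t}$ gives
\begin{align*}
V^{\pi_{\theta_{t+1}}}(\mu) \ge V^{\pi_{\theta_t}}(\mu) + \eta \cdot \|g_t\|_2 - \frac{\beta(\theta_{\zeta_t})}{2} \cdot \eta^2.
\end{align*}
The key move is to invoke \cref{lem:non_uniform_smoothness_intermediate_policy_gradient_norm_general}, which (for exactly the prescribed $\eta$) bounds $\beta(\theta_{\zeta_t})$ by $2 \cdot [3 + 4(C_\infty-(1-\gamma))/(1-\gamma)]\cdot\sqrt{S}\cdot\|g_t\|_2$, so that the quadratic penalty becomes proportional to $\eta^2 \|g_t\|_2$ rather than $\eta^2\|g_t\|_2^2$. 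With the chosen $\eta$ this yields a clean bound of the form $V^{\pi_{\theta_{t+1}}}(\mu) - V^{\pi_{\theta_t}}(\mu) \ge c' \cdot \eta \cdot \|g_t\|_2$ for an explicit constant $c'$.

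Next I would feed in the N\L{} inequality \cref{lem:non_uniform_lojasiewicz_softmax_general}, which lower-bounds $\|g_t\|_2$ by $\frac{\min_s \pi_{\theta_t}(a^*(s)|s)}{\sqrt{S}\cdot\|d_\rho^{\pi^*}/d_\mu^{\pi_{\theta_t}}\|_\infty}\cdot(V^*(\rho)-V^{\pi_{\theta_t}}(\rho))$. The subtlety here is that the N\L{} inequality controls the gap under $\rho$, whereas the descent step controls progress under $\mu$; I would handle this by using the standard relation $V^*(\rho)-V^{\pi_\theta}(\rho) \le \frac{1}{1-\gamma}\|d_\rho^{\pi^*}/d_\mu^{\pi_\theta}\|_\infty\cdot(\ldots)$-type bounds together with $V^*(\mu)-V^{\pi_\theta}(\mu)$, and invoke \cref{lem:lower_bound_cT_softmax_general} to replace the iteration-dependent N\L{} coefficient $\min_s\pi_{\theta_t}(a^*(s)|s)$ by the uniform positive constant $c$. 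This gives the crucial recursion
\begin{align*}
V^*(\mu) - V^{\pi_{\theta_{t+1}}}(\mu) \le (1 - C'') \cdot \left(V^*(\mu) - V^{\pi_{\theta_t}}(\mu)\right)
\end{align*}
for some $C'' \in (0,1)$, since the per-step increase in value under $\mu$ is proportional to the sub-optimality gap under $\mu$ (after bounding the $\rho$-versus-$\mu$ density ratios). Unrolling this recursion from $t=1$ and using $1 - C'' \le e^{-C''}$ produces the exponential decay $e^{-C(t-1)}$ in the gap under $\mu$.

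Finally I would convert the gap under $\mu$ to the gap under $\rho$, which accounts for the prefactor $C_\infty'/(1-\gamma)$: the performance-difference lemma gives $V^*(\rho)-V^{\pi_\theta}(\rho) \le \frac{1}{1-\gamma}\cdot\|d_\rho^{\pi_{\theta}}/\mu\|_\infty\cdot(V^*(\mu)-V^{\pi_\theta}(\mu))$ up to the $C_\infty'$ density ratio, so the $\mu$-gap bound transfers to a $\rho$-gap bound at the cost of the stated multiplicative constant. I expect the main obstacle to be \emph{bookkeeping the density-ratio conversions consistently}: the descent and N\L{} steps naturally live under $\mu$, but the theorem statement measures sub-optimality under $\rho$, and one must carefully track the mismatch coefficients $C_\infty$ (appearing in $\eta$ and $\beta$) versus $C_\infty'$ (appearing in the final prefactor) without losing factors of $(1-\gamma)$ or $S$. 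Assembling the exact constant $C = \frac{(1-\gamma)^2 c}{12(1-\gamma)+16(C_\infty-(1-\gamma))}\cdot\frac{1}{S}\cdot\|d_\mu^{\pi^*}/\mu\|_\infty^{-1}$ requires threading these pieces together precisely, but each individual inequality is supplied by the preceding lemmas, so the argument is a careful composition rather than a new idea.
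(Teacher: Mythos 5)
Your proposal follows the same skeleton as the paper's proof: the NS ascent step combined with the factor-$2$ intermediate-gradient bound of \cref{lem:non_uniform_smoothness_intermediate_policy_gradient_norm_general}, so that the prescribed $\eta$ yields per-step progress $\tfrac{\eta}{2}\|g_t\|_2$ in the value under $\mu$; then the N\L{} inequality plus the uniform constant $c$ from \cref{lem:lower_bound_cT_softmax_general} to turn this into a contraction; then unrolling with $1-x \le e^{-x}$ and a final $\mu$-to-$\rho$ conversion producing the prefactor $C_\infty^\prime/(1-\gamma)$. However, one step, as you describe it, would fail: the passage from the N\L{} inequality to the recursion in the $\mu$-gap.

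The ascent step gives $\delta_{t+1}^\mu \le \delta_t^\mu - \tfrac{\eta}{2}\|g_t\|_2$ with $\delta_t^\mu \coloneqq V^*(\mu)-V^{\pi_{\theta_t}}(\mu)$, and you propose to lower bound $\|g_t\|_2$ via \cref{lem:non_uniform_lojasiewicz_softmax_general} as stated, i.e., in terms of the $\rho$-gap $\delta_t^\rho$. To close a recursion in $\delta^\mu$ you would then need $\delta_t^\rho \ge \mathrm{const}\cdot\delta_t^\mu$, a \emph{lower} bound on the $\rho$-gap by the $\mu$-gap. But the density-ratio relation you invoke, $V^*(\rho)-V^{\pi_\theta}(\rho) \le \frac{C_\infty^\prime}{1-\gamma}\left(V^*(\mu)-V^{\pi_\theta}(\mu)\right)$ (which is what \cref{ass:posinit} buys you), points in the opposite direction; the reverse inequality would require controlling $\left\| d_\mu^{\pi_{\theta_t}} / d_\rho^{\pi_{\theta_t}} \right\|_\infty$, which is unavailable since $\rho$ need not have full support. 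The paper avoids this entirely by instantiating \cref{lem:non_uniform_lojasiewicz_softmax_general} with $\rho \coloneqq \mu$ (the initial distribution appearing on the right-hand side of that lemma is arbitrary), giving $\|g_t\|_2 \ge \frac{c}{\sqrt{S}\,\left\| d_\mu^{\pi^*} / d_\mu^{\pi_{\theta_t}} \right\|_\infty}\,\delta_t^\mu$, and then bounding $\left\| d_\mu^{\pi^*} / d_\mu^{\pi_{\theta_t}} \right\|_\infty \le \frac{1}{1-\gamma}\left\| d_\mu^{\pi^*} / \mu \right\|_\infty$ via $d_\mu^{\pi_{\theta_t}}(s) \ge (1-\gamma)\mu(s)$. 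This is exactly where the factor $\left\| d_\mu^{\pi^*} / \mu \right\|_\infty^{-1}$ (rather than any $\rho$-dependent ratio) and one of the $(1-\gamma)$ factors in the constant $C$ come from, which you can read off the theorem statement itself. With that single repair, the rest of your outline (contraction, unrolling, and the final conversion of the $\mu$-gap to the $\rho$-gap via the value sub-optimality lemma) goes through exactly as in the paper.
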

Not only the $O(e^{-c \cdot t})$ rate in \cref{thm:final_rates_normalized_softmax_pg_general} is faster than $O(1/t)$ for standard PG without normalization, but also the constant is better than \citet[Theorem 4]{mei2020global}. The strictly better dependence $c$ ($\gg c^2$ in PG) is related to faster escaping plateaus as shown later \citep{mei2020escaping}. 
\begin{remark}
The conclusion of GNPG has better constants than PG ($c \gg c^2$) arises from upper bounds (\cref{thm:final_rates_normalized_softmax_pg_general} and \citet[Theorem 4]{mei2020global}), which is also supported by empirical evidence. In fact, there exists a lower bound that shows $c$ cannot be removed for PG \citep[Theorem 1]{mei2020escaping} under one-state MDP settings. For finite MDPs, very recently, \citet{li2021softmax} show that for softmax PG (without normalization), $c$ can be very small in terms of the number of states. It remains open to consider whether $c$ is reasonably large for GNPG.
\end{remark}
\begin{remark}
To our knowledge, existing PG variants can achieve linear convergence $O(e^{-c \cdot t})$ only if using at least one of the following techniques: (a) \textbf{regularization}; \citet{mei2020global} prove that entropy regularized PG enjoys $O(e^{-c \cdot t})$ convergence toward the regularized optimal policy. (b) \textbf{natural gradient}; \citet{cen2020fast} prove that entropy regularized natural PG achieves linear convergence. (c) \textbf{exact line-search}; \citet{bhandari2020note} prove that without parameterization, PG variants with exact line-search achieve linear rates by approximating policy iteration.

\if0
On the other hand, without using any above technique, the best known convergence rate of policy gradient methods was $O(1/t)$ \citep{agarwal2020optimality,mei2020global}.
\fi

Among the above techniques, regularization changes the problem to regularized MDPs, while natural PG and line-search require solving expensive optimization problems to do updates, since each update is an $\argmax$.
\end{remark}
On the contrary, \cref{alg:normalized_policy_gradient_softmax} enjoys global $O(e^{-c \cdot t})$ rate \textit{(i)} without using regularization, since \cref{alg:normalized_policy_gradient_softmax} directly works on the original MDPs; \textit{(ii)} without solving optimization problems in each iteration, and the normalized PG update is cheap. The strong results rely on the NS and N\L{} properties, and also the geometry-aware normalization that takes advantage of the non-uniform properties.
\begin{remark}
Standard softmax PG with bounded learning rate follows $\Omega(1/t)$ lower bound \citep{mei2020global}, which is consistent with the case (1) in \cref{thm:general_optimization_main_result_1}. \cref{alg:normalized_policy_gradient_softmax} achieves faster  linear convergence rates, indicating that the adaptive update stepsize $\eta / \left\| \nabla V^{\pi_{\theta_t}}(\rho) \right\|_2$ is asymptotically unbounded, since $\left\| \nabla V^{\pi_{\theta_t}}(\rho) \right\|_2 \to 0$ as $t \to \infty$.
\end{remark}
\begin{figure}[ht]
\centering
\includegraphics[width=1.0\linewidth]{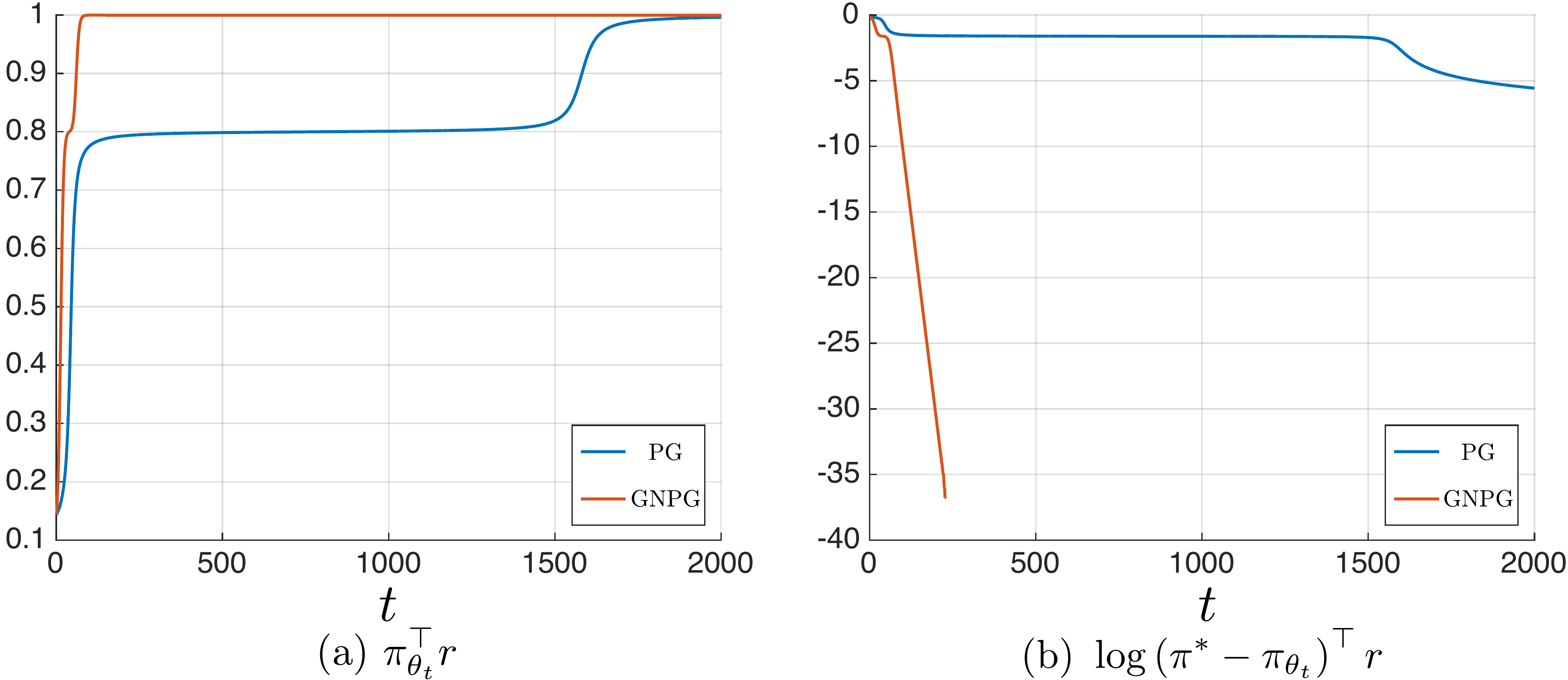}
\caption{PG and GNPG on $r = (1.0, 0.8, 0.1)^\top$.} 
\label{fig:pg_npg_expected_reward_sub_optimality}
\end{figure}
We compare PG and GNPG on the one-state MDP problem as shown in \cref{fig:pg_npg_expected_reward_sub_optimality}. \cref{fig:pg_npg_expected_reward_sub_optimality}(a) shows that GNPG escapes from the sub-optimal plateau significantly faster than PG, while \cref{fig:pg_npg_expected_reward_sub_optimality}(b) shows that GNPG follows linear convergence $O(e^{-c \cdot t})$ of sub-optmality, verifying the theoretical results. Similar experimental results on synthetic tree MDPs with multiple states are presented in \cref{sec:additional_simulations_experiments}.


\section{Generalized Linear Models}
\label{sec:generalized_linear_model}

Next, we investigate the generalized linear model (GLM) with quasi-maximum likelihood estimate (quasi-MLE), which applied widely in supervised learning. We show that the mean squared error (MSE) of GLM \citep{hazan2015beyond} is in the non-convex function class $\gZ$ in \cref{fig:non_uniform_function_classes_diagram_table}, and it satisfies the case (2) in \cref{thm:general_optimization_main_result_1} with $\xi = 1/2$. As a result, both GD and GNGD achieve global linear convergence rates $O(e^{-c \cdot t})$, significantly improving the best existing results of $O(1 / \sqrt{t})$ \citep{hazan2015beyond}. We also provide new understandings of using normalization in GLM based on our non-uniform analysis.

\subsection{Basic Settings and Notations}

Given a training data set $\gD = \{ ( x_i, y_i ) \}_{ i \in [N] }$, which consists of $N$ data points, there is a feature map $x_i \mapsto \phi(x_i) \in \sR^d$ for each pair $(x_i, y_i) \in \gD$. We denote $\phi_i \coloneqq \phi(x_i)$ for conciseness. For each data point $x_i$, we have $y_i \in [0, 1]$ as the ground truth likelihood. Following \citet{hazan2015beyond}, our model is parameterized by a weight vector $\theta \in \sR^d$ as ,
\begin{align}
    \pi_i = \sigmoid(\phi_i^\top \theta) = \frac{1}{1 + \exp\{ - \phi_i^\top \theta \}},
\end{align}
where $\sigmoid: \sR \to (0,1)$ is the sigmoid activation. The problem is to minimize the mean squared error (MSE),
\begin{align}
\label{eq:mean_squared_error_sigmoid}
    \min_{\theta} \gL(\theta) = \min_{\theta \in \sR^d} \frac{1}{N} \cdot \sum_{i=1}^{N}{ (\pi_i - y_i)^2 }.
\end{align}
We assume $y_i = \pi_i^* \coloneqq \sigmoid(\phi_i^\top \theta^*)$, where $\theta^* \in \sR^d$, and $\left\| \theta^* \right\|_2 < \infty$, which means the target $y_i$ is realizable and non-deterministic. According to \citet{hazan2015beyond}, the MSE in \cref{eq:mean_squared_error_sigmoid} is not quasi-convex (thus not convex). Fortunately, \citet{hazan2015beyond} manage to show that \cref{eq:mean_squared_error_sigmoid} satisfies a weaker Strictly-Locally-Quasi-Convex (SLQC) property, based on which they prove the following result:
\begin{theorem}[\citet{hazan2015beyond}]
\label{thm:ngd_glm_sigmoid_convergence_rate}
With diminishing learning rate $\eta_t \in \Theta(1/\sqrt{t})$, the normalized gradient descent (NGD) update $\theta_{t+1} \gets \theta_t - \eta_t \cdot \frac{\partial \gL(\theta_t)}{\partial \theta_t} \Big/ \left\| \frac{\partial \gL(\theta_t)}{\partial \theta_t} \right\|_2$ satisfies,
\begin{align}
    \delta(\theta_t) \coloneqq \gL(\theta_t) - \gL(\theta^*) \in O(1/\sqrt{t}),
\end{align}
where $\theta^* \coloneqq \argmin_{\theta}{ \gL(\theta) }$ is the global optimal solution.
\end{theorem}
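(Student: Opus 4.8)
The plan is to bypass the non-convexity of $\gL$ by exploiting the \emph{Strictly-Locally-Quasi-Convex} (SLQC) structure that the realizable sigmoid GLM objective enjoys, and then run a potential-function argument on the squared distance to the optimum. Concretely, I would first establish that, for every $\theta$ in a bounded region containing the iterates and for every target accuracy $\epsilon > 0$, the objective $\gL$ is $(\epsilon, \kappa, \theta^*)$-SLQC for a modulus $\kappa$ that is a \emph{constant} (depending only on bounds on $\|\phi_i\|_2$ and $\|\theta^*\|_2$). The heart of this step is the elementary observation that the gradient $\nabla \gL(\theta) = \frac{2}{N} \sum_{i=1}^{N} (\pi_i - y_i)\, \sigma'(\phi_i^\top \theta)\, \phi_i$ correlates nonnegatively with the direction $\theta - \theta^*$: since $\sigma$ is strictly increasing and $y_i = \sigma(\phi_i^\top \theta^*)$, each summand satisfies $(\pi_i - y_i)(\phi_i^\top \theta - \phi_i^\top \theta^*) \ge 0$, so $\langle \nabla \gL(\theta), \theta - \theta^* \rangle \ge 0$ unconditionally. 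To upgrade this sign condition into the quantitative SLQC inequality $\langle \nabla \gL(\theta), \theta - \theta^* \rangle \ge \frac{\epsilon}{\kappa} \cdot \|\nabla \gL(\theta)\|_2$ whenever $\gL(\theta) > \epsilon$, I would lower bound the left-hand side in terms of $\gL(\theta)$ using the Lipschitz continuity of $\sigma$ and the strict positivity of $\sigma'$ on the relevant bounded domain, and upper bound $\|\nabla \gL(\theta)\|_2$ by the same data constants.

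Second, with SLQC in hand, I would track the potential $r_t \coloneqq \|\theta_t - \theta^*\|_2^2$ along the NGD update. Writing $\hat{g}_t \coloneqq \nabla \gL(\theta_t) / \|\nabla \gL(\theta_t)\|_2$, expanding the square gives
\begin{align*}
    r_{t+1} = r_t - 2 \eta_t \cdot \langle \hat{g}_t,\ \theta_t - \theta^* \rangle + \eta_t^2 .
\end{align*}
The normalization is exactly what makes this clean: the step length is decoupled from $\|\nabla \gL\|_2$, which would otherwise vanish on the plateaus and stall plain GD. On any step for which $\delta(\theta_t) = \gL(\theta_t) > \epsilon$, choosing the extremal point $\theta^* + \frac{\epsilon}{\kappa} \hat{g}_t$ in the ball $B(\theta^*, \epsilon/\kappa)$ in the SLQC condition yields $\langle \hat{g}_t, \theta_t - \theta^* \rangle \ge \epsilon/\kappa$, so the potential strictly decreases.

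Third, I would convert the per-step decrease into the $O(1/\sqrt{t})$ rate by summing. Telescoping the recursion over $t = 1, \dots, T$ gives $2 \sum_t \eta_t \cdot \langle \hat{g}_t, \theta_t - \theta^* \rangle \le r_1 + \sum_t \eta_t^2$; since $\kappa$ is $\epsilon$-independent we may take $\epsilon \to \delta(\theta_t)$ to get $\langle \hat{g}_t, \theta_t - \theta^* \rangle \ge \delta(\theta_t)/\kappa$, and applying this to the running-minimum iterate produces $\min_{t \le T} \delta(\theta_t) \le \kappa \cdot \big( r_1 + \sum_t \eta_t^2 \big) \big/ \big( 2 \sum_t \eta_t \big)$. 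Under the schedule $\eta_t \in \Theta(1/\sqrt{t})$, where $\sum_{t \le T} \eta_t = \Theta(\sqrt{T})$ dominates the sublinear $\sum_{t \le T} \eta_t^2$, this delivers the claimed $O(1/\sqrt{T})$ decay of the sub-optimality gap (a fixed-horizon choice $\eta_t \equiv \Theta(1/\sqrt{T})$ gives the rate cleanly, while the anytime diminishing schedule matches it up to a logarithmic factor).

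The main obstacle is the first step: establishing the SLQC modulus $\kappa$ as a genuine constant rather than an $\epsilon$-dependent quantity, and confirming that all iterates remain in a bounded region where $\sigma'$ is bounded away from $0$ (so the gradient geometry cannot degenerate). This requires a domain-confinement argument — showing $r_t$ is nonincreasing, so that $\|\theta_t\|_2$ stays controlled by $\|\theta_1\|_2$ and $\|\theta^*\|_2$ — together with the realizability assumption $y_i = \sigma(\phi_i^\top \theta^*)$, which is precisely what ties $\nabla \gL$ to the direction $\theta - \theta^*$ in the first place. By contrast, the distance-tracking and summation in the remaining steps are a standard normalized-quasi-convex potential argument.
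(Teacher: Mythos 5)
This theorem is imported from \citet{hazan2015beyond}; the paper offers no proof of its own, stating only that the result rests on the Strictly-Locally-Quasi-Convex (SLQC) property of the realizable sigmoid-GLM objective, and your proposal reconstructs exactly that route: the realizability-based sign/correlation bound $\langle \nabla \gL(\theta), \theta - \theta^*\rangle \ge 0$ upgraded to a quantitative SLQC inequality, the squared-distance potential recursion under unit-norm steps, and the best-iterate summation under the $\Theta(1/\sqrt{t})$ schedule. The proposal is correct in essentially the same sense as the cited source (with the caveats you already flag yourself: the guarantee is for the running-minimum iterate, the modulus $\kappa$ requires the iterates to stay in a region where $\sigma'$ is bounded below, and the anytime diminishing schedule costs a logarithmic factor relative to the fixed-horizon choice).
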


\subsection{Fast Convergence using Non-uniform Analysis}

Based on the $O(1/\sqrt{t})$ rate for NGD in \cref{thm:ngd_glm_sigmoid_convergence_rate}, \citet{hazan2015beyond} propose to normalize gradient norm in MSE minimization. However, there is no lower bound for other methods including GD on GLM, and thus it is not clear if there exists a faster rate for GLM optimization.

Surprisingly, we prove that both GD and GNGD actually achieve much faster rates of $O(e^{-c \cdot t})$ using the non-uniform analysis. Our first key finding is to show that the MSE in GLM satisfies a new N\L{} inequality with $\xi = 1/2$:
\begin{lemma}[N\L{}]
\label{lem:non_uniform_lojasiewicz_glm_sigmoid_realizable}
Denote $u(\theta) \coloneqq \min_{i \in [N]}{ \left\{ \pi_i \cdot \left( 1 - \pi_i \right) \right\} }$, and $v \coloneqq \min_{i \in [N]}{ \left\{ \pi_i^*\cdot \left( 1 - \pi_i^* \right) \right\} }$. We have,
\begin{align}
    \left\| \frac{\partial \gL(\theta)}{\partial \theta} \right\|_2 \ge C(\theta, \phi) \cdot \left[ \frac{1}{N} \cdot \sum_{i=1}^{N}{ \left( \pi_i - \pi_i^* \right)^2 } \right]^{\frac{1}{2}},
\end{align}
holds for all $\theta \in \sR^d$, where 
\begin{align}
    C(\theta, \phi) = 8 \cdot u(\theta) \cdot \min\left\{ u(\theta), v \right\} \cdot \sqrt{\lambda_\phi},
\end{align}
and $\lambda_\phi$ is the smallest positive eigenvalue of $\frac{1}{N} \cdot \sum_{i=1}^{N}{ \phi_i \phi_i^\top}$.
\end{lemma}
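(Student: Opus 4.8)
The plan is to lower bound the gradient norm by pairing it with a well-chosen direction, namely the parameter error $\theta - \theta^*$ (projected onto the feature subspace). First I would compute the gradient explicitly: since $\frac{d \pi_i}{d \theta} = \pi_i (1 - \pi_i) \phi_i$ and $y_i = \pi_i^*$,
\begin{align*}
\frac{\partial \gL(\theta)}{\partial \theta} = \frac{2}{N} \sum_{i=1}^{N} (\pi_i - \pi_i^*) \, \pi_i (1 - \pi_i) \, \phi_i .
\end{align*}
The factor $\pi_i(1-\pi_i) \ge u(\theta)$ together with the realizability $y_i = \pi_i^* = \sigma(\phi_i^\top \theta^*)$ is what ultimately produces the two factors $u(\theta)$ and $\min\{u(\theta), v\}$ in $C(\theta, \phi)$. (Note also that under realizability $\gL(\theta^*) = 0$, so the bracketed quantity on the right is exactly $\gL(\theta) = \delta(\theta)$, confirming this is N\L{} with $\xi = 1/2$.)

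Next I would invoke the mean value theorem, $\pi_i - \pi_i^* = \sigma(\phi_i^\top \theta) - \sigma(\phi_i^\top \theta^*) = s_i \, \phi_i^\top (\theta - \theta^*)$, where $s_i = \sigma'(\xi_i)$ with $\sigma(\xi_i)$ lying between $\pi_i$ and $\pi_i^*$. Since $x \mapsto x(1-x)$ is concave and hence attains its minimum over any interval at an endpoint, this yields the two-sided control $\min\{u(\theta), v\} \le s_i \le 1/4$. Taking the inner product of the gradient with $\theta - \theta^*$ and substituting $\phi_i^\top (\theta - \theta^*) = (\pi_i - \pi_i^*)/s_i$ gives
\begin{align*}
\Big\langle \frac{\partial \gL(\theta)}{\partial \theta}, \theta - \theta^* \Big\rangle = \frac{2}{N} \sum_{i=1}^{N} \frac{\pi_i(1 - \pi_i)}{s_i} (\pi_i - \pi_i^*)^2 \ge \frac{8 \, u(\theta)}{N} \sum_{i=1}^{N} (\pi_i - \pi_i^*)^2 ,
\end{align*}
using $\pi_i(1-\pi_i) \ge u(\theta)$ and $s_i \le 1/4$.

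The final step converts this into a norm bound via Cauchy--Schwarz, $\| \partial \gL / \partial \theta \|_2 \ge \langle \partial \gL / \partial \theta, \theta - \theta^* \rangle / \| \theta - \theta^* \|_2$, so I must upper bound $\| \theta - \theta^* \|_2$ in terms of the error. This is where $\lambda_\phi$ enters: writing $w_i \coloneqq \pi_i - \pi_i^* = s_i \, \phi_i^\top (\theta - \theta^*)$ and using $s_i \ge \min\{u(\theta), v\}$ shows $\| \Phi (\theta - \theta^*) \|_2 \le \| w \|_2 / \min\{u(\theta), v\}$, where $\Phi$ stacks the rows $\phi_i^\top$. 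Chaining the three estimates produces exactly $C(\theta, \phi) = 8 \, u(\theta) \min\{u(\theta), v\} \sqrt{\lambda_\phi}$.

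The step I expect to require the most care is the eigenvalue conversion, because $\theta - \theta^*$ may have a component in the null space of $\Phi$ that contributes nothing to either the gradient or the residuals $w_i$, and so cannot be controlled by them. The remedy is to replace $\theta - \theta^*$ throughout by its orthogonal projection $p$ onto the row space of $\Phi$ (equivalently the range of $\frac{1}{N} \sum_i \phi_i \phi_i^\top$). The inner product above is unchanged because the gradient already lies in this subspace, and now
\begin{align*}
\| \Phi p \|_2^2 = N \, p^\top \Big( \frac{1}{N} \sum_{i=1}^{N} \phi_i \phi_i^\top \Big) p \ge N \, \lambda_\phi \, \| p \|_2^2
\end{align*}
holds with the smallest \emph{positive} eigenvalue $\lambda_\phi$, which is precisely what makes $\lambda_\phi$ the correct constant and prevents a spurious null-space direction from weakening the bound.
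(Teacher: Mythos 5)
Your proof is correct and follows essentially the same route as the paper's: both rest on the mean value theorem with the two-sided bounds $\min\{u(\theta),v\}\le \sigma'(\xi_i)\le 1/4$, pair the gradient with the parameter error projected onto $\mathrm{Span}\{\phi_1,\dots,\phi_N\}$ (your $p$ is exactly the paper's $\theta-\theta^*-c\cdot v_{\phi,\bot}$), and close with Cauchy--Schwarz plus the smallest positive eigenvalue $\lambda_\phi$ on that subspace. The only difference is organizational: the paper chains two inequalities on $\frac{1}{N}\sum_i(\pi_i-\pi_i^*)^2$ and solves, while you lower-bound $\|\nabla\gL(\theta)\|_2$ directly by $\langle\nabla\gL(\theta),p\rangle/\|p\|_2$ — the same estimates in a different order.
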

\begin{remark}
It is not clear if results similar to \cref{lem:non_uniform_lojasiewicz_glm_sigmoid_realizable} hold without assuming: \textit{(i)} realizable optimal prediction $y_i = \pi_i^* \coloneqq \sigmoid(\phi_i^\top \theta^*)$; \textit{(ii)}  non-deterministic optimal prediction $\left\| \theta^* \right\|_2 < \infty$. We leave it as an open question to study non-uniformity of GLM without the above assumptions.
\end{remark}
In \cref{lem:non_uniform_lojasiewicz_glm_sigmoid_realizable}, $\lambda_\phi$ is determined by the feature $\phi$, and $u(\theta)$ shows that the gradient is vanishing when $\pi_i$ is near deterministic, which is consistent with the fact that the sigmoid saturates and provides uninformative gradient as the parameter magnitude becomes large.
\begin{figure}[ht]
\centering
\includegraphics[width=1.0\linewidth]{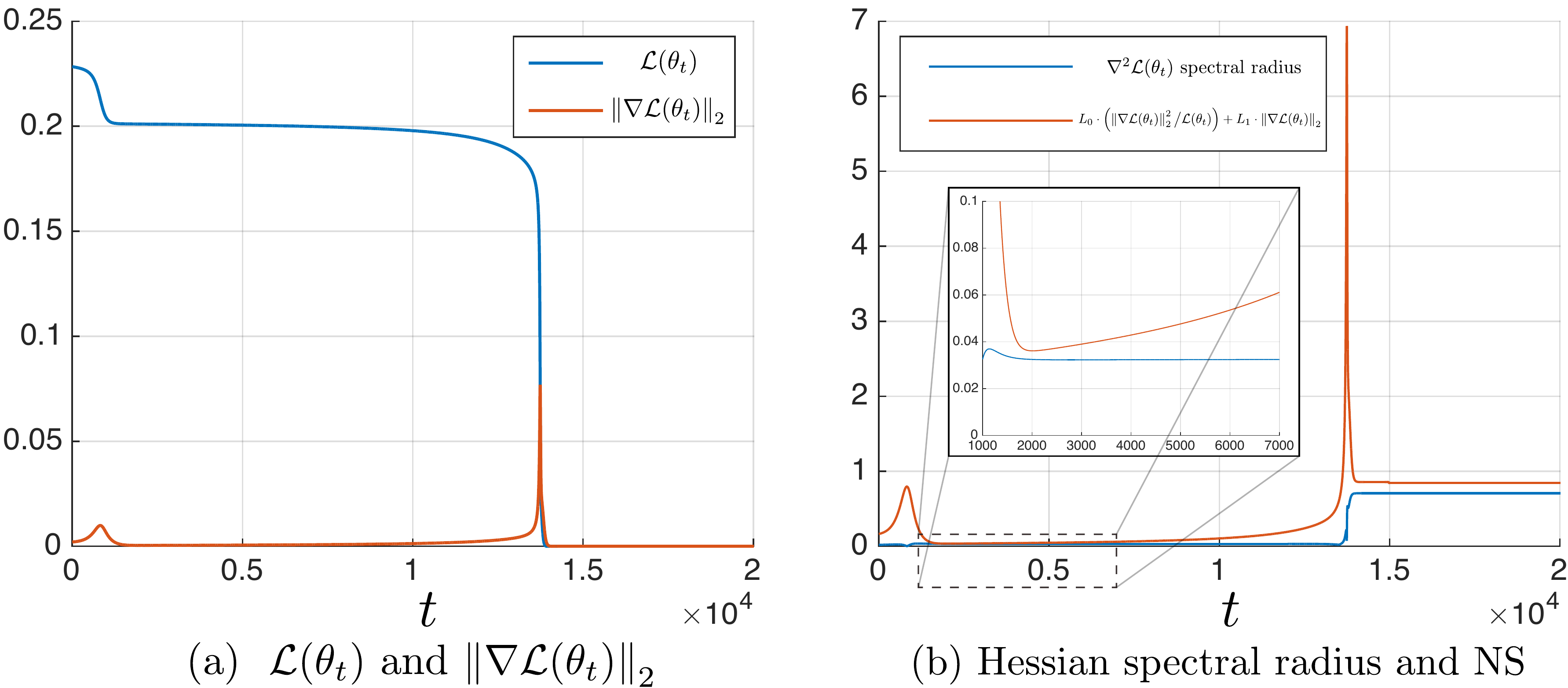}
\caption{Experiments on GLM using GD.} 
\label{fig:glm_sigmoid_non_uniform_smoothness}
\end{figure}

We run GD on one example with $N = 10$ and $d = 2$. As shown in \cref{fig:glm_sigmoid_non_uniform_smoothness}, the gradient norm $\left\| \nabla \gL(\theta_t) \right\|_2$ is close to zero at plateaus and near optimum. However, unlike the PG, the spectral radius of the Hessian $\nabla^2 \gL(\theta_t)$ is only close to zero at plateaus, while it approaches positive constant near optimum. This indicates a different NS condition other than \cref{lem:non_uniform_smoothness_softmax_special,lem:non_uniform_smoothness_softmax_general} is needed, since only gradient norm $\to 0$ cannot upper bound the spectral radius of Hessian $\to c > 0$. With some calculations, we prove the following key results:
\begin{lemma}[Smoothness and NS]
\label{lem:non_uniform_smoothness_glm_sigmoid_realizable}
$\gL(\theta)$ satisfies $\beta$ smoothness with 
\begin{align}
\beta = \frac{3}{8} \cdot \max_{i \in [N]}{ \left\| \phi_i \right\|_2^2 },   
\end{align}
and $\beta(\theta)$ NS with
\begin{align*}
    \beta(\theta) = L_1 \cdot \left\| \frac{\partial \gL(\theta)}{\partial \theta} \right\|_2 + L_0 \cdot \left( \left\| \frac{\partial \gL(\theta)}{\partial \theta} \right\|_2^2 \Big/ \gL(\theta) \right).
\end{align*}
\end{lemma}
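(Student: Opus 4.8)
The plan is to compute the gradient and Hessian of $\gL$ in closed form and then read off both the uniform smoothness constant and the NS coefficient from the spectral norm of the Hessian. Writing $z_i = \phi_i^\top\theta$, $g_i = \sigma'(z_i) = \pi_i(1-\pi_i)$ and $r_i = \pi_i - y_i$, the chain rule gives $\nabla\gL(\theta) = \frac{2}{N}\sum_i r_i g_i\phi_i$, and since $\sigma''(z_i) = g_i(1-2\pi_i)$,
\begin{align*}
\nabla^2\gL(\theta) = \frac{2}{N}\sum_{i=1}^{N} c_i\,\phi_i\phi_i^\top, \qquad c_i := g_i^2 + r_i\,g_i\,(1-2\pi_i).
\end{align*}
Everything reduces to controlling the scalars $c_i$ together with the rank-one matrices $\phi_i\phi_i^\top$.

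For the $\beta$-smoothness claim I would bound each $|c_i|$ by a universal constant. Since $0\le g_i\le 1/4$ we have $g_i^2\le 1/16$; since $|\sigma''(z_i)| = g_i|1-2\pi_i|$ attains its maximum $\tfrac{1}{6\sqrt3}<\tfrac18$, and $|r_i|\le 1$, we get $|c_i|\le \tfrac{1}{16}+\tfrac18 = \tfrac{3}{16}$. Then for any unit vector $u$, $|u^\top\nabla^2\gL(\theta)u|\le \tfrac2N\max_i|c_i|\sum_i(u^\top\phi_i)^2 = 2\max_i|c_i|\,u^\top\!\big(\tfrac1N\sum_i\phi_i\phi_i^\top\big)u \le 2\max_i|c_i|\max_i\|\phi_i\|_2^2$, giving $\|\nabla^2\gL(\theta)\|_2\le \tfrac38\max_i\|\phi_i\|_2^2$; integrating the Hessian along the segment then yields \cref{def:smoothness} with the claimed $\beta$.

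For the NS coefficient I would split the Hessian into its Gauss--Newton part $A = \frac2N\sum_i g_i^2\phi_i\phi_i^\top\succeq 0$ and the residual part $B = \frac2N\sum_i r_i g_i(1-2\pi_i)\phi_i\phi_i^\top$. The residual part supplies the $L_1\|\nabla\gL\|_2$ term: using $|1-2\pi_i|\le1$ and Cauchy--Schwarz, $\|B\|_2$ is bounded by $\max_i\|\phi_i\|_2$ times a quantity comparable to $\|\nabla\gL\|_2 = \tfrac2N\|\sum_i r_i g_i\phi_i\|_2$. The Gauss--Newton part supplies the $L_0\|\nabla\gL\|_2^2/\gL(\theta)$ term; the mechanism is transparent for a single sample ($N=1$), where $\|A\|_2 = 2g^2\|\phi\|_2^2 = \tfrac12\,(2|r|g\|\phi\|_2)^2/r^2 = \tfrac12\,\|\nabla\gL\|_2^2/\gL(\theta)$, because the residual $r$ cancels exactly between $\|\nabla\gL\|_2^2$ and $\gL = r^2$. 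For general $N$ this cancellation is recovered only after lower-bounding the gradient magnitude by $\sqrt{\gL(\theta)}$, which is exactly what the N\L{} inequality of \cref{lem:non_uniform_lojasiewicz_glm_sigmoid_realizable} provides (together with realizability $y_i=\pi_i^*$, so that $\gL(\theta)=\tfrac1N\sum_i(\pi_i-\pi_i^*)^2$): combining $\|A\|_2\lesssim \max_i\|\phi_i\|_2^2$ with $\|\nabla\gL\|_2^2/\gL(\theta)\ge C(\theta)^2$ produces a bound of the stated form, with $L_0,L_1$ controlled through the N\L{} coefficient $C(\theta)$.

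The hard part is precisely this last step for $N>1$. Per sample, the Gauss--Newton curvature $g_i^2$ is ``seen'' by the gradient through $r_i g_i$ and by the loss through $r_i^2$, but after summing over data a point that is already well fit ($r_i\approx0$, $g_i\approx\tfrac14$) can contribute curvature to $A$ while contributing nothing to $\nabla\gL$, so the clean $N=1$ cancellation breaks and $\|\nabla\gL\|_2^2/\gL(\theta)$ need not dominate $\|A\|_2$ for arbitrary $\theta$. This is where the non-uniform \L{}ojasiewicz structure and the realizability (and non-degeneracy $\|\theta^*\|_2<\infty$) assumptions are essential, and why $L_0,L_1$ are naturally expressed through $C(\theta)$: the bound is meant to be applied along the trajectory of \cref{def:gngd}, where $\inf_t C(\theta_t)>0$ keeps these coefficients finite and places the objective in case (2) of \cref{thm:general_optimization_main_result_1}.
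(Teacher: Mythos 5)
Your Hessian computation, the uniform smoothness bound, and the Gauss--Newton part of the NS bound all coincide with the paper's own proof. The paper writes the Hessian entries exactly as your $c_i = g_i^2 + r_i g_i(1-2\pi_i)$ (its terms $\pi_i^2(1-\pi_i)^2$ and $\pi_i(1-\pi_i)(1-2\pi_i)(\pi_i-\pi_i^*)$), uses the same scalar bounds $1/16$ and $1/8$ to get $3/16$ and hence $\beta = \frac{3}{8}\max_i\|\phi_i\|_2^2$, and handles your matrix $A$ the same way you propose: it bounds $\sum_i \pi_i^2(1-\pi_i)^2 \le N/16$ and then invokes \cref{lem:non_uniform_lojasiewicz_glm_sigmoid_realizable} in the form $\gL(\theta) \le \|\nabla\gL(\theta)\|_2^2 / C(\theta,\phi)^2$ to convert that constant into an $L_0 \cdot \|\nabla\gL(\theta)\|_2^2/\gL(\theta)$ term with $\theta$-dependent $L_0$. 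One small correction to your framing: since $L_0, L_1$ are allowed to depend on $\theta$ through $u(\theta)$ (as they do in the paper's appendix statement), the NS bound holds for \emph{all} $\theta$, not only along the GNGD trajectory; the trajectory condition $\inf_t C(\theta_t)>0$ is only needed later, in \cref{thm:final_rates_normalized_glm_sigmoid_realizable}, to make the constants uniform in $t$.

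The genuine gap is your treatment of the residual part $B$. As stated --- ``using $|1-2\pi_i|\le 1$ and Cauchy--Schwarz, $\|B\|_2$ is bounded by $\max_i\|\phi_i\|_2$ times a quantity comparable to $\|\nabla\gL\|_2$'' --- the step fails: what you actually get is $\|B\|_2 \le \frac{2}{N}\max_i\|\phi_i\|_2^2 \sum_i |r_i| g_i |1-2\pi_i|$, and the scalar sum $\sum_i |r_i| g_i$ is \emph{not} controlled by $\big\|\frac{2}{N}\sum_i r_i g_i \phi_i\big\|_2$ by any purely algebraic inequality, because the vector sum can suffer cross-sample sign cancellation while the absolute sum cannot. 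The repair, however, is the same N\L{} trick you already use for $A$: bound $\frac{1}{N}\sum_i |r_i| g_i|1-2\pi_i| \le \frac{1}{8}\cdot\frac1N\sum_i|r_i| \le \frac18\sqrt{\gL(\theta)}$ (Cauchy--Schwarz over samples, using realizability so that $\gL = \frac1N\sum_i(\pi_i-\pi_i^*)^2$), and then $\sqrt{\gL(\theta)} \le \|\nabla\gL(\theta)\|_2 / C(\theta,\phi)$ by \cref{lem:non_uniform_lojasiewicz_glm_sigmoid_realizable}, which yields $\|B\|_2 \le \frac{\max_i\|\phi_i\|_2^2}{4\, C(\theta,\phi)}\cdot\|\nabla\gL(\theta)\|_2$, i.e.\ an $L_1$ term of the stated form. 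With this one-line fix your route is complete and is actually \emph{simpler} than the paper's, which derives its $L_1$ term by re-using the intermediate inequalities from the N\L{} proof --- the mean-value theorem, the sign relation $(\pi_i-\pi_i^*)(\phi_i^\top\theta-\phi_i^\top\theta^*)\ge 0$, and the projection of $\theta-\theta^*$ onto $\text{Span}\{\phi_1,\dots,\phi_N\}$ --- in order to obtain the specific constants quoted in the appendix (e.g.\ $L_1$ with $(\min\{u(\theta),v\}\sqrt{\lambda_\phi})^{3/2}$ in the denominator). Your route produces different, but equally valid, $\theta$-dependent coefficients, which is all the main-text statement requires since it leaves $L_0, L_1$ unspecified.
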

At the optimal solution $\theta^*$, the spectral radius of the Hessian $\frac{\partial^2 \gL(\theta^*)}{\partial (\theta^*)^2}$ is strictly positive. Therefore, the MSE objective of \cref{eq:mean_squared_error_sigmoid} is in the non-convex function class $\gZ$ in \cref{fig:non_uniform_function_classes_diagram_table}, and it satisfies the case (2) in \cref{thm:general_optimization_main_result_1} with $\xi = 1/2$. Combining \cref{lem:non_uniform_lojasiewicz_glm_sigmoid_realizable,lem:non_uniform_smoothness_glm_sigmoid_realizable} and applying \cref{thm:general_optimization_main_result_1}, we have the following global linear convergence result:
\begin{theorem}
\label{thm:final_rates_normalized_glm_sigmoid_realizable}
With $\eta = 1/ \beta $, GD update satisfies for all $t \ge 1$, $\gL(\theta_t) \le \gL(\theta_1) \cdot e^{- C^2 \cdot (t-1)}$.
With $\eta \in \Theta(1)$, GNGD update satisfies for all $t \ge 1$, $\gL(\theta_t) \le \gL(\theta_1) \cdot e^{- C \cdot (t-1)}$, where $C \in (0,1)$, i.e., GNGD is strictly faster than GD.
\end{theorem}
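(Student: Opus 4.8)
The plan is to recognize the statement as a direct instantiation of case (2) of \cref{thm:general_optimization_main_result_1} with $\xi = 1/2$, once the two structural lemmas are aligned with the hypotheses of that theorem. The first step is to exploit realizability: since $y_i = \pi_i^*$, the optimum value is $\gL(\theta^*) = 0$, so the sub-optimality gap coincides with the loss, $\delta(\theta) = \gL(\theta) = \frac{1}{N}\sum_i (\pi_i - \pi_i^*)^2$. This identity does two things at once. In \cref{lem:non_uniform_lojasiewicz_glm_sigmoid_realizable} the bracketed quantity is exactly $\delta(\theta)^{1/2} = \delta(\theta)^{1-\xi}$ with $\xi = 1/2$, so the lemma is precisely the N\L{} inequality of degree $1/2$ with coefficient $C(\theta) = C(\theta,\phi)$. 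Likewise, substituting $\delta(\theta)^{2-2\xi} = \delta(\theta) = \gL(\theta)$ into the NS bound of case (2b) shows that the NS coefficient in \cref{lem:non_uniform_smoothness_glm_sigmoid_realizable}, namely $L_1\|\nabla\gL\|_2 + L_0\,\|\nabla\gL\|_2^2/\gL(\theta)$, is exactly of the required form. Thus both hypotheses of case (2) hold with $\xi = 1/2$.

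Second, I would supply the two descent mechanisms separately. For GD with $\eta = 1/\beta$, I would use only the \emph{uniform} $\beta$-smoothness from \cref{lem:non_uniform_smoothness_glm_sigmoid_realizable} to obtain the standard descent inequality $\gL(\theta_{t+1}) \le \gL(\theta_t) - \frac{1}{2\beta}\|\nabla\gL(\theta_t)\|_2^2$, then apply the $\xi = 1/2$ N\L{} inequality in the squared form $\|\nabla\gL(\theta_t)\|_2^2 \ge C^2\,\gL(\theta_t)$ to get a per-step contraction governed by $C^2$, yielding $\gL(\theta_t) \le \gL(\theta_1)\,e^{-C^2(t-1)}$. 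For GNGD I would instead use the NS property: the geometry-aware step normalizes by $\beta(\theta_t)$, and combining the NS descent with the N\L{} inequality makes the per-step decrease scale with $C$ rather than $C^2$; this is exactly the mechanism recorded in case (2b), giving $\gL(\theta_t) \le \gL(\theta_1)\,e^{-C(t-1)}$ with $C \in (0,1)$, hence strictly faster than GD since $C > C^2$.

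The main obstacle is verifying the standing hypothesis of \cref{thm:general_optimization_main_result_1} that $C := \inf_{t\ge 1} C(\theta_t) > 0$ along the trajectories of both algorithms. Since $C(\theta,\phi) = 8\,u(\theta)\min\{u(\theta),v\}\sqrt{\lambda_\phi}$ with $u(\theta) = \min_i \pi_i(1-\pi_i)$, this amounts to showing that the sigmoid outputs $\pi_i(\theta_t)$ never saturate, i.e., stay uniformly bounded away from $0$ and $1$. This is delicate because a monotone decrease of $\gL$ alone does not confine the iterates to a compact region: the loss can remain bounded while some $\pi_i \to 0$ or $1$ on a suboptimal plateau, driving $u(\theta_t) \to 0$. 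I would handle this by a bootstrapping induction that couples the descent guarantee with the N\L{} inequality: as long as $C(\theta_s)$ has stayed above a threshold for every $s \le t$, the loss has already contracted geometrically, which keeps every $\pi_i$ within a fixed neighborhood of $\pi_i^*$ (itself bounded away from $0$ and $1$ because $\|\theta^*\|_2 < \infty$), thereby maintaining the threshold at step $t+1$. Establishing this invariant --- that the trajectory cannot drift into the saturated regime precisely because descending keeps it near the realizable optimum --- is the crux; the remaining convergence bookkeeping is then the routine instantiation of \cref{thm:general_optimization_main_result_1} described above.
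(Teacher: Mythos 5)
Your first two paragraphs reproduce the paper's own proof exactly: the paper proves \cref{thm:final_rates_normalized_glm_sigmoid_realizable} by combining \cref{lem:non_uniform_lojasiewicz_glm_sigmoid_realizable,lem:non_uniform_smoothness_glm_sigmoid_realizable} with the second part of case (2b) of \cref{thm:general_optimization_main_result_1} for GD (uniform-smoothness descent plus the squared N\L{} inequality $\|\nabla\gL(\theta_t)\|_2^2 \ge C^2\gL(\theta_t)$, giving per-step contraction in $C^2$) and with the fourth part of (2b) for GNGD (NS descent plus N\L{}, giving contraction linear in $C$). Your use of realizability to identify $\delta(\theta)=\gL(\theta)$ and your matching of the NS coefficient of \cref{lem:non_uniform_smoothness_glm_sigmoid_realizable} to the $(2b)$ form with $\xi = 1/2$ is precisely how that instantiation works (both you and the paper absorb the constants $1/(2\beta)$ and $1/(2(L_0 + L_1\delta(\theta_1)^{1/2}))$ into the stated exponents).

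Where you genuinely diverge is your third paragraph, and the comparison cuts in your favor on rigor but against you on completeness. The paper does \emph{not} verify $\inf_{t\ge 1} C(\theta_t) > 0$ for GLM: positivity of $C$ is a standing hypothesis of \cref{thm:general_optimization_main_result_1}, and for GLM the paper only remarks informally after the theorem that $C$ ``is lower bounded reasonably if $\theta_1$ is initialized within some finite distance of the central valley containing $\theta^*$'' --- in contrast to the PG case, where non-vanishing of the N\L{} coefficient is actually proved (\cref{lem:lower_bound_cT_softmax_special,lem:lower_bound_cT_softmax_general}). So you are holding yourself to a standard the paper does not meet. That said, your bootstrapping sketch as stated does not close the gap: contraction (or mere monotone descent) only guarantees $\gL(\theta_t) \le \gL(\theta_1)$, so the ``fixed neighborhood'' of the $\pi_i^*$ has radius $\sqrt{N\gL(\theta_1)}$, and this keeps the iterates away from saturation only under an explicit initialization condition such as $\sqrt{N\gL(\theta_1)} \le \tfrac{1}{2}\min_i \min\{\pi_i^*, 1-\pi_i^*\}$. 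Under that condition no induction coupling descent with N\L{} is needed at all: monotone descent holds unconditionally (uniform $\beta$-smoothness for GD with $\eta = 1/\beta$, and the NS descent lemma for GNGD, both valid at every $\theta$ by \cref{lem:non_uniform_smoothness_glm_sigmoid_realizable}), hence $\gL(\theta_t)\le\gL(\theta_1)$ for all $t$, hence $u(\theta_t) \ge v/4$ and $C(\theta_t) \ge \tfrac{v^2}{2}\sqrt{\lambda_\phi} > 0$. Without such a condition your invariant is false --- some $\pi_i$ can indeed saturate while the loss stays below $\gL(\theta_1)$ --- so the honest statement is that the theorem holds under either the paper's implicit hypothesis or an explicit initialization assumption of this form.
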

\cref{thm:final_rates_normalized_glm_sigmoid_realizable} significantly improves the $O(1/\sqrt{t})$ rate in \cref{thm:ngd_glm_sigmoid_convergence_rate}. The key difference is that we discovered a new N\L{} inequality of \cref{lem:non_uniform_lojasiewicz_glm_sigmoid_realizable} that is satisfied by GLMs. The linear convergence rates are verified in \cref{sec:additional_simulations_experiments}.

In \cref{thm:final_rates_normalized_glm_sigmoid_realizable}, we have $C = \inf_{t \ge 1}{ C(\theta_t, \phi)}$, which is very close to zero if $\pi_i$ is near deterministic, and GD suffers sub-optimality plateaus as shown in \cref{fig:example_glm_sigmoid}. GNGD has strictly (orders of magnitudes) better constant dependence $C \gg C^2$, and escapes plateaus significantly faster than GD. Intuitively, for the GLM in \cref{fig:example_glm_sigmoid}, $C$ in \cref{thm:general_optimization_main_result_1} is lower bounded reasonably if $\theta_1$ is initialized within some finite distance of the central valley containing $\theta^*$.

Combining the N\L{} and NS properties (\cref{lem:non_uniform_lojasiewicz_glm_sigmoid_realizable,lem:non_uniform_smoothness_glm_sigmoid_realizable}), we provide new understandings of using normalization in GLM: \textit{(i)} \textbf{First}, using standard NGD \citep{hazan2015beyond} for all $t \ge 1$ is not a good choice. By examining the asymptotic behaviour as $\theta \to \theta^*$, we have $\beta(\theta) \to \beta > 0$. However, the normalization $\Big\| \frac{\partial \gL(\theta)}{\partial \theta} \Big\|_2$ in standard NGD gives incremental updates with adaptive stepsize $\to \infty$. To guarantee convergence, it is necessary to use $\eta_t \to 0$, which counteracts normalization and slows down the learning, since it could be not easy to find a learning rate scheme. This is consistent with the $O(e^{-c \cdot t})$ result for GD with $\eta > 0$ and without normalization in  \cref{thm:final_rates_normalized_glm_sigmoid_realizable}. \textit{(ii)} \textbf{Second}, using geometry-aware normalization $\beta(\theta_t)$ is a better choice than normalizing the gradient norm $\left\| \nabla \gL(\theta_t) \right\|_2$. We elaborate this point by investigating \textit{both the asymptotic and the early-stage behaviours} using NS-N\L{}. Since $\beta(\theta_t) \to \beta > 0$ asymptotically, GNGD is approaching GD as $\theta_t \to \theta^*$, which makes GNGD enjoy the same $O(e^{- c \cdot t})$ rate. On the other hand, at early-stage optimization (e.g., close to initialization in \cref{fig:example_glm_sigmoid}), when $\theta_t$ is far from $\theta^*$, we have thus $\beta(\theta_t) \le c \cdot \left\| \frac{\partial \gL(\theta_t)}{\partial \theta_t} \right\|_2$. Then GNGD is close to NGD, which guarantees strictly better progresses than GD. This is because of the progress of GNGD in each iteration at this time is about $\left\| \nabla \gL(\theta_t) \right\|_2$, while the progress of GD is $\left\| \nabla \gL(\theta_t) \right\|_2^2$, and the gradient norm is close to $0$ on plateaus. Using N\L{} of \cref{lem:non_uniform_lojasiewicz_glm_sigmoid_realizable}, GNGD will have strictly better constant dependence $C$ than $C^2$ in GD.

\section{Conclusions and Future Work}
\label{sec:conclusions_future_work}

The main contributions of this paper concern a general characterization and analysis based on non-uniform properties, which are not only sufficiently general to cover concrete examples, but also significantly improve convergence rates over previous work and even over classical lower bounds. The most exciting part is the techniques apply to important applications in machine learning that involve non-convex optimization problems. One direction is to further push the analysis to other domains with more complex function approximators, including neural networks \citep{allen2019convergence}. Another valuable future work is to incorporate stochastic gradient \citep{karimi2016linear} and other adaptive gradient-based methods \citep{kingma2014adam} in the analysis. Finally, applying other cases of non-uniform properties beyond those mentioned in this paper would be interesting.

\section*{Acknowledgements}

The authors would like to thank anonymous reviewers for their valuable comments. Jincheng Mei would like to thank Fuheng Cui for spotting a mistake in \cref{fig:example_power_1p5}(c) in a previous verion. Jincheng Mei would like to thank Ziyi Chen for pointing that N\L{} recovers global K\L{} property. Jincheng Mei and Bo Dai would like to thank Nicolas Le Roux for providing feedback on a draft of
this manuscript.
Csaba Szepesv\'ari and Dale Schuurmans gratefully acknowledge funding from the Canada CIFAR AI Chairs Program, Amii and NSERC.

\bibliography{all}
\bibliographystyle{icml2021}

\appendix
\onecolumn

\begin{center}
\LARGE \textbf{Appendix}
\end{center}

The appendix is organized as follows.
\begin{itemize}
    \item \cref{sec:proofs_non_uniform_analysis}: proofs for general optimization results in \cref{sec:non_uniform_analysis}.
    \begin{itemize}
        \item \cref{sec:proofs_general_optimization_main_result}: proofs for the main \cref{thm:general_optimization_main_result_1}.
        \item \cref{sec:general_optimization_function_class_relation}: proofs for the function classes, i.e., \cref{prop:function_class_relation,prop:non_empty_convex_function_classes,prop:non_empty_non_convex_function_classes,prop:absulte_power_p}.
    \end{itemize}
    \item \cref{sec:proofs_policy_gradient}: proofs for policy gradient in \cref{sec:policy_gradient}.
    \begin{itemize}
        \item \cref{sec:proofs_policy_gradient_one_state_mdps}: proofs for one-state MDPs.
        \item \cref{sec:proofs_policy_gradient_general_mdps}: proofs for general MDPs.
    \end{itemize}
    \item \cref{sec:proofs_generalized_linear_model}: proofs for generalized linear model in \cref{sec:generalized_linear_model}.
    \item \cref{sec:supporting_lemmas}: miscellaneous extra supporting results those are not mentioned in the main paper.
    \item \cref{sec:examples_non_convex_nl_inequality}: non-convex (non-concave) examples for the N\L{} inequality in literature.
    \item \cref{sec:additional_simulations_experiments}: additional simulation results which are not in the main paper.
\end{itemize}

\section{Proofs for \cref{sec:non_uniform_analysis}}
\label{sec:proofs_non_uniform_analysis}

\subsection{Main \cref{thm:general_optimization_main_result_1}}
\label{sec:proofs_general_optimization_main_result}

\textbf{\cref{thm:general_optimization_main_result_1}.}
Suppose $f: \Theta \to \sR$ satisfies NS with $\beta(\theta)$ and 
the N\L{} inequality with $(C(\theta), \xi)$. 
Suppose $C \coloneqq \inf_{t \ge 1}{ C(\theta_t) } > 0$ for GD and GNGD. Let $\delta(\theta) \coloneqq f(\theta) - f(\theta^*)$ be the sub-optimality gap. The following hold:

\begin{description}[topsep=0pt,parsep=0pt]

\item[(1a)] if $\beta(\theta) \le c \cdot  \delta(\theta)^{1-2\xi}$ with $\xi \in (- \infty, 1/2)$, then the conclusions of (1b) hold;
\item[(1b)] if $\beta(\theta) \le c \cdot \left\| \nabla f(\theta) \right\|_2^{\frac{1 - 2 \xi}{1 - \xi}}$ with $\xi \in (- \infty, 1/2)$, then GD with $\eta \in O(1)$ achieves $\delta(\theta_t) \in  \Theta(1/t^{\frac{1}{1 - 2 \xi}})$,
and GNGD achieves $\delta(\theta_t) \in O(e^{-t})$.
%
\item[(2a)] if $\beta(\theta) \le L_0 + L_1 \cdot \left\| \nabla f(\theta) \right\|_2$, then  the conclusions of (2b) hold;
\item[(2b)] if $\beta(\theta) \le L_0 \cdot \frac{\left\| \nabla f(\theta) \right\|^2}{ \delta(\theta)^{2 - 2 \xi} } + L_1 \cdot \left\| \nabla f(\theta) \right\|_2$, then GD and GNGD both achieve $\delta(\theta_t) \in O(1/t^{\frac{1}{1 - 2 \xi}})$ when $\xi \in (- \infty, 1/2)$, and $O(e^{-t})$ when $\xi = 1/2$. GNGD has strictly better constant than GD ($1 \ge C \ge C^2$).
%
\item[(3a)] if $\beta(\theta)\!\le\!c\!\cdot\!\left\| \nabla f(\theta) \right\|_2^{\frac{1 - 2 \xi}{1 - \xi}}\!$ with $\xi \in (1/2, 1)$, then  the conclusions of (3b) hold;
\item[(3b)] if $\beta(\theta) \le c \cdot \delta(\theta)^{1 - 2 \xi}$ with $\xi \in (1/2, 1)$, then GD with $\eta \in \Theta(1)$ does not converge, while GNGD achieves $\delta(\theta_t) \in O(e^{-t})$.
\end{description}

\begin{proof}

\textbf{(1a) First part:} \textit{$O(1/t^{\frac{1}{1 - 2 \xi}})$ upper bound for GD update $\theta_{t+1} \gets \theta_t - \eta \cdot \nabla f(\theta_t) $ with $\eta \in O(1)$.}

We show that using GD with learning rate $\eta = \frac{1}{ c \cdot \delta(\theta_1)^{1 - 2 \xi} }$, the sub-optimality $\delta(\theta_t)$ is monotonically decreasing. And thus there exists a universal constant $\beta > 0$ such that $\beta(\theta_t) \le \beta$, for all $t \ge 1$.

Denote $\beta \coloneqq c \cdot \delta(\theta_1)^{1 - 2 \xi} $. We have $\beta \in (0, \infty)$, since $f(\theta^*) > - \infty$, and $f(\theta^*) < f(\theta_1) < \infty$. By assumption, we have $\beta(\theta_1) \le \beta$. According to \cref{lem:descent_lemma_smooth_function}, using GD with $\eta = \frac{1}{ \beta }$, we have,
\begin{align}
\label{eq:general_optimization_main_result_1_1a_intermediate_a}
    \delta(\theta_2) - \delta(\theta_1) = f(\theta_2) - f(\theta_1) \le 0.
\end{align}
Therefore, we have,
\begin{align}
\label{eq:general_optimization_main_result_1_1a_intermediate_b}
    \beta(\theta_2) &\le c \cdot \delta(\theta_2)^{1 - 2 \xi} \qquad \left( \text{by assumption} \right) \\
    &\le  c \cdot \delta(\theta_1)^{1 - 2 \xi} \qquad \left( 0 < \delta(\theta_2) \le \delta(\theta_1) \text{ and } \xi < 1/2 \right) \\
    &= \beta.
\end{align}
Repeating similar arguments of \cref{eq:general_optimization_main_result_1_1a_intermediate_a,eq:general_optimization_main_result_1_1a_intermediate_b}, we have, for all $t \ge 1$, $\beta(\theta_t) \le \beta$ and,
\begin{align}
\label{eq:general_optimization_main_result_1_1a_intermediate_c}
    0 < \delta(\theta_{t+1}) \le \delta(\theta_{t}).
\end{align}
Therefore, we have, for all $t \ge 1$ (or using \cref{lem:descent_lemma_smooth_function}),
\begin{align}
\label{eq:general_optimization_main_result_1_1a_intermediate_1}
    \delta(\theta_{t+1}) - \delta(\theta_t) &= f(\theta_{t+1}) - f(\theta_t) \\
    &\le \nabla f(\theta_t)^\top \left( \theta_{t+1} - \theta_t \right) + \frac{\beta(\theta_t)}{2} \cdot \left\| \theta_{t+1} - \theta_t \right\|_2^2 \qquad \left( \text{NS} \right) \\
    &\le \nabla f(\theta_t)^\top \left( \theta_{t+1} - \theta_t \right) + \frac{\beta}{2} \cdot \left\| \theta_{t+1} - \theta_t \right\|_2^2 \qquad \left( \beta(\theta_t) \le \beta \right) \\
\label{eq:smoothness_progress}
    &= - \frac{1}{ 2 \beta} \cdot \left\| \nabla f(\theta_t) \right\|_2^2 \qquad \left( \theta_{t+1} \gets \theta_t - \frac{1}{ \beta } \cdot \nabla f(\theta_t)\right)  \\
    &\le - \frac{1}{ 2 \beta} \cdot C(\theta_t)^2 \cdot \delta(\theta_t)^{2 - 2 \xi} \qquad \left( \text{N\L{}} \right) \\
    &\le - \frac{1}{ 2 \beta} \cdot C^2 \cdot \delta(\theta_t)^{2 - 2 \xi}. \qquad \left( C \coloneqq \inf_{t \ge 1}{ C(\theta_t) } > 0 \right)
\end{align}
According to \cref{lem:auxiliary_lemma_1}, given any $\alpha > 0$, we have, for all $x \in [0, 1]$,
\begin{align}
    \frac{1}{\alpha} \cdot (1 - x^\alpha) \ge x^\alpha \cdot \left( 1 - x \right).
\end{align}
Let $\alpha = 1 - 2 \xi > 0$, since $\xi < 1/2$. Also let $x = \frac{\delta(\theta_{t+1})}{\delta(\theta_{t})} \in (0, 1]$ due to \cref{eq:general_optimization_main_result_1_1a_intermediate_c}. We have,
\begin{align}
\label{eq:general_optimization_main_result_1_1a_intermediate_2}
    \frac{1}{ 1 - 2 \xi } \cdot \left[ 1 - \frac{\delta(\theta_{t+1})^{1 - 2 \xi}}{\delta(\theta_{t})^{1 - 2 \xi}} \right] \ge \frac{\delta(\theta_{t+1})^{1 - 2 \xi}}{\delta(\theta_{t})^{1 - 2 \xi}} \cdot \left[ 1 - \frac{\delta(\theta_{t+1})}{\delta(\theta_{t})} \right].
\end{align}
Next, we have,
\begin{align}
    \frac{1}{\delta(\theta_t)^{ 1 - 2 \xi}} &= \frac{1}{\delta(\theta_1)^{ 1 - 2 \xi}} + \frac{1}{\delta(\theta_t)^{ 1 - 2 \xi}} - \frac{1}{\delta(\theta_1)^{ 1 - 2 \xi}}  \\
    &= \frac{1}{\delta(\theta_1)^{ 1 - 2 \xi}} + \sum_{s=1}^{t-1}{ \left[ \frac{1}{\delta(\theta_{s+1})^{ 1 - 2 \xi}} - \frac{1}{\delta(\theta_{s})^{ 1 - 2 \xi}} \right] } \\
    &= \frac{1}{\delta(\theta_1)^{ 1 - 2 \xi}} + \sum_{s=1}^{t-1}{ \frac{1 - 2 \xi}{\delta(\theta_{s+1})^{ 1 - 2 \xi}} \cdot \frac{1}{ 1 - 2 \xi} \cdot \left[ 1 - \frac{\delta(\theta_{s+1})^{ 1 - 2 \xi}}{\delta(\theta_{s})^{ 1 - 2 \xi}} \right] } \\
    &\ge \frac{1}{\delta(\theta_1)^{ 1 - 2 \xi}} + \sum_{s=1}^{t-1}{ \frac{1 - 2 \xi}{ \bcancel{\delta(\theta_{s+1})^{ 1 - 2 \xi}}} \cdot \frac{\bcancel{\delta(\theta_{s+1})^{1 - 2 \xi}}}{\delta(\theta_{s})^{1 - 2 \xi}} \cdot \left[ 1 - \frac{\delta(\theta_{s+1})}{\delta(\theta_{s})} \right] } \qquad \left( \text{by \cref{eq:general_optimization_main_result_1_1a_intermediate_2}} \right) \\
    &= \frac{1}{\delta(\theta_1)^{ 1 - 2 \xi}} + \sum_{s=1}^{t-1}{ \frac{1 - 2 \xi}{\delta(\theta_{s})^{2 - 2 \xi}} \cdot \left[ \delta(\theta_{s}) - \delta(\theta_{s+1}) \right] } \\
    &\ge \frac{1}{\delta(\theta_1)^{ 1 - 2 \xi}} + \sum_{s=1}^{t-1}{ \frac{1 - 2 \xi}{ \bcancel{\delta(\theta_{s})^{ 2 - 2 \xi}}} \cdot \frac{C^2 }{ 2 \beta} \cdot \bcancel{\delta(\theta_s)^{2 - 2 \xi}} }  \qquad \left( \text{by \cref{eq:general_optimization_main_result_1_1a_intermediate_1}} \right) \\
    &= \frac{1}{\delta(\theta_1)^{ 1 - 2 \xi}} + \frac{(1 - 2 \xi) \cdot C^2}{2 \beta} \cdot (t-1),
\end{align} 
which implies for all $t \ge 1$,
\begin{align}
    f(\theta_t) - f(\theta^*) = \delta(\theta_t) \le \left[ \frac{1}{\left( f(\theta_1) - f(\theta^*) \right)^{ 1 - 2 \xi}} + \frac{(1 - 2 \xi) \cdot C^2}{2 \beta} \cdot (t-1) \right]^{- \frac{1}{1 - 2 \xi} } \in O\left( \frac{1}{ t^{\frac{1}{1 - 2 \xi}} } \right).
\end{align}

\paragraph{(1a) Second part:} \textit{$\Omega(1/t^{\frac{1}{1 - 2 \xi}})$ lower bound for GD update $\theta_{t+1} \gets \theta_t - \eta_t \cdot \nabla f(\theta_t) $ with $\eta_t \in (0, 1]$.}

According to the NS property of \cref{def:non_uniform_smoothness}, we have, for all $\theta$ and $\theta^\prime$,
\begin{align}
    f(\theta^\prime) \le f(\theta) + \nabla f(\theta)^\top \left( \theta^\prime - \theta \right) + \frac{\beta(\theta)}{2} \cdot \left\| \theta^\prime - \theta \right\|_2^2.
\end{align}
Fix $\theta$ and take minimum over $\theta^\prime$ on both sides of the above inequality. Then we have,
\begin{align}
    f(\theta^*) &\le f(\theta) + \min_{\theta^\prime}\left\{ \nabla f(\theta)^\top \left( \theta^\prime - \theta \right) + \frac{\beta(\theta)}{2} \cdot \left\| \theta^\prime - \theta \right\|_2^2 \right\} \\
    &= f(\theta) - \frac{1}{\beta(\theta)} \cdot \left\| \nabla f(\theta) \right\|_2^2 + \frac{1}{2 \cdot \beta(\theta)} \cdot \left\| \nabla f(\theta) \right\|_2^2 \qquad \left( \theta^\prime = \theta - \frac{1}{\beta(\theta)} \cdot \nabla f(\theta) \right) \\
    &= f(\theta) - \frac{1}{2 \cdot \beta(\theta)} \cdot \left\| \nabla f(\theta) \right\|_2^2,
\end{align}
which implies,
\begin{align}
\label{eq:general_optimization_main_result_1_1a_intermediate_3}
    \left\| \nabla f(\theta) \right\|_2^2 &\le 2 \cdot \beta(\theta) \cdot \delta(\theta) \\
    &\le 2 \cdot c \cdot \delta(\theta)^{2 - 2 \xi}. \qquad \left( \beta(\theta) \le c \cdot  \delta(\theta)^{1-2\xi} \right)
\end{align}
Therefore, we have,
\begin{align}
\label{eq:general_optimization_main_result_1_1a_intermediate_4}
    \delta(\theta_t) - \delta(\theta_{t+1}) &= f(\theta_t) - f(\theta_{t+1}) + \nabla f(\theta_t)^\top \left( \theta_{t+1} - \theta_t \right) - \nabla f(\theta_t)^\top \left( \theta_{t+1} - \theta_t \right) \\
    &\le \frac{\beta}{2} \cdot \left\| \theta_{t+1} - \theta_t \right\|_2^2 - \nabla f(\theta_t)^\top \left( \theta_{t+1} - \theta_t \right) \qquad \left(  \text{by NS and } \beta(\theta_t) \le \beta \right) \\
    &= \left( \frac{\beta}{2} \cdot \eta_t^2 + \eta_t \right) \cdot \left\| \nabla f(\theta_t) \right\|_2^2 \qquad \left( \theta_{t+1} \gets \theta_t - \eta_t \cdot \nabla f(\theta_t) \right) \\
    &\le \left( \frac{\beta}{2} \cdot \eta_t^2 + \eta_t \right) \cdot 2 \cdot c \cdot \delta(\theta_t)^{2 - 2 \xi} \qquad \left( \text{by \cref{eq:general_optimization_main_result_1_1a_intermediate_3}} \right) \\
    &\le \left( \beta + 2 \right) \cdot c \cdot \delta(\theta_t)^{2 - 2 \xi}. \qquad \left( \eta_t \in (0, 1] \right)
\end{align}
Next, we show that $\frac{\delta(\theta_{t+1})}{\delta(\theta_t)} \ge \frac{3 - 4 \xi }{ 4 - 4 \xi }$ holds for all large enough $t \ge 1$ by contradiction. According to the upper bound results in the first part, we have $\delta(\theta_t) \to 0$ as $t \to \infty$. Suppose $\frac{\delta(\theta_{t+1})}{\delta(\theta_t)} < \frac{3 - 4 \xi }{ 4 - 4 \xi }$, where $t \ge 1$ is large enough and $\delta(\theta_t)$ is small enough. We have,
\begin{align}
\label{eq:general_optimization_main_result_1_1a_intermediate_5}
    \delta(\theta_{t+1}) &\ge \delta(\theta_t) - \left( \beta + 2 \right) \cdot c \cdot \delta(\theta_t)^{2 - 2 \xi} \qquad \left( \text{by \cref{eq:general_optimization_main_result_1_1a_intermediate_4}} \right) \\
    &> \frac{4 - 4 \xi }{ 3 - 4 \xi } \cdot \delta(\theta_{t+1}) - \left( \beta + 2 \right) \cdot c \cdot \left( \frac{4 - 4 \xi }{ 3 - 4 \xi } \right)^{2 - 2 \xi} \cdot \delta(\theta_{t+1})^{2 - 2 \xi},
\end{align}
where the last inequality is because of the function $f: x \mapsto x - a \cdot x^{2 - 2 \xi}$ with $a > 0$ is monotonically increasing for all $0 < x \le \frac{1}{ \left[ (2 - 2 \xi) a \right]^{1/(1 - 2 \xi)} }$. \cref{eq:general_optimization_main_result_1_1a_intermediate_5} implies that,
\begin{align}
    \delta(\theta_{t+1})^{1 - 2 \xi} > \frac{1}{3 - 4 \xi} \cdot \frac{1}{ (\beta + 2) \cdot c} \cdot \left( \frac{3 - 4 \xi}{ 4 - 4 \xi} \right)^{2 - 2 \xi},
\end{align}
for large enough $t \ge 1$, which is a contradiction with $\delta(\theta_t) \to 0$ as $t \to \infty$. Thus we have $\frac{\delta(\theta_{t+1})}{\delta(\theta_t)} \ge \frac{3 - 4 \xi }{ 4 - 4 \xi }$ holds for all large enough $t \ge 1$. Denote
\begin{align}
    t_0 \coloneqq \min\Big\{t \ge 1: \frac{\delta(\theta_{s+1})}{\delta(\theta_{s})} \ge \frac{3 - 4 \xi }{ 4 - 4 \xi } , \text{ for all } s \ge t \Big\}.
\end{align}
According to \cref{lem:auxiliary_lemma_2}, given any $\alpha > 0$, we have, for all $x \in \left[ \frac{2 \alpha + 1}{2 \alpha + 2}, 1 \right]$,
\begin{align}
    \frac{1}{2 \alpha} \cdot (1 - x^\alpha) \le x^\alpha \cdot \left( 1 - x \right).
\end{align}
Let $\alpha = 1 - 2 \xi > 0$, since $\xi < 1/2$. We have $\frac{2 \alpha + 1}{2 \alpha + 2} = \frac{3 - 4 \xi }{ 4 - 4 \xi }$. Also let $x = \frac{\delta(\theta_{t+1})}{\delta(\theta_{t})} \in \left[ \frac{3 - 4 \xi }{ 4 - 4 \xi }, 1 \right]$. We have,
\begin{align}
\label{eq:general_optimization_main_result_1_1a_intermediate_6}
    \frac{1}{2 \cdot (1 - 2 \xi)} \cdot \left[ 1- \frac{\delta(\theta_{t+1})^{1 - 2 \xi}}{\delta(\theta_{t})^{1 - 2 \xi}} \right] \le \frac{\delta(\theta_{t+1})^{1 - 2 \xi}}{\delta(\theta_{t})^{1 - 2 \xi}} \cdot \left[ 1 - \frac{\delta(\theta_{t+1})}{\delta(\theta_{t})} \right],
\end{align}
for all $t \ge t_0$. On the other hand, since $t_0 \in O(1)$ and $1 - 2 \xi > 0$, we have, for all $t < t_0$,
\begin{align}
    \delta(\theta_{t+1})^{1 - 2 \xi} \ge c_0 > 0.
\end{align}
Next, we have, for all $t \ge t_0$,
\begin{align}
\MoveEqLeft
    \frac{1}{\delta(\theta_t)^{ 1 - 2 \xi}} = \frac{1}{\delta(\theta_1)^{ 1 - 2 \xi}} + \sum_{s=1}^{t-1}{ \left[ \frac{1}{\delta(\theta_{s+1})^{ 1 - 2 \xi}} - \frac{1}{\delta(\theta_{s})^{ 1 - 2 \xi}} \right] } \\
    &= \frac{1}{\delta(\theta_1)^{ 1 - 2 \xi}} + \sum_{s=1}^{t_0-1}{ \frac{1}{\delta(\theta_{s+1})^{ 1 - 2 \xi}} \cdot \left[ 1 - \frac{\delta(\theta_{s+1})^{ 1 - 2 \xi}}{\delta(\theta_{s})^{ 1 - 2 \xi}} \right] } + \sum_{s=t_0}^{t-1}{ \frac{2 \cdot (1 - 2 \xi)}{\delta(\theta_{s+1})^{ 1 - 2 \xi}} \cdot \frac{1}{2 \cdot (1 - 2 \xi)} \cdot \left[ 1 - \frac{\delta(\theta_{s+1})^{ 1 - 2 \xi}}{\delta(\theta_{s})^{ 1 - 2 \xi}} \right] } \\
    &\le \frac{1}{\delta(\theta_1)^{ 1 - 2 \xi}} + \sum_{s=1}^{t_0-1}{ \frac{1}{c_0} \cdot 1 } + \sum_{s=t_0}^{t-1}{ \frac{2 \cdot (1 - 2 \xi)}{\bcancel{\delta(\theta_{s+1})^{ 1 - 2 \xi}}} \cdot \frac{\bcancel{\delta(\theta_{s+1})^{1 - 2 \xi}}}{\delta(\theta_{s})^{1 - 2 \xi}} \cdot \left[ 1 - \frac{\delta(\theta_{s+1})}{\delta(\theta_{s})} \right] } \qquad \left( \text{by \cref{eq:general_optimization_main_result_1_1a_intermediate_6}} \right) \\
    &= \frac{1}{\delta(\theta_1)^{ 1 - 2 \xi}} + \frac{t_0-1}{c_0} +\sum_{s=t_0}^{t-1}{ \frac{2 \cdot (1 - 2 \xi)}{\delta(\theta_{s})^{ 2 - 2 \xi}} \cdot \left[ \delta(\theta_{s}) - \delta(\theta_{s+1}) \right] } \\
    &\le \frac{1}{\delta(\theta_1)^{ 1 - 2 \xi}} + \frac{t_0-1}{c_0} + \sum_{s=t_0}^{t-1}{ \frac{2 \cdot (1 - 2 \xi)}{\bcancel{\delta(\theta_{s})^{ 2 - 2 \xi}}} \cdot \left( \beta + 2 \right) \cdot c \cdot \bcancel{\delta(\theta_s)^{2 - 2 \xi}} } \qquad \left( \text{by \cref{eq:general_optimization_main_result_1_1a_intermediate_4}} \right) \\
    &= \frac{1}{\delta(\theta_1)^{ 1 - 2 \xi}} + \frac{t_0-1}{c_0} + 2 \cdot (1 - 2 \xi) \cdot \left( \beta + 2 \right) \cdot c \cdot (t - t_0),
\end{align}
which implies for all large enough $t \ge 1$,
\begin{align}
    f(\theta_t) - f(\theta^*) = \delta(\theta_t) \ge \left[ \frac{1}{\left( f(\theta_1) - f(\theta^*) \right)^{ 1 - 2 \xi}} + \frac{t_0-1}{c_0} + 2 \cdot (1 - 2 \xi) \cdot \left( \beta + 2 \right) \cdot c \cdot (t-t_0) \right]^{- \frac{1}{1 - 2 \xi} } \in \Omega\left( \frac{1}{ t^{\frac{1}{1 - 2 \xi}} } \right).
\end{align}

\paragraph{(1a) Third part:} \textit{$O(e^{-t})$ upper bound for GNGD update $\theta_{t+1} \gets \theta_t - \frac{ \nabla f(\theta_t) }{ \beta(\theta_t) }  $.} 

We have, for all $t \ge 1$ (or using \cref{lem:descent_lemma_NS_function}),
\begin{align}
    \delta(\theta_{t+1}) - \delta(\theta_t) &= f(\theta_{t+1}) - f(\theta_t) \\
    &\le \nabla f(\theta_t)^\top \left( \theta_{t+1} - \theta_t \right) + \frac{\beta(\theta_t)}{2} \cdot \left\| \theta_{t+1} - \theta_t \right\|_2^2 \qquad \left( \text{NS} \right) \\
    &= - \frac{1}{ \beta(\theta_t) } \cdot \left\| \nabla f(\theta_t) \right\|_2^2 + \frac{1}{2} \cdot \frac{1}{ \beta(\theta_t) } \cdot \left\| \nabla f(\theta_t) \right\|_2^2 \qquad \left( \theta_{t+1} \gets \theta_t - \frac{ \nabla f(\theta_t) }{ \beta(\theta_t) } \right) \\
\label{eq:non_uniform_smoothness_progress}
    &= - \frac{1}{ 2 \cdot \beta(\theta_t) } \cdot \left\| \nabla f(\theta_t) \right\|_2^2 \\
    &\le - \frac{1}{ 2 \cdot \beta(\theta_t) } \cdot C(\theta_t)^2 \cdot \delta(\theta_t)^{2 - 2 \xi} \qquad \left( \text{N\L{}} \right) \\ 
    &\le - \frac{1}{ 2 \cdot \beta(\theta_t) } \cdot C^2 \cdot \delta(\theta_t)^{2 - 2 \xi} \qquad \left( C \coloneqq \inf_{t \ge 1}{ C(\theta_t) } > 0 \right) \\
    &\le - \frac{C^2}{ 2 \cdot c } \cdot \delta(\theta_t), \qquad \left( \beta(\theta_t) \le c \cdot  \delta(\theta_t)^{1-2\xi} \right)
\end{align}
which implies for all $t \ge 1$,
\begin{align}
\label{eq:general_optimization_main_result_1_1a_intermediate_7}
    f(\theta_t) - f(\theta^*) = \delta(\theta_t) &\le \left( 1 - C^2 /(2 \cdot c) \right) \cdot \delta(\theta_{t-1}) \\
    &\le \exp\left\{- C^2/(2 \cdot c) \right\} \cdot \delta(\theta_{t-1}) \\
    &\le \exp\left\{- (t-1) \cdot C^2/(2 \cdot c) \right\} \cdot \delta(\theta_1) \\
    &= \exp\left\{- (t-1) \cdot C^2/(2 \cdot c) \right\} \cdot \left( f(\theta_1) - f(\theta^*) \right).
\end{align}

\paragraph{(1b) First part:}
\textit{$O(1/t^{\frac{1}{1 - 2 \xi}})$ upper bound for GD update $\theta_{t+1} \gets \theta_t - \eta \cdot \nabla f(\theta_t) $ with $\eta \in O(1)$.}

Denote $\beta_1 \coloneqq c \cdot \left\| \nabla f(\theta_1) \right\|_2^{\frac{1 - 2 \xi}{1 - \xi}}$. We have $\beta_1 \in (0, \infty)$, since $f$ is differentiable (\cref{def:non_uniform_smoothness}). Using $\eta \le \frac{1}{ \beta_1 }$ and according to \cref{lem:descent_lemma_smooth_function}, we have $\delta(\theta_2) \le \delta(\theta_1)$. Denote $\beta_2 \coloneqq c \cdot \left\| \nabla f(\theta_2) \right\|_2^{\frac{1 - 2 \xi}{1 - \xi}}$. We also have $\beta_2 \in (0, \infty)$. Repeating the update, we generate $\left\{ \theta_t \right\}_{ t \ge 1}$ such that $\delta(\theta_{t+1}) \le \delta(\theta_t)$. Denote 
\begin{align}
    \beta \coloneqq \sup_{t \ge 1}{ \left\{ \beta_t \right\} } = \sup_{t \ge 1}{ \left\{ c \cdot \left\| \nabla f(\theta_2) \right\|_2^{\frac{1 - 2 \xi}{1 - \xi}} \right\} }.
\end{align}
Now we have $0 \le \delta(\theta_{t+1}) \le \delta(\theta_t) \le \cdots \le \delta(\theta_1)$. According to the monotone convergence theorem, $\delta(\theta_t)$ converges to some finite value. And the gradient $\left\| \nabla f(\theta_t) \right\|_2 \to 0$, otherwise a small gradient update can decrease the sub-optimality, which is a contradiction with convergence. Thus we have $\beta \in (\beta_1, \infty)$, since $\beta_t \to 0$ as $t \to \infty$.
Using $\eta = \frac{1}{ \beta}$, we have $\eta \le \frac{1}{ \beta_t }$ holds for all $t \ge 1$, and, 
\begin{align}
    \beta(\theta_t) \le c \cdot \left\| \nabla f(\theta_t) \right\|_2^{\frac{1 - 2 \xi}{1 - \xi}} = \beta_t \le \beta.
\end{align}
Using similar calculations in the first part of (1a), we have the $O(1/t^{\frac{1}{1 - 2 \xi}})$ upper bound.

\paragraph{(1b) Second part:}
\textit{$\Omega(1/t^{\frac{1}{1 - 2 \xi}})$ lower bound for GD update $\theta_{t+1} \gets \theta_t - \eta_t \cdot \nabla f(\theta_t) $ with $\eta_t \in (0, 1]$.}

According to \cref{eq:general_optimization_main_result_1_1a_intermediate_3}, we have,
\begin{align}
\label{eq:general_optimization_main_result_1_1b_intermediate_1}
    \left\| \nabla f(\theta) \right\|_2^2 &\le 2 \cdot \beta(\theta) \cdot \delta(\theta) \\
    &\le 2 \cdot c \cdot \left\| \nabla f(\theta) \right\|_2^{\frac{1 - 2 \xi}{1 - \xi}} \cdot \delta(\theta), \qquad \left( \beta(\theta) \le c \cdot \left\| \nabla f(\theta) \right\|_2^{\frac{1 - 2 \xi}{1 - \xi}} \right)
\end{align}
which is equivalent to,
\begin{align}
\label{eq:general_optimization_main_result_1_1b_intermediate_2}
    \left\| \nabla f(\theta) \right\|_2^2 \le 2 \cdot c_1 \cdot \delta(\theta)^{2 - 2 \xi},
\end{align}
where $c_1 \coloneqq \frac{1}{2} \cdot \left( 2 \cdot c \right)^{2 - 2 \xi}$. According to \cref{eq:general_optimization_main_result_1_1a_intermediate_4}, we have,
\begin{align}
\label{eq:general_optimization_main_result_1_1b_intermediate_3}
    \delta(\theta_t) - \delta(\theta_{t+1}) &\le \left( \frac{\beta}{2} \cdot \eta_t^2 + \eta_t \right) \cdot \left\| \nabla f(\theta_t) \right\|_2^2 \\
    &\le \left( \beta + 2 \right) \cdot c_1 \cdot \delta(\theta_t)^{2 - 2 \xi}. \qquad \left( \text{by \cref{eq:general_optimization_main_result_1_1b_intermediate_2} and } \eta_t \in (0, 1] \right)
\end{align}
Using similar calculations in the second part of (1a), we have the $\Omega(1/t^{\frac{1}{1 - 2 \xi}})$ lower bound.

\paragraph{(1b) Third part:}
\textit{$O(e^{-t})$ upper bound for GNGD update $\theta_{t+1} \gets \theta_t - \frac{ \nabla f(\theta_t) }{ \beta(\theta_t) }  $.} 

According to \cref{lem:descent_lemma_NS_function}, we have, for all $t \ge 1$,
\begin{align}
    \delta(\theta_{t+1}) - \delta(\theta_t) &\le - \frac{1}{ 2 \cdot \beta(\theta_t) } \cdot \left\| \nabla f(\theta_t) \right\|_2^2 \\
    &\le - \frac{1}{ 2 \cdot c } \cdot \left\| \nabla f(\theta_t) \right\|_2^{\frac{1}{1 - \xi}} \qquad \left( \beta(\theta_t) \le c \cdot \left\| \nabla f(\theta_t) \right\|_2^{\frac{1 - 2 \xi}{1 - \xi}} \right) \\
    &\le - \frac{1}{ 2 \cdot c } \cdot C(\theta_t)^{\frac{1}{1 - \xi}} \cdot  \delta(\theta_t) \qquad \left( \text{N\L{}} \right) \\ 
    &\le - \frac{1}{ 2 \cdot c } \cdot C^{\frac{1}{1 - \xi}} \cdot  \delta(\theta_t), \qquad \left( C \coloneqq \inf_{t \ge 1}{ C(\theta_t) } > 0 \right)
\end{align}
which implies (similar to \cref{eq:general_optimization_main_result_1_1a_intermediate_7}),
\begin{align}
    f(\theta_t) - f(\theta^*) = \delta(\theta_t) \le \exp\left\{- (t -1) \cdot C^{\frac{1}{1 - \xi}} / (2 \cdot c) \right\} \cdot \left( f(\theta_1) - f(\theta^*) \right).
\end{align}

\paragraph{(2a) First part:} $O(1/t^{\frac{1}{1 - 2 \xi}})$ upper bound for GD when $\xi < 1/2$.

Similar to the first part of (1b), we denote $\beta_t \coloneqq L_0 + L_1 \cdot \left\| \nabla f(\theta_t) \right\|_2$ and $\beta \coloneqq \sup_{t \ge 1}{ \left\{ \beta_t \right\} } \in ( L_0, \infty)$ since $\left\| \nabla f(\theta_t) \right\|_2 \to 0$ as $t \to \infty$.
Using $\eta = \frac{1}{ \beta}$, we have $\eta \le \frac{1}{ \beta_t }$ holds for all $t \ge 1$ and  $\beta(\theta_t) \le L_0 + L_1 \cdot \left\| \nabla f(\theta_t) \right\|_2 \le \beta$. According to \cref{eq:general_optimization_main_result_1_1a_intermediate_1} and the first part of (1a), we have the $O(1/t^{\frac{1}{1 - 2 \xi}})$ upper bound.

\paragraph{(2a) Second part:} $O(e^{-t})$ upper bound for GD when $\xi = 1/2$.

According to \cref{lem:descent_lemma_smooth_function}, we have, for all $t \ge 1$,
\begin{align}
    \delta(\theta_{t+1}) - \delta(\theta_t) &\le - \frac{1}{ 2 \beta} \cdot \left\| \nabla f(\theta_t) \right\|_2^2 \\
    &\le - \frac{1}{ 2 \beta} \cdot C(\theta_t)^2 \cdot \delta(\theta_t) \qquad \left( \text{N\L{} with } \xi = 1 / 2 \right) \\
    &\le - \frac{1}{ 2 \beta} \cdot C^2 \cdot \delta(\theta_t), \qquad \left( C \coloneqq \inf_{t \ge 1}{ C(\theta_t) } > 0 \right)
\end{align}
which implies (similar to \cref{eq:general_optimization_main_result_1_1a_intermediate_7}),
\begin{align}
    f(\theta_t) - f(\theta^*) = \delta(\theta_t) \le \exp\left\{- (t -1) \cdot C^{2} / (2 \beta) \right\} \cdot \left( f(\theta_1) - f(\theta^*) \right).
\end{align}

\paragraph{(2a) Third part:} $O(1/t^{\frac{1}{1 - 2 \xi}})$ upper bound for GNGD when $\xi < 1/2$.

According to \cref{lem:descent_lemma_NS_function}, we have, for all $t \ge 1$,
\begin{align}
    \delta(\theta_{t+1}) - \delta(\theta_t) &\le - \frac{1}{ 2 \cdot \beta(\theta_t) } \cdot \left\| \nabla f(\theta_t) \right\|_2^2 \\
    &\le - \frac{1}{2} \cdot \frac{ \left\| \nabla f(\theta_t) \right\|_2^2 }{ L_0 + L_1 \cdot \left\| \nabla f(\theta_t) \right\|_2 } \qquad \left( \beta(\theta_t) \le L_0 + L_1 \cdot \left\| \nabla f(\theta_t) \right\|_2 \right) \\
\label{eq:general_optimization_main_result_1_2a_intermediate_1}
    &\le - \frac{1}{2} \cdot \frac{ \left\| \nabla f(\theta_t) \right\|_2^2 }{ L_0 + L_1 \cdot \beta } \qquad \left( \beta \coloneqq \sup_{t \ge 1}{ \left\{ \left\| \nabla f(\theta_t) \right\|_2 \right\} } \in ( \left\| \nabla f(\theta_1) \right\|_2, \infty ) \right) \\
    &\le - \frac{1}{2} \cdot \frac{ C^2 }{ L_0 + L_1 \cdot \beta } \cdot \delta(\theta_t)^{2 - 2 \xi}, \qquad \left( \text{N\L{} and } C \coloneqq \inf_{t \ge 1}{ C(\theta_t) } > 0 \right)
\end{align}
which is similar to \cref{eq:general_optimization_main_result_1_1a_intermediate_1}. Using similar calculations in the first part of (1a), we have the $O(1/t^{\frac{1}{1 - 2 \xi}})$ upper bound.

\paragraph{(2a) Fourth part:} $O(e^{-t})$ upper bound for GNGD when $\xi = 1/2$.

We have, for all $t \ge 1$,
\begin{align}
    \delta(\theta_{t+1}) - \delta(\theta_t) &\le - \frac{1}{2} \cdot \frac{ \left\| \nabla f(\theta_t) \right\|_2^2 }{ L_0 + L_1 \cdot \beta } \qquad \left( \text{by \cref{eq:general_optimization_main_result_1_2a_intermediate_1}} \right) \\
    &\le - \frac{1}{2} \cdot \frac{ C^2 }{ L_0 + L_1 \cdot \beta } \cdot \delta(\theta_t), \qquad \left( \text{N\L{} with } \xi = 1 / 2 \text{ and } C \coloneqq \inf_{t \ge 1}{ C(\theta_t) } > 0 \right)
\end{align}
which implies (similar to \cref{eq:general_optimization_main_result_1_1a_intermediate_7}),
\begin{align}
    f(\theta_t) - f(\theta^*) = \delta(\theta_t) \le \exp\left\{- (t -1) \cdot C^{2} / (2 \cdot (L_0 + L_1 \cdot \beta) ) \right\} \cdot \left( f(\theta_1) - f(\theta^*) \right).
\end{align}

\paragraph{(2b) First part:}
$O(1/t^{\frac{1}{1 - 2 \xi}})$ upper bound for GD when $\xi < 1/2$.

Denote $\beta_t \coloneqq L_0 \cdot \frac{\left\| \nabla f(\theta_t) \right\|^2}{ \delta(\theta_t)^{2 - 2 \xi} } + L_1 \cdot \left\| \nabla f(\theta_t) \right\|_2$ and $\beta \coloneqq \sup_{t \ge 1}{ \left\{ \beta_t \right\} } \in ( \beta_1, \infty)$. According to \cref{eq:general_optimization_main_result_1_1a_intermediate_1} and the first part of (1a), we have the $O(1/t^{\frac{1}{1 - 2 \xi}})$ upper bound.

\paragraph{(2b) Second part:}
$O(e^{-t})$ upper bound for GD when $\xi = 1/2$.

According to \cref{lem:descent_lemma_smooth_function}, we have, for all $t \ge 1$ (same as the second part of (2a)),
\begin{align}
    \delta(\theta_{t+1}) - \delta(\theta_t) &\le - \frac{1}{ 2 \beta} \cdot \left\| \nabla f(\theta_t) \right\|_2^2 \\
    &\le - \frac{1}{ 2 \beta} \cdot C(\theta_t)^2 \cdot \delta(\theta_t) \qquad \left( \text{N\L{} with } \xi = 1 / 2 \right) \\
    &\le - \frac{1}{ 2 \beta} \cdot C^2 \cdot \delta(\theta_t), \qquad \left( C \coloneqq \inf_{t \ge 1}{ C(\theta_t) } > 0 \right)
\end{align}
which implies (similar to \cref{eq:general_optimization_main_result_1_1a_intermediate_7}),
\begin{align}
    f(\theta_t) - f(\theta^*) = \delta(\theta_t) \le \exp\left\{- (t -1) \cdot C^{2} / (2 \beta) \right\} \cdot \left( f(\theta_1) - f(\theta^*) \right).
\end{align}

\paragraph{(2b) Third part:}
$O(1/t^{\frac{1}{1 - 2 \xi}})$ upper bound for GNGD when $\xi < 1/2$.

According to \cref{lem:descent_lemma_NS_function}, we have, for all $t \ge 1$,
\begin{align}
\label{eq:general_optimization_main_result_1_2b_intermediate_1}
\MoveEqLeft
    \delta(\theta_{t+1}) - \delta(\theta_t) \le - \frac{1}{ 2 \cdot \beta(\theta_t) } \cdot \left\| \nabla f(\theta_t) \right\|_2^2 \\
    &\le - \frac{1}{2} \cdot \frac{\left\| \nabla f(\theta_t) \right\|_2^2}{ L_0 \cdot \frac{\left\| \nabla f(\theta_t) \right\|^2}{ \delta(\theta_t)^{2 - 2 \xi} } + L_1 \cdot \left\| \nabla f(\theta_t) \right\|_2  } \qquad \left( \beta(\theta_t) \le L_0 \cdot \frac{\left\| \nabla f(\theta_t) \right\|^2}{ \delta(\theta_t)^{2 - 2 \xi} } + L_1 \cdot \left\| \nabla f(\theta_t) \right\|_2 \right) \\
    &= - \frac{1}{2} \cdot \frac{ \delta(\theta_t)^{2 - 2 \xi} }{ L_0 + L_1 \cdot \frac{ \delta(\theta_t)^{2 - 2 \xi} }{\left\| \nabla f(\theta_t) \right\|_2}  } \\
    &\le - \frac{1}{2} \cdot \frac{ \delta(\theta_t)^{2 - 2 \xi} }{ L_0 + L_1 \cdot \frac{ \delta(\theta_t)^{1 - \xi} }{ C(\theta_t) }  } \qquad \left( \text{N\L{}:} \  \left\| \nabla f(\theta_t) \right\|_2 \ge C(\theta_t) \cdot \delta(\theta_t)^{1 - \xi} \right) \\
    &\le - \frac{1}{2} \cdot \frac{ \delta(\theta_t)^{2 - 2 \xi} }{ L_0 + L_1 \cdot \frac{ \delta(\theta_t)^{1 - \xi} }{ C }  } \qquad \left( C \coloneqq \inf_{t \ge 1}{ C(\theta_t) } > 0 \right) \\
    &\le - \frac{1}{2} \cdot \frac{ \delta(\theta_t)^{2 - 2 \xi} }{ L_0 + L_1 \cdot \frac{ \delta(\theta_1)^{1 - \xi} }{ C }  }, \qquad \left( \delta_{t+1} \le \delta_t, \text{  by \cref{eq:non_uniform_smoothness_progress}} \right)
\end{align}
which is similar to \cref{eq:general_optimization_main_result_1_1a_intermediate_1}. Using similar calculations in the first part of (1a), we have the $O(1/t^{\frac{1}{1 - 2 \xi}})$ upper bound.

\paragraph{(2b) Fourth part:}
$O(e^{-t})$ upper bound for GNGD when $\xi = 1/2$.

We have, for all $t \ge 1$,
\begin{align}
    \delta(\theta_{t+1}) - \delta(\theta_t) &\le - \frac{1}{2} \cdot \frac{ \delta(\theta_t)^{2 - 2 \xi} }{ L_0 + L_1 \cdot \frac{ \delta(\theta_1)^{1 - \xi} }{ C }  } \qquad \left( \delta_{t+1} \le \delta_t \text{ by \cref{eq:general_optimization_main_result_1_2b_intermediate_1}} \right) \\
    &= - \frac{1}{2} \cdot \frac{ \delta(\theta_t) }{ L_0 + L_1 \cdot \frac{ \delta(\theta_1)^{1 / 2} }{ C }  }, \qquad \left( \xi = 1 / 2 \right)
\end{align}
which implies (similar to \cref{eq:general_optimization_main_result_1_1a_intermediate_7}),
\begin{align}
    f(\theta_t) - f(\theta^*) = \delta(\theta_t) &\le \exp\left\{- \frac{C \cdot (t -1) }{ 2 \cdot (L_0 \cdot C + L_1 \cdot \delta(\theta_1)^{1 / 2} ) } \right\} \cdot \left( f(\theta_1) - f(\theta^*) \right) \\
    &\le \exp\left\{- \frac{C \cdot (t -1) }{ 2 \cdot (L_0 + L_1 \cdot \delta(\theta_1)^{1 / 2} ) } \right\} \cdot \left( f(\theta_1) - f(\theta^*) \right). \qquad \left( \text{if } C \le 1 \right) 
\end{align}

\paragraph{(3a)} \textit{$O(e^{-t})$ upper bound for GNGD update when $\xi \in (1/2, 1)$.} 

According to \cref{lem:descent_lemma_NS_function}, we have, for all $t \ge 1$ (same as the third part of (1b)),
\begin{align}
    \delta(\theta_{t+1}) - \delta(\theta_t) &\le - \frac{1}{ 2 \cdot \beta(\theta_t) } \cdot \left\| \nabla f(\theta_t) \right\|_2^2 \\
    &\le - \frac{1}{ 2 \cdot c } \cdot \left\| \nabla f(\theta_t) \right\|_2^{\frac{1}{1 - \xi}} \qquad \left( \beta(\theta_t) \le c \cdot \left\| \nabla f(\theta_t) \right\|_2^{\frac{1 - 2 \xi}{1 - \xi}} \right) \\
    &\le - \frac{1}{ 2 \cdot c } \cdot C(\theta_t)^{\frac{1}{1 - \xi}} \cdot  \delta(\theta_t) \qquad \left( \text{N\L{}} \right) \\ 
    &\le - \frac{1}{ 2 \cdot c } \cdot C^{\frac{1}{1 - \xi}} \cdot  \delta(\theta_t), \qquad \left( C \coloneqq \inf_{t \ge 1}{ C(\theta_t) } > 0 \right)
\end{align}
which implies (similar to \cref{eq:general_optimization_main_result_1_1a_intermediate_7}),
\begin{align}
    f(\theta_t) - f(\theta^*) = \delta(\theta_t) \le \exp\left\{- (t -1) \cdot C^{\frac{1}{1 - \xi}} / (2 \cdot c) \right\} \cdot \left( f(\theta_1) - f(\theta^*) \right).
\end{align}

\paragraph{(3b)} \textit{$O(e^{-t})$ upper bound for GNGD update when $\xi \in (1/2, 1)$.} 
\end{proof}

According to \cref{lem:descent_lemma_NS_function}, we have, for all $t \ge 1$ (same as the third part of (1a)),
\begin{align}
    \delta(\theta_{t+1}) - \delta(\theta_t) &\le - \frac{1}{ 2 \cdot \beta(\theta_t) } \cdot \left\| \nabla f(\theta_t) \right\|_2^2 \\
    &\le - \frac{1}{ 2 \cdot \beta(\theta_t) } \cdot C(\theta_t)^2 \cdot \delta(\theta_t)^{2 - 2 \xi} \qquad \left( \text{N\L{}} \right) \\ 
    &\le - \frac{1}{ 2 \cdot \beta(\theta_t) } \cdot C^2 \cdot \delta(\theta_t)^{2 - 2 \xi} \qquad \left( C \coloneqq \inf_{t \ge 1}{ C(\theta_t) } > 0 \right) \\
    &\le - \frac{C^2}{ 2 \cdot c } \cdot \delta(\theta_t), \qquad \left( \beta(\theta_t) \le c \cdot  \delta(\theta_t)^{1-2\xi} \right)
\end{align}
which implies (similar to \cref{eq:general_optimization_main_result_1_1a_intermediate_7}),
\begin{align}
    f(\theta_t) - f(\theta^*) = \delta(\theta_t) \le \exp\left\{- (t -1) \cdot C^2 / (2 \cdot c) \right\} \cdot \left( f(\theta_1) - f(\theta^*) \right).
\end{align}

\subsection{Function Classes in \cref{fig:non_uniform_function_classes_diagram_table}}
\label{sec:general_optimization_function_class_relation}

\textbf{\cref{prop:function_class_relation}.}
The following results hold: 
\begin{description}
    \item[(1)] $\text{D} \subseteq \text{C}$. If a function satisfies N\L{} with degree $\xi$, then it satisfies N\L{} with degree $\xi^\prime < \xi$.
    \item[(2)] $\text{F} \subseteq \text{D}$. A strongly convex function satisfies N\L{} with $\xi \ge 1/2$.
    \item[(3)] $\text{F} \cap \text{A} = \emptyset$. A strongly convex function cannot satisfy NS with $\beta(\theta) \to 0$ as $\theta, \theta^\prime \to \theta^*$.
    \item[(4)] $\text{E} \subseteq \text{C}$. A (not strongly) convex function satisfies N\L{} with $\xi < 1/2$.
\end{description}
\begin{proof}
(1) $\text{D} \subseteq \text{C}$. Suppose a function $f: \Theta \to \sR$ satisfies N\L{} with $\xi$, i.e., 
\begin{align}
    \left\| \frac{ d f(\theta) }{d \theta} \right\|_2 \ge C(\theta) \cdot \left| f(\theta) - f(\theta^*) \right|^{1-\xi},
\end{align}
where $\xi \in (-\infty, 1]$, and $C(\theta) > 0$ holds for all $\theta \in \Theta$. Let $\xi^\prime < \xi$. If $\left| f(\theta) - f(\theta^*) \right| > 0$, then we have,
\begin{align}
    \left| f(\theta) - f(\theta^*) \right|^{1-\xi} &= \frac{ \left| f(\theta) - f(\theta^*) \right|^{1-\xi^\prime} }{ \left| f(\theta) - f(\theta^*) \right|^{\xi - \xi^\prime} } \\
    &\ge c(\theta) \cdot \left| f(\theta) - f(\theta^*) \right|^{1-\xi^\prime},
\end{align}
where $c(\theta) \coloneqq \frac{1}{  \left| f(\theta) - f(\theta^*) \right|^{\xi - \xi^\prime}  } > 0$, and $c(\theta) \not\to 0$ as $\theta \to \theta^*$ (or $c(\theta) > c > 0$ for all $\theta$ within a finite distance of $\theta^*$). If $\left| f(\theta) - f(\theta^*) \right| = 0$, then it trivially holds that
\begin{align}
    \left| f(\theta) - f(\theta^*) \right|^{1-\xi} \ge \left| f(\theta) - f(\theta^*) \right|^{1-\xi^\prime}.
\end{align}

(2) $\text{F} \subseteq \text{D}$. Suppose a function $f: \Theta \to \sR$ is strongly convex. We have, there exists $\mu > 0$, for all $\theta$, $\theta^\prime \in \Theta$, 
\begin{align}
    f(\theta^\prime) \ge f(\theta) + \nabla{f(\theta)}^\top (\theta^\prime - \theta) + \frac{\mu}{2} \cdot \left\| \theta^\prime - \theta \right\|_2^2.
\end{align}
Fix $\theta$ and take minimum over $\theta^\prime$ on both sides of the above inequality. Then we have,
\begin{align}
    f(\theta^*) &\ge f(\theta) + \min_{\theta^\prime}\left\{ \nabla f(\theta)^\top \left( \theta^\prime - \theta \right) + \frac{\mu}{2} \cdot \left\| \theta^\prime - \theta \right\|_2^2 \right\} \\
    &= f(\theta) - \frac{1}{\mu} \cdot \left\| \nabla f(\theta) \right\|_2^2 + \frac{1}{2 \mu} \cdot \left\| \nabla f(\theta) \right\|_2^2 \qquad \left( \theta^\prime = \theta - \frac{1}{\mu} \cdot \nabla f(\theta) \right) \\
    &= f(\theta) - \frac{1}{2 \mu} \cdot \left\| \nabla f(\theta) \right\|_2^2,
\end{align}
which is equivalent to,
\begin{align}
    \left\| \nabla{f(\theta)} \right\|_2 \ge \sqrt{2 \mu} \cdot \left( f(\theta) - f(\theta^*) \right)^{\frac{1}{2}},
\end{align}
which means $f$ satisfies N\L{} inequality with $\xi = 1/2$.

(3) $\text{F} \cap \text{A} = \emptyset$. Suppose a function $f: \Theta \to \sR$ is strongly convex. There exists $\mu > 0$, for all $\theta \in \Theta$,
\begin{align}
\label{eq:function_class_relation_intermediate_1}
    \left| z^\top \frac{\partial^2 f(\theta)}{\partial \theta^2} z \right| \ge \mu \cdot \left\| z \right\|_2^2,
\end{align}
holds for all vector $z$ that has the same dimension as $\theta$. Next we show $f \not\in \text{A}$. Suppose $f \in \text{A}$. We have,
\begin{align}
    \beta(\theta^*) = \sup_{z}{ \left| z^\top \frac{\partial^2 f(\theta^*)}{\partial (\theta^*)^2} z \right| } = 0,
\end{align}
which is a contradiction with \cref{eq:function_class_relation_intermediate_1}. Therefore $f \not\in \text{A}$, and $\text{F} \cap \text{A} = \emptyset$.

(4) $\text{E} \subseteq \text{C}$. Suppose a function $f: \Theta \to \sR$ is convex. We have, for all $\theta$, $\theta^\prime \in \Theta$,  
\begin{align}
    f(\theta^\prime) \ge f(\theta) + \nabla{f(\theta)}^\top (\theta^\prime - \theta).
\end{align}
Take $\theta^\prime = \theta^*$. We have,
\begin{align}
\label{eq:function_class_relation_intermediate_2}
    f(\theta^*) \ge f(\theta) + \nabla{f(\theta)}^\top (\theta^* - \theta),
\end{align}
which implies,
\begin{align}
    \left\| \nabla{f(\theta)} \right\|_2 &= \frac{1}{\left\| \theta - \theta^* \right\|_2} \cdot \left\| \nabla{f(\theta)} \right\|_2 \cdot \left\| \theta - \theta^* \right\|_2 \\
    &\ge \frac{1}{\left\| \theta - \theta^* \right\|_2} \cdot  \nabla{f(\theta)}^\top (\theta^* - \theta) \qquad \left( \text{by Cauchy-Schwarz} \right) \\
    &\ge \frac{1}{\left\| \theta - \theta^* \right\|_2} \cdot \left( f(\theta) - f(\theta^*) \right), \qquad \left( \text{by \cref{eq:function_class_relation_intermediate_2}} \right)
\end{align}
and $C(\theta) = \frac{1}{\left\| \theta - \theta^* \right\|_2} \not\to 0$ as $\theta \to \theta^*$ (or $C(\theta) > c > 0$ for all $\left\| \theta - \theta^* \right\|_2$ smaller than a finite value, e.g., within a bounded constraint). Therefore $f$ satisfies N\L{} inequality with $\xi = 0$.
\end{proof}

\textbf{\cref{prop:non_empty_convex_function_classes}.}
\begin{description}
    \item[(1)] $\text{ACE} \not= \emptyset$. There exists at least one (not strongly) convex function which satisfies N\L{} with $\xi < 1/2$ and NS with $\beta(\theta) \to 0$ as $\theta, \theta^\prime \to \theta^*$.
    \item[(2)] $\text{ADE} \not= \emptyset$. There exists at least one (not strongly) convex function which satisfies N\L{} with $\xi \ge 1/2$ and NS with $\beta(\theta) \to 0$ as $\theta, \theta^\prime \to \theta^*$.
    \item[(3)] $\text{BCE} \not= \emptyset$. There exists at least one (not strongly) convex function which satisfies N\L{} with $\xi < 1/2$ and NS with $\beta(\theta) \to \beta > 0$ as $\theta, \theta^\prime \to \theta^*$.
    \item[(4)] $\text{BDE} \not= \emptyset$. There exists at least one (not strongly) convex function which satisfies N\L{} with $\xi \ge 1/2$ and NS with $\beta(\theta) \to \beta > 0$ as $\theta, \theta^\prime \to \theta^*$.
    \item[(5)] $\text{BF} \not= \emptyset$. There exists at least one strongly convex function which satisfies NS with $\beta(\theta) \to \beta > 0$ as $\theta, \theta^\prime \to \theta^*$.
\end{description}
\begin{proof}
(1) $\text{ACE} \not= \emptyset$. Consider minimizing the following function $f: \sR \to \sR$,
\begin{align}
    f(x) = x^4.
\end{align}
The second order derivative (Hessian) is $f^{\prime\prime}(x) = 12 \cdot x^2 \ge 0$, which means $f$ is (not strongly) convex. According to Taylor's theorem, we have, for all $x, x^\prime \in \sR$,
\begin{align}
    \left| f(x^\prime) - f(x) - \Big\langle \frac{d f(x)}{d x}, x^\prime - x \Big\rangle \right| &\le \frac{ \left| f^{\prime\prime}(x_\zeta) \right| }{2} \cdot \| \theta^\prime - \theta \|_2^2 \\
    &= \frac{ 12 \cdot x_\zeta^2 }{2} \cdot \| \theta^\prime - \theta \|_2^2,
\end{align}
where $x_\zeta \coloneqq x + \zeta \cdot (x^\prime - x)$ with some $\zeta \in [0,1]$. Thus we have $\beta(x) = 12 \cdot x_\zeta^2 \to 0$ as $x, x^\prime \to 0$. Next, we have,
\begin{align}
    \left| f^\prime(x) \right| = \left| 4 \cdot x^3 \right| = 4 \cdot \left( \left| x \right|^4 \right)^{\frac{3}{4}} = 4 \cdot \left( f(x) - f(0) \right)^{ 1 - \frac{1}{4}},
\end{align}
which means $f$ satisfies N\L{} inequality with $\xi = 1/4 < 1/2$.

(2) $\text{ADE} \not= \emptyset$. Consider minimizing the following function $f: \sR^K \to \sR$,
\begin{align}
    f(\theta) = \KL(y \| \pi_\theta) = \KL(y \| \softmax(\theta) ),
\end{align}
where $y \in \{ 0, 1 \}^K$ is a one-hot vector. We show that $f$ is a (not strongly) convex function. The gradient of $f$ is,
\begin{align}
\label{eq:function_class_relation_intermediate_3}
    \frac{\partial f(\theta)}{\partial \theta} &= \left( \frac{d \pi_\theta }{d \theta} \right)^\top \left( \frac{d \{ \KL(y \| \pi_\theta) \} }{d \pi_\theta} \right) \\
    &= \left( \diagonalmatrix{(\pi_\theta)} - \pi_\theta \pi_\theta^\top \right) \diagonalmatrix{\left( \frac{1}{\pi_\theta} \right)} ( - y ) \\
    &= \pi_\theta - y.
\end{align}
Therefore the Hessian is,
\begin{align}
\label{eq:function_class_relation_intermediate_4}
    \frac{\partial^2 f(\theta)}{\partial \theta^2} = \frac{d \pi_\theta }{d \theta} = \diagonalmatrix{(\pi_\theta)} - \pi_\theta \pi_\theta^\top.
\end{align}
According to \citet[Lemma 22]{mei2020global}, we have,
\begin{align}
    \diagonalmatrix{(\pi_\theta)} - \pi_\theta \pi_\theta^\top \succeq \rvzero,
\end{align}
and the minimum eigenvalue of $\diagonalmatrix{(\pi_\theta)} - \pi_\theta \pi_\theta^\top$ is $0$, which means $f$ is convex but not strongly convex. Next, according to \citet[Lemma 17]{mei2020escaping}, we have,
\begin{align}
\label{eq:function_class_relation_intermediate_5}
    \KL(y \| \pi_\theta) &= \sum_{a}{ y(a) \cdot \log{ \left( \frac{y(a)}{\pi_\theta(a)} \right) } } \\
    &\le \sum_{a}{ y(a) \cdot \left( \frac{y(a)}{\pi_\theta(a)} -1 \right) } \qquad \left( \log{x} \le x - 1 \right) \\
    &= \sum_{a}{ \left(  y(a) - \pi_\theta(a) + \pi_\theta(a) \right) \cdot \frac{y(a) - \pi_\theta(a) }{\pi_\theta(a)} } \\
    &= \sum_{a}{ \frac{ \left( y(a) - \pi_\theta(a) \right)^2 }{ \pi_\theta(a)} } \\
    &\le \frac{1}{ \min_{a}{\pi_\theta(a) } } \cdot \sum_{a}{ \left(  y(a) - \pi_\theta(a) \right)^2 },
\end{align}
which implies,
\begin{align}
    \left\| \frac{\partial f(\theta)}{\partial \theta} \right\|_2 &= \left\| \pi_\theta - y \right\|_2 \qquad \left( \text{by \cref{eq:function_class_relation_intermediate_3}} \right) \\
    &\ge \min_{a}{ \sqrt{ \pi_\theta(a) } } \cdot \left[ \KL(y \| \pi_\theta) -  \KL(y \| y) \right]^{\frac{1}{2}}, \qquad \left( \text{by \cref{eq:function_class_relation_intermediate_5}} \right)
\end{align}
which means $f$ satisfies N\L{} with $\xi = 1/2$. Denote $\theta_\zeta \coloneqq \theta + \zeta \cdot (\theta^\prime - \theta)$ with some $\zeta \in [0,1]$. We have, as $\pi_\theta, \pi_{\theta^\prime} \to y$,
\begin{align}
    \beta(\theta) &= \sup_{z}{ \left| z^\top \frac{\partial^2 f(\theta_\zeta)}{\partial \theta_\zeta^2} z \right| } \\
    &= \sup_{z}{ \left| z^\top \left( \diagonalmatrix{(\pi_{\theta_\zeta})} - \pi_{\theta_\zeta} \pi_{\theta_\zeta}^\top \right)  z \right| } \qquad \left( \text{by \cref{eq:function_class_relation_intermediate_4}} \right) \\
    &\to \sup_{z}{ \left| z^\top \left( \diagonalmatrix{(y)} - y y^\top \right)  z \right| } \\
    &= \sup_{z}{ \left| z^\top \rvzero  z \right| } \qquad \left( y \text{ is one-hot} \right) \\
    &= 0.
\end{align}

(3) $\text{BCE} \not= \emptyset$. Consider the (modified) Huber loss function,
\begin{align}
\label{eq:modified_huber_loss_function}
    f(x) = \begin{cases}
	    x^2, & \text{if } \left| x \right| \le 1, \\
		2 \cdot \left| x \right| - 1, & \text{otherwise}
	\end{cases}
\end{align}
which is a (not strongly) convex function. According to (4) in \cref{prop:function_class_relation}, $f$ satisfies N\L{} inequality with $\xi = 0$. Denote $x_\zeta \coloneqq x + \zeta \cdot (x^\prime - x)$ with some $\zeta \in [0,1]$. We have $\beta(x) = \left| f^{\prime\prime}(x_\zeta) \right| \to 2 > 0$, as $x, x^\prime \to 0$.

(4) $\text{BDE} \not= \emptyset$. Consider minimizing the same function as in (2),
\begin{align}
    f(\theta) = \KL(y \| \pi_\theta) = \KL(y \| \softmax(\theta) ),
\end{align}
where $y \in (0, 1)^K$ is a probability vector with $\min_{a}{ y(a) } > 0$, i.e., $y$ is bounded away from the boundary of probability simplex. As shown in (2), $f$ is (not strongly) convex and $f$ satisfies N\L{} with $\xi = 1/2$. Next, we have,
\begin{align}
    \beta(\theta) &= \sup_{z}{ \left| z^\top \left( \diagonalmatrix{(\pi_{\theta_\zeta})} - \pi_{\theta_\zeta} \pi_{\theta_\zeta}^\top \right)  z \right| } \qquad \left( \text{by \cref{eq:function_class_relation_intermediate_4}} \right) \\
    &\to \sup_{z}{ \left| z^\top \left( \diagonalmatrix{(y)} - y y^\top \right)  z \right| } \\
    &= \sup_{z}{ \left| \expectation_{a \sim y}[z(a)^2] - \left( \expectation_{a \sim y}[z(a)] \right)^2 \right| }\\
    &= \sup_{z}{ \left| \Var_{a \sim y}[z(a)] \right| } > 0.
\end{align}

(5) $\text{BF} \not= \emptyset$. Consider minimizing the following function,
\begin{align}
    f(x) = x^2,
\end{align}
where $x \in \sR$. $f$ is strongly convex, and $\beta(x) = \beta = 2$. Thus $\beta(x) \to 2 > 0$ as $x, x^\prime \to 0$ in \cref{def:non_uniform_smoothness}.
\end{proof}

\textbf{\cref{prop:non_empty_non_convex_function_classes}.}
The following results hold:
\begin{description}
    \item[(1)] $\gW \coloneqq \text{AC} \setminus ( \text{AD} \cup \text{ACE} ) \not= \emptyset$. There exists at least one non-convex function which satisfies N\L{} with $\xi < 1/2$ and NS with $\beta(\theta) \to 0$ as $\theta, \theta^\prime \to \theta^*$.
    \item[(2)] $\gX \coloneqq \text{AD} \setminus \text{ADE}\not= \emptyset$. There exists at least one non-convex function which satisfies N\L{} with $\xi \ge 1/2$ and NS with $\beta(\theta) \to 0$ as $\theta, \theta^\prime \to \theta^*$.
    \item[(3)] $\gY \coloneqq \text{BC} \setminus ( \text{BD} \cup \text{BCE} ) \not= \emptyset$. There exists at least one non-convex function which satisfies N\L{} with $\xi < 1/2$ and NS with $\beta(\theta) \to \beta > 0$ as $\theta, \theta^\prime \to \theta^*$.
    \item[(4)] $\gZ \coloneqq \text{BD} \setminus ( \text{BDE} \cup \text{BF} ) \not= \emptyset$. There exists at least one non-convex function which satisfies N\L{} with $\xi \ge 1/2$ and NS with $\beta(\theta) \to \beta > 0$ as $\theta, \theta^\prime \to \theta^*$.
\end{description}
\begin{proof}
(1) $\gW \coloneqq \text{AC} \setminus ( \text{AD} \cup \text{ACE} ) \not= \emptyset$. Consider maximizing the expected reward,
\begin{align}
    f(\theta) = \pi_\theta^\top r,
\end{align}
where $\pi_\theta = \softmax(\theta)$ and $\theta \in \sR^K$. According to \citet[Proposition 1]{mei2020global}, $f$ is non-concave. According to
\cref{lem:non_uniform_lojasiewicz_softmax_special}, we have,
\begin{align}
    \left\| \frac{d \pi_\theta^\top r}{d \theta} \right\|_2 \ge \pi_\theta(a^*) \cdot ( \pi^* - \pi_\theta )^\top r,
\end{align}
which means $f$ satisfies N\L{} inequality with $\xi = 0$. As shown in \cref{lem:non_uniform_smoothness_softmax_special}, we have $\beta(\theta_\zeta) = 3 \cdot \Big\| \frac{d \pi_{\theta_\zeta}^\top r}{d {\theta_\zeta}} \Big\|_2 $. Therefore, $\beta(\theta_\zeta) \to 0$ as $\pi_\theta, \pi_{\theta^\prime} \to \pi^*$.

(2) $\gX \coloneqq \text{AD} \setminus \text{ADE}\not= \emptyset$. Consider minimizing the function $f: \sR^K \to \sR$,
\begin{align}
    f(\theta) = \left\| \pi_\theta - y \right\|_2^2,
\end{align}
where $\pi_\theta = \softmax(\theta)$, $\theta \in \sR^K$, and $y \in \{ 0, 1 \}$ is a one-hot vector. We show that $f$ is non-convex using one example. Let $y = (1, 0, 0)^\top$. Let $\theta_1 = (0, 0, 0)^\top$, $\pi_{\theta_1} = \softmax(\theta_1) = (1/3, 1/3, 1/3)^\top$, $\theta_2 = (\log{4}, \log{36}, \log{100})^\top$, and $\pi_{\theta_2} = \softmax(\theta_2) = (4/140, 36/140, 100/140)^\top$. We have,
\begin{align}
    f(\theta_1) = \left\| \pi_{\theta_1} - y \right\|_2^2 = \frac{2}{3}, \text{ and } f(\theta_2) = \left\| \pi_{\theta_2} - y \right\|_2^2 = \frac{38}{25}.
\end{align}
Denote $\bar{\theta} = \frac{1}{2} \cdot \left( \theta_1 + \theta_2 \right) = (\log{2}, \log{6}, \log{10})^\top$ we have $\pi_{\bar{\theta}} = \softmax(\bar{\theta}) = \left( 2/18, 6/18, 10/18 \right)^\top$ and
\begin{align}
    f(\bar{\theta}) &= \left\| \pi_{\bar{\theta}} - y \right\|_2^2 = \frac{98}{81}.
\end{align}
Therefore we have,
\begin{align}
    \frac{1}{2} \cdot \left( f(\theta_1) + f(\theta_2) \right) = \frac{82}{75} = \frac{2214}{2025} < \frac{2450}{2025} = \frac{98}{81} = f(\bar{\theta}),
\end{align}
which means $f$ is non-convex. Denote $H(\pi_\theta) \coloneqq \diagonalmatrix(\pi_\theta) - \pi_\theta \pi_\theta^\top$ as the Jacobian of $\theta \mapsto \softmax(\theta)$. We have,
\begin{align}
    \left\| \frac{\partial f(\theta)}{\partial \theta} \right\|_2 &= \left\| \left( \frac{d \pi_\theta }{d \theta} \right)^\top \left( \frac{d f(\theta)}{d \pi_\theta} \right) \right\|_2 \\
    &= 2 \cdot \left\| H(\pi_\theta)  \left( \pi_\theta - y \right) \right\|_2 \\
    &\ge 2 \cdot \min_{a}{ \pi_\theta(a) } \cdot \left\| \pi_\theta - y  \right\|_2 \qquad \left( \text{by \citet[Lemma 23]{mei2020global}} \right) \\
    &= 2 \cdot \min_{a}{ \pi_\theta(a) } \cdot \left[ f(\theta) - f(y) \right]^{\frac{1}{2}},
\end{align}
which means $f$ satisfies N\L{} inequality with $\xi = 1/2$. Denote $ S \coloneqq S(y,\theta)\in \R^{K\times K}$ as 
the second derivative (Hessian) of $f$. We have,
\begin{align}
    S &= \frac{d }{d \theta } \left\{ \frac{d f(\theta)}{d \theta} \right\} \\
    &= \frac{d }{d \theta } \left\{ H(\pi_\theta)  \left( \pi_\theta - y \right) \right\}.
\end{align}
Continuing with our calculation fix $i, j \in [K]$. Then, 
\begin{align}
\label{eq:non_empty_non_convex_function_classes_hessian_mse}
\MoveEqLeft
    S_{(i, j)} = \frac{d \{ \pi_\theta(i) \cdot  \left[ \pi_\theta(i) - y(i) -  \pi_\theta^\top \left( \pi_\theta - y \right) \right] \} }{d \theta(j)} \\
    &= \frac{d \pi_\theta(i) }{d \theta(j)} \cdot \left[ \pi_\theta(i) - y(i) -  \pi_\theta^\top \left( \pi_\theta - y \right) \right] + \pi_\theta(i) \cdot \frac{d \{ \pi_\theta(i) - y(i) -  \pi_\theta^\top \left( \pi_\theta - y \right) \} }{d \theta(j)} \\
    &= (\delta_{ij} \pi_\theta(j) -  \pi_\theta(i) \pi_\theta(j) ) \cdot \left[ \pi_\theta(i) - y(i) -  \pi_\theta^\top \left( \pi_\theta - y \right) \right] \\
    &\quad + \pi_\theta(i) \cdot \left[ \delta_{ij} \pi_\theta(j) -  \pi_\theta(i) \pi_\theta(j) - \pi_\theta(j) \cdot \left( \pi_\theta(j) - y(j) -  \pi_\theta^\top \left( \pi_\theta - y \right) \right) - \pi_\theta(j) \cdot \left( \pi_\theta(j) - \pi_\theta^\top \pi_\theta \right) \right]  \\
    &= \delta_{ij} \pi_\theta(j) \cdot \left[ \pi_\theta(i) - y(i) -  \pi_\theta^\top \left( \pi_\theta - y \right) \right]  - \pi_\theta(i) \pi_\theta(j) \cdot \left[ \pi_\theta(i) - y(i) -  \pi_\theta^\top \left( \pi_\theta - y \right) \right] \\
    &\quad - \pi_\theta(i) \pi_\theta(j) \cdot \left[ \pi_\theta(j) - y(j) -  \pi_\theta^\top \left( \pi_\theta - y \right) \right] + \pi_\theta(i) \pi_\theta(j) \cdot \left[ \delta_{ij} - \pi_\theta(i) - \pi_\theta(j) + \pi_\theta^\top \pi_\theta \right],
\end{align}
where
\begin{align}
\label{eq:delta_ij_notation}
    \delta_{ij} = \begin{cases}
		1, & \text{if } i = j, \\
		0, & \text{otherwise}
	\end{cases}
\end{align}
is Kronecker's $\delta$-function. To show the bound on 
the spectral radius of $S$, pick $z \in \sR^K$. Then,
\begin{align}
\label{eq:non_empty_non_convex_function_classes_spectral_radius_mse}
\MoveEqLeft
    \left| z^\top S z \right| = \left| \sum\limits_{i=1}^{K}{ \sum\limits_{j=1}^{K}{ S_{(i,j)} \cdot z(i) \cdot z(j)} } \right| \\
    &= \Big| \left( H(\pi_\theta)  \left( \pi_\theta - y \right) \right)^\top \left( z \odot z \right) - 2 \cdot \left( H(\pi_\theta)  \left( \pi_\theta - y \right) \right)^\top z \cdot \left( \pi_\theta^\top z \right) \\
    &\quad + \left( \pi_\theta \odot \pi_\theta \right)^\top \left( z \odot z \right) - 2 \cdot \left( \pi_\theta \odot \pi_\theta \right)^\top z \cdot \left( \pi_\theta^\top z \right) + \left( \pi_\theta^\top z \right)^2 \cdot \left( \pi_\theta^\top \pi_\theta \right) \Big|,
\end{align}
where $\odot$ is Hadamard (component-wise) product. We have, as $\pi_\theta \to y$, 
\begin{align}
    \left( H(\pi_\theta)  \left( \pi_\theta - y \right) \right)^\top \left( z \odot z \right) - 2 \cdot \left( H(\pi_\theta)  \left( \pi_\theta - y \right) \right)^\top z \cdot \left( \pi_\theta^\top z \right) &\to \left( H(y)  \rvzero \right)^\top \left( z \odot z \right) -  2 \cdot \left( H(y)  \rvzero \right)^\top z \cdot \left( y^\top z \right) \\
    &= 0.
\end{align}
Since $y$ is one-hot vector, we have, as $\pi_\theta \to y$,
\begin{align}
    \left( \pi_\theta \odot \pi_\theta \right)^\top \left( z \odot z \right) - 2 \cdot \left( \pi_\theta \odot \pi_\theta \right)^\top z \cdot \left( \pi_\theta^\top z \right) + \left( \pi_\theta^\top z \right)^2 \cdot \pi_\theta^\top \pi_\theta &\to y^\top \left( z \odot z \right) - 2 \cdot \left( y^\top z \right)^2 + \left( y^\top z \right)^2 \cdot y^\top y \\
    &= \left( y^\top z \right)^2 - 2 \cdot \left( y^\top z \right)^2 + \left( y^\top z \right)^2 = 0,
\end{align}
which means $\beta(\theta) \to 0$ as $\theta, \theta^\prime \to \theta^*$ in \cref{def:non_uniform_smoothness}.

(3) $\gY \coloneqq \text{BC} \setminus ( \text{BD} \cup \text{BCE} ) \not= \emptyset$. Consider minimizing the function $f: \sR \to \sR$,
\begin{align}
\label{eq:modified_mse_huber_loss_function}
    f(\theta) = \begin{cases}
	    2 \cdot \left( \pi_\theta - \pi_{\theta^*} \right)^2, & \text{if } \left| \pi_\theta - \pi_{\theta^*} \right| \le 0.2, \\
		25 \cdot \left( \pi_\theta - \pi_{\theta^*} \right)^4 + 0.04, & \text{otherwise}
	\end{cases}
\end{align}
where $\theta \in \sR$, $\theta^* = 0$, and $\pi_\theta$ is defined as,
\begin{align}
    \pi_\theta = \sigmoid(\theta) = \frac{1}{1 + e^{- \theta}},
\end{align}
where $\sigmoid: \sR \to (0,1)$ is the sigmoid activation. \cref{fig:example_glm_mse_huber} shows the image of $f$, indicating that $f$ is a non-convex function.
\begin{figure*}[ht]
\centering
\includegraphics[width=0.3\linewidth]{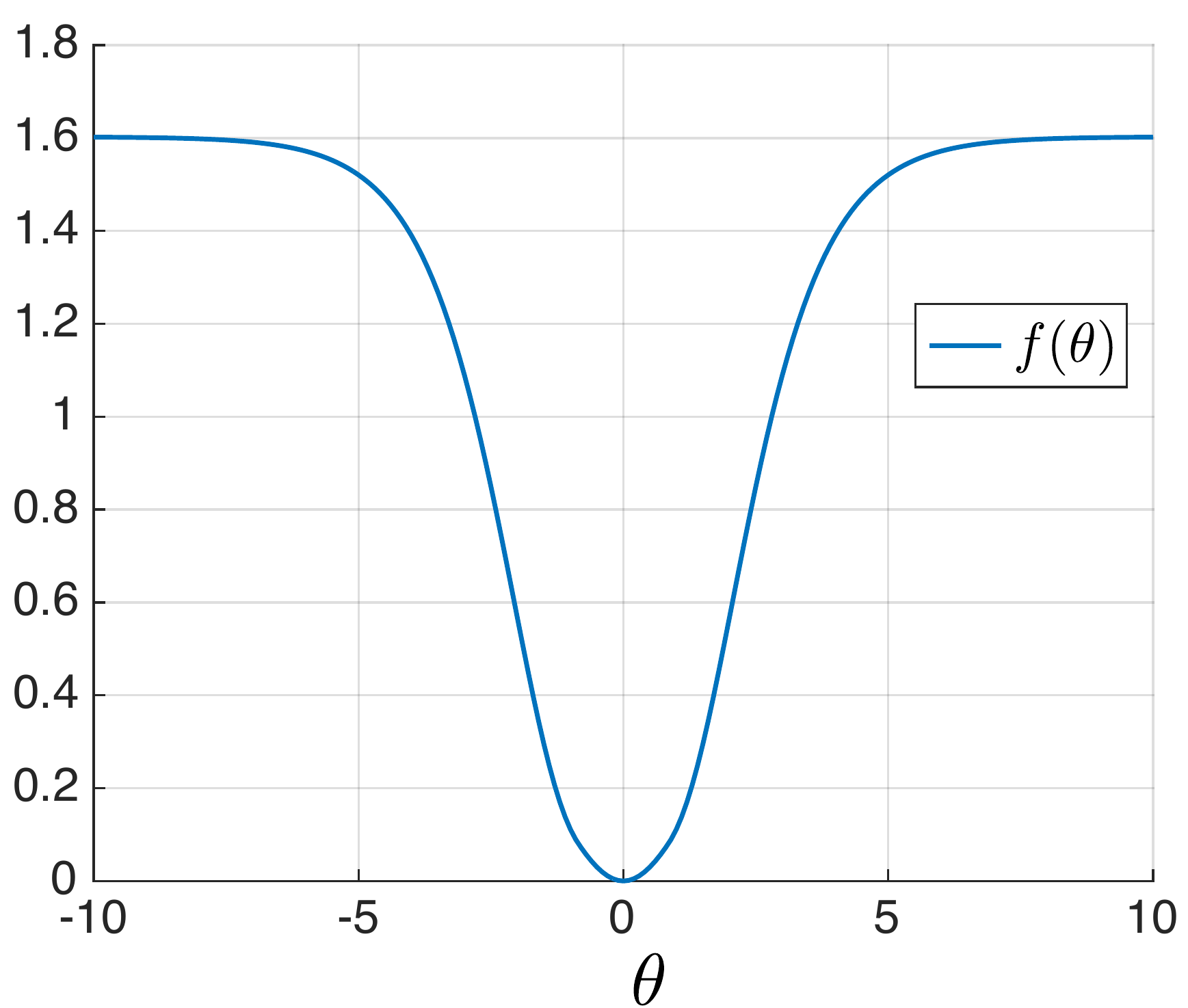}
\vskip -0.1in
\caption{The image of $f$.
} 
\label{fig:example_glm_mse_huber}
\end{figure*}

Since $\theta^* = 0$, we have $\pi_{\theta^*} = 1/2$, and for all $\left| \pi_\theta - \pi_{\theta^*} \right| > 0.2$,
\begin{align}
    \left| \frac{d f(\theta)}{d \theta} \right| &= \left| \frac{d \pi_\theta}{d \theta} \cdot \frac{d f(\theta)}{d \pi_\theta} \right| \\
    &= \left| \pi_\theta \cdot \left( 1 - \pi_\theta \right) \cdot 100 \cdot \left( \pi_\theta - \pi_{\theta^*} \right)^3 \right| \\
    &= 100 \cdot \pi_\theta \cdot \left( 1 - \pi_\theta \right) \cdot \left[ \left( \pi_\theta - \pi_{\theta^*} \right)^4 \right]^{\frac{3}{4}} \\
    &= 100 \cdot \pi_\theta \cdot \left( 1 - \pi_\theta \right) \cdot \left[ f(\theta) - f(\theta^*) \right]^{1 - \frac{1}{4}},
\end{align}
which means $f$ satisfies N\L{} inequality with $\xi = 1/4 < 1/2$. For all $\left| \pi_\theta - \pi_{\theta^*} \right| \le 1$, the Hessian of $f$ is,
\begin{align}
    \left| \frac{d^2 f(\theta)}{d \theta^2} \right| &= \left| \frac{d }{d \theta } \left\{ 100 \cdot \pi_\theta \cdot \left( 1 - \pi_\theta \right) \cdot  \left( \pi_\theta - \pi_{\theta^*} \right) \right\} \right| \\
    &= \left| 100 \cdot \pi_\theta \cdot \left( 1 - \pi_\theta \right) \cdot  \left( \pi_\theta - \pi_{\theta^*} \right) \cdot \left( 1 - 2 \pi_\theta \right) + 100 \cdot \pi_\theta^2 \cdot \left( 1 - \pi_\theta \right)^2 \right|.
\end{align}
As $\pi_\theta \to \pi_{\theta^*} = 1/2$, we have
\begin{align}
    100 \cdot \pi_\theta \cdot \left( 1 - \pi_\theta \right) \cdot  \left( \pi_\theta - \pi_{\theta^*} \right) \cdot \left( 1 - 2 \pi_\theta \right) \to 0,
\end{align}
and,
\begin{align}
    100 \cdot \pi_\theta^2 \cdot \left( 1 - \pi_\theta \right)^2 \to 100 \cdot \frac{1}{4} \cdot \frac{1}{4} = \frac{25}{4} > 0,
\end{align}
which means $\beta(\theta) \to \beta > 0$ as $\theta, \theta^\prime \to \theta^*$ in \cref{def:non_uniform_smoothness}.

(4) $\gZ \coloneqq \text{BD} \setminus ( \text{BDE} \cup \text{BF} ) \not= \emptyset$. Consider minimizing the same function as in (2), 
\begin{align}
    f(\theta) = \left\| \pi_\theta - y \right\|_2^2,
\end{align}
where $\pi_\theta = \softmax(\theta)$, $\theta \in \sR^K$, and $y \in (0, 1)$ is a probability vector with $\min_{a}{ y(a) } > 0$, i.e., $y$ is bounded away from the boundary of probability simplex. We show that $f$ is non-convex using one example. Let $y = (1/2, 1/4, 1/4)^\top$. Let $\theta_1 = (0, 0, 0)^\top$, $\pi_{\theta_1} = \softmax(\theta_1) = (1/3, 1/3, 1/3)^\top$, $\theta_2 = (\log{4}, \log{36}, \log{100})^\top$, and $\pi_{\theta_2} = \softmax(\theta_2) = (4/140, 36/140, 100/140)^\top$. We have,\begin{align}
    f(\theta_1) = \left\| \pi_{\theta_1} - y \right\|_2^2 = \frac{1}{24}, \text{ and } f(\theta_2) = \left\| \pi_{\theta_2} - y \right\|_2^2 = \frac{613}{1400}.
\end{align}
Denote $\bar{\theta} = \frac{1}{2} \cdot \left( \theta_1 + \theta_2 \right) = (\log{2}, \log{6}, \log{10})^\top$ we have $\pi_{\bar{\theta}} = \softmax(\bar{\theta}) = \left( 2/18, 6/18, 10/18 \right)^\top$ and
\begin{align}
    f(\bar{\theta}) &= \left\| \pi_{\bar{\theta}} - y \right\|_2^2 = \frac{163}{648}.
\end{align}
Therefore we have,
\begin{align}
    \frac{1}{2} \cdot \left( f(\theta_1) + f(\theta_2) \right) = \frac{1007}{4200} = \frac{27189}{113400} < \frac{28525}{113400} = \frac{163}{648} = f(\bar{\theta}),
\end{align}
which means $f$ is non-convex. Similar as (2), we have the Hessian of $f$, \begin{align}
    S_{(i,j)}&= \underbrace{ \delta_{ij} \pi_\theta(j) \cdot \left[ \pi_\theta(i) - y(i) -  \pi_\theta^\top \left( \pi_\theta - y \right) \right]}_{\text{(a)}}  -  \underbrace{ \pi_\theta(i) \pi_\theta(j) \cdot \left[ \pi_\theta(i) - y(i) -  \pi_\theta^\top \left( \pi_\theta - y \right) \right] }_{\text{(b)}} \\
    &\quad - \underbrace{ \pi_\theta(i) \pi_\theta(j) \cdot \left[ \pi_\theta(j) - y(j) -  \pi_\theta^\top \left( \pi_\theta - y \right) \right] }_{\text{(c)}} + \underbrace{ \pi_\theta(i) \pi_\theta(j) \cdot \left[ \delta_{ij} - \pi_\theta(i) - \pi_\theta(j) + \pi_\theta^\top \pi_\theta \right] }_{\text{(d)}},
\end{align}
where $(a)=(b)=(c)=0$ when $\pi_{\theta} = y$. Hence, at the optimal point $\theta^*$, we have,
\begin{align}
    S = \frac{1}{128} \cdot \begin{bmatrix} 12 & - 6 & -6\vspace{1ex}\\
    -6 & 7 & -1\vspace{1ex}\\
    -6 & -1 & 7 \end{bmatrix},
\end{align}
and the eigenvalues of $S$ are $0$, $\frac{1}{16}$, and $\frac{9}{64}$. Thus as $\theta, \theta^{\prime}\rightarrow \theta^*$, the Hessian spectral radius of $f$ satisfies $\beta(\theta) \rightarrow\beta=\frac{9}{64}$.
\end{proof}

\textbf{\cref{prop:absulte_power_p}.}
The convex function $f: x \mapsto |x|^p$ with $p > 1$ satisfies the N\L{} inequality with $\xi = 1/p$ and the NS property with $\beta(x) \le c_1 \cdot \delta(x)^{1 - 2 \xi}$.
\begin{proof}
For $p > 1$, $f$ is differentiable, and we have,
\begin{align}
    \left| f^\prime(x) \right| = \left| p \cdot |x|^{p-1} \cdot \sign\{ x \} \right| = p \cdot \left( \left| x \right|^p \right)^{\frac{p - 1}{p}} = p \cdot \left( f(x) - f(0) \right)^{ 1 - \frac{1}{p}},
\end{align}
which means $f$ satisfies N\L{} inequality with $\xi = 1/p$. On the other hand, the Hessian of $f$ is,
\begin{equation*}
    \left|f^{\prime\prime}(x) \right| = \left| p \cdot (p-1)\cdot \left|x\right|^{p-2} \right| = p\cdot(p-1)\cdot(\left|x\right|^p)^{\frac{p-2}{p}} = p\cdot(p-1)\cdot (f(x)-f(0))^{1-\frac{2}{p}}. \qedhere
\end{equation*}
\end{proof}

\section{Proofs for \cref{sec:policy_gradient}}
\label{sec:proofs_policy_gradient}

\subsection{One-state MDPs}
\label{sec:proofs_policy_gradient_one_state_mdps}

\textbf{\cref{lem:non_uniform_lojasiewicz_softmax_special}} (N\L{}) \textbf{.}
Let $a^*$ be the uniqe optimal action.
Denote $\pi^* = \argmax_{\pi \in \Delta}{ \pi^\top r}$. Then, 
\begin{align}
    \left\| \frac{d \pi_\theta^\top r}{d \theta} \right\|_2 \ge \pi_\theta(a^*) \cdot ( \pi^* - \pi_\theta )^\top r.
\end{align}

\begin{proof}
See the proof in \citep[Lemma 3]{mei2020global}. We include a proof for completeness.

Using the expression of the policy gradient, 
\begin{align}
    \left\| \frac{d \pi_\theta^\top r}{d \theta} \right\|_2 &= \left( \sum_{a \in \gA }{\left[ \pi_\theta(a) \cdot (r(a) - \pi_\theta^\top r) \right]^2} \right)^\frac{1}{2} \\
    &\ge  \pi_\theta(a^*) \cdot (r(a^*) - \pi_\theta^\top r). \qedhere
\end{align}
\end{proof}

\textbf{\cref{lem:non_uniform_smoothness_softmax_special}} (NS) \textbf{.}
Let $\pi_\theta = \softmax(\theta)$ and $\pi_{\theta^\prime} = \softmax(\theta^\prime)$. Denote $\theta_\zeta \coloneqq \theta + \zeta \cdot (\theta^\prime - \theta)$ with some $\zeta \in [0,1]$. For any $r \in \left[ 0, 1\right]^K$, $\theta \mapsto \pi_\theta^\top r$ is $\beta(\theta_\zeta)$ non-uniform smooth with $\beta(\theta_\zeta) = 3 \cdot \Big\| \frac{d \pi_{\theta_\zeta}^\top r}{d {\theta_\zeta}} \Big\|_2 $.
\begin{proof}
Let  $S:=S(r,\theta)\in \R^{K\times K}$ be 
the second derivative of the value map $\theta \mapsto \pi_\theta^\top r$.
By Taylor's theorem, it suffices to show that the spectral radius of $S$ is upper bounded. Denote $H(\pi_\theta) \coloneqq \diagonalmatrix(\pi_\theta) - \pi_\theta \pi_\theta^\top$ as the Jacobian of $\theta \mapsto \softmax(\theta)$.
Now, by its definition we have
\begin{align}
    S &= \frac{d }{d \theta } \left\{ \frac{d \pi_\theta^\top r}{d \theta} \right\} \\
    &= \frac{d }{d \theta } \left\{ H(\pi_\theta) r \right\}  \\
    &= \frac{d }{d \theta } \left\{ ( \diagonalmatrix(\pi_\theta) - \pi_\theta \pi_\theta^\top) r \right\}.
\end{align}
Continuing with our calculation fix $i, j \in [K]$. Then, 
\begin{align}
    S_{(i, j)} &= \frac{d \{ \pi_\theta(i) \cdot  ( r(i) - \pi_\theta^\top r ) \} }{d \theta(j)} \\
    &= \frac{d \pi_\theta(i) }{d \theta(j)} \cdot ( r(i) - \pi_\theta^\top r ) + \pi_\theta(i) \cdot \frac{d \{ r(i) - \pi_\theta^\top r \} }{d \theta(j)} \\
    &= (\delta_{ij} \pi_\theta(j) -  \pi_\theta(i) \pi_\theta(j) ) \cdot ( r(i) - \pi_\theta^\top r ) - \pi_\theta(i) \cdot ( \pi_\theta(j) r(j) - \pi_\theta(j) \pi_\theta^\top r ) \\
    &= \delta_{ij} \pi_\theta(j) \cdot ( r(i) - \pi_\theta^\top r ) -  \pi_\theta(i) \pi_\theta(j) \cdot ( r(i) - \pi_\theta^\top r ) - \pi_\theta(i) \pi_\theta(j) \cdot ( r(j) -  \pi_\theta^\top r ),
\end{align}
where
\begin{align}
    \delta_{ij} = \begin{cases}
		1, & \text{if } i = j, \\
		0, & \text{otherwise}
	\end{cases}
\end{align}
is Kronecker's $\delta$-function as defined in \cref{eq:delta_ij_notation}. To show the bound on 
the spectral radius of $S$, pick $y \in \sR^K$. Then,
\begin{align}
\label{eq:non_uniform_smoothness_softmax_special_Hessian_spectral_radius}
    \left| y^\top S y \right| &= \left| \sum\limits_{i=1}^{K}{ \sum\limits_{j=1}^{K}{ S_{(i,j)} \cdot y(i) \cdot y(j)} } \right| \\
    &= \left| \sum_{i}{ \pi_\theta(i) ( r(i) - \pi_\theta^\top r ) y(i)^2 } - 2 \sum_{i} \pi_\theta(i) ( r(i) - \pi_\theta^\top r ) y(i) \sum_{j} \pi_\theta(j) y(j) \right| \\
    &= \left| \left( H(\pi_\theta) r \right)^\top \left( y \odot y \right) - 2 \cdot \left( H(\pi_\theta) r \right)^\top y \cdot \left( \pi_\theta^\top y \right) \right| \\
    &\le \left\| H(\pi_\theta) r \right\|_\infty \cdot \left\| y \odot y \right\|_1 + 2 \cdot \left\| H(\pi_\theta) r \right\|_2 \cdot \left\| y \right\|_2 \cdot \left\| \pi_\theta \right\|_1 \cdot \left\| y \right\|_\infty \\
    &\le 3 \cdot \left\| H(\pi_\theta) r \right\|_2 \cdot \left\| y \right\|_2^2.
\end{align}
According to Taylor's theorem, $\forall \theta, \ \theta^\prime$,
\begin{align}
    \left| ( \pi_{\theta^\prime} - \pi_\theta)^\top r - \Big\langle \frac{d \pi_\theta^\top r}{d \theta}, \theta^\prime - \theta \Big\rangle \right| &= \frac{1}{2} \cdot \left| \left( \theta^\prime - \theta \right)^\top S(r,\theta_\zeta) \left( \theta^\prime - \theta \right) \right| \\
    &\le \frac{3 }{2} \cdot \left\| H(\pi_{\theta_\zeta}) r \right\|_2 \cdot \| \theta^\prime - \theta \|_2^2 \qquad \left( \text{by \cref{eq:non_uniform_smoothness_softmax_special_Hessian_spectral_radius}} \right) \\
    &= \frac{3 }{2} \cdot \bigg\| \frac{d \pi_{\theta_\zeta}^\top r}{d {\theta_\zeta}} \bigg\|_2 \cdot \| \theta^\prime - \theta \|_2^2. \qquad \left( \text{by \cref{lem:policy_gradient_norm_softmax}} \right) \qedhere
\end{align}
\end{proof}

\textbf{\cref{lem:non_uniform_smoothness_intermediate_policy_gradient_norm_special}.}
Let 
\begin{align}
    \theta^\prime = \theta + \eta \cdot \frac{d \pi_{\theta}^\top r}{d {\theta}} \Big/ \left\| \frac{d \pi_{\theta}^\top r}{d {\theta}} \right\|_2.
\end{align}
Denote $\theta_\zeta \coloneqq \theta + \zeta \cdot (\theta^\prime - \theta)$ with some $\zeta \in [0,1]$. We have, for all $\eta \in ( 0 , 1/3)$,
\begin{align}
    \bigg\| \frac{d \pi_{\theta_{\zeta}}^\top r}{d {\theta_{\zeta}}} \bigg\|_2 \le \frac{1}{1 - 3 \eta} \cdot \left\| \frac{d \pi_{\theta}^\top r}{d {\theta}} \right\|_2.
\end{align}
\begin{proof}
Denote $\zeta_1 \coloneqq \zeta$. Also denote $\theta_{\zeta_2} \coloneqq \theta + \zeta_2 \cdot (\theta_{\zeta_1} - \theta)$ with some $\zeta_2 \in [0,1]$. We have,
\begin{align}
\label{eq:non_uniform_smoothness_intermediate_policy_gradient_norm_special_1}
\MoveEqLeft
    \left\| \frac{d \pi_{\theta_{\zeta_1}}^\top r}{d {\theta_{\zeta_1}}} - \frac{d \pi_{\theta}^\top r}{d {\theta}} \right\|_2 = \left\| \int_{0}^{1} \bigg\langle \frac{d^2 \{ \pi_{\theta_{\zeta_2}}^\top r \} }{d {\theta_{\zeta_2}^2 }}, \theta_{\zeta_1} - \theta \bigg\rangle d \zeta_2 \right\|_2 \\
    &\le \int_{0}^{1} \left\| \frac{d^2 \{ \pi_{\theta_{\zeta_2}}^\top r \} }{d {\theta_{\zeta_2}^2 }} \right\|_2 \cdot \left\| \theta_{\zeta_1} - \theta \right\|_2 d \zeta_2 \\
    &\le \int_{0}^{1} 3 \cdot \left\| \frac{d  \pi_{\theta_{\zeta_2}}^\top r }{d {\theta_{\zeta_2} }} \right\|_2 \cdot \zeta_1 \cdot \left\| \theta^\prime - \theta \right\|_2 d \zeta_2 \qquad \left( \text{by \cref{eq:non_uniform_smoothness_softmax_special_Hessian_spectral_radius}} \right)  \\
    &\le \int_{0}^{1} 3 \cdot \left\| \frac{d  \pi_{\theta_{\zeta_2}}^\top r }{d {\theta_{\zeta_2} }} \right\|_2 \cdot \eta \ d \zeta_2, \qquad \left( \zeta_1 \in [0, 1], \text{ using } \theta^\prime = \theta + \eta \cdot \frac{d \pi_{\theta}^\top r}{d {\theta}} \bigg/ \left\| \frac{d \pi_{\theta}^\top r}{d {\theta}} \right\|_2 \right)
\end{align}
where the second last inequality is because of the Hessian is symmetric, and its operator norm is equal to its spectral radius. Therefore we have,
\begin{align}
\label{eq:non_uniform_smoothness_intermediate_policy_gradient_norm_special_2}
    \left\| \frac{d \pi_{\theta_{\zeta_1}}^\top r}{d {\theta_{\zeta_1}}} \right\|_2 &\le \left\| \frac{d \pi_{\theta}^\top r}{d {\theta}} \right\|_2 + \left\| \frac{d \pi_{\theta_{\zeta_1}}^\top r}{d {\theta_{\zeta_1}}} - \frac{d \pi_{\theta}^\top r}{d {\theta}} \right\|_2 \qquad \left( \text{by triangle inequality} \right) \\
    &\le \left\| \frac{d \pi_{\theta}^\top r}{d {\theta}} \right\|_2 + 3 \eta \cdot \int_{0}^{1} \left\| \frac{d  \pi_{\theta_{\zeta_2}}^\top r }{d {\theta_{\zeta_2} }} \right\|_2 d \zeta_2. \qquad \left( \text{by \cref{eq:non_uniform_smoothness_intermediate_policy_gradient_norm_special_1}} \right)
\end{align}
Denote $\theta_{\zeta_3} \coloneqq \theta + \zeta_3 \cdot (\theta_{\zeta_2} - \theta)$ with some $\zeta_3 \in [0,1]$. Using similar calculation as in \cref{eq:non_uniform_smoothness_intermediate_policy_gradient_norm_special_1}, we have,
\begin{align}
\label{eq:non_uniform_smoothness_intermediate_policy_gradient_norm_special_3}
    \left\| \frac{d  \pi_{\theta_{\zeta_2}}^\top r }{d {\theta_{\zeta_2} }} \right\|_2 &\le \left\| \frac{d \pi_{\theta}^\top r}{d {\theta}} \right\|_2 + \left\| \frac{d \pi_{\theta_{\zeta_2}}^\top r}{d {\theta_{\zeta_2}}} - \frac{d \pi_{\theta}^\top r}{d {\theta}} \right\|_2 \\
    &\le \left\| \frac{d \pi_{\theta}^\top r}{d {\theta}} \right\|_2 + 3 \eta \cdot \int_{0}^{1}  \left\| \frac{d  \pi_{\theta_{\zeta_3}}^\top r }{d {\theta_{\zeta_3} }} \right\|_2 d \zeta_3.
\end{align}
Combining \cref{eq:non_uniform_smoothness_intermediate_policy_gradient_norm_special_2,eq:non_uniform_smoothness_intermediate_policy_gradient_norm_special_3}, we have,
\begin{align}
    \left\| \frac{d \pi_{\theta_{\zeta_1}}^\top r}{d {\theta_{\zeta_1}}} \right\|_2 \le \left( 1 + 3 \eta \right) \cdot \left\| \frac{d \pi_{\theta}^\top r}{d {\theta}} \right\|_2 + \left( 3 \eta \right)^2 \cdot \int_{0}^{1} \int_{0}^{1} \left\| \frac{d  \pi_{\theta_{\zeta_3}}^\top r }{d {\theta_{\zeta_3} }} \right\|_2 d \zeta_3 d \zeta_2,
\end{align}
which implies,
\begin{align}
    \left\| \frac{d \pi_{\theta_{\zeta_1}}^\top r}{d {\theta_{\zeta_1}}} \right\|_2 &\le \left[ \sum_{i = 0}^{\infty}{ ( 3 \eta )^i } \right] \cdot \left\| \frac{d \pi_{\theta}^\top r}{d {\theta}} \right\|_2 \\
    &= \frac{1}{1 - 3 \eta} \cdot \left\| \frac{d \pi_{\theta}^\top r}{d {\theta}} \right\|_2. \qquad \left( \eta \in ( 0 , 1/3) \right) \qedhere
\end{align}
\end{proof}

\textbf{\cref{lem:lower_bound_cT_softmax_special}} (Non-vanishing N\L{} coefficient) \textbf{.}
Using normalized policy gradient method, we have $\inf_{t\ge 1} \pi_{\theta_t}(a^*) > 0$.
\begin{proof}
The proof is similar to \citet[Lemma 5]{mei2020global}. Let 
\begin{align}
    c = \frac{K}{2 \Delta} \cdot \left(1 - \frac{\Delta}{K} \right)
\end{align}
and 
\begin{align}
    \Delta = r(a^*) - \max_{a \not= a^*}{ r(a) } > 0
\end{align}
denote the reward gap of $r$. We will prove that $\inf_{t\ge 1} \pi_{\theta_t}(a^*) = \min_{1 \le t \le t_0}{ \pi_{\theta_t}(a^*) }$, where $t_0 =\min\{ t: \pi_{\theta_t}(a^*) \ge \frac{c}{c+1} \}$. 
Note that $t_0$ depends only on $\theta_1$ and $c$, and $c$ depends only on the problem.
Define the following regions,
\begin{align}
    \gR_1 &= \left\{ \theta : \frac{d \pi_\theta^\top r}{d \theta(a^*)} \ge \frac{d \pi_\theta^\top r}{d \theta(a)}, \ \forall a \not= a^* \right\}, \\
    \gR_2 & = \left\{ \theta : \pi_\theta(a^*) \ge \pi_\theta(a), \ \forall a \not= a^* \right\}\,,\\
    \gN_c & = \left\{ \theta : \pi_\theta(a^*) \ge \frac{c}{c+1} \right\}.
\end{align}
We make the following three-part claim.
\begin{claim}\label{cl:regions}
The following hold :
\begin{description}
 \item[a)] \label{cl:regions:a} Following a NPG update $\theta_{t+1} = \theta_t + \eta \cdot \frac{d \pi_{\theta_t}^\top r}{d {\theta_t}} \Big/ \Big\| \frac{d \pi_{\theta_t}^\top r}{d {\theta_t}} \Big\|_2$, if $\theta_{t} \in \gR_1$, then {\em (i)}  $\theta_{t+1} \in \gR_1$ and {\em (ii)} $\pi_{\theta_{t+1}}(a^*) \ge \pi_{\theta_{t}}(a^*)$.
\item[b)] We have $\gR_2\subset \gR_1$ and $\gN_c \subset \gR_1$.
\item[c)] For $\eta=1/6$, there exists a finite time $t_0 \ge 1$, such that $\theta_{t_0} \in \gN_c$, and thus $\theta_{t_0} \in \gR_1$, which implies that $\inf_{t\ge 1} \pi_{\theta_t}(a^*) = \min_{1 \le t \le t_0}{ \pi_{\theta_t}(a^*) }$..
\end{description}
\end{claim}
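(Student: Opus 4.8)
The plan is to handle the three parts in the order (b), (a), (c), since (b) supplies the containments used in (a) and (c), and (a) supplies the invariance used in (c). Throughout I will use the one-state policy-gradient formula $\frac{d \pi_\theta^\top r}{d \theta(a)} = \pi_\theta(a)\,(r(a) - \pi_\theta^\top r)$ and abbreviate $g_a(\theta) := \pi_\theta(a)\,(r(a) - \pi_\theta^\top r)$; note $\sum_a g_a(\theta) = 0$, and $g_{a^*}(\theta) \ge 0$ always, because $r(a^*) = \max_a r(a) \ge \pi_\theta^\top r$.

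For part (b) I would argue by the sign of $r(a) - \pi_\theta^\top r$. To see $\gR_2 \subseteq \gR_1$, fix $a \ne a^*$: if $r(a) \le \pi_\theta^\top r$ then $g_a(\theta) \le 0 \le g_{a^*}(\theta)$; otherwise $r(a^*) - \pi_\theta^\top r \ge r(a) - \pi_\theta^\top r > 0$ together with $\pi_\theta(a^*) \ge \pi_\theta(a)$ gives $g_{a^*}(\theta) \ge g_a(\theta)$. For $\gN_c \subseteq \gR_1$ I would use $r(a^*) - \pi_\theta^\top r \ge \Delta\,(1 - \pi_\theta(a^*))$ to lower-bound $g_{a^*}$, and $r(a) - \pi_\theta^\top r \le 1 - \pi_\theta(a^*)$ together with $\pi_\theta(a) \le 1 - \pi_\theta(a^*)$ to upper-bound $g_a$; substituting the threshold $\pi_\theta(a^*) \ge \frac{c}{c+1} = \frac{K-\Delta}{K+\Delta}$ closes $g_{a^*}(\theta) \ge g_a(\theta)$ for $K \ge 3$, while $K = 2$ is immediate since there $g_a = -g_{a^*} \le 0$.

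For part (a), the monotonicity (ii) is the easy half. The normalized update gives, for every $a$, the log-odds identity $\theta_{t+1}(a^*) - \theta_{t+1}(a) = \theta_t(a^*) - \theta_t(a) + \frac{\eta}{\|\nabla_t\|_2}\,(g_{a^*}(\theta_t) - g_a(\theta_t))$, where $\nabla_t := \frac{d\pi_{\theta_t}^\top r}{d\theta_t}$; since $\theta_t \in \gR_1$ each increment is nonnegative, and because $\pi_\theta(a^*)$ is a nondecreasing function of every difference $\theta(a^*) - \theta(b)$, we obtain $\pi_{\theta_{t+1}}(a^*) \ge \pi_{\theta_t}(a^*)$ as well as the ratio monotonicity $\frac{\pi_{\theta_{t+1}}(a^*)}{\pi_{\theta_{t+1}}(a)} \ge \frac{\pi_{\theta_t}(a^*)}{\pi_{\theta_t}(a)}$. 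The invariance (i), $\theta_{t+1} \in \gR_1$, is the main obstacle. I would establish $g_{a^*}(\theta_{t+1}) \ge g_a(\theta_{t+1})$ by a case analysis on the sign of $r(a) - \pi_{\theta_{t+1}}^\top r$: the nonpositive case is free from $g_{a^*} \ge 0$, while the positive case requires combining the increased log-odds just derived with the reward ascent $\pi_{\theta_{t+1}}^\top r \ge \pi_{\theta_t}^\top r$. This is the delicate step, because both the probabilities and the mean reward move within the same step, and I expect to have to follow the bookkeeping of \citet[Lemma 5]{mei2020global} to push it through.

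For part (c), I would first prove a per-step ascent. Feeding the normalized step into the NS property (Definition~\ref{def:non_uniform_smoothness}) with coefficient $3\,\|\nabla\|_2$ from \cref{lem:non_uniform_smoothness_softmax_special}, and using $\beta(\theta_\zeta) \le \frac{3}{1-3\eta}\,\|\nabla_t\|_2$ from \cref{lem:non_uniform_smoothness_intermediate_policy_gradient_norm_special}, the choice $\eta = 1/6$ yields $\pi_{\theta_{t+1}}^\top r - \pi_{\theta_t}^\top r \ge \big(\eta - \frac{3\eta^2}{2(1-3\eta)}\big)\|\nabla_t\|_2 = \frac{1}{12}\,\|\nabla_t\|_2$. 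Telescoping against $\pi_\theta^\top r \le r(a^*)$ forces $\sum_t \|\nabla_t\|_2 < \infty$, hence $\|\nabla_t\|_2 \to 0$ and $\pi_{\theta_t}^\top r$ converges; by the asymptotic global convergence of softmax PG (\citet{agarwal2020optimality,mei2020global}) the limit must be $r(a^*)$, so $\pi_{\theta_t}(a^*) \to 1$ and some finite $t_0$ satisfies $\pi_{\theta_{t_0}}(a^*) \ge \frac{c}{c+1}$, i.e.\ $\theta_{t_0} \in \gN_c$. Finally, $\gN_c \subseteq \gR_1$ by (b) and the invariance and monotonicity of (a) make $\{\pi_{\theta_t}(a^*)\}_{t \ge t_0}$ nondecreasing, so $\inf_{t \ge 1} \pi_{\theta_t}(a^*) = \min_{1 \le t \le t_0} \pi_{\theta_t}(a^*)$, a minimum of finitely many positive numbers, which is exactly \cref{lem:lower_bound_cT_softmax_special}.
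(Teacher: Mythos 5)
Your proposal follows essentially the same route as the paper's proof, with one self-contained improvement and one soft spot worth naming. For part (a), your argument is the paper's: (ii) is identical, and for (i) the two ingredients you identify --- the log-odds growth $\pi_{\theta_{t+1}}(a^*)/\pi_{\theta_{t+1}}(a) \ge \pi_{\theta_t}(a^*)/\pi_{\theta_t}(a)$ and the ascent $\pi_{\theta_{t+1}}^\top r \ge \pi_{\theta_t}^\top r$ (which the paper obtains from \cref{lem:non_uniform_smoothness_softmax_special,lem:non_uniform_smoothness_intermediate_policy_gradient_norm_special} with $\eta = 1/6$) --- are exactly what the paper uses. The ``delicate bookkeeping'' you defer is in fact a three-line chain: $\theta_t \in \gR_1$ gives $\frac{\pi_{\theta_t}(a^*)}{\pi_{\theta_t}(a)} \ge \frac{r(a) - \pi_{\theta_t}^\top r}{r(a^*) - \pi_{\theta_t}^\top r}$, the map $v \mapsto \frac{r(a)-v}{r(a^*)-v}$ is decreasing on $v < r(a^*)$, so the ascent shrinks the right-hand side while the left-hand side only grows; multiplying back out yields $\theta_{t+1} \in \gR_1$. (The paper phrases this as the rearrangement $r(a^*)-r(a) \ge \left( 1 - e^{\theta(a^*)-\theta(a)} \right) \cdot \left( r(a^*)-\pi_\theta^\top r \right)$ with a case split on $\pi_{\theta_t}(a^*)$ vs.\ $\pi_{\theta_t}(a)$; your split on the sign of $r(a)-\pi_{\theta_{t+1}}^\top r$ works equally well.) For part (b) you prove both containments directly, whereas the paper simply cites \citet[Lemma 5]{mei2020global}; your algebra checks out --- in particular $\frac{c}{c+1}=\frac{K-\Delta}{K+\Delta} \ge \frac{1}{1+\Delta}$ reduces to $K \ge 2+\Delta$, which holds for $K \ge 3$ since $\Delta \le 1$, with $K=2$ trivial --- so this is a nice self-contained replacement for the citation.

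The soft spot is part (c). Your ascent-plus-telescoping step is right (it is identical to the paper's \cref{eq:non_uniform_smoothness_progress_special}), but the final move --- ``the gradient norms vanish, so by the asymptotic global convergence of softmax PG the limit must be $r(a^*)$'' --- invokes results of \citet{agarwal2020optimality,mei2020global} that are proven for the \emph{unnormalized} update, and they do not apply verbatim to the sequence generated by normalized PG. Vanishing gradient norm together with monotone value does not by itself give optimality of the limit, because sub-optimal deterministic policies are also stationary points of $\theta \mapsto \pi_\theta^\top r$, so one must still rule out $\pi_{\theta_t}(a^*) \to 0$. To be fair, the paper shares this reliance: it defers claims b) and c) to \citet[Lemma 5]{mei2020global} on the grounds that they ``do not involve the update rule,'' which for claim c) is the same leap, since that cited proof rests on asymptotic convergence of standard PG. So you are no worse off than the paper, but an airtight version of (c) would require adapting the asymptotic-convergence argument to the normalized dynamics rather than citing it.
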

\paragraph{Claim a)} Part~(i): We want to show that 
if $\theta_{t} \in \gR_1$, then $\theta_{t+1} \in \gR_1$. 
Let 
\begin{align}
\gR_1(a) &= \left\{ \theta : \frac{d \pi_\theta^\top r}{d \theta(a^*)} \ge \frac{d \pi_\theta^\top r}{d \theta(a)}\right\}.
\end{align}

Note that $\gR_1 = \cap_{a\ne a^*} \gR_1(a)$.
Pick $a\ne a^*$. Clearly, it suffices to show that if $\theta_t\in \gR_1(a)$ then $\theta_{t+1}\in \gR_1(a)$.
Hence, suppose that $\theta_t\in \gR_1(a)$.
We consider two cases.

\noindent Case (a): $\pi_{\theta_t}(a^*) \ge \pi_{\theta_t}(a)$.
Since $\pi_{\theta_t}(a^*) \ge \pi_{\theta_t}(a)$, we also have $\theta_t(a^*) \ge \theta_t(a)$. 
After an update of the parameters, \begin{align}
\theta_{t+1}(a^{*}) &=\theta_{t}(a^{*}) +\frac{\eta}{\Big\| \frac{d \pi_{\theta_t}^\top r}{d {\theta_t}} \Big\|_2}\cdot\frac{d\pi_{\theta_{t}}^\top r}{d\theta_{t}(a^{*})}\\
&\geq \theta_{t}(a) +\frac{\eta}{\Big\| \frac{d \pi_{\theta_t}^\top r}{d {\theta_t}} \Big\|_2}\cdot\frac{d\pi_{\theta_{t}}^\top r}{d\theta_{t}(a)}\\
&= \theta_{t+1}(a),
\end{align}
which implies that $\pi_{\theta_{t+1}}(a^*) \ge \pi_{\theta_{t+1}}(a)$. Since $r(a^*) - \pi_{\theta_{t+1}}^\top r > 0$ and $r(a^*) > r(a)$,
\begin{align}
    \pi_{\theta_{t+1}}(a^*) \cdot \left( r(a^*) - \pi_{\theta_{t+1}}^\top r \right) \ge \pi_{\theta_{t+1}}(a) \cdot \left( r(a) - \pi_{\theta_{t+1}}^\top r \right), 
\end{align}
which is equivalent to $\frac{d \pi_{\theta_{t+1}}^\top r}{d \theta_{t+1}(a^*)} \ge \frac{d \pi_{\theta_{t+1}}^\top r}{d \theta_{t+1}(a)}$, i.e., $\theta_{t+1} \in \gR_1(a)$.

\noindent Case (b): Suppose now that $\pi_{\theta_t}(a^*) < \pi_{\theta_t}(a)$.
First note that for any $\theta$ and $a\ne a^*$, $\theta \in \gR_1(a)$ holds if and only if
\begin{align}
\label{eq:r1acond}
    r(a^*) - r(a) &\ge \left( 1 - \frac{\pi_{\theta}(a^*)}{\pi_{\theta}(a)} \right) \cdot \left( r(a^*) - \pi_{\theta}^\top r \right).
\end{align}
Indeed, from the condition $\frac{d \pi_{\theta}^\top r}{d \theta(a^*)} \ge \frac{d \pi_{\theta}^\top r}{d \theta(a)}$, we get
\begin{align}
    \pi_{\theta}(a^*) \cdot \left( r(a^*) - \pi_{\theta}^\top r \right) &\ge \pi_{\theta}(a) \cdot \left( r(a) - \pi_{\theta}^\top r \right) \\
    &= \pi_{\theta}(a) \cdot \left( r(a^*) - \pi_{\theta}^\top r \right) - \pi_{\theta}(a) \cdot \left( r(a^*) - r(a) \right),
\end{align}
which, after rearranging, is equivalent to \cref{eq:r1acond}.
Hence, it suffices to show that  \cref{eq:r1acond} holds for $\theta_{t+1}$ provided it holds for $\theta_t$. From the latter condition, we get
\begin{align}
    r(a^*) - r(a) \ge 
     \left( 1 - \exp\left\{ \theta_{t}(a^*) - \theta_{t}(a) \right\} \right) \cdot \left( r(a^*) - \pi_{\theta_{t}}^\top r \right).
\end{align}
After an update of the parameters, according to \cref{lem:descent_lemma_NS_function} (or \cref{eq:non_uniform_smoothness_progress_special} below), $\pi_{\theta_{t+1}}^\top r \ge \pi_{\theta_{t}}^\top r$, i.e.,
\begin{align}
    0 < r(a^*) - \pi_{\theta_{t+1}}^\top r \le r(a^*) - \pi_{\theta_{t}}^\top r\,.
\end{align}
On the other hand,
\begin{align}
\theta_{t+1}(a^{*})-\theta_{t+1}(a) 
&= \theta_{t}(a^{*}) +\frac{\eta}{\Big\| \frac{d \pi_{\theta_t}^\top r}{d {\theta_t}} \Big\|_2}\cdot\frac{d\pi_{\theta_{t}}^\top r}{d\theta_{t}(a^{*})} - \theta_{t}(a) - \frac{\eta}{\Big\| \frac{d \pi_{\theta_t}^\top r}{d {\theta_t}} \Big\|_2}\cdot\frac{d\pi_{\theta_{t}}^\top r}{d\theta_{t}(a)}\\
&\geq \theta_{t}(a^{*}) - \theta_{t}(a),
\end{align}
which implies that \begin{align}
    1 - \exp\left\{ \theta_{t+1}(a^*) - \theta_{t+1}(a) \right\} \le 1 - \exp\left\{ \theta_{t}(a^*) - \theta_{t}(a) \right\}.
\end{align}
Furthermore, 
by our assumption that $\pi_{\theta_t}(a^*) < \pi_{\theta_t}(a)$,
we have
$1 - \exp\left\{ \theta_{t}(a^*) - \theta_{t}(a) \right\} = 1 - \frac{\pi_{\theta_{t}}(a^*)}{\pi_{\theta_{t}}(a)} > 0$. Putting things together, we get
\begin{align}
    \left( 1 - \exp\left\{ \theta_{t+1}(a^*) - \theta_{t+1}(a) \right\} \right) \cdot \left( r(a^*) - \pi_{\theta_{t+1}}^\top r \right) &\le \left( 1 - \exp\left\{ \theta_{t}(a^*) - \theta_{t}(a) \right\} \right) \cdot \left( r(a^*) - \pi_{\theta_{t}}^\top r \right) \\
    &\le r(a^*) - r(a),
\end{align}
which is equivalent to
\begin{align}
     \left( 1 - \frac{\pi_{\theta_{t+1}}(a^*)}{\pi_{\theta_{t+1}}(a)} \right) \cdot \left( r(a^*) - \pi_{\theta_{t+1}}^\top r \right) \le r(a^*) - r(a),
\end{align}
and thus by our previous remark, $\theta_{t+1}\in \gR_1(a)$, thus, finishing the proof of part~(i).

Part~(ii): 
Assume again that $\theta_t \in \gR_1$. We want to show that
$\pi_{\theta_{t+1}}(a^*) \ge \pi_{\theta_{t}}(a^*)$. 
Since $\theta_t \in \gR_1$, we have $\frac{d \pi_{\theta_t}^\top r}{d \theta_t(a^*)} \ge \frac{d \pi_{\theta_t}^\top r}{d \theta_t(a)}, \ \forall a \not= a^*$. Hence,
\begin{align}
\MoveEqLeft
    \pi_{\theta_{t+1}}(a^*) = \frac{\exp\left\{ \theta_{t+1}(a^*) \right\}}{ \sum_{a}{ \exp\left\{ \theta_{t+1}(a) \right\}} } \\
    &= \frac{\exp\left\{ \theta_{t}(a^*) + \eta \cdot \frac{d \pi_{\theta_t}^\top r}{d \theta_t(a^*)} \Big/ \Big\| \frac{d \pi_{\theta_t}^\top r}{d {\theta_t}} \Big\|_2 \right\}}{ \sum_{a}{ \exp\left\{ \theta_{t}(a) + \eta \cdot \frac{d \pi_{\theta_t}^\top r}{d \theta_t(a)} \Big/ \Big\| \frac{d \pi_{\theta_t}^\top r}{d {\theta_t}} \Big\|_2 \right\}} } \\
    &\ge \frac{\exp\left\{ \theta_{t}(a^*) + \eta \cdot \frac{d \pi_{\theta_t}^\top r}{d \theta_t(a^*)} \Big/ \Big\| \frac{d \pi_{\theta_t}^\top r}{d {\theta_t}} \Big\|_2 \right\}}{ \sum_{a}{ \exp\left\{ \theta_{t}(a) + \eta \cdot \frac{d \pi_{\theta_t}^\top r}{d \theta_t(a^*)} \Big/ \Big\| \frac{d \pi_{\theta_t}^\top r}{d {\theta_t}} \Big\|_2 \right\}} } \qquad\left(\text{using } \frac{d \pi_{\theta_t}^\top r}{d \theta_t(a^*)} \ge \frac{d \pi_{\theta_t}^\top r}{d \theta_t(a)} \right) \\
    &= \frac{\exp\left\{ \theta_{t}(a^*) \right\}}{ \sum_{a}{ \exp\left\{ \theta_{t}(a) \right\}} } = \pi_{\theta_t}(a^*).
\end{align}

\paragraph{Claim b); Claim c)} 
The proof of those claims are exactly the same as \citet[Lemma 5]{mei2020global}, since they do not involve the update rule.
\end{proof}

\textbf{\cref{thm:final_rates_normalized_softmax_pg_special}.}
Using NPG $\theta_{t+1} = \theta_t + \eta \cdot \frac{d \pi_{\theta_t}^\top r}{d {\theta_t}} \Big/ \Big\| \frac{d \pi_{\theta_t}^\top r}{d {\theta_t}} \Big\|_2$, with $\eta = 1/6$, for all $t \ge 1$, we have, 
\begin{align}
    ( \pi^* - \pi_{\theta_t} )^\top r \le e^{ - \frac{  c \cdot (t-1) }{12} } \cdot \left( \pi^* - \pi_{\theta_{1}}\right)^\top r,
\end{align}
where $c = \inf_{t\ge 1} \pi_{\theta_t}(a^*) > 0$ is from \cref{lem:lower_bound_cT_softmax_special}, and $c$ is a constant that depends on $r$ and $\theta_1$, but not on the time $t$.
\begin{proof}
Denote $\theta_{\zeta_t} \coloneqq \theta_t + \zeta_t \cdot (\theta_{t+1} - \theta_t)$ with some $\zeta_t \in [0,1]$. According to \cref{lem:non_uniform_smoothness_softmax_special},
\begin{align}
\MoveEqLeft
    \left| ( \pi_{\theta_{t+1}} - \pi_{\theta_t})^\top r - \Big\langle \frac{d \pi_{\theta_t}^\top r}{d \theta_t}, \theta_{t+1} - \theta_t \Big\rangle \right| \le \frac{3 }{2} \cdot \bigg\| \frac{d \pi_{\theta_{\zeta_t}}^\top r}{d {\theta_{\zeta_t}}} \bigg\|_2 \cdot \| \theta_{t+1} - \theta_t \|_2^2 \\
    &\le \frac{3 }{2} \cdot \frac{1}{1 - 3 \eta} \cdot \left\| \frac{d \pi_{\theta_t}^\top r}{d {\theta_t}} \right\|_2 \cdot \| \theta_{t+1} - \theta_t \|_2^2, \qquad \left( \eta = 1/6, \text{ by \cref{lem:non_uniform_smoothness_intermediate_policy_gradient_norm_special}} \right)
\end{align}
which implies,
\begin{align}
\label{eq:non_uniform_smoothness_progress_special}
\MoveEqLeft
    \pi_{\theta_t}^\top r - \pi_{\theta_{t+1}}^\top r \le - \Big\langle \frac{d \pi_{\theta_t}^\top r}{d \theta_t}, \theta_{t+1} - \theta_{t} \Big\rangle + \frac{3}{2 \cdot (1 - 3 \eta ) } \cdot \left\| \frac{d \pi_{\theta_t}^\top r}{d {\theta_t}} \right\|_2 \cdot \| \theta_{t+1} - \theta_{t} \|_2^2 \\
    &= - \eta \cdot \left\| \frac{d \pi_{\theta_t}^\top r}{d \theta_t} \right\|_2 +  \frac{3 \cdot \eta^2}{ 2 \cdot (1 - 3 \eta ) }  \cdot \left\| \frac{d \pi_{\theta_t}^\top r}{d \theta_t} \right\|_2 
    \qquad\left(\text{using } \theta_{t+1} = \theta_t + \eta \cdot \frac{d \pi_{\theta_t}^\top r}{d \theta_t} \bigg/ \left\| \frac{d \pi_{\theta_t}^\top r}{d {\theta_t}} \right\|_2 \right) \\
    &= - \frac{1}{12} \cdot \left\| \frac{d \pi_{\theta_t}^\top r}{d \theta_t} \right\|_2 
    \qquad\qquad \left(\text{using } \eta = 1/6 \right) \\
    &\le - \frac{1}{12} \cdot \pi_{\theta_t}(a^*) \cdot ( \pi^* - \pi_{\theta_t} )^\top r 
    \qquad\left(\text{by \cref{lem:non_uniform_lojasiewicz_softmax_special}} \right) \\
    &\le - \frac{1}{12} \cdot \inf_{t\ge 1} \pi_{\theta_t}(a^*) \cdot ( \pi^* - \pi_{\theta_t} )^\top r.
\end{align}
According to \cref{eq:non_uniform_smoothness_progress_special}, we have,
\begin{align}
    \left( \pi^* - \pi_{\theta_{t}}\right)^\top r &\le \left( 1 - \frac{  c  }{12} \right) \cdot \left( \pi^* - \pi_{\theta_{t-1}}\right)^\top r \qquad \left( c \coloneqq \inf_{t\ge 1} \pi_{\theta_t}(a^*) > 0 \right) \\
    &\le \exp{ \left\{ -  c /12 \right\} } \cdot \left( \pi^* - \pi_{\theta_{t-1}}\right)^\top r \\
    &\le \exp{ \left\{ -  (t-1) \cdot c / 12 \right\} } \cdot \left( \pi^* - \pi_{\theta_{1}}\right)^\top r. \qedhere
\end{align}
\end{proof}

\subsection{General MDPs}
\label{sec:proofs_policy_gradient_general_mdps}

\textbf{\cref{lem:non_uniform_lojasiewicz_softmax_general}} (N\L{}) \textbf{.}
Denote $S \coloneqq | \gS| $ as the total number of states. We have, for all $\theta \in \sR^{\gS \times \gA}$,
\begin{align}
    \left\| \frac{\partial V^{\pi_\theta}(\mu)}{\partial \theta }\right\|_2 \ge \frac{ \min_s{ \pi_\theta(a^*(s)|s) } }{ \sqrt{S} \cdot  \left\| d_{\rho}^{\pi^*} / d_{\mu}^{\pi_\theta} \right\|_\infty } \cdot \left( V^*(\rho) - V^{\pi_\theta}(\rho) \right),
\end{align}
where $a^*(s)$ is the action that $\pi^*$ selects in state $s$.
\begin{proof}
See the proof in \citep[Lemma 8]{mei2020global}. We include a proof for completeness. We have,
\begin{align}
\MoveEqLeft
    \left\| \frac{\partial V^{\pi_\theta}(\mu)}{\partial \theta }\right\|_2 = \left[ \sum_{s,a} \left( \frac{\partial V^{\pi_\theta}(\mu)}{\partial \theta(s,a)} \right)^2 \right]^{\frac{1}{2}} \\
    &\ge \left[ \sum_{s} \left( \frac{\partial V^{\pi_\theta}(\mu)}{\partial \theta(s,a^*(s))} \right)^2 \right]^{\frac{1}{2}} \\
    &\ge \frac{1}{\sqrt{S}} \sum_{s} \left| \frac{\partial V^{\pi_\theta}(\mu)}{\partial \theta(s,a^*(s))} \right| \qquad \left(
    \text{by Cauchy-Schwarz, } \| x \|_1 = | \langle \rvone, \ |x| \rangle | \le \| \rvone \|_2 \cdot \| x \|_2 \right) \\
    &= \frac{1}{1-\gamma} \cdot \frac{1}{\sqrt{S}} \sum_{s} \left| d_{\mu}^{\pi_\theta}(s) \cdot \pi_\theta(a^*(s)|s) \cdot A^{\pi_\theta}(s,a^*(s)) \right| \qquad \left( \text{by \cref{lem:policy_gradient_softmax}} \right) \\
    &= \frac{1}{1-\gamma} \cdot \frac{1}{\sqrt{S}} \sum_{s} d_{\mu}^{\pi_\theta}(s) \cdot  \pi_\theta(a^*(s)|s) \cdot \left| A^{\pi_\theta}(s,a^*(s)) \right|. 
    \qquad \left( \text{because } d_{\mu}^{\pi_\theta}(s) \ge 0 \text{ and } \pi_\theta(a^*(s)|s) \ge 0 \right)
\end{align}
Define the distribution mismatch coefficient as $\bigg\| \frac{d_{\rho}^{\pi^*}}{d_{\mu}^{\pi_\theta}} \bigg\|_\infty = \max_{s}{ \frac{d_{\rho}^{\pi^*}(s)}{d_{\mu}^{\pi_\theta}(s)} }$. We have,
\begin{align}
    \left\| \frac{\partial V^{\pi_\theta}(\mu)}{\partial \theta }\right\|_2 &\ge \frac{1}{1-\gamma} \cdot \frac{1}{\sqrt{S}} \sum_{s} \frac{ d_{\mu}^{\pi_\theta}(s) }{ d_{\rho}^{\pi^*}(s) } \cdot  d_{\rho}^{\pi^*}(s) \cdot \pi_\theta(a^*(s)|s) \cdot \left| A^{\pi_\theta}(s,a^*(s)) \right| \\
    &\ge \frac{1}{1-\gamma} \cdot  \frac{1}{\sqrt{S}} \cdot \left\| \frac{d_{\rho}^{\pi^*}}{d_{\mu}^{\pi_\theta}} \right\|_\infty^{-1} \cdot \min_s{ \pi_\theta(a^*(s)|s) } \cdot \sum_s{ d_{\rho}^{\pi^*}(s) \cdot \left| A^{\pi_\theta}(s,a^*(s)) \right| } \\
    &\ge \frac{1}{1-\gamma} \cdot  \frac{1}{\sqrt{S}} \cdot \left\| \frac{d_{\rho}^{\pi^*}}{d_{\mu}^{\pi_\theta}} \right\|_\infty^{-1} \cdot \min_s{ \pi_\theta(a^*(s)|s) } \cdot \sum_s{ d_{\rho}^{\pi^*}(s) \cdot A^{\pi_\theta}(s,a^*(s)) } \\
    &= \frac{1}{\sqrt{S}} \cdot \left\| \frac{d_{\rho}^{\pi^*}}{d_{\mu}^{\pi_\theta}} \right\|_\infty^{-1} \cdot \min_s{ \pi_\theta(a^*(s)|s) } \cdot \frac{1}{1-\gamma} \sum_{s}{ d_{\rho}^{\pi^*}(s) \sum_{a}{\pi^*(a|s) \cdot A^{\pi_\theta}(s,a) } } 
    \\
    &= \frac{1}{\sqrt{S}} \cdot \left\| \frac{d_{\rho}^{\pi^*}}{d_{\mu}^{\pi_\theta}} \right\|_\infty^{-1} \cdot \min_s{ \pi_\theta(a^*(s)|s) } \cdot \left[ V^*(\rho) - V^{\pi_\theta}(\rho) \right],
\end{align}
where the
one but last equality used that $\pi^*$ is deterministic and in state $s$ chooses $a^*(s)$ with probability one,
and the last equality uses the performance difference formula (\cref{lem:performance_difference_general}).
\end{proof}

\textbf{\cref{lem:non_uniform_smoothness_softmax_general}} (NS) \textbf{.}
Let \cref{ass:posinit} hold and denote $\theta_\zeta \coloneqq \theta + \zeta \cdot (\theta^\prime - \theta)$ with some $\zeta \in [0,1]$. $\theta \mapsto V^{\pi_\theta}(\mu)$ satisfies $\beta(\theta_\zeta)$ non-uniform smoothness with
\begin{align}
    \beta(\theta_\zeta) = \left[ 3 + \frac{ 4 \cdot \left( C_\infty - (1 - \gamma) \right) }{ 1 - \gamma } \right] \cdot \sqrt{S} \cdot \left\| \frac{\partial V^{\pi_{\theta_\zeta}}(\mu)}{\partial {\theta_\zeta} }\right\|_2,
\end{align}
where $C_\infty \coloneqq \max_{\pi}{ \left\| \frac{d_{\mu}^{\pi}}{ \mu} \right\|_\infty} \le \frac{1}{ \min_s \mu(s) } < \infty$.
\begin{proof}
The main part is to prove that for all $y \in \sR^{S A}$ and $\theta$,
\begin{align}
\MoveEqLeft
    \left| y^\top \frac{\partial^2 V^{\pi_\theta}(\mu)}{\partial \theta^2} y \right| \le \left[ 3 + \frac{ 4 \cdot  \left( C_\infty - (1 - \gamma) \right) }{ 1 - \gamma } \right] \cdot \sqrt{S} \cdot \left\| \frac{\partial V^{\pi_\theta}(\mu)}{\partial \theta }\right\|_2 \cdot \| y \|_2^2.
\end{align}
We first calculate the second order derivative of $V^{\pi_\theta}(\mu)$ w.r.t. $\theta$.

Denote $\theta_\alpha = \theta + \alpha u$, where $\alpha \in \sR$ and $u \in \sR^{SA}$. For any $(s, a) \in \gS \times \gA$,
\begin{align}
\label{eq:non_uniform_smoothness_softmax_general_intermediate_pi_first_derivative}
    \frac{\partial \pi_{\theta_\alpha}(a | s)}{\partial \alpha} \Big|_{\alpha=0} &= \Big\langle \frac{\partial \pi_{\theta_\alpha}(a | s)}{\partial \theta_\alpha} \Big|_{\alpha=0}, \frac{\partial \theta_\alpha}{\partial \alpha} \Big\rangle \\
    &= \Big\langle \frac{\partial \pi_{\theta}(a | s)}{\partial \theta}, u \Big\rangle \\
    &= \Big\langle \frac{\partial \pi_{\theta}(a | s)}{\partial \theta(s, \cdot)}, u(s, \cdot) \Big\rangle. \qquad \left( \frac{\partial \pi_\theta(a | s)}{\partial \theta(s^\prime, \cdot)} = \rvzero, \ \forall s^\prime \not= s \right)
\end{align}
Similarly, for any $(s, a) \in \gS \times \gA$,
\begin{align}
\label{eq:non_uniform_smoothness_softmax_general_intermediate_pi_second_derivative}
    \frac{\partial^2 \pi_{\theta_\alpha}(a | s)}{\partial \alpha^2} \Big|_{\alpha=0} &= \Big\langle \frac{\partial}{\partial \theta_\alpha} \left\{ \frac{\partial \pi_{\theta_\alpha}(a|s)}{\partial \alpha} \right\} \Big|_{\alpha=0}, \frac{\partial \theta_\alpha}{\partial \alpha} \Big\rangle \\
    &= \Big\langle \frac{\partial^2 \pi_{\theta_\alpha}(a|s)}{\partial \theta_\alpha^2} \Big|_{\alpha=0} \frac{\partial \theta_\alpha}{\partial \alpha}, \frac{\partial \theta_\alpha}{\partial \alpha} \Big\rangle \\
    &= \Big\langle \frac{\partial^2 \pi_{\theta}(a | s)}{\partial \theta^2(s, \cdot)} u(s, \cdot), u(s, \cdot) \Big\rangle.
\end{align}
Define $\Pi(\alpha) \in \sR^{S \times SA}$ as follows,
\begin{align}
    \Pi(\alpha) \coloneqq \begin{bmatrix} 
    \pi_{\theta_\alpha}( \cdot | 1)^\top & \rvzero^\top & \cdots & \rvzero^\top \\
    \rvzero^\top & \pi_{\theta_\alpha}( \cdot | 2)^\top & \cdots & \rvzero^\top \\
    \vdots & \vdots & \ddots & \vdots \\
    \rvzero^\top & \rvzero^\top & \cdots & \pi_{\theta_\alpha}( \cdot | S)^\top 
    \end{bmatrix}.
\end{align}
Denote $\gP \in \sR^{SA \times S}$ such that,
\begin{align}
    \gP_{{(sa,s^\prime)}} \coloneqq \gP(s^\prime | s, a).
\end{align}
Define $P(\alpha) \coloneqq \Pi(\alpha) \gP \in \sR^{S \times S}$, where $\forall ( s, s^\prime )$,
\begin{align}
\label{eq:non_uniform_smoothness_softmax_general_intermediate_P_PI_def}
   \left[ P(\alpha) \right]_{( s, s^\prime )} = \sum_{a}{ \pi_{\theta_\alpha}(a | s) \cdot \gP(s^\prime | s, a) }.
\end{align}
The derivative w.r.t. $\alpha$ is
\begin{align}
\label{eq:non_uniform_smoothness_softmax_general_intermediate_P_PI_first_derivative_matrix}
    \frac{\partial P(\alpha)}{\partial \alpha} &= \frac{\partial \Pi(\alpha) \gP}{\partial \alpha} = \frac{\partial \Pi(\alpha) }{\partial \alpha} \gP.
\end{align}
And $\forall ( s, s^\prime )$, we have, 
\begin{align}
\label{eq:non_uniform_smoothness_softmax_general_intermediate_P_PI_first_derivative_scalar}
    \left[ \frac{\partial P(\alpha)}{\partial \alpha} \Big|_{\alpha=0} \right]_{( s, s^\prime)} = \sum_{a} \left[ \frac{ \partial \pi_{\theta_\alpha}(a | s) }{\partial \alpha} \Big|_{\alpha=0} \right] \cdot \gP(s^\prime | s, a).
\end{align}
Next, consider the state value function of $\pi_{\theta_\alpha}$,
\begin{align}
    V^{\pi_{\theta_\alpha}}(s) &= \sum_{a}{\pi_{\theta_\alpha}(a | s) \cdot r(s, a) } + \gamma \sum_{a}{ \pi_{\theta_\alpha}(a | s) \sum_{s^\prime}{ \gP(s^\prime | s, a) \cdot V^{\pi_{\theta_\alpha}}(s^\prime) }  },
\end{align}
which implies,
\begin{align}
\label{eq:state_value_bellman_equation_s}
    V^{\pi_{\theta_\alpha}}(s) &= e_{s}^\top M(\alpha) r_{\theta_\alpha} \\
\label{eq:state_value_bellman_equation_rho}
    V^{\pi_{\theta_\alpha}}(\mu) &= \mu^\top M(\alpha) r_{\theta_\alpha},
\end{align}
where
\begin{align}
\label{eq:non_uniform_moothness_softmax_general_intermediate_M_matrix_def}
    M(\alpha) = \left( \identitymatrix - \gamma P(\alpha) \right)^{-1},
\end{align}
and $r_{\theta_\alpha} \in \sR^{S}$ is given by
\begin{align}
\label{eq:non_uniform_moothness_softmax_general_intermediate_reward_def_matrix}
    r_{\theta_\alpha} = \Pi(\alpha) r,
\end{align}
where $r \in \sR^{SA}$.
Taking derivative w.r.t. $\alpha$ in \cref{eq:state_value_bellman_equation_rho},
\begin{align}
    \frac{\partial V^{\pi_{\theta_\alpha}}(\mu)}{\partial \alpha} &= \gamma \cdot \mu^\top M(\alpha) \frac{\partial P(\alpha)}{\partial \alpha} M(\alpha) r_{\theta_\alpha} + \mu^\top M(\alpha) \frac{\partial r_{\theta_\alpha}}{\partial \alpha} \\
    &= \mu^\top M(\alpha) \left[ \gamma \cdot \frac{\partial P(\alpha)}{\partial \alpha} M(\alpha) r_{\theta_\alpha} + \frac{\partial r_{\theta_\alpha}}{\partial \alpha} \right] \\
    &= \mu^\top M(\alpha) \left[ \gamma \cdot \frac{\partial \Pi(\alpha)}{\partial \alpha} \gP M(\alpha) r_{\theta_\alpha} + \frac{\partial \Pi(\alpha)}{\partial \alpha} r \right] \qquad \left( \text{by \cref{eq:non_uniform_smoothness_softmax_general_intermediate_P_PI_first_derivative_matrix,eq:non_uniform_moothness_softmax_general_intermediate_reward_def_matrix}} \right) \\
    &= \mu^\top M(\alpha) \frac{\partial \Pi(\alpha)}{\partial \alpha} Q^{\pi_{\theta_\alpha}},
\end{align}
where $Q^{\pi_{\theta_\alpha}} \in \sR^{SA}$ is the state-action value and it satisfies,
\begin{align}
    Q^{\pi_{\theta_\alpha}} &= r + \gamma \cdot \gP M(\alpha) r_{\theta_\alpha} \\
    &= r + \gamma \cdot \gP V^{\pi_{\theta_\alpha}}. \qquad \left( \text{by \cref{eq:state_value_bellman_equation_s}} \right)
\end{align}
Similarly, taking second derivative w.r.t. $\alpha$,
\begin{align}
\label{eq:non_uniform_smoothness_softmax_general_intermediate_V_second_derivative_def}
\MoveEqLeft
    \frac{\partial^2 V^{\pi_{\theta_\alpha}}(\mu)}{\partial \alpha^2} = 2 \gamma^2 \cdot \mu^\top M(\alpha) \frac{\partial P(\alpha)}{\partial \alpha} M(\alpha) \frac{\partial P(\alpha)}{\partial \alpha} M(\alpha) r_{\theta_\alpha} + \gamma \cdot \mu^\top M(\alpha) \frac{\partial^2 P(\alpha)}{\partial \alpha^2} M(\alpha) r_{\theta_\alpha} \\
    &\qquad + 2 \gamma \cdot \mu^\top M(\alpha) \frac{\partial P(\alpha)}{\partial \alpha} M(\alpha) \frac{\partial r_{\theta_\alpha}}{\partial \alpha} + \mu^\top M(\alpha) \frac{\partial^2 r_{\theta_\alpha}}{\partial \alpha^2} \\
    &= 2 \gamma \cdot \mu^\top M(\alpha) \frac{\partial P(\alpha)}{\partial \alpha} M(\alpha) \frac{\partial \Pi(\alpha)}{\partial \alpha} \left( \gamma \cdot \gP M(\alpha) r_{\theta_\alpha} + r \right) + \mu^\top M(\alpha) \frac{\partial^2 \Pi(\alpha)}{\partial \alpha^2} \left( \gamma \cdot \gP M(\alpha) r_{\theta_\alpha} + r \right) \\
    &= 2 \gamma \cdot \mu^\top M(\alpha) \frac{\partial P(\alpha)}{\partial \alpha} M(\alpha) \frac{\partial \Pi(\alpha)}{\partial \alpha} Q^{\pi_{\theta_\alpha}} + \mu^\top M(\alpha) \frac{\partial^2 \Pi(\alpha)}{\partial \alpha^2} Q^{\pi_{\theta_\alpha}}.
\end{align}
For the last term, we have,
\begin{align}
    \left[ \frac{\partial^2 \Pi(\alpha)}{\partial \alpha^2} Q^{\pi_{\theta_\alpha}} \Big|_{\alpha=0} \right]_{(s)} &= \sum_{a}{ \frac{\partial^2 \pi_{\theta_\alpha}(a | s)}{\partial \alpha^2} \Big|_{\alpha=0} \cdot Q^{\pi_{\theta}}(s, a) } \\
    &= \sum_{a}{ \Big\langle \frac{\partial^2 \pi_{\theta}(a | s)}{\partial \theta^2(s, \cdot)} u(s, \cdot), u(s, \cdot) \Big\rangle \cdot Q^{\pi_{\theta}}(s, a) } \qquad \left( \text{by \cref{eq:non_uniform_smoothness_softmax_general_intermediate_pi_second_derivative}} \right) \\
    &= u(s, \cdot)^\top \left[ \sum_{a}{ \frac{\partial^2 \pi_{\theta}(a | s)}{\partial \theta^2(s, \cdot)} \cdot Q^{\pi_{\theta}}(s, a) } \right] u(s, \cdot).
\end{align}
Let $S(a, \theta) = \frac{\partial^2 \pi_{\theta}(a | s)}{\partial \theta^2(s, \cdot)} \in \sR^{A \times A}$. $\forall i, j \in [A]$, the value of $S(a, \theta)$ is,
\begin{align}
\MoveEqLeft
    S_{(i,j)} = \frac{\partial \{ \delta_{ia} \pi_{\theta}(a | s) - \pi_{\theta}(a | s) \pi_{\theta}(i | s) \}}{\partial \theta(s,j)} \\
    &= \delta_{ia} \cdot \left[ \delta_{ja} \pi_{\theta}(a | s) -  \pi_{\theta}(a | s) \pi_{\theta}(j | s) \right] - \pi_{\theta}(a | s) \cdot \left[\delta_{ij} \pi_{\theta}(j | s) - \pi_{\theta}(i | s) \pi_{\theta}(j | s) \right] - \pi_{\theta}(i | s) \cdot \left[ \delta_{ja} \pi_{\theta}(a | s) - \pi_{\theta}(a | s) \pi_{\theta}(j | s) \right],
\end{align}
where the $\delta$ notation is as defined in \cref{eq:delta_ij_notation}. Then we have,
\begin{align}
\MoveEqLeft
    \left[ \sum_{a}{ \frac{\partial^2 \pi_{\theta}(a | s)}{\partial \theta^2(s, \cdot)} \cdot Q^{\pi_{\theta}}(s, a) } \right]_{(i,j)} = \sum_{ a }{ S_{(i,j)} \cdot Q^{\pi_{\theta}}(s, a) }  \\
    &= \delta_{ij} \cdot \pi_{\theta}(i | s) \cdot \left[ Q^{\pi_{\theta}}(s, i) - V^{\pi_{\theta}}(s) \right] - \pi_{\theta}(i | s) \cdot \pi_{\theta}(j | s) \cdot \left[ Q^{\pi_{\theta}}(s, i) - V^{\pi_{\theta}}(s) \right]  - \pi_{\theta}(i | s) \cdot \pi_{\theta}(j | s) \cdot \left[ Q^{\pi_{\theta}}(s, j) - V^{\pi_{\theta}}(s) \right].
\end{align}
Therefore we have,
\begin{align}
\MoveEqLeft
     \left[ \frac{\partial^2 \Pi(\alpha)}{\partial \alpha^2} Q^{\pi_{\theta_\alpha}} \Big|_{\alpha=0} \right]_{(s)} = \sum_{i=1}^{A}{ \sum_{j=1}^{A}{ u(s, i) \cdot u(s, j) \cdot \left[ \sum_{a}{ \frac{\partial^2 \pi_{\theta}(a | s)}{\partial \theta^2(s, \cdot)} \cdot Q^{\pi_{\theta}}(s, a) } \right]_{(i,j)}  } } \\
     &= \left( H(\pi_{\theta}(\cdot | s)) Q^{\pi_\theta}(s, \cdot ) \right)^\top \left( u(s, \cdot) \odot u(s, \cdot) \right) - 2 \cdot \left[ \left( H(\pi_{\theta}(\cdot | s)) Q^{\pi_\theta}(s, \cdot) \right)^\top u(s, \cdot)\right] \cdot \left( \pi_{\theta}(\cdot | s)^\top u(s, \cdot) \right),
\end{align}
where $H(\pi) \coloneqq \diagonalmatrix(\pi) - \pi \pi^\top$. Combining the above results with \cref{eq:non_uniform_smoothness_softmax_general_intermediate_V_second_derivative_def}, we have,
\begin{align}
\label{eq:non_uniform_smoothness_softmax_general_intermediate_main_1}
\MoveEqLeft
    \left| \mu^\top M(\alpha) \frac{\partial^2 \Pi(\alpha)}{\partial \alpha^2} Q^{\pi_{\theta_\alpha}} \Big|_{\alpha=0} \right| \le \frac{1}{ 1 - \gamma } \cdot \sum_{s}{ d_{\mu}^{\pi_{\theta}}(s) \cdot \left| \left[ \frac{\partial^2 \Pi(\alpha)}{\partial \alpha^2} Q^{\pi_{\theta_\alpha}} \Big|_{\alpha=0} \right]_{(s)} \right| } \qquad \left( \text{by triangle inequality} \right) \\
    &\le \frac{1}{ 1 - \gamma } \cdot \sum_{s}{ d_{\mu}^{\pi_{\theta}}(s) \cdot 3 \cdot \left\| H(\pi_\theta(\cdot | s)) Q^{\pi_\theta}(s,\cdot) \right\|_2 \cdot \left\| u \right\|_2^2 }  \qquad \left( \text{by H{\" o}lder's inequality} \right) \\
    &\le \frac{3 \cdot \sqrt{S} }{ 1 - \gamma }  \cdot \left[ \sum_{s}{d_{\mu}^{\pi_\theta}(s)^2 \cdot \left\| H(\pi_\theta(\cdot | s)) Q^{\pi_\theta}(s,\cdot) \right\|_2^2 } \right]^\frac{1}{2} \cdot \left\| u \right\|_2^2 \qquad \left( \text{by Cauchy-Schwarz} \right) \\
    &= 3 \cdot \sqrt{S} \cdot \left\| \frac{\partial V^{\pi_\theta}(\mu)}{\partial \theta }\right\|_2 \cdot \left\| u \right\|_2^2. \qquad \left( \text{by \cref{lem:policy_gradient_norm_softmax}} \right)
\end{align}
For the first term in \cref{eq:non_uniform_smoothness_softmax_general_intermediate_V_second_derivative_def}, we have,
\begin{align}
\label{eq:non_uniform_smoothness_softmax_general_intermediate_main_2_a}
    \mu^\top M(\alpha) \frac{\partial P(\alpha)}{\partial \alpha} M(\alpha) \frac{\partial \Pi(\alpha)}{\partial \alpha} Q^{\pi_{\theta_\alpha}} \Big|_{\alpha=0} &= \sum_{s^\prime}{ \left[ \mu^\top M(\alpha) \frac{\partial P(\alpha)}{\partial \alpha} \Big|_{\alpha=0} \right]_{(s^\prime)} \cdot \left[ M(\alpha) \frac{\partial \Pi(\alpha)}{\partial \alpha} Q^{\pi_{\theta_\alpha}} \Big|_{\alpha=0} \right]_{(s^\prime)}  },
\end{align}
since,
\begin{align}
    \left( \mu^\top M(\alpha) \frac{\partial P(\alpha)}{\partial \alpha} \right)^\top \in \sR^S, \quad \text{and} \quad M(\alpha) \frac{\partial \Pi(\alpha)}{\partial \alpha} Q^{\pi_{\theta_\alpha}} \in \sR^S.
\end{align}
Next we have,
\begin{align}
\MoveEqLeft
    \left[ M(\alpha) \frac{\partial \Pi(\alpha)}{\partial \alpha} Q^{\pi_{\theta_\alpha}} \Big|_{\alpha=0} \right]_{(s^\prime)} = \frac{1}{ 1 - \gamma } \cdot  \sum_{s}{ d_{s^{\prime}}^{\pi_\theta}(s) \cdot \left[ \frac{\partial \Pi(\alpha)}{\partial \alpha} Q^{\pi_{\theta_\alpha}} \Big|_{\alpha=0} \right]_{(s)} } \qquad \left( \frac{\partial \Pi(\alpha)}{\partial \alpha} Q^{\pi_{\theta_\alpha}} \in \sR^{S} \right) \\
    &= \frac{1}{ 1 - \gamma } \cdot  \sum_{s}{ d_{s^{\prime}}^{\pi_\theta}(s) \cdot \sum_{a}{  \frac{\partial \pi_{\theta_\alpha}(a | s)}{\partial \alpha} \Big|_{\alpha=0} \cdot Q^{\pi_{\theta}}(s, a)  } } \\
    &= \frac{1}{ 1 - \gamma } \cdot  \sum_{s}{ d_{s^{\prime}}^{\pi_\theta}(s) \cdot \sum_{a}{  \Big\langle \frac{\partial \pi_{\theta}(a | s)}{\partial \theta(s, \cdot)}, u(s, \cdot) \Big\rangle \cdot Q^{\pi_{\theta}}(s, a)  } } \qquad \left( \text{by \cref{eq:non_uniform_smoothness_softmax_general_intermediate_pi_first_derivative}} \right) \\
    &= \frac{1}{ 1 - \gamma } \cdot  \sum_{s}{ d_{s^{\prime}}^{\pi_\theta}(s) \cdot  \Big\langle \sum_{a}{  \frac{\partial \pi_{\theta}(a | s)}{\partial \theta(s, \cdot)} \cdot Q^{\pi_{\theta}}(s, a) }, u(s, \cdot) \Big\rangle  } \\
    &= \frac{1}{ 1 - \gamma } \cdot  \sum_{s}{ d_{s^{\prime}}^{\pi_\theta}(s) \cdot  \left( H(\pi_{\theta}(\cdot | s)) Q^{\pi_\theta}(s, \cdot) \right)^\top u(s, \cdot)  }, \qquad \left( H(\pi_\theta) \text{ is the Jacobian of } \theta \mapsto \softmax(\theta) \right)
\end{align}
which implies,
\begin{align}
\label{eq:non_uniform_smoothness_softmax_general_intermediate_main_2_b}
    \left| \left[ M(\alpha) \frac{\partial \Pi(\alpha)}{\partial \alpha} Q^{\pi_{\theta_\alpha}} \Big|_{\alpha=0} \right]_{(s^\prime)} \right| &\le \frac{1}{ 1 - \gamma } \cdot  \sum_{s}{ d_{s^{\prime}}^{\pi_\theta}(s) \cdot  \left\| H(\pi_{\theta}(\cdot | s)) Q^{\pi_\theta}(s, \cdot) \right\|_2 \cdot \left\| u(s, \cdot) \right\|_2  } \\
    &\le \frac{\left\| u \right\|_2}{ 1 - \gamma } \cdot  \sum_{s}{ d_{s^{\prime}}^{\pi_\theta}(s) \cdot  \left\| H(\pi_{\theta}(\cdot | s)) Q^{\pi_\theta}(s, \cdot) \right\|_2   }.
\end{align}
On the other hand,
\begin{align}
\MoveEqLeft
    \left[ \mu^\top M(\alpha) \frac{\partial P(\alpha)}{\partial \alpha} \Big|_{\alpha=0} \right]_{(s^\prime)} = \frac{1}{ 1 - \gamma } \cdot  \sum_{s}{ d_{\mu}^{\pi_\theta}(s) \cdot \left[ \frac{\partial P(\alpha)}{\partial \alpha} \Big|_{\alpha=0} \right]_{(s,s^\prime)} } \qquad \left( \frac{\partial P(\alpha)}{\partial \alpha} \in \sR^{S \times S} \right) \\
    &= \frac{1}{ 1 - \gamma } \cdot \sum_{s}{ d_{\mu}^{\pi_\theta}(s) \cdot \sum_{a} \left[ \frac{ \partial \pi_{\theta_\alpha}(a | s) }{\partial \alpha} \Big|_{\alpha=0} \right] \cdot \gP(s^\prime | s, a) } \qquad \left( \text{by \cref{eq:non_uniform_smoothness_softmax_general_intermediate_P_PI_first_derivative_scalar}} \right) \\
    &= \frac{1}{ 1 - \gamma } \cdot \sum_{s}{ d_{\mu}^{\pi_\theta}(s) \cdot \sum_{a} \Big\langle \frac{\partial \pi_{\theta}(a | s)}{\partial \theta(s, \cdot)}, u(s, \cdot) \Big\rangle \cdot \gP(s^\prime | s, a) } \qquad \left( \text{by \cref{eq:non_uniform_smoothness_softmax_general_intermediate_pi_first_derivative}} \right) \\
    &= \frac{1}{ 1 - \gamma } \cdot \sum_{s}{ d_{\mu}^{\pi_\theta}(s) \cdot \sum_{a}{ \pi_{\theta}(a | s) \cdot \gP(s^\prime | s, a) \cdot \left[ u(s, a) -  \pi_{\theta}(\cdot | s)^\top u(s, \cdot) \right] } },
\end{align}
which implies,
\begin{align}
    \left| \left[ \mu^\top M(\alpha) \frac{\partial P(\alpha)}{\partial \alpha} \Big|_{\alpha=0} \right]_{(s^\prime)} \right| &\le \frac{1}{ 1 - \gamma } \cdot \sum_{s}{ d_{\mu}^{\pi_\theta}(s) \cdot \sum_{a}{ \pi_{\theta}(a | s) \cdot \gP(s^\prime | s, a) \cdot 2 \cdot \left\| u(s, \cdot) \right\|_\infty } } \\
    &\le \frac{ 2 \cdot \left\| u \right\|_2 }{ 1 - \gamma } \cdot  \sum_{s}{ d_{\mu}^{\pi_\theta}(s) \cdot \sum_{a}{ \pi_{\theta}(a | s) \cdot \gP(s^\prime | s, a) } }.
\end{align}
According to
\begin{align}
    d_{\mu}^{\pi_\theta}(s^\prime) = (1 - \gamma) \cdot \mu(s^\prime) + \gamma \cdot \sum_{s}{ d_{\mu}^{\pi_\theta}(s) \cdot \sum_{a}{ \pi_{\theta}(a | s) \cdot \gP(s^\prime | s, a) } }, \quad \forall s^\prime \in \gS
\end{align}
we have,
\begin{align}
\label{eq:non_uniform_smoothness_softmax_general_intermediate_main_2_c}
\MoveEqLeft
    \left| \left[ \mu^\top M(\alpha) \frac{\partial P(\alpha)}{\partial \alpha} \Big|_{\alpha=0} \right]_{(s^\prime)} \right| \le \frac{ 2 \cdot \left\| u \right\|_2 }{ (1 - \gamma) \cdot \gamma } \cdot \left[ d_{\mu}^{\pi_\theta}(s^\prime) - (1 - \gamma) \cdot \mu(s^\prime) \right] \\
    &= \frac{ 2 \cdot \left\| u \right\|_2 }{ (1 - \gamma) \cdot \gamma } \cdot \left[ \frac{d_{\mu}^{\pi_\theta}(s^\prime)}{ \mu(s^\prime)} \cdot \mu(s^\prime) - (1 - \gamma) \cdot \mu(s^\prime) \right] \\
    &\le \frac{ 2 \cdot \left\| u \right\|_2 }{ (1 - \gamma) \cdot \gamma } \cdot \left( C_\infty - (1 - \gamma) \right) \cdot \mu(s^\prime). \qquad \left( C_\infty \coloneqq \max_{\pi}{ \left\| \frac{d_{\mu}^{\pi}}{ \mu} \right\|_\infty} < \left\| \frac{1}{\mu} \right\|_\infty < \infty \right)
\end{align}
Combining \cref{eq:non_uniform_smoothness_softmax_general_intermediate_main_2_a,eq:non_uniform_smoothness_softmax_general_intermediate_main_2_b,eq:non_uniform_smoothness_softmax_general_intermediate_main_2_c}, we have,
\begin{align}
\label{eq:non_uniform_smoothness_softmax_general_intermediate_main_2}
\MoveEqLeft
    \left| \mu^\top M(\alpha) \frac{\partial P(\alpha)}{\partial \alpha} M(\alpha) \frac{\partial \Pi(\alpha)}{\partial \alpha} Q^{\pi_{\theta_\alpha}} \Big|_{\alpha=0} \right| \\
    &\le \sum_{s^\prime}{ \frac{ 2 \cdot \left\| u \right\|_2 }{ (1 - \gamma) \cdot \gamma } \cdot \left( C_\infty - (1 - \gamma) \right) \cdot \mu(s^\prime) } \cdot \frac{\left\| u \right\|_2}{ 1 - \gamma } \cdot  \sum_{s}{ d_{s^{\prime}}^{\pi_\theta}(s) \cdot  \left\| H(\pi_{\theta}(\cdot | s)) Q^{\pi_\theta}(s, \cdot) \right\|_2   } \\
    &=  \frac{ 2 \cdot \left( C_\infty - (1 - \gamma) \right) }{ (1 - \gamma)^2 \cdot \gamma } \cdot \sum_{s}{ d_{\mu}^{\pi_\theta}(s) \cdot  \left\| H(\pi_{\theta}(\cdot | s)) Q^{\pi_\theta}(s, \cdot) \right\|_2   } \cdot \left\| u \right\|_2^2  \\
    &\le \frac{ 2 \cdot  \left( C_\infty - (1 - \gamma) \right) \cdot \sqrt{S} }{ (1 - \gamma)^2 \cdot \gamma } \cdot \left[ \sum_{s}{d_{\mu}^{\pi_\theta}(s)^2 \cdot \left\| H(\pi_\theta(\cdot | s)) Q^{\pi_\theta}(s,\cdot) \right\|_2^2 } \right]^\frac{1}{2} \cdot \left\| u \right\|_2^2  \qquad \left( \text{by Cauchy-Schwarz} \right) \\
    &=  \frac{ 2 \cdot \left( C_\infty - (1 - \gamma) \right) \cdot \sqrt{S} }{ (1 - \gamma)\cdot \gamma } \cdot \left\| \frac{\partial V^{\pi_\theta}(\mu)}{\partial \theta }\right\|_2 \cdot \left\| u \right\|_2^2. \qquad \left( \text{by \cref{lem:policy_gradient_norm_softmax}} \right)
\end{align}
Combining \cref{eq:non_uniform_smoothness_softmax_general_intermediate_V_second_derivative_def,eq:non_uniform_smoothness_softmax_general_intermediate_main_1,eq:non_uniform_smoothness_softmax_general_intermediate_main_2},
\begin{align}
\label{eq:non_uniform_smoothness_softmax_general_intermediate_main_3}
    \left| \frac{\partial^2 V^{\pi_{\theta_\alpha}}(\mu)}{\partial \alpha^2} \Big|_{\alpha=0} \right| &\le \left[ 3 + 2 \gamma \cdot \frac{ 2 \cdot  \left( C_\infty - (1 - \gamma) \right) }{ (1 - \gamma) \cdot \gamma } \right] \cdot \sqrt{S} \cdot \left\| \frac{\partial V^{\pi_\theta}(\mu)}{\partial \theta }\right\|_2 \cdot \left\| u \right\|_2^2,
\end{align}
which implies for all $y \in \sR^{S A}$ and $\theta$,
\begin{align}
\label{eq:non_uniform_smoothness_softmax_general_intermediate_main_4}
\MoveEqLeft
    \left| y^\top \frac{\partial^2 V^{\pi_\theta}(\mu)}{\partial \theta^2} y \right| = \left| \left(\frac{y}{ \| y \|_2 } \right)^\top \frac{\partial^2 V^{\pi_\theta}(\mu)}{\partial \theta^2} \left(\frac{y}{ \| y \|_2 } \right) \right| \cdot \| y \|_2^2 \\
    &\le \max_{\| u \|_2 = 1}{ \left| \Big\langle \frac{\partial^2 V^{\pi_\theta}(\mu)}{\partial \theta^2} u , u \Big\rangle \right| } \cdot \| y \|_2^2 \\
    &= \max_{\| u \|_2 = 1}{ \left| \Big\langle\frac{\partial^2 V^{\pi_{\theta_\alpha}}(\mu)}{\partial {\theta_\alpha^2}} \Big|_{\alpha=0} \frac{\partial \theta_\alpha}{\partial \alpha}, \frac{\partial \theta_\alpha}{\partial \alpha} \Big\rangle \right| } \cdot \| y \|_2^2 \\
    &= \max_{\| u \|_2 = 1}{ \left| \Big\langle \frac{\partial}{\partial \theta_\alpha} \left\{ \frac{\partial V^{\pi_{\theta_\alpha}}(\mu)}{\partial \alpha} \right\} \Big|_{\alpha = 0}, \frac{\partial \theta_\alpha}{\partial \alpha} \Big\rangle \right| } \cdot \| y \|_2^2 \\
    &= \max_{\| u \|_2 = 1}{ \left| \frac{\partial^2 V^{\pi_{\theta_\alpha}}(\mu)}{\partial \alpha^2  } \Big|_{\alpha=0} \right| } \cdot \| y \|_2^2 \\
    &\le \left[ 3 + \frac{ 4 \cdot  \left( C_\infty - (1 - \gamma) \right) }{ 1 - \gamma } \right] \cdot \sqrt{S} \cdot \left\| \frac{\partial V^{\pi_\theta}(\mu)}{\partial \theta }\right\|_2 \cdot \| y \|_2^2. \qquad \left(
    \text{by \cref{eq:non_uniform_smoothness_softmax_general_intermediate_main_3}} \right)
\end{align}
Denote $\theta_{\zeta} = \theta + \zeta ( \theta^\prime - \theta )$, where $\zeta \in [0,1]$. According to Taylor's theorem, $\forall s$, $\forall \theta, \ \theta^\prime$,
\begin{align}
\MoveEqLeft
    \left| V^{\pi_{\theta^\prime}}(\mu) - V^{\pi_{\theta}}(\mu) - \Big\langle \frac{\partial V^{\pi_\theta}(\mu)}{\partial \theta}, \theta^\prime - \theta \Big\rangle \right| = \frac{1}{2} \cdot \left| \left( \theta^\prime - \theta \right)^\top \frac{\partial^2 V^{\pi_{\theta_\zeta}}(\mu)}{\partial \theta_\zeta^2} \left( \theta^\prime - \theta \right) \right| \\
    &\le \frac{ 3 \cdot ( 1 - \gamma)  + 4 \cdot \left( C_\infty - (1 - \gamma) \right) }{ 2 \cdot ( 1 - \gamma) } \cdot \sqrt{S} \cdot \left\| \frac{\partial V^{\pi_{\theta_\zeta}}(\mu)}{\partial {\theta_\zeta} }\right\|_2 \cdot \| \theta^\prime - \theta \|_2^2. \qquad \left( \text{by \cref{eq:non_uniform_smoothness_softmax_general_intermediate_main_4}} \right) \qedhere
\end{align}
\end{proof}

\textbf{\cref{lem:non_uniform_smoothness_intermediate_policy_gradient_norm_general}.}
Let $\eta = \frac{ 1 - \gamma }{ 6 \cdot ( 1 - \gamma)  + 8 \cdot \left( C_\infty - (1 - \gamma) \right) } \cdot \frac{ 1}{ \sqrt{S} }$ and
\begin{align}
    \theta^\prime = \theta + \eta \cdot \frac{\partial V^{\pi_\theta}(\mu)}{\partial \theta} \bigg/ \left\| \frac{\partial V^{\pi_\theta}(\mu)}{\partial \theta} \right\|_2.
\end{align}
Denote $\theta_\zeta \coloneqq \theta + \zeta \cdot (\theta^\prime - \theta)$ with some $\zeta \in [0,1]$. We have,
\begin{align}
    \left\| \frac{\partial V^{\pi_{\theta_\zeta}}(\mu)}{\partial \theta_\zeta} \right\|_2 \le 2 \cdot \left\| \frac{\partial V^{\pi_\theta}(\mu)}{\partial \theta} \right\|_2.
\end{align}
\begin{proof}
Using the similar arguments of \cref{lem:non_uniform_smoothness_intermediate_policy_gradient_norm_special} (replacing $3$ in \cref{lem:non_uniform_lojasiewicz_softmax_special} with $\frac{ 3 \cdot ( 1 - \gamma)  + 4 \cdot \left( C_\infty - (1 - \gamma) \right) }{ 1 - \gamma } \cdot \sqrt{S}$ in \cref{lem:non_uniform_lojasiewicz_softmax_general}), we have the results.
\end{proof}

\textbf{\cref{lem:lower_bound_cT_softmax_general}} (Non-vanishing N\L{} coefficient) \textbf{.}
Let \cref{ass:posinit} hold. We have,
$c:=\inf_{s\in \cS,t\ge 1} \pi_{\theta_t}(a^*(s)|s) > 0$, where $\left\{ \theta_t \right\}_{t\ge 1}$ is generated by \cref{alg:normalized_policy_gradient_softmax}.
\begin{proof}
The proof is similar to \citet[Lemma 9]{mei2020global} and is an extension of the proof for  \cref{lem:lower_bound_cT_softmax_special}. Denote $\Delta^*(s) = Q^*(s, a^*(s)) - \max_{a \not= a^*(s)}{ Q^*(s, a) } > 0$ as the optimal value gap of state $s$, where $a^*(s)$ is the action that the optimal policy selects under state $s$, and $\Delta^* = \min_{s \in \gS}{ \Delta^*(s) } > 0$ as the optimal value gap of the MDP. For each state $s \in \gS$, define the following sets:
\begin{align}
    \gR_1(s) &= \left\{ \theta : \frac{\partial V^{\pi_\theta}(\mu)}{\partial \theta(s, a^*(s))} \ge \frac{\partial V^{\pi_\theta}(\mu)}{\partial \theta(s, a)}, \ \forall a \not= a^* \right\}, \\
    \gR_2(s) &= \left\{ \theta :  Q^{\pi_\theta}(s,a^*(s)) \ge Q^*(s,a^*(s)) - \Delta^*(s) / 2 \right\}, \\
    \gR_3(s) &= \left\{ \theta_t: V^{\pi_{\theta_t}}(s) \ge Q^{\pi_{\theta_t}}(s, a^*(s)) - \Delta^*(s) / 2, 
    \text{ for all } t \ge 1 \text{ large enough} \right\}, \\
    \gN_c(s) &= \left\{ \theta : \pi_\theta(a^*(s) | s) \ge \frac{c(s)}{c(s)+1} \right\}, \text{ where } c(s) = \frac{A}{(1-\gamma) \cdot \Delta^*(s)} - 1.
\end{align}
Similarly to the previous proof, we have the following claims: 
\begin{description}
    \item[Claim I.] $\gR_1(s) \cap \gR_2(s) \cap \gR_3(s)$ is a ``nice" region, in the sense that, following a gradient update, (i) if $\theta_{t} \in \gR_1(s) \cap \gR_2(s) \cap \gR_3(s)$, then $\theta_{t+1} \in \gR_1(s) \cap \gR_2(s) \cap \gR_3(s)$;
    while we also have (ii) $\pi_{\theta_{t+1}}(a^*(s) | s) \ge \pi_{\theta_{t}}(a^*(s) | s)$.
    \item[Claim II.] $\gN_c(s) \cap \gR_2(s) \cap \gR_3(s) \subset \gR_1(s) \cap \gR_2(s) \cap \gR_3(s)$.
    \item[Claim III.] There exists a finite time $t_0(s) \ge 1$, such that $\theta_{t_0(s)} \in \gN_c(s) \cap \gR_2(s) \cap \gR_3(s)$, and thus $\theta_{t_0(s)} \in \gR_1(s) \cap \gR_2(s) \cap \gR_3(s)$, which implies $\inf_{t\ge 1} \pi_{\theta_t}(a^*(s) | s) = \min_{1 \le t \le t_0(s)}{ \pi_{\theta_t}(a^*(s) | s) }$.
    \item[Claim IV.] Define $t_0 = \max_{s}{ t_0(s)}$. Then, we have $\inf_{s\in \cS, t\ge 1} \pi_{\theta_t}(a^*(s)|s) = \min_{1 \le t \le t_0}{ \min_{s} \pi_{\theta_t}(a^*(s) | s) }$. 
\end{description}
Clearly, claim IV suffices to prove the lemma since for any $\theta$, $\min_{s,a}\pi_{\theta}(a|s)>0$.
In what follows we provide the proofs of these four claims.
\paragraph{Claim I.}
First we prove part (i) of the claim. If $\theta_{t} \in \gR_1(s) \cap \gR_2(s) \cap \gR_3(s)$, then $\theta_{t+1} \in \gR_1(s) \cap \gR_2(s) \cap \gR_3(s)$. Suppose $\theta_{t} \in \gR_1(s) \cap \gR_2(s) \cap \gR_3(s)$. We have $\theta_{t+1} \in \gR_3(s)$ by the definition of $\gR_3(s)$. We have,\begin{align}
    Q^{\pi_{\theta_t}}(s,a^*(s)) \ge Q^*(s,a^*(s)) - \Delta^*(s) / 2.
\end{align}
According to monotonic improvement of \cref{eq:ascent_lemma_MDP}, we have $V^{\pi_{\theta_{t+1}}}(s^\prime) \ge V^{\pi_{\theta_t}}(s^\prime)$, and 
\begin{align}
    Q^{\pi_{\theta_{t+1}}}(s,a^*(s)) &= Q^{\pi_{\theta_t}}(s,a^*(s)) + Q^{\pi_{\theta_{t+1}}}(s,a^*(s)) - Q^{\pi_{\theta_t}}(s,a^*(s)) \\
    &= Q^{\pi_{\theta_t}}(s,a^*(s)) + \gamma \sum_{s^\prime}{ \gP(s^\prime | s, a^*(s))} \cdot \left[ V^{\pi_{\theta_{t+1}}}(s^\prime) - V^{\pi_{\theta_t}}(s^\prime) \right] \\
    &\ge  Q^{\pi_{\theta_t}}(s,a^*(s)) + 0\\
    & \ge Q^*(s,a^*(s)) - \Delta^*(s) / 2,
\end{align}
which means $\theta_{t+1} \in \gR_2(s)$. Next we prove $\theta_{t+1} \in \gR_1(s)$. Note that $\forall a \not= a^*(s)$,
\begin{align}
\label{eq:lower_bound_cT_softmax_general_intermediate_1}
    Q^{\pi_{\theta_t}}(s,a^*(s)) - Q^{\pi_{\theta_t}}(s,a) &= Q^{\pi_{\theta_t}}(s,a^*(s)) - Q^*(s,a^*(s)) + Q^*(s,a^*(s)) - Q^{\pi_{\theta_t}}(s,a) \\
    &\ge - \Delta^*(s) / 2 + Q^*(s,a^*(s)) - Q^*(s,a) + Q^*(s,a) - Q^{\pi_{\theta_t}}(s,a) \\
    &\ge - \Delta^*(s) / 2 + Q^*(s,a^*(s)) - \max_{a \not= a^*(s)}{ Q^*(s,a) } + Q^*(s,a) - Q^{\pi_{\theta_t}}(s,a) \\
    &= - \Delta^*(s) / 2 + \Delta^*(s) + \gamma \sum_{s^\prime}{ \gP(s^\prime | s, a)} \cdot \left[ V^*(s^\prime) - V^{\pi_{\theta_t}}(s^\prime) \right] \\
    &\ge - \Delta^*(s) / 2 + \Delta^*(s) + 0\\
    & = \Delta^*(s) / 2.
\end{align}
Using similar arguments we also have $Q^{\pi_{\theta_{t+1}}}(s,a^*(s)) - Q^{\pi_{\theta_{t+1}}}(s,a) \ge \Delta^*(s) / 2$.
According to \cref{lem:policy_gradient_softmax},
\begin{align}
    \frac{\partial V^{\pi_{\theta_t}}(\mu)}{\partial \theta_t(s, a)} &= \frac{1}{1-\gamma} \cdot d_{\mu}^{\pi_{\theta_t}}(s) \cdot \pi_{\theta_t}(a | s) \cdot A^{\pi_{\theta_t}}(s, a) \\
    &= \frac{1}{1-\gamma} \cdot  d_{\mu}^{\pi_{\theta_t}}(s) \cdot \pi_{\theta_t}(a | s) \cdot \left[ Q^{\pi_{\theta_t}}(s, a) - V^{\pi_{\theta_t}}(s) \right].
\end{align}
Furthermore, since $\frac{\partial V^{\pi_{\theta_t}}(\mu)}{\partial {\theta_t}(s, a^*(s))} \ge \frac{\partial V^{\pi_{\theta_t}}(\mu)}{\partial {\theta_t}(s, a)}$, we have
\begin{align}
    \pi_{\theta_t}(a^*(s) | s) \cdot \left[Q^{\pi_{\theta_t}}(s, a^*(s)) - V^{\pi_{\theta_t}}(s) \right] \ge \pi_{\theta_t}(a | s) \cdot  \left[Q^{\pi_{\theta_t}}(s, a) - V^{\pi_{\theta_t}}(s) \right].
\end{align}
Similarly to the first part in the proof for \cref{lem:lower_bound_cT_softmax_special}. There are two cases.
Case (a): If $\pi_{\theta_t}(a^*(s) | s) \ge \pi_{\theta_t}(a | s)$, then $\theta_t(s, a^*(s)) \ge \theta_t(s, a)$.  
After an update of the parameters,
\begin{align}
    \theta_{t+1}(s, a^*(s)) &= \theta_{t}(s, a^*(s)) + \eta \cdot \frac{\partial V^{\pi_{\theta_t}}(\mu)}{\partial {\theta_t}(s, a^*(s))} \Big/ \left\|\frac{\partial V^{\pi_{\theta_t}}(\mu)}{\partial {\theta_t}}\right\|_2 \\ 
    &\ge \theta_t(s, a) + \eta \cdot \frac{\partial V^{\pi_{\theta_t}}(\mu)}{\partial {\theta_t}(s, a)} \Big/ \left\|\frac{\partial V^{\pi_{\theta_t}}(\mu)}{\partial {\theta_t} }\right\|_2 = \theta_{t+1}(s, a),
\end{align}
which implies $\pi_{\theta_{t+1}}(a^*(s) | s) \ge \pi_{\theta_{t+1}}(a | s)$. Since $Q^{\pi_{\theta_{t+1}}}(s,a^*(s)) - Q^{\pi_{\theta_{t+1}}}(s,a) \ge \Delta^*(s) / 2 \ge 0$, $\forall a$, we have $Q^{\pi_{\theta_{t+1}}}(s,a^*(s)) - V^{\pi_{\theta_{t+1}}}(s) = Q^{\pi_{\theta_{t+1}}}(s,a^*(s)) - \sum_{a}{ \pi_{\theta_{t+1}}(a | s) \cdot Q^{\pi_{\theta_{t+1}}}(s,a) } \ge 0$, and
\begin{align}
    \pi_{\theta_{t+1}}(a^*(s) | s) \cdot \left[Q^{\pi_{\theta_{t+1}}}(s, a^*(s)) - V^{\pi_{\theta_{t+1}}}(s) \right] \ge \pi_{\theta_{t+1}}(a | s) \cdot  \left[Q^{\pi_{\theta_{t+1}}}(s, a) - V^{\pi_{\theta_{t+1}}}(s) \right],
\end{align}
which is equivalent to $\frac{\partial V^{\pi_{\theta_{t+1}}}(\mu)}{\partial {\theta_{t+1}}(s, a^*(s))} \ge \frac{\partial V^{\pi_{\theta_{t+1}}}(\mu)}{\partial {\theta_{t+1}}(s, a)}$, i.e., $\theta_{t+1} \in \gR_1(s)$.\\

Case (b): If $\pi_{\theta_t}(a^*(s) | s) < \pi_{\theta_t}(a | s)$, then by $\frac{\partial V^{\pi_{\theta_t}}(\mu)}{\partial {\theta_t}(s, a^*(s))} \ge \frac{\partial V^{\pi_{\theta_t}}(\mu)}{\partial {\theta_t}(s, a)}$,
\begin{align}
\MoveEqLeft
    \pi_{\theta_t}(a^*(s) | s) \cdot \left[Q^{\pi_{\theta_t}}(s, a^*(s)) - V^{\pi_{\theta_t}}(s) \right] \ge \pi_{\theta_t}(a | s) \cdot \left[Q^{\pi_{\theta_t}}(s, a) - V^{\pi_{\theta_t}}(s) \right] \\
    &= \pi_{\theta_t}(a | s) \cdot \left[Q^{\pi_{\theta_t}}(s, a^*(s)) - V^{\pi_{\theta_t}}(s) + Q^{\pi_{\theta_t}}(s, a) - Q^{\pi_{\theta_t}}(s, a^*(s)) \right],
\end{align}
which, after rearranging, is equivalent to
\begin{align}
    Q^{\pi_{\theta_t}}(s, a^*(s)) - Q^{\pi_{\theta_t}}(s, a) &\ge \left( 1 - \frac{\pi_{\theta_t}(a^*(s) | s)}{\pi_{\theta_t}(a | s)} \right) \cdot \left[ Q^{\pi_{\theta_t}}(s, a^*(s)) - V^{\pi_{\theta_t}}(s)  \right] \\
    &= \left( 1 - \exp\left\{ \theta_{t}(s, a^*(s)) - \theta_{t}(s, a) \right\} \right) \cdot \left[ Q^{\pi_{\theta_t}}(s, a^*(s)) - V^{\pi_{\theta_t}}(s) \right].
\end{align}
Since $\theta_{t+1} \in \gR_3(s)$, we have, 
\begin{align}
    Q^{\pi_{\theta_{t+1}}}(s, a^*(s)) - V^{\pi_{\theta_{t+1}}}(s) \le \Delta^*(s) / 2 \le Q^{\pi_{\theta_{t+1}}}(s,a^*(s)) - Q^{\pi_{\theta_{t+1}}}(s,a).
\end{align}
On the other hand,
\begin{align}
    &\theta_{t+1}(s, a^*(s)) - \theta_{t+1}(s, a) \\
    &= \theta_{t}(s, a^*(s)) + \eta \cdot \frac{\partial V^{\pi_{\theta_t}}(\mu)}{\partial {\theta_t}(s, a^*(s))} \Big/ \left\|\frac{\partial V^{\pi_{\theta_t}}(\mu)}{\partial {\theta_t} }\right\|_2 - \theta_{t}(s, a) - \eta \cdot \frac{\partial V^{\pi_{\theta_t}}(\mu)}{\partial {\theta_t}(s, a)} \Big/ \left\|\frac{\partial V^{\pi_{\theta_t}}(\mu)}{\partial {\theta_t} }\right\|_2 \\
    &\ge \theta_{t}(s, a^*(s)) - \theta_{t}(s, a),
\end{align}
which implies
\begin{align}
    1 - \exp\left\{ \theta_{t+1}(s, a^*(s)) - \theta_{t+1}(s, a) \right\} \le 1 - \exp\left\{ \theta_{t}(s, a^*(s)) - \theta_{t}(s, a) \right\}.
\end{align}
Furthermore, since $1 - \exp\left\{ \theta_{t}(s, a^*(s)) - \theta_{t}(s, a) \right\} = 1 - \frac{\pi_{\theta_{t}}(a^*(s) | s)}{\pi_{\theta_{t}}(a | s)} > 0$ (in this case $\pi_{\theta_t}(a^*(s) |s) < \pi_{\theta_t}(a | s))$,
\begin{align}
    \left( 1 - \exp\left\{ \theta_{t+1}(s, a^*(s)) - \theta_{t+1}(s, a) \right\} \right) \cdot \left[ Q^{\pi_{\theta_{t+1}}}(s, a^*(s)) - V^{\pi_{\theta_{t+1}}}(s) \right] \le Q^{\pi_{\theta_{t+1}}}(s, a^*(s)) - Q^{\pi_{\theta_{t+1}}}(s, a),
\end{align}
which after rearranging is equivalent to
\begin{align}
    \pi_{\theta_{t+1}}(a^*(s) | s) \cdot \left[Q^{\pi_{\theta_{t+1}}}(s, a^*(s)) - V^{\pi_{\theta_{t+1}}}(s) \right] \ge \pi_{\theta_{t+1}}(a | s) \cdot  \left[Q^{\pi_{\theta_{t+1}}}(s, a) - V^{\pi_{\theta_{t+1}}}(s) \right],
\end{align}
which means $\frac{\partial V^{\pi_{\theta_{t+1}}}(\mu)}{\partial {\theta_{t+1}}(s, a^*(s))} \ge \frac{\partial V^{\pi_{\theta_{t+1}}}(\mu)}{\partial {\theta_{t+1}}(s, a)}$ i.e., $\theta_{t+1} \in \gR_1(s)$. Now we have (i) if $\theta_{t} \in \gR_1(s) \cap \gR_2(s) \cap \gR_3(s)$, then $\theta_{t+1} \in \gR_1(s) \cap \gR_2(s) \cap \gR_3(s)$.\\

Let us now turn to proving part~(ii).
We have $\pi_{\theta_{t+1}}(a^*(s) | s) \ge \pi_{\theta_{t}}(a^*(s) | s)$. If $\theta_{t} \in \gR_1(s) \cap \gR_2(s) \cap \gR_3(s)$, then $\frac{\partial V^{\pi_{\theta_t}}(\mu)}{\partial {\theta_t}(s, a^*(s))} \ge \frac{\partial V^{\pi_{\theta_t}}(\mu)}{\partial {\theta_t}(s, a)}, \ \forall a \not= a^*$. After an update of the parameters,
\begin{align}
\MoveEqLeft
    \pi_{\theta_{t+1}}(a^*(s) | s) = \frac{\exp\left\{ \theta_{t+1}(s, a^*(s)) \right\}}{ \sum_{a}{ \exp\left\{ \theta_{t+1}(s, a) \right\}} } \\
    &= \frac{\exp\left\{ \theta_{t}(s, a^*(s)) + \eta \cdot \frac{\partial V^{\pi_{\theta_t}}(\mu)}{\partial {\theta_t}(s, a^*(s))} \Big/ \left\|\frac{\partial V^{\pi_{\theta_t}}(\mu)}{\partial {\theta_t}}\right\|_2 \right\}}{ \sum_{a}{ \exp\left\{ \theta_{t}(s, a) + \eta \cdot \frac{\partial V^{\pi_{\theta_t}}(\mu)}{\partial {\theta_t}(s, a)} \Big/ \left\|\frac{\partial V^{\pi_{\theta_t}}(\mu)}{\partial {\theta_t} }\right\|_2 \right\}} } \\
    &\ge \frac{\exp\left\{ \theta_{t}(s, a^*(s)) + \eta \cdot \frac{\partial V^{\pi_{\theta_t}}(\mu)}{\partial {\theta_t}(s, a^*(s))} \Big/ \left\|\frac{\partial V^{\pi_{\theta_t}}(\mu)}{\partial {\theta_t}}\right\|_2 \right\}}{ \sum_{a}{ \exp\left\{ \theta_{t}(s, a) + \eta \cdot \frac{\partial V^{\pi_{\theta_t}}(\mu)}{\partial {\theta_t}(s, a^*(s))} \Big/ \left\|\frac{\partial V^{\pi_{\theta_t}}(\mu)}{\partial {\theta_t}}\right\|_2 \right\}} } 
    \qquad \left( \text{because } \frac{\partial V^{\pi_{\theta_t}}(\mu)}{\partial {\theta_t}(s, a^*(s))} \ge \frac{\partial V^{\pi_{\theta_t}}(\mu)}{\partial {\theta_t}(s, a)} \right) \\
    &= \frac{\exp\left\{ {\theta_t}(s, a^*(s)) \right\}}{ \sum_{a}{ \exp\left\{ {\theta_t}(s, a) \right\}} } = \pi_{\theta_t}(a^*(s) | s).
\end{align}

\paragraph{Claim II, Claim III, Claim IV.} The proof of those claims are exactly the same as \citet[Lemma 9]{mei2020global}, since they do not involve the update rule.
\end{proof}

\textbf{\cref{thm:final_rates_normalized_softmax_pg_general}.}
Let \cref{ass:posinit} hold and
let $\{\theta_t\}_{t\ge 1}$ be generated using \cref{alg:normalized_policy_gradient_softmax} with 
\begin{align}
    \eta = \frac{ 1 - \gamma }{ 6 \cdot ( 1 - \gamma)  + 8 \cdot \left( C_\infty - (1 - \gamma) \right) } \cdot \frac{ 1}{ \sqrt{S} },
\end{align}
where $C_\infty \coloneqq \max_{\pi}{ \left\| \frac{d_{\mu}^{\pi}}{ \mu} \right\|_\infty} < \infty$. Denote $C_\infty^\prime \coloneqq \max_{\pi}{ \left\| \frac{d_{\rho}^{\pi}}{ \mu} \right\|_\infty}$. Let $c$ be the positive constant from \cref{lem:lower_bound_cT_softmax_general}.
We have, for all $t\ge 1$,
\begin{align}
    V^*(\rho) - V^{\pi_{\theta_t}}(\rho) \le \frac{ \left( V^*(\mu) - V^{\pi_{\theta_1}}(\mu) \right) \cdot C_\infty^\prime }{ 1 - \gamma} \cdot e^{ - C \cdot (t-1)},
\end{align}
where
\begin{align}
    C = \frac{ ( 1 - \gamma)^2 \cdot c }{ 12 \cdot ( 1 - \gamma)  + 16 \cdot \left( C_\infty - (1 - \gamma) \right) } \cdot \frac{ 1}{ S } \cdot \left\| \frac{d_{\mu}^{\pi^*}}{\mu} \right\|_\infty^{-1}.
\end{align}
\begin{proof}
First note that for any $\theta$ and $\mu$,
\begin{align}
\label{eq:stationary_distribution_dominate_initial_state_distribution}
    d_{\mu}^{\pi_\theta}(s) &= \expectation_{s_0 \sim \mu}{ \left[ d_{\mu}^{\pi_\theta}(s) \right] } \\
    &= \expectation_{s_0 \sim \mu}{ \left[ (1 - \gamma) \cdot  \sum_{t=0}^{\infty}{ \gamma^t \probability(s_t = s | s_0, \pi_\theta, \gP) } \right] } \\
    &\ge \expectation_{s_0 \sim \mu}{ \left[ (1 - \gamma) \cdot  \probability(s_0 = s | s_0)  \right] } \\
    &= (1 - \gamma) \cdot \mu(s).
\end{align}
Next, according to \cref{lem:value_suboptimality}, we have,
\begin{align}
\MoveEqLeft
    V^*(\rho) - V^{\pi_\theta}(\rho) = \frac{1}{1 - \gamma} \sum_{s}{ d_{\rho}^{\pi_\theta}(s)  \sum_{a}{ \left( \pi^*(a | s) - \pi_\theta(a | s) \right) \cdot Q^*(s,a) } } \\
    &= \frac{1}{1 - \gamma} \sum_{s} \frac{d_{\rho}^{\pi_\theta}(s)}{d_{\mu}^{\pi_\theta}(s)} \cdot d_{\mu}^{\pi_\theta}(s) \sum_{a}{ \left( \pi^*(a | s) - \pi_\theta(a | s) \right) \cdot Q^*(s,a) } \\
    &\le \frac{1}{1 - \gamma} \cdot \left\| \frac{d_{\rho}^{\pi_\theta}}{d_{\mu}^{\pi_\theta}} \right\|_\infty \sum_{s} d_{\mu}^{\pi_\theta}(s) \sum_{a}{ \left( \pi^*(a | s) - \pi_\theta(a | s) \right) \cdot Q^*(s,a) } \qquad \left( \sum_{a}{ \left( \pi^*(a | s) - \pi_\theta(a | s) \right) \cdot Q^*(s,a) } \ge 0 \right) \\
    &\le \frac{1}{(1 - \gamma)^2} \cdot \left\| \frac{d_{\rho}^{\pi_\theta}}{\mu} \right\|_\infty \sum_{s} d_{\mu}^{\pi_\theta}(s) \sum_{a}{ \left( \pi^*(a | s) - \pi_\theta(a | s) \right) \cdot Q^*(s,a) } 
    \qquad \left( \text{by \cref{eq:stationary_distribution_dominate_initial_state_distribution} and } \min_s\mu(s)>0 \right)
    \\
    &\le \frac{1}{(1 - \gamma)^2} \cdot C_\infty^\prime \cdot \sum_{s} d_{\mu}^{\pi_\theta}(s) \sum_{a}{ \left( \pi^*(a | s) - \pi_\theta(a | s) \right) \cdot Q^*(s,a) } \\
    &= \frac{1}{1 - \gamma} \cdot C_\infty^\prime \cdot \left[ V^*(\mu) - V^{\pi_\theta}(\mu) \right]. \qquad \left ( \text{by \cref{lem:value_suboptimality}} \right)
\end{align}
Denote $\theta_{\zeta_t} \coloneqq \theta_t + \zeta_t \cdot (\theta_{t+1} - \theta_t)$ with some $\zeta_t \in [0,1]$. And note $ \eta = \frac{ 1 - \gamma }{ 6 \cdot ( 1 - \gamma)  + 8 \cdot \left( C_\infty - (1 - \gamma) \right) } \cdot \frac{ 1}{ \sqrt{S} } $.
According to \cref{lem:non_uniform_smoothness_softmax_general}, we have,
\begin{align}
\MoveEqLeft
    \left| V^{\pi_{\theta_{t+1}}}(\mu) - V^{\pi_{\theta_t}}(\mu) - \Big\langle \frac{\partial V^{\pi_{\theta_t}}(\mu)}{\partial \theta_t}, \theta_{t+1} - \theta_t \Big\rangle \right| \\
    &\le \frac{ 3 \cdot ( 1 - \gamma)  + 4 \cdot \left( C_\infty - (1 - \gamma) \right) }{ 2 \cdot ( 1 - \gamma) } \cdot \sqrt{S} \cdot \left\| \frac{\partial V^{\pi_{\theta_{\zeta_t}}}(\mu)}{\partial {\theta_{\zeta_t}} }\right\|_2 \cdot \| \theta_{t+1} - \theta_t \|_2^2 \\
    &\le \frac{ 3 \cdot ( 1 - \gamma)  + 4 \cdot \left( C_\infty - (1 - \gamma) \right) }{ 1 - \gamma} \cdot \sqrt{S} \cdot \left\| \frac{\partial V^{\pi_{\theta_t}}(\mu)}{\partial {\theta_t} }\right\|_2 \cdot \| \theta_{t+1} - \theta_t \|_2^2. \qquad \left( \text{by \cref{lem:non_uniform_smoothness_intermediate_policy_gradient_norm_general}} \right)
\end{align}
Denote $\delta_t = V^*(\mu) - V^{\pi_{\theta_{t}}}(\mu)$. We have,
\begin{align}\label{eq:ascent_lemma_MDP}
\MoveEqLeft
    \delta_{t+1} - \delta_t = V^{\pi_{\theta_{t}}}(\mu) - V^{\pi_{\theta_{t+1}}}(\mu) \\
    &\le - \Big\langle \frac{\partial V^{\pi_{\theta_t}}(\mu)}{\partial \theta_t}, \theta_{t+1} - \theta_t \Big\rangle + \frac{ 3 \cdot ( 1 - \gamma)  + 4 \cdot \left( C_\infty - (1 - \gamma) \right) }{ 1 - \gamma } \cdot \sqrt{S} \cdot \left\| \frac{\partial V^{\pi_{\theta_t}}(\mu)}{\partial {\theta_t} }\right\|_2 \cdot \| \theta_{t+1} - \theta_t \|_2^2 \\
    &= - \frac{ 1 - \gamma }{ 12 \cdot ( 1 - \gamma)  + 16 \cdot \left( C_\infty - (1 - \gamma) \right) } \cdot \frac{ 1}{ \sqrt{S} } \cdot \left\| \frac{\partial V^{\pi_{\theta_t}}(\mu)}{\partial {\theta_t} }\right\|_2 \qquad \left( \text{using the value of } \eta \right) \\
    &\le - \frac{ 1 - \gamma }{ 12 \cdot ( 1 - \gamma)  + 16 \cdot \left( C_\infty - (1 - \gamma) \right) } \cdot \frac{ 1}{ \sqrt{S} } \cdot \frac{ \min_s{ \pi_{\theta_t}(a^*(s)|s) } }{ \sqrt{S} \cdot  \left\| d_{\mu}^{\pi^*} / d_{\mu}^{\pi_{\theta_t}} \right\|_\infty } \cdot \delta_t \qquad \left( \text{by \cref{lem:non_uniform_lojasiewicz_softmax_general}} \right) \\
    &\le - \frac{ ( 1 - \gamma)^2 }{ 12 \cdot ( 1 - \gamma)  + 16 \cdot \left( C_\infty - (1 - \gamma) \right) } \cdot \frac{ 1}{ S } \cdot \left\| \frac{d_{\mu}^{\pi^*}}{\mu} \right\|_\infty^{-1} \cdot \inf_{s\in \cS,t\ge 1} \pi_{\theta_t}(a^*(s)|s) \cdot \delta_t,
\end{align}
where the last inequality is by $d_{\mu}^{\pi_{\theta_t}}(s) \ge (1 - \gamma) \cdot \mu(s)$ (cf. \cref{eq:stationary_distribution_dominate_initial_state_distribution}). According to \cref{lem:lower_bound_cT_softmax_general}, $c=\inf_{s\in \cS,t\ge 1} \pi_{\theta_t}(a^*(s)|s) > 0$. Therefore we have,
\begin{align}
    V^*(\mu) - V^{\pi_{\theta_{t}}}(\mu) \le  \left( V^*(\mu) - V^{\pi_{\theta_1}}(\mu) \right) \cdot \exp\left\{ - \frac{ ( 1 - \gamma)^2 \cdot c \cdot (t-1)}{ 12 \cdot ( 1 - \gamma)  + 16 \cdot \left( C_\infty - (1 - \gamma) \right) } \cdot \frac{ 1}{ S } \cdot \left\| \frac{d_{\mu}^{\pi^*}}{\mu} \right\|_\infty^{-1}  \right\},
\end{align}
which leads to the final result,
\begin{align}
    V^*(\rho) - V^{\pi_{\theta_t}}(\rho) \le \frac{ \left( V^*(\mu) - V^{\pi_{\theta_1}}(\mu) \right) \cdot C_\infty^\prime }{ 1 - \gamma} \cdot \exp\left\{ - \frac{ ( 1 - \gamma)^2 \cdot c \cdot (t-1) }{ 12 \cdot ( 1 - \gamma)  + 16 \cdot \left( C_\infty - (1 - \gamma) \right) } \cdot \frac{ 1}{ S } \cdot \left\| \frac{d_{\mu}^{\pi^*}}{\mu} \right\|_\infty^{-1}  \right\},
\end{align}
thus, finishing the proof.
\end{proof}

\section{Proofs for \cref{sec:generalized_linear_model}}
\label{sec:proofs_generalized_linear_model}

\textbf{\cref{lem:non_uniform_lojasiewicz_glm_sigmoid_realizable}} (N\L{}) \textbf{.}
Denote $u(\theta) \coloneqq \min_{i}{ \left\{ \pi_i \cdot \left( 1 - \pi_i \right) \right\} }$, and $v \coloneqq \min_{i}{ \left\{  \pi_i^*\cdot \left( 1 - \pi_i^* \right) \right\} }$. We have, for all $i \in [N]$,
\begin{align}
    \left\| \frac{\partial \gL(\theta)}{\partial \theta} \right\|_2 \ge 8 \cdot u(\theta) \cdot \min\left\{ u(\theta), v \right\} \cdot \sqrt{\lambda_\phi} \cdot \left[ \frac{1}{N} \cdot \sum_{i=1}^{N}{ \left( \pi_i - \pi_i^* \right)^2 } \right]^{\frac{1}{2}},
\end{align}
where $\lambda_\phi$ is the smallest positive eigenvalue of $\frac{1}{N} \cdot \sum_{i=1}^{N}{ \phi_i \phi_i^\top}$.
\begin{proof}
Denote $\pi_i^\prime \coloneqq \sigmoid(z_i^\prime)$, where $z_i^\prime \coloneqq \phi_i^\top \theta + \zeta \cdot \left( \phi_i^\top \theta - \phi_i^\top \theta^* \right)$ for some $\zeta \in [0, 1]$. We have,
\begin{align}
\MoveEqLeft
\label{eq:non_uniform_lojasiewicz_glm_sigmoid_realizable_intermediate_1}
    \left( \pi_i - \pi_i^* \right)^2 = \left( \pi_i - \pi_i^* \right) \cdot \frac{ d  \sigmoid( z_i^\prime ) }{ d z_i^\prime } \cdot \left( \phi_i^\top \theta - \phi_i^\top \theta^* \right) \qquad \left( \text{by the mean value theorem} \right) \\
    &= \pi_i^\prime \cdot \left( 1 - \pi_i^\prime \right) \cdot \left( \pi_i - \pi_i^* \right) \cdot \left( \phi_i^\top \theta - \phi_i^\top \theta^* \right) \\
    &\le \frac{1}{4} \cdot \left( \pi_i - \pi_i^* \right) \cdot \left( \phi_i^\top \theta - \phi_i^\top \theta^* \right). \qquad \left( x \cdot (1 - x ) \le \frac{1}{4}, \ \forall x \in [0, 1]; \ \left( \pi_i - \pi_i^* \right) \cdot \left( \phi_i^\top \theta - \phi_i^\top \theta^* \right) \ge 0 \right)
\end{align}
Therefore we have,
\begin{align}
\label{eq:non_uniform_lojasiewicz_glm_sigmoid_realizable_intermediate_2}
\MoveEqLeft
    \frac{1}{N} \cdot \sum_{i=1}^{N}{ \left( \pi_i - \pi_i^* \right)^2 } \le \frac{1}{4N} \cdot \sum_{i=1}^{N}{ \left( \pi_i - \pi_i^* \right) \cdot \left( \phi_i^\top \theta - \phi_i^\top \theta^* \right) } \qquad \left( \text{by \cref{eq:non_uniform_lojasiewicz_glm_sigmoid_realizable_intermediate_1}} \right) \\
    &= \frac{1}{4N} \cdot \sum_{i=1}^{N}{ \frac{1}{\pi_i \cdot \left( 1 - \pi_i \right)} \cdot \pi_i \cdot \left( 1 - \pi_i \right) \cdot \left( \pi_i - \pi_i^* \right) \cdot \left( \phi_i^\top \theta - \phi_i^\top \theta^* \right) } \\
    &\le \frac{1}{4N} \cdot \frac{1}{ \min_{i}{ \pi_i \cdot \left( 1 - \pi_i \right) } } \cdot \sum_{i=1}^{N}{ \pi_i \cdot \left( 1 - \pi_i \right) \cdot \left( \pi_i - \pi_i^* \right) \cdot \left( \phi_i^\top \theta - \phi_i^\top \theta^* \right) } \qquad \left( \left( \pi_i - \pi_i^* \right) \cdot \left( \phi_i^\top \theta - \phi_i^\top \theta^* \right) \ge 0 \right) \\
    &= \frac{1}{8} \cdot \frac{1}{ \min_{i}{ \pi_i \cdot \left( 1 - \pi_i \right) } } \cdot \left( \frac{2}{N} \cdot \sum_{i=1}^{N}{ \pi_i \cdot \left( 1 - \pi_i \right) \cdot \left( \pi_i - \pi_i^* \right) \cdot \phi_i } \right)^\top \left( \theta - \theta^* - c \cdot v_{\phi, \bot} \right) \\
    &= \frac{1}{8} \cdot \frac{1}{ \min_{i}{ \pi_i \cdot \left( 1 - \pi_i \right) } } \cdot \left( \frac{\partial \gL(\theta)}{\partial \theta} \right)^\top \left( \theta - \theta^* - c \cdot v_{\phi, \bot} \right) \qquad \left( \frac{\partial \gL(\theta)}{\partial \theta} = \frac{2}{N} \cdot \sum_{i=1}^{N}{ \pi_i \cdot \left( 1 - \pi_i \right) \cdot \left( \pi_i - \pi_i^* \right) \cdot \phi_i } \right) \\
    &\le \frac{1}{8} \cdot \frac{1}{ \min_{i}{ \pi_i \cdot \left( 1 - \pi_i \right) } } \cdot \left\| \frac{\partial \gL(\theta)}{\partial \theta} \right\|_2 \cdot \left\| \theta - \theta^* - c \cdot v_{\phi, \bot} \right\|_2 \qquad \left( \text{by Cauchy-Schwarz} \right) \\
    &= \frac{1}{8} \cdot \frac{1}{ u(\theta) } \cdot \left\| \frac{\partial \gL(\theta)}{\partial \theta} \right\|_2 \cdot \left\| \theta - \theta^* - c \cdot v_{\phi, \bot} \right\|_2, \qquad \left( u(\theta) \coloneqq \min_{i}{ \left\{ \pi_i \cdot \left( 1 - \pi_i \right) \right\} } \right)
\end{align}
where $v_{\phi, \bot}$ is orthogonal to the space $\text{Span}\left\{ \phi_1, \phi_2, \dots, \phi_N \right\}$, and $\theta - \theta^* - c \cdot v_{\phi, \bot}$ refers to the vector after cutting off all the components $v_{\phi, \bot}$ from $\theta - \theta^*$, such that $\theta - \theta^* - c \cdot v_{\phi, \bot} \in \text{Span}\left\{ \phi_1, \phi_2, \dots, \phi_N \right\}$. Next, we have, 
\begin{align}
\MoveEqLeft
\label{eq:non_uniform_lojasiewicz_glm_sigmoid_realizable_intermediate_3}
    \frac{1}{N} \cdot \sum_{i=1}^{N}{ \left( \pi_i - \pi_i^* \right)^2 } = \frac{1}{N} \cdot \sum_{i=1}^{N}{ \left( \frac{ d  \sigmoid( z_i^\prime ) }{ d z_i^\prime } \right)^2 \cdot \left( \phi_i^\top \theta - \phi_i^\top \theta^* \right)^2 } \qquad \left( \text{by the mean value theorem} \right) \\
    &= \frac{1}{N} \cdot \sum_{i=1}^{N}{ \left( \pi_i^\prime \right)^2 \cdot \left( 1 - \pi_i^\prime \right)^2 \cdot \left( \phi_i^\top \theta - \phi_i^\top \theta^* \right)^2 } \qquad \left( \text{by \cref{eq:non_uniform_lojasiewicz_glm_sigmoid_realizable_intermediate_1}} \right) \\
    &\ge \min_{i}\left\{ \left( \pi_i^\prime \right)^2 \cdot \left( 1 - \pi_i^\prime \right)^2 \right\} \cdot \frac{1}{N} \cdot \sum_{i=1}^{N}{ \left( \phi_i^\top \theta - \phi_i^\top \theta^* \right)^2 } \\
    &= \min_{i}\left\{ \left( \pi_i^\prime \right)^2 \cdot \left( 1 - \pi_i^\prime \right)^2 \right\} \cdot \left( \theta - \theta^* \right)^\top \left( \frac{1}{N} \cdot \sum_{i=1}^{N}{ \phi_i \phi_i^\top} \right) \left( \theta - \theta^* \right) \\
    &= \min_{i}\left\{ \left( \pi_i^\prime \right)^2 \cdot \left( 1 - \pi_i^\prime \right)^2 \right\} \cdot \left( \theta - \theta^* - c \cdot v_{\phi, \bot} \right)^\top \left( \frac{1}{N} \cdot \sum_{i=1}^{N}{ \phi_i \phi_i^\top} \right) \left( \theta - \theta^* - c \cdot v_{\phi, \bot} \right) \\
    &\ge \min\left\{ u(\theta)^2, v^2 \right\} \cdot \left( \theta - \theta^* - c \cdot v_{\phi, \bot} \right)^\top \left( \frac{1}{N} \cdot \sum_{i=1}^{N}{ \phi_i \phi_i^\top} \right) \left( \theta - \theta^* - c \cdot v_{\phi, \bot} \right) \qquad \left( v \coloneqq \min_{i}{ \left\{ \pi_i^*\cdot \left( 1 - \pi_i^* \right) \right\} } \right) \\
    &\ge \min\left\{ u(\theta)^2, v^2 \right\} \cdot \lambda_{\phi} \cdot \left\| \theta - \theta^* - c \cdot v_{\phi, \bot} \right\|_2^2,
\end{align}
where $\lambda_{\phi}$ is the smallest positive eigenvalue of $\frac{1}{N} \cdot \sum_{i=1}^{N}{ \phi_i \phi_i^\top}$. Therefore, we have,
\begin{align}
\MoveEqLeft
    \frac{1}{N} \cdot \sum_{i=1}^{N}{ \left( \pi_i - \pi_i^* \right)^2 } \le \frac{1}{8} \cdot \frac{1}{ u(\theta) } \cdot \left\| \frac{\partial \gL(\theta)}{\partial \theta} \right\|_2 \cdot \left\| \theta - \theta^* - c \cdot v_{\phi, \bot}  \right\|_2 \qquad \left( \text{by \cref{eq:non_uniform_lojasiewicz_glm_sigmoid_realizable_intermediate_2}} \right) \\
    &\le \frac{1}{8} \cdot \frac{1}{ u(\theta) } \cdot \left\| \frac{\partial \gL(\theta)}{\partial \theta} \right\|_2 \cdot \frac{1}{ \min\left\{ u(\theta), v \right\} } \cdot \frac{1}{ \sqrt{\lambda_\phi} } \cdot \left[ \frac{1}{N} \cdot \sum_{i=1}^{N}{ \left( \pi_i - \pi_i^* \right)^2 } \right]^{\frac{1}{2}}, \qquad \left( \text{by \cref{eq:non_uniform_lojasiewicz_glm_sigmoid_realizable_intermediate_3}} \right)
\end{align}
which implies,
\begin{equation*}
    \left\| \frac{\partial \gL(\theta)}{\partial \theta} \right\|_2 \ge 8 \cdot u(\theta) \cdot \min\left\{ u(\theta), v \right\} \cdot \sqrt{\lambda_\phi} \cdot \left[ \frac{1}{N} \cdot \sum_{i=1}^{N}{ \left( \pi_i - \pi_i^* \right)^2 } \right]^{\frac{1}{2}}. \qedhere
\end{equation*}
\end{proof}

\textbf{\cref{lem:non_uniform_smoothness_glm_sigmoid_realizable}.}
Denote $u(\theta) \coloneqq \min_{i}{ \left\{ \pi_i \cdot \left( 1 - \pi_i \right) \right\} }$, $v \coloneqq \min_{i}{ \left\{  \pi_i^*\cdot \left( 1 - \pi_i^* \right) \right\} }$, and $\lambda_\phi$ is the smallest positive eigenvalue of $\frac{1}{N} \cdot \sum_{i=1}^{N}{ \phi_i \phi_i^\top}$. We have, $\gL(\theta)$ satisfies $\beta$ smoothness with 
\begin{align}
    \beta = \frac{3}{8} \cdot \max_{i \in [N]}{ \left\| \phi_i \right\|_2^2 },
\end{align}
and $\beta(\theta)$ NS with
\begin{align}
    \beta(\theta) = L_1 \cdot \left\| \frac{\partial \gL(\theta)}{\partial \theta} \right\|_2 + L_0 \cdot \left( \left\| \frac{\partial \gL(\theta)}{\partial \theta} \right\|_2^2 \Big/ \gL(\theta) \right),
\end{align}
where
\begin{align}
    L_1 = \frac{ \max_{i}{ \left\| \phi_i \right\|_2^2 } }{32\cdot(\min\{u(\theta), v\}\cdot\sqrt{\lambda_\phi})^{3/2}}, \text{ and }
    L_0 = \frac{17\cdot \max_{i}{ \left\| \phi_i \right\|_2^2 } }{512\cdot u(\theta)^2 \cdot \min\left\{ u(\theta)^2, v^2 \right\} \cdot \lambda_\phi}.
\end{align}
\begin{proof}
Note that the gradient of $\gL(\theta)$ is,
\begin{align}
    \frac{\partial \gL(\theta)}{\partial \theta} = \frac{2}{N} \cdot \sum_{i=1}^{N}{ \pi_i \cdot \left( 1 - \pi_i \right) \cdot \left( \pi_i - \pi_i^* \right) \cdot \phi_i } \in \sR^d.
\end{align} 
Denote the second order derivative (Hessian) of $\gL(\theta)$ as,
\begin{align}
    S(\theta) &\coloneqq \frac{\partial}{\partial \theta} \left\{ \frac{\partial \gL(\theta)}{\partial \theta} \right\} \in \sR^{d \times d}.
\end{align}
For all $j , k \in [d]$, we calculate the corresponding component value of $S(\theta)$ matrix as follows,
\begin{align}
\label{eq:non_uniform_smoothness_glm_sigmoid_realizable_intermediate_1}
    S_{(j, k)} &= \frac{d}{d \theta(k)} \left\{ \frac{2}{N} \cdot \sum_{i=1}^{N}{ \pi_i \cdot \left( 1 - \pi_i \right) \cdot \left( \pi_i - \pi_i^* \right) \cdot \phi_i(j) } \right\} \\
    &= \frac{2}{N} \cdot \sum_{i=1}^{N}{ \frac{d \left\{ \pi_i \cdot \left( 1 - \pi_i \right) \cdot \left( \pi_i - \pi_i^* \right) \right\} }{d \theta(k)} \cdot \phi_i(j) } \\
    &= \frac{2}{N} \cdot \sum_{i=1}^{N}{ \frac{d \left\{ \pi_i \cdot \left( 1 - \pi_i \right) \cdot \left( \pi_i - \pi_i^* \right) \right\} }{d \left\{ \phi_i^\top \theta \right\} } \cdot \frac{d \left\{ \phi_i^\top \theta \right\}}{ d \theta(k)} \cdot \phi_i(j) } \\
    &= \frac{2}{N} \cdot \sum_{i=1}^{N}{ \left[ \pi_i \cdot \left( 1 - \pi_i \right)^2 \cdot \left( \pi_i - \pi_i^* \right) - \pi_i^2 \cdot \left( 1 - \pi_i \right) \cdot \left( \pi_i - \pi_i^* \right) + \pi_i^2 \cdot \left( 1 - \pi_i \right)^2 \right] \cdot \phi_i(k) \cdot \phi_i(j) } \\
    &= \frac{2}{N} \cdot \sum_{i=1}^{N}{ \left[ \pi_i \cdot \left( 1 - \pi_i \right) \cdot \left( 1 - 2 \pi_i \right) \cdot \left( \pi_i - \pi_i^* \right) + \pi_i^2 \cdot \left( 1 - \pi_i \right)^2 \right] \cdot \phi_i(k) \cdot \phi_i(j) }.
\end{align}
To calculate the smoothness coefficient, take a vector $z \in \sR^d$. We have,
\begin{align}
\label{eq:non_uniform_smoothness_glm_sigmoid_realizable_intermediate_2}
\MoveEqLeft
    \left| z^\top S(\theta) z \right| = \left| \sum_{j = 1}^{d} \sum_{k = 1}^{d}{ S_{(j, k)} \cdot z(j) \cdot z(k) } \right| \\
    &= \left| \frac{2}{N} \cdot \sum_{i=1}^{N}{ \left[ \pi_i \cdot \left( 1 - \pi_i \right) \cdot \left( 1 - 2 \pi_i \right) \cdot \left( \pi_i - \pi_i^* \right) + \pi_i^2 \cdot \left( 1 - \pi_i \right)^2 \right] \cdot \left( \phi_i^\top z \right)^2 } \right| \qquad \left( \text{by \cref{eq:non_uniform_smoothness_glm_sigmoid_realizable_intermediate_1}} \right) \\
    &\le \frac{2}{N} \cdot \max_{i}{ \left( \phi_i^\top z \right)^2 } \cdot \sum_{i=1}^{N}{ \left| \pi_i \cdot \left( 1 - \pi_i \right) \cdot \left( 1 - 2 \pi_i \right) \cdot \left( \pi_i - \pi_i^* \right) + \pi_i^2 \cdot \left( 1 - \pi_i \right)^2 \right| } \qquad \left( \text{by H{\" o}lder's inequality} \right) \\
    & \le \frac{2}{N} \cdot \max_{i}{ \left( \phi_i^\top z \right)^2 } \cdot \sum_{i=1}^{N}{ \left[ \pi_i \cdot \left( 1 - \pi_i \right) \cdot \left| 1 - 2 \pi_i \right| \cdot \left| \pi_i - \pi_i^* \right| + \pi_i^2 \cdot \left( 1 - \pi_i \right)^2 \right] } \qquad \left( \text{by triangle inequality} \right) \\
    &\le \frac{2}{N} \cdot \max_{i}{ \left( \phi_i^\top z \right)^2 } \cdot \sum_{i=1}^{N}{ \left[ \frac{1}{8} + \frac{1}{16} \right] } \qquad \left( x \cdot (1 - x) \le 1/4, \text{ and } x \cdot (1 - x) \cdot | 1 - 2x | \le 1/8, \ \forall x \in [0, 1] \right) \\
    &= \frac{3}{8} \cdot \max_{i}{ \left[ \phi_i^\top \left( \frac{z}{\left\| z \right\|_2 } \right) \right]^2 } \cdot \left\| z \right\|_2^2 \\
    &\le \frac{3}{8} \cdot \max_{i}{ \left\| \phi_i \right\|_2^2 } \cdot \left\| z \right\|_2^2.
\end{align}
Therefore, $\gL(\theta)$ satisfies $\beta$ (uniform) smoothness with $\beta = \frac{3}{8} \cdot \max_{i}{ \left\| \phi_i \right\|_2^2 }$. Next, we calculate the NS. We have,
\begin{align}
\MoveEqLeft
    \sum_{i=1}^{N}\pi_i^2\cdot(1-\pi_i)^2 \cdot \gL(\theta) = \sum_{i=1}^{N}\pi_i^2\cdot(1-\pi_i)^2 \cdot \frac{1}{N} \cdot \sum_{j=1}^{N}(\pi_j - \pi_j^*)^2 \\
    &\le \frac{N}{16}\cdot \frac{1}{N} \cdot \sum_{j=1}^{N}(\pi_j - \pi_j^*)^2 \\
    &\le \frac{N}{16}\cdot \frac{1}{64 \cdot u(\theta)^2 \cdot \min\left\{ u(\theta)^2, v^2 \right\} \cdot \lambda_\phi} \cdot\left\|\frac{\partial\gL(\theta)}{\partial \theta}\right\|_2^2, \qquad \left( \text{by \cref{lem:non_uniform_lojasiewicz_glm_sigmoid_realizable}} \right)
\end{align}
which implies, 
\begin{align}
\label{eq:non_uniform_smoothness_glm_sigmoid_realizable_intermediate_3}
    \sum_{i=1}^{N}\pi_i^2\cdot(1-\pi_i)^2 &\le \frac{N}{2} \cdot \frac{1}{512 \cdot u(\theta)^2 \cdot \min\left\{ u(\theta)^2, v^2 \right\} \cdot \lambda_\phi} \cdot \left\|\frac{\partial\gL(\theta)}{\partial \theta}\right\|_2^2\Big/\gL(\theta).
\end{align}
According to \cref{eq:non_uniform_lojasiewicz_glm_sigmoid_realizable_intermediate_3}, we have
\begin{align}
    \sum_{i=1}^{N}\frac{(\pi_i - \pi_i^* )^2}{\sqrt{\gL(\theta)} \cdot \left\|\theta-\theta^* - c \cdot v_{\phi, \bot} \right\|_2^{3/2}} &\ge  \sum_{i=1}^{N}\frac{(\pi_i - \pi_i^*  )^2}{\sqrt{\gL(\theta)}}\cdot(\min\{u(\theta)^2, v^2\}\cdot\lambda_\phi)^{3/4}\cdot\frac{1}{\gL(\theta)^{3/4}}\\
    &= (\min\{u(\theta)^2, v^2\}\cdot\lambda_\phi)^{3/4}\cdot\sum_{i=1}^{N}\frac{(\pi_i - \pi_i^* )^2}{\gL(\theta)^{5/4}}\\
    &= N\cdot(\min\{u(\theta)^2, v^2\}\cdot\lambda_\phi)^{3/4}\cdot\frac{\gL(\theta)}{\gL(\theta)^{5/4}}\\
    &\ge N\cdot(\min\{u(\theta), v\}\cdot \sqrt{\lambda_\phi})^{3/2}. \qquad \left( \gL(\theta) \in (0, 1] \right) 
\end{align}
Therefore we have, 
\begin{align}
\label{eq:non_uniform_smoothness_glm_sigmoid_realizable_intermediate_5}
\MoveEqLeft
    \sum_{i=1}^{N}\pi_i\cdot(1-\pi_i)\cdot \left| 1 - 2 \pi_i \right| \cdot \left|\pi_i - \pi_i^*\right| \le \sum_{i=1}^{N}\pi_i\cdot(1-\pi_i) \cdot \left|\pi_i - \pi_i^*\right|\\ &\le\left(\sum_{i=1}^{N}\pi_i\cdot(1-\pi_i)\cdot\left|\pi_i - \pi_i^*\right|\right)\cdot\left(\sum_{i=1}^{N}\frac{(\pi_i - \pi_i^*)^2}{\sqrt{\gL(\theta)} \cdot \left\|\theta-\theta^* - c \cdot v_{\phi, \bot} \right\|_2^{3/2}}\right)\cdot\frac{1}{N\cdot (\min\{u(\theta), v\}\cdot\sqrt{\lambda_\phi})^{3/2}}\\
    &= \frac{1}{N\cdot(\min\{u(\theta), v\}\cdot\sqrt{\lambda_\phi})^{3/2}}\cdot\left(\sum_{i=1}^{N}{\frac{\pi_i\cdot(1-\pi_i)\cdot\left|\pi_i - \pi_i^*\right|}{\sqrt{\left\|\theta-\theta^* - c \cdot v_{\phi, \bot} \right\|_2}}}\right)\cdot\left(\sum_{i=1}^{N}\frac{(\pi_i - \pi_i^*)^2}{\sqrt{\gL(\theta)} \cdot \left\|\theta-\theta^* - c \cdot v_{\phi, \bot} \right\|_2}\right)\\
    &\le \frac{1}{(\min\{u(\theta), v\}\cdot\sqrt{\lambda_\phi})^{3/2}}\cdot\left(\sum_{i=1}^{N}{\frac{\pi_i^2\cdot(1-\pi_i)^2\cdot(\pi_i-\pi_i^*)^2}{ 2 \cdot \left\|\theta-\theta^* - c \cdot v_{\phi, \bot} \right\|_2} + \frac{(\pi_i-\pi_i^*)^4}{ 2 \cdot \gL(\theta)\cdot\left\|\theta-\theta^* - c \cdot v_{\phi, \bot} \right\|_2^2} }\right)\\
    &\le \frac{1}{(\min\{u(\theta), v\}\cdot\sqrt{\lambda_\phi})^{3/2}}\cdot\left(\frac{1}{32}\cdot\sum_{i=1}^{N}\frac{\pi_i\cdot(1-\pi_i)\cdot(\pi_i-\pi_i^*) \cdot (\phi_i^{\top}\theta-\phi_i^{\top}\theta^*)}{\left\|\theta-\theta^* - c \cdot v_{\phi, \bot} \right\|_2} \right)\\
    &\qquad + \frac{1}{(\min\{u(\theta), v\}\cdot\sqrt{\lambda_\phi})^{3/2}}\cdot\left(\frac{1}{32\cdot u(\theta)^2}\cdot\sum_{i=1}^{N}\frac{\pi_i^2\cdot(1-\pi_i)^2\cdot(\pi_i-\pi_i^*)^2 \cdot (\phi_i^{\top}\theta-\phi_i^{\top}\theta^*)^2}{\gL(\theta)\cdot\left\|\theta-\theta^* - c \cdot v_{\phi, \bot} \right\|^2}\right)\\
    &\le \frac{N}{64 \cdot(\min\{u(\theta), v\}\cdot\sqrt{\lambda_\phi})^{3/2}}\cdot\left\|\frac{\partial\gL(\theta)}{\partial\theta}\right\|_2 + \frac{N}{64\cdot u(\theta)^2\cdot(\min\{u(\theta), v\}\cdot\sqrt{\lambda_\phi})^{3/2}}\cdot\left\|\frac{\partial\gL(\theta)}{\partial\theta}\right\|_2^2\Big/\gL(\theta),
\end{align}
where the second inequality is according to,
\begin{align}
    \left( \sum_{i=1}^{N}{a_i} \right) \cdot \left( \sum_{i=1}^{N}{b_i} \right) &= \sum_{i=1}^{N} \sum_{j=1}^{N}{ a_i \cdot b_j } \le \frac{1}{2} \cdot \sum_{i=1}^{N} \sum_{j=1}^{N}{ \left( a_i^2 + b_j^2 \right) } = \frac{N}{2} \cdot \sum_{i=1}^{N}{ \left( a_i^2 + b_i^2 \right) },
\end{align} 
and the last inequality is from the intermediate results in \cref{eq:non_uniform_lojasiewicz_glm_sigmoid_realizable_intermediate_2}.
Combining \cref{eq:non_uniform_smoothness_glm_sigmoid_realizable_intermediate_2,eq:non_uniform_smoothness_glm_sigmoid_realizable_intermediate_3,eq:non_uniform_smoothness_glm_sigmoid_realizable_intermediate_5}, we have \begin{align}
\MoveEqLeft
    \left| z^\top S(\theta) z \right| 
     \le \frac{2}{N} \cdot \max_{i}{ \left( \phi_i^\top z \right)^2 }\cdot \left[\sum_{i=1}^{N}\pi_i\cdot (1-\pi_i )\cdot \left|\pi_i - \pi_i^* \right| + \sum_{i=1}^{N}\pi_i^2\cdot (1-\pi_i )^2\right]\\
     &\le  \max_{i}{ \left( \phi_i^\top z \right)^2 } \cdot\left(\frac{1}{32\cdot (\min\{u(\theta), v\}\cdot\sqrt{\lambda_\phi})^{3/2} }\cdot\left\|\frac{\partial\gL(\theta)}{\partial\theta}\right\|_2 +\frac{17}{512 \cdot u(\theta)^2 \cdot \min\left\{ u(\theta)^2, v^2 \right\} \cdot \lambda_\phi}\cdot\left\|\frac{\partial\gL(\theta)}{\partial\theta}\right\|_2^2\Big/\gL(\theta) \right) \\
     &\le \max_{i}{ \left\| \phi_i \right\|_2^2 } \cdot \left\| z \right\|_2^2 \cdot\left(\frac{1}{32\cdot(\min\{u(\theta), v\}\cdot\sqrt{\lambda_\phi})^{3/2} }\cdot\left\|\frac{\partial\gL(\theta)}{\partial\theta}\right\|_2 +\frac{17}{512 \cdot u(\theta)^2 \cdot \min\left\{ u(\theta)^2, v^2 \right\} \cdot \lambda_\phi}\cdot\left\|\frac{\partial\gL(\theta)}{\partial\theta}\right\|_2^2\Big/\gL(\theta) \right). 
\end{align}
Therefore, $\gL(\theta)$ satisfies $\beta(\theta)$ NS with
\begin{align}
    \beta(\theta) = L_1 \cdot \left\| \frac{\partial \gL(\theta)}{\partial \theta} \right\|_2 + L_0 \cdot \left( \left\| \frac{\partial \gL(\theta)}{\partial \theta} \right\|_2^2 \Big/ \gL(\theta) \right),
\end{align}
where
\begin{equation*}
    L_1 = \frac{ \max_{i}{ \left\| \phi_i \right\|_2^2 } }{32\cdot(\min\{u(\theta), v\}\cdot\sqrt{\lambda_\phi})^{3/2}}, \text{ and }
    L_0 = \frac{17\cdot \max_{i}{ \left\| \phi_i \right\|_2^2 } }{512\cdot u(\theta)^2 \cdot \min\left\{ u(\theta)^2, v^2 \right\} \cdot \lambda_\phi}. \qedhere
\end{equation*}
\end{proof}

\textbf{\cref{thm:final_rates_normalized_glm_sigmoid_realizable}.}
With $\eta = 1/ \beta $, GD update satisfies for all $t \ge 1$, $\gL(\theta_t) \le \gL(\theta_1) \cdot e^{- C^2 \cdot (t-1)}$.
With $\eta \in \Theta(1)$, GNGD update satisfies for all $t \ge 1$, $\gL(\theta_t) \le \gL(\theta_1) \cdot e^{- C \cdot (t-1)}$, where $C \in (0,1)$, i.e., GNGD is strictly faster than GD.
\begin{proof}
Combining \cref{lem:non_uniform_lojasiewicz_glm_sigmoid_realizable,lem:non_uniform_smoothness_glm_sigmoid_realizable}, and the second part of (2b) in \cref{thm:general_optimization_main_result_1}, we have the results for GD. Using the fourth part of (2b) in \cref{thm:general_optimization_main_result_1}, we have the results for GNGD.
\end{proof}

\section{Miscellaneous Extra Supporting Results}
\label{sec:supporting_lemmas}

\begin{lemma}[Descent lemma for smooth function]
\label{lem:descent_lemma_smooth_function}
Let $f:\R^d \to \R$ be a $\beta$-smooth function, $\theta\in \R^d$ 
and $\theta^\prime = \theta - \eta \cdot \frac{\partial f(\theta)}{\partial \theta}$. We have, for any $0 < \eta < 2 / \beta$,
\begin{align}
    f(\theta^\prime) \le f(\theta).
\end{align}
In particular, for $\eta = \frac{1}{ \beta }$, we have,
\begin{align}
\label{eq:descent_lemma_smooth_function_claim_1}
    f(\theta^\prime) \le f(\theta) - \frac{1}{ 2 \beta } \cdot \left\| \frac{\partial f(\theta)}{\partial \theta} \right\|_2^2.
\end{align}
\end{lemma}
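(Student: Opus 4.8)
The plan is to invoke the uniform smoothness bound of \cref{def:smoothness} directly at the two points $\theta$ and $\theta^\prime$, and then simply substitute the gradient-step displacement. Since $f$ is $\beta$-smooth, \cref{eq:uniform smoothness} gives in particular the one-sided inequality
\begin{align*}
    f(\theta^\prime) \le f(\theta) + \Big\langle \frac{\partial f(\theta)}{\partial \theta}, \theta^\prime - \theta \Big\rangle + \frac{\beta}{2} \cdot \left\| \theta^\prime - \theta \right\|_2^2 .
\end{align*}
The whole proof is then a matter of plugging in $\theta^\prime - \theta = - \eta \cdot \frac{\partial f(\theta)}{\partial \theta}$ and collecting terms.

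After the substitution, the inner product term becomes $-\eta \cdot \big\| \frac{\partial f(\theta)}{\partial \theta} \big\|_2^2$ and the quadratic term becomes $\frac{\beta}{2} \cdot \eta^2 \cdot \big\| \frac{\partial f(\theta)}{\partial \theta} \big\|_2^2$, so that
\begin{align*}
    f(\theta^\prime) \le f(\theta) - \eta \cdot \Big( 1 - \frac{\beta \eta}{2} \Big) \cdot \left\| \frac{\partial f(\theta)}{\partial \theta} \right\|_2^2 .
\end{align*}
For the first claim I would note that when $0 < \eta < 2/\beta$ the prefactor $\eta \cdot \big( 1 - \frac{\beta \eta}{2} \big)$ is strictly positive (and $\big\| \frac{\partial f(\theta)}{\partial \theta} \big\|_2^2 \ge 0$), so the subtracted quantity is nonnegative and $f(\theta^\prime) \le f(\theta)$ follows immediately.

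For the sharper second claim I would specialize to $\eta = 1/\beta$, where $1 - \frac{\beta \eta}{2} = \frac{1}{2}$ and the prefactor equals $\frac{1}{2\beta}$, yielding exactly \cref{eq:descent_lemma_smooth_function_claim_1}. There is no genuine obstacle here: the only thing to be careful about is using the correct (upper) side of the smoothness inequality and tracking the sign of the coefficient $1 - \frac{\beta\eta}{2}$, which governs both the monotonicity range $\eta \in (0, 2/\beta)$ and the optimal constant at $\eta = 1/\beta$. This lemma is purely a restatement of the standard descent argument already used implicitly in \cref{eq:smoothness_progress} of the main theorem's proof.
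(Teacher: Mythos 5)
Your proposal is correct and is essentially identical to the paper's own proof: both apply the one-sided smoothness upper bound, substitute $\theta^\prime - \theta = -\eta \cdot \frac{\partial f(\theta)}{\partial \theta}$ to obtain the coefficient $\eta \cdot \big(1 - \frac{\beta\eta}{2}\big)$, deduce monotone decrease from its positivity on $(0, 2/\beta)$, and then set $\eta = 1/\beta$ to get the $\frac{1}{2\beta}$ constant.
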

\begin{proof}
According to \cref{def:smoothness}, we have,
\begin{align}
    \left| f(\theta^\prime) - f(\theta) - \Big\langle \frac{\partial f(\theta)}{\partial \theta}, \theta^\prime - \theta \Big\rangle \right| \le \frac{\beta}{2} \cdot \| \theta^\prime - \theta \|_2^2,
\end{align}
which implies,
\begin{align}
    f(\theta^\prime) - f(\theta) &\le \Big\langle \frac{\partial f(\theta)}{\partial \theta}, \theta^\prime - \theta \Big\rangle + \frac{\beta}{2} \cdot \| \theta^\prime - \theta \|_2^2 \\
\label{eq:descent_lemma_smooth_function_intermediate_1}
    &= \eta \cdot \left( - 1 + \frac{\beta}{2} \cdot \eta \right) \cdot \left\| \frac{\partial f(\theta)}{\partial \theta} \right\|_2^2 \qquad \left( \theta^\prime = \theta - \eta \cdot \frac{\partial f(\theta)}{\partial \theta} \right) \\
    &\le 0. \qquad \left( 0 < \eta < \frac{2}{ \beta } \right)
\end{align}
Let $\eta = \frac{1}{ \beta }$ in \cref{eq:descent_lemma_smooth_function_intermediate_1}, we have \cref{eq:descent_lemma_smooth_function_claim_1}.
\end{proof}

\begin{lemma}[Descent lemma for NS function]
\label{lem:descent_lemma_NS_function}
Let $f:\R^d \to \R$ be a function that satisfies NS with $\beta(\theta) > 0$, for all $\theta\in \R^d$ 
and $\theta^\prime = \theta - \frac{1}{ \beta(\theta) } \cdot \frac{\partial f(\theta)}{\partial \theta}$. We have,
\begin{align}
    f(\theta^\prime) \le f(\theta) - \frac{1}{ 2 \cdot \beta(\theta) } \cdot \left\| \frac{\partial f(\theta)}{\partial \theta} \right\|_2^2.
\end{align}
\end{lemma}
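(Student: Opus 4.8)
The plan is to mimic the proof of the descent lemma for uniformly smooth functions (\cref{lem:descent_lemma_smooth_function}), replacing the constant $\beta$ by the pointwise coefficient $\beta(\theta)$ throughout. The essential observation is that the Non-uniform Smoothness inequality in \cref{def:non_uniform_smoothness}, specialized to the current iterate $\theta$ and the next iterate $\theta^\prime$, already furnishes a one-sided quadratic upper bound on the function value whose curvature constant is exactly $\beta(\theta)$ evaluated at the base point. Since the GNGD-style step $\theta^\prime = \theta - \frac{1}{\beta(\theta)}\nabla f(\theta)$ is precisely the minimizer of this quadratic upper model, plugging it in yields the stated one-step decrease.

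Concretely, I would first invoke \cref{def:non_uniform_smoothness} to write
\begin{align*}
    f(\theta^\prime) - f(\theta) \le \Big\langle \frac{\partial f(\theta)}{\partial \theta}, \theta^\prime - \theta \Big\rangle + \frac{\beta(\theta)}{2} \cdot \left\| \theta^\prime - \theta \right\|_2^2,
\end{align*}
which is the upper half of the absolute-value bound. Next I would substitute the update rule $\theta^\prime - \theta = -\frac{1}{\beta(\theta)}\cdot \frac{\partial f(\theta)}{\partial \theta}$, so that the inner-product term becomes $-\frac{1}{\beta(\theta)} \left\| \frac{\partial f(\theta)}{\partial \theta}\right\|_2^2$ and the quadratic term becomes $\frac{\beta(\theta)}{2}\cdot\frac{1}{\beta(\theta)^2}\left\|\frac{\partial f(\theta)}{\partial \theta}\right\|_2^2 = \frac{1}{2\beta(\theta)}\left\|\frac{\partial f(\theta)}{\partial \theta}\right\|_2^2$. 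Adding these two contributions collapses to $-\frac{1}{2\beta(\theta)}\left\|\frac{\partial f(\theta)}{\partial \theta}\right\|_2^2$, which is exactly the claimed bound.

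I do not anticipate any genuine obstacle here: the whole argument is a single application of the NS definition followed by an elementary algebraic simplification, entirely parallel to \cref{eq:descent_lemma_smooth_function_intermediate_1} in the uniformly smooth case. The only point worth emphasizing is conceptual rather than technical, namely that NS bounds the deviation using $\beta(\theta)$ at the \emph{current} point $\theta$, so the inequality applies immediately at the iterate without requiring any control of $\beta(\cdot)$ along the segment between $\theta$ and $\theta^\prime$; that control is instead handled separately (e.g., by \cref{lem:non_uniform_smoothness_intermediate_policy_gradient_norm_special,lem:non_uniform_smoothness_intermediate_policy_gradient_norm_general}) when this lemma is later combined with the N\L{} inequality to derive global rates in \cref{thm:general_optimization_main_result_1}.
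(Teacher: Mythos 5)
Your proof is correct and matches the paper's argument exactly: invoke the upper half of the NS inequality at the base point $\theta$, substitute the update $\theta^\prime - \theta = -\frac{1}{\beta(\theta)}\cdot\frac{\partial f(\theta)}{\partial \theta}$, and the two terms collapse to $-\frac{1}{2\beta(\theta)}\left\| \frac{\partial f(\theta)}{\partial \theta} \right\|_2^2$. Your closing remark about why no control of $\beta(\cdot)$ along the segment is needed here (and where that issue is instead handled) is also accurate.
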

\begin{proof}
According to \cref{def:non_uniform_smoothness}, we have,
\begin{align}
    f(\theta^\prime) - f(\theta) &\le \Big\langle \frac{\partial f(\theta)}{\partial \theta}, \theta^\prime - \theta \Big\rangle + \frac{\beta(\theta)}{2} \cdot \| \theta^\prime - \theta \|_2^2 \\
    &= - \frac{ 1 }{ \beta(\theta) } \cdot \left\| \frac{\partial f(\theta)}{\partial \theta} \right\|_2^2 + \frac{ 1 }{ 2 \cdot \beta(\theta) } \cdot \left\| \frac{\partial f(\theta)}{\partial \theta} \right\|_2^2 \qquad \left( \theta^\prime = \theta - \frac{1}{ \beta(\theta) } \cdot \frac{\partial f(\theta)}{\partial \theta} \right) \\
    &= - \frac{ 1 }{ 2 \cdot \beta(\theta) } \cdot \left\| \frac{\partial f(\theta)}{\partial \theta} \right\|_2^2. \qedhere
\end{align}
\end{proof}

\begin{lemma}
\label{lem:auxiliary_lemma_1}
Given any $\alpha > 0$, we have, for all $x \in [0, 1]$,
\begin{align}
    \frac{1}{\alpha} \cdot (1 - x^\alpha) \ge x^\alpha \cdot \left( 1 - x \right).
\end{align}
\end{lemma}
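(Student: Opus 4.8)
The plan is to clear denominators and reduce the claimed inequality to a single-variable monotonicity statement that can be settled by one derivative computation. Starting from $\frac{1}{\alpha}(1-x^\alpha) \ge x^\alpha(1-x)$, I would multiply through by $\alpha > 0$ and collect all terms, rewriting the target as
\begin{align*}
    (1+\alpha)\cdot x^\alpha - \alpha \cdot x^{\alpha+1} \le 1 \qquad \text{for all } x \in [0,1].
\end{align*}
This is an equivalent reformulation (no information is lost, since $\alpha>0$), and it has the pleasant feature that both sides are elementary.

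Next I would introduce the auxiliary function $h(x) \coloneqq (1+\alpha)\cdot x^\alpha - \alpha\cdot x^{\alpha+1}$ on $[0,1]$ and aim to show $h(x) \le 1$ throughout. The two endpoints are immediate: $h(0)=0 \le 1$ (using $x^\alpha \to 0$ as $x \to 0^+$ since $\alpha>0$), and $h(1) = (1+\alpha) - \alpha = 1$, so equality holds at $x=1$. The heart of the argument is then the derivative, which factors cleanly:
\begin{align*}
    h'(x) = \alpha(1+\alpha)\cdot x^{\alpha-1} - \alpha(\alpha+1)\cdot x^{\alpha} = \alpha(1+\alpha)\cdot x^{\alpha-1}\cdot(1-x).
\end{align*}
For $x \in (0,1)$ every factor is nonnegative, so $h'(x) \ge 0$, whence $h$ is non-decreasing on $[0,1]$. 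Combining monotonicity with the value $h(1)=1$ yields $h(x) \le h(1) = 1$ for all $x\in[0,1]$, which is exactly the reformulated inequality and hence the lemma.

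I do not expect a genuine obstacle here; the only point requiring a little care is the behaviour at the left endpoint when $\alpha < 1$, where $x^{\alpha-1}$ blows up. This is harmless: $h$ is continuous on the closed interval $[0,1]$ and differentiable on the open interval $(0,1)$, so the mean value theorem (or simply the fundamental theorem of calculus applied on $[x,1]$) lets me conclude monotonicity from $h'\ge 0$ on $(0,1)$ without ever evaluating $h'$ at $0$. Thus the ``hard part'' is really just organizing the algebra of the reformulation and recognizing the factorization of $h'$, after which the conclusion follows immediately.
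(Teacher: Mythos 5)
Your proof is correct and is essentially the paper's own argument in disguise: the paper defines $f(x) = \frac{1}{\alpha}(1-x^\alpha) - x^\alpha(1-x)$ and shows $f' (x)= -(1+\alpha)\,x^{\alpha-1}(1-x) \le 0$ with $f(1)=0$, while your $h$ satisfies $h(x) = 1 - \alpha f(x)$, so your monotonicity claim and endpoint evaluation are the same computation up to an affine rescaling. The only difference is cosmetic (clearing the $1/\alpha$ first), plus your explicit remark about the left endpoint when $\alpha<1$, which the paper leaves implicit.
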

\begin{proof}
Define $f: x \mapsto \frac{1}{\alpha} \cdot (1 - x^\alpha) - x^\alpha \cdot \left( 1 - x \right)$. We show that $f(x) \ge 0$ for all $x \in [0, 1]$. Note that,
\begin{align}
    f(0) = \frac{1}{ \alpha } > 0, \text{ and } f(1) = 0.
\end{align}
On the other hand,
\begin{align}
    f^\prime(x) &= - x^{\alpha - 1} - \alpha \cdot x^{ \alpha - 1} \cdot \left( 1 - x \right) + x^\alpha \\
    &= - x^{\alpha - 1} \cdot \left[ 1 +  \alpha \cdot \left( 1 - x \right) - x \right] \\
    &= - x^{\alpha - 1} \cdot \left( 1 + \alpha \right) \cdot \left( 1 - x \right) \\
    &\le 0, \qquad \left( \alpha > 0, \text{ and } x \in [0, 1] \right)
\end{align}
which means $f$ is monotonically decreasing over $[0, 1]$. Therefore $f(x) \ge 0$ for all $x \in [0, 1]$, finishing the proof.
\end{proof}

\begin{lemma}
\label{lem:auxiliary_lemma_2}
Given any $\alpha > 0$, we have, for all $x \in \left[ \frac{2 \alpha + 1}{2 \alpha + 2}, 1 \right]$,
\begin{align}
    \frac{1}{2 \alpha} \cdot (1 - x^\alpha) \le x^\alpha \cdot \left( 1 - x \right).
\end{align}
\end{lemma}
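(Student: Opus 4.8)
The plan is to mirror the argument used for \cref{lem:auxiliary_lemma_1}, but now establishing the \emph{reverse} inequality on the restricted interval. I would introduce the auxiliary function
\[
    f(x) \coloneqq \frac{1}{2\alpha} \cdot (1 - x^\alpha) - x^\alpha \cdot (1 - x),
\]
and aim to show $f(x) \le 0$ for all $x \in \left[ \frac{2\alpha+1}{2\alpha+2}, 1 \right]$, which is exactly the claimed inequality after rearranging. The immediate observation is the boundary value $f(1) = 0$, so it suffices to prove that $f$ is monotonically increasing on the given interval.

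First I would compute the derivative. Differentiating term by term gives $\frac{d}{dx}\left[\frac{1}{2\alpha}(1-x^\alpha)\right] = -\tfrac12 x^{\alpha-1}$ and $\frac{d}{dx}\left[x^\alpha(1-x)\right] = \alpha x^{\alpha-1}(1-x) - x^\alpha$, so that after factoring out $x^{\alpha-1} > 0$ one obtains
\[
    f'(x) = x^{\alpha-1} \cdot \left[ (\alpha+1) x - \alpha - \tfrac12 \right].
\]
The key algebraic fact --- and really the crux of the lemma --- is that the bracketed linear factor vanishes precisely at $x = \frac{\alpha + 1/2}{\alpha+1} = \frac{2\alpha+1}{2\alpha+2}$, i.e.\ exactly at the left endpoint of the interval. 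This is not a coincidence: the interval in the statement is chosen to be precisely the region where $f' \ge 0$, which is why no further case analysis is needed.

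Consequently, for every $x \in \left[ \frac{2\alpha+1}{2\alpha+2}, 1 \right]$ the linear factor is nonnegative and $x^{\alpha-1} > 0$, so $f'(x) \ge 0$ on the whole interval; hence $f$ is nondecreasing there. Combining this with $f(1) = 0$ yields $f(x) \le f(1) = 0$ throughout the interval, finishing the proof. I do not anticipate any genuine obstacle here; the only point requiring care is the clean identification of the critical point with the interval's left endpoint, since it is exactly this alignment that makes the one-line monotonicity argument go through.
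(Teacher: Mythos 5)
Your proof is correct and takes essentially the same route as the paper: the paper works with the negated function $g(x) = x^\alpha(1-x) - \frac{1}{2\alpha}(1-x^\alpha)$, factors its derivative as $x^{\alpha-1}\left[(1+\alpha)(1-x) - \tfrac12\right]$ to conclude $g$ is decreasing on $\left[\frac{2\alpha+1}{2\alpha+2}, 1\right]$, and finishes using $g(1)=0$, exactly mirroring your monotonicity-plus-boundary-value argument. The only difference is that you omit the paper's additional (and in fact redundant) verification that $g$ is nonnegative at the left endpoint, so your version is slightly more streamlined.
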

\begin{proof}
Define $g: x \mapsto x^\alpha \cdot \left( 1 - x \right) - \frac{1}{2 \alpha} \cdot (1 - x^\alpha)$. The derivative of $g$ is,
\begin{align}
    g^\prime(x) &= \alpha \cdot x^{ \alpha - 1} \cdot \left( 1 - x \right) - x^\alpha + (1/2) \cdot x^{\alpha - 1} \\
    &= x^{\alpha - 1} \cdot \left[ \alpha \cdot \left( 1 - x \right) - x + 1/2 \right] \\
    &= x^{\alpha - 1} \cdot \left[ \left( 1 + \alpha \right) \cdot \left( 1 - x \right) - 1/2 \right].
\end{align}
Then we have,
\begin{align}
	g^\prime(x) &> 0 \text{ for all } x \in \left[ 0, (2 \alpha + 1) / (2 \alpha + 2) \right), \text{ and} \\
	g^\prime(x) &\le 0 \text{ for all } x \in \left[ (2 \alpha + 1) / (2 \alpha + 2), 1 \right],
\end{align}
which means $g$ is monotonically increasing over $\left[ 0, (2 \alpha + 1) / (2 \alpha + 2) \right)$ and decreasing over $\left[ (2 \alpha + 1) / (2 \alpha + 2), 1 \right]$. On the other hand,
\begin{align}
	g( (2 \alpha + 1) / (2 \alpha + 2) ) &= \left( \frac{2 \alpha + 1}{2 \alpha + 2} \right)^{\alpha} \cdot \left( 1 - \frac{2 \alpha + 1}{2 \alpha + 2} \right) - \frac{1}{ 2 \alpha} \cdot \left[ 1 - \left( \frac{2 \alpha + 1}{2 \alpha + 2} \right)^{\alpha}  \right] \\
	&= \frac{1}{ 2 \alpha} \cdot \left[ \left( \frac{2 \alpha + 1}{2 \alpha + 2} \right)^{\alpha} \cdot \frac{2 \alpha + 1}{ \alpha + 1 } - 1 \right] \\
	&= \frac{1}{ 2 \alpha} \cdot \left[ \exp\left\{ \log{ \left( \frac{2 \alpha + 1}{ \alpha + 1 } \right) } - \alpha \cdot \log{ \left( 1 + \frac{1}{ 2 \alpha + 1} \right) } \right\} - 1 \right] \\
	&\ge  \frac{1}{ 2 \alpha} \cdot \left[ \exp\left\{ \log{ \left( \frac{2 \alpha + 1}{ \alpha + 1 } \right) } - \frac{ \alpha }{ 2 \alpha + 1} \right\} - 1 \right] \qquad \left( 1 + x \le e^x \right) \\
	&\ge  \frac{1}{ 2 \alpha} \cdot \left[ \exp\left\{ \frac{ \alpha }{ 2 \alpha + 1} - \frac{ \alpha }{ 2 \alpha + 1} \right\} - 1 \right] \qquad \left( \log(x) \ge 1 - 1/x \text{ for } x > 0 \right) \\
	&= 0.
\end{align}
Also note that $g(1) = 0$. Therefore we have $g(x) \ge 0$ for all $x \in \left[ (2 \alpha + 1) / (2 \alpha + 2), 1 \right]$, finishing the proof.
\end{proof}

\begin{lemma}
\label{lem:policy_gradient_softmax}
Denote $H(\pi) \coloneqq \diagonalmatrix(\pi) - \pi \pi^\top$. Softmax policy gradient w.r.t. $\theta$ is
\begin{align}
    \frac{\partial V^{\pi_\theta}(\mu)}{\partial \theta(s, \cdot)} = \frac{1}{1-\gamma} \cdot d_{\mu}^{\pi_\theta}(s) \cdot H(\pi_\theta(\cdot | s)) Q^{\pi_\theta}(s,\cdot), \quad \forall s \in \gS.
\end{align}
\end{lemma}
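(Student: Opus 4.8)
The plan is to invoke the classical policy gradient theorem and then specialize it to the tabular softmax parameterization, exploiting the fact that the block of parameters $\theta(s,\cdot)$ influences only the action distribution at state $s$. First I would establish (or recall) the general gradient expression
\begin{align*}
    \frac{\partial V^{\pi_\theta}(\mu)}{\partial \theta} = \frac{1}{1-\gamma} \sum_{s'} d_{\mu}^{\pi_\theta}(s') \sum_{a'} \frac{\partial \pi_\theta(a' | s')}{\partial \theta} \cdot Q^{\pi_\theta}(s',a'),
\end{align*}
which follows by differentiating the Bellman recursion $V^{\pi_\theta}(s') = \sum_{a'} \pi_\theta(a'|s') [ r(s',a') + \gamma \sum_{s''} \gP(s''|s',a') V^{\pi_\theta}(s'') ]$, unrolling the resulting linear recursion for $\partial V^{\pi_\theta}/\partial\theta$, and recognizing the $\gamma^t$-weighted geometric sum of transition probabilities as the normalized discounted state-visitation distribution $d_{\mu}^{\pi_\theta}$ (the factor $\tfrac{1}{1-\gamma}$ in front compensates for the $(1-\gamma)$ normalization built into the definition of $d_{\mu}^{\pi_\theta}$).

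The second step uses the tabular structure. Since the parameters governing distinct states are disjoint, we have $\partial \pi_\theta(a'|s')/\partial \theta(s,\cdot) = \rvzero$ whenever $s' \ne s$, so only the $s' = s$ term survives in the outer sum, giving
\begin{align*}
    \frac{\partial V^{\pi_\theta}(\mu)}{\partial \theta(s, \cdot)} = \frac{1}{1-\gamma} \cdot d_{\mu}^{\pi_\theta}(s) \cdot \sum_{a'} \frac{\partial \pi_\theta(a'|s)}{\partial \theta(s,\cdot)} \cdot Q^{\pi_\theta}(s,a').
\end{align*}
The third step computes the softmax Jacobian: for $\pi_\theta(\cdot|s) = \softmax(\theta(s,\cdot))$ one has $\partial \pi_\theta(a'|s)/\partial \theta(s,a) = \pi_\theta(a'|s) ( \delta_{a'a} - \pi_\theta(a|s) )$, i.e. the Jacobian of $\theta(s,\cdot) \mapsto \pi_\theta(\cdot|s)$ equals $H(\pi_\theta(\cdot|s)) = \diagonalmatrix(\pi_\theta(\cdot|s)) - \pi_\theta(\cdot|s) \pi_\theta(\cdot|s)^\top$.

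Substituting this Jacobian and stacking the components over $a$ collapses the sum into the matrix–vector product $H(\pi_\theta(\cdot|s)) Q^{\pi_\theta}(s,\cdot)$, whose $a$-th entry simplifies to $\pi_\theta(a|s) ( Q^{\pi_\theta}(s,a) - \pi_\theta(\cdot|s)^\top Q^{\pi_\theta}(s,\cdot) ) = \pi_\theta(a|s) A^{\pi_\theta}(s,a)$; this yields the claimed identity. I expect the main obstacle to be the careful derivation of the policy gradient theorem in the first step — specifically, justifying the interchange of differentiation with the infinite Bellman recursion and identifying the weighted transition sum as $d_{\mu}^{\pi_\theta}$ — since everything afterward is a routine Jacobian calculation followed by a one-line algebraic rearrangement. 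If the policy gradient theorem is already available from earlier in the development, the proof reduces essentially to the softmax Jacobian identity and the final collapse into the $H(\pi_\theta(\cdot|s))$ form.
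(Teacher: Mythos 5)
Your proposal is correct and follows essentially the same route as the paper's proof: invoke the policy gradient theorem of \citet{sutton2000policy}, use the tabular disjointness $\partial \pi_\theta(a|s^\prime)/\partial \theta(s,\cdot) = \rvzero$ for $s^\prime \neq s$ to isolate the $s^\prime = s$ term, and identify the softmax Jacobian with $H(\pi_\theta(\cdot|s))$. The only difference is that you sketch a derivation of the policy gradient theorem itself (via the Bellman recursion), whereas the paper simply cites it; this is a presentational choice, not a different argument.
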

\begin{proof}
See the proof in \citep[Lemma 1]{mei2020global}. We include a proof for completeness.

According to the policy gradient theorem \citep{sutton2000policy},
\begin{align}
    \frac{\partial V^{\pi_\theta}(\mu)}{\partial \theta} = \frac{1}{1-\gamma} \expectation_{s^\prime \sim d_{\mu}^{\pi_\theta} } { \left[ \sum_{a}  \frac{\partial \pi_\theta(a | s^\prime)}{\partial \theta} \cdot Q^{\pi_\theta}(s^\prime,a) \right] }.
\end{align}
For $s^\prime \not= s$, $\frac{\partial \pi_\theta(a | s^\prime)}{\partial \theta(s, \cdot)} = \rvzero$ since $\pi_\theta(a | s^\prime)$ does not depend on $\theta(s, \cdot)$. Therefore, we have,
\begin{align}
\MoveEqLeft
    \frac{\partial V^{\pi_\theta}(\mu)}{\partial \theta(s, \cdot)} = \frac{1}{1-\gamma} \sum_{s^\prime}{ d_{\mu}^{\pi_\theta}(s^\prime) \cdot \left[ \sum_{a}  \frac{\partial \pi_\theta(a | s^\prime)}{\partial \theta(s, \cdot) } \cdot Q^{\pi_\theta}(s^\prime,a) \right] } \\
    &= \frac{1}{1-\gamma} \cdot d_{\mu}^{\pi_\theta}(s) \cdot { \left[ \sum_{a} \frac{\partial \pi_\theta(a | s)}{\partial \theta(s, \cdot)} \cdot Q^{\pi_\theta}(s,a) \right] } \qquad \left( \frac{\partial \pi_\theta(a | s^\prime)}{\partial \theta(s, \cdot)} = \rvzero, \ \forall s^\prime \not= s \right) \\
    &= \frac{1}{1-\gamma} \cdot d_{\mu}^{\pi_\theta}(s) \cdot \left( \frac{d \pi(\cdot | s)}{d \theta(s, \cdot)} \right)^\top Q^{\pi_\theta}(s,\cdot) \\
    &= \frac{1}{1-\gamma} \cdot d_{\mu}^{\pi_\theta}(s) \cdot H(\pi_\theta(\cdot | s)) Q^{\pi_\theta}(s,\cdot). \qquad \left( H(\pi_\theta) \text{ is the Jacobian of } \theta \mapsto \softmax(\theta) \right) 
\end{align}
Note that in one-state MDPs, we have,
\begin{align}
    \frac{d \pi_\theta^\top r}{d \theta} &= \left( \frac{d \pi_\theta }{d \theta} \right)^\top r = H(\pi_\theta) r. \qedhere
\end{align}
\end{proof}

\begin{lemma}
\label{lem:policy_gradient_norm_softmax}
Softmax policy gradient norm is
\begin{align}
    \left\| \frac{\partial V^{\pi_\theta}(\mu)}{\partial \theta }\right\|_2 = \frac{1}{1-\gamma} \cdot \left[ \sum_{s}{d_{\mu}^{\pi_\theta}(s)^2 \cdot \left\| H(\pi_\theta(\cdot | s)) Q^{\pi_\theta}(s,\cdot) \right\|_2^2 } \right]^\frac{1}{2}.
\end{align}
\end{lemma}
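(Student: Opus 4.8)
The plan is to exploit the block structure of the softmax policy gradient over the state space. Since the parameter vector $\theta \in \sR^{\gS \times \gA}$ decomposes into per-state blocks $\theta(s, \cdot) \in \sR^{\gA}$, the full gradient $\frac{\partial V^{\pi_\theta}(\mu)}{\partial \theta}$ likewise decomposes into blocks indexed by state, and its squared Euclidean norm is simply the sum of the squared Euclidean norms of these per-state blocks:
\begin{align*}
\left\| \frac{\partial V^{\pi_\theta}(\mu)}{\partial \theta} \right\|_2^2 = \sum_{s \in \gS} \left\| \frac{\partial V^{\pi_\theta}(\mu)}{\partial \theta(s, \cdot)} \right\|_2^2.
\end{align*}

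First I would invoke \cref{lem:policy_gradient_softmax}, which supplies the closed form of each per-state block, namely $\frac{\partial V^{\pi_\theta}(\mu)}{\partial \theta(s, \cdot)} = \frac{1}{1-\gamma} \cdot d_{\mu}^{\pi_\theta}(s) \cdot H(\pi_\theta(\cdot | s)) Q^{\pi_\theta}(s,\cdot)$. Substituting this expression into each summand and pulling the scalar factors $\frac{1}{1-\gamma}$ and $d_{\mu}^{\pi_\theta}(s) \ge 0$ outside the norm gives
\begin{align*}
\left\| \frac{\partial V^{\pi_\theta}(\mu)}{\partial \theta(s, \cdot)} \right\|_2^2 = \frac{1}{(1-\gamma)^2} \cdot d_{\mu}^{\pi_\theta}(s)^2 \cdot \left\| H(\pi_\theta(\cdot | s)) Q^{\pi_\theta}(s,\cdot) \right\|_2^2.
\end{align*}
Summing over $s \in \gS$ and taking the square root then yields the claimed identity directly.

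The only point requiring justification — hardly an obstacle — is the block decomposition of the squared norm. This follows immediately from the fact established within the proof of \cref{lem:policy_gradient_softmax} that $\frac{\partial \pi_\theta(a | s^\prime)}{\partial \theta(s, \cdot)} = \rvzero$ whenever $s^\prime \ne s$, so that distinct states contribute to disjoint coordinate blocks of the gradient vector and the cross terms vanish. Consequently the entire argument is a short computation layered on top of the already-proved gradient expression, with no genuine difficulty.
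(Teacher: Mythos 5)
Your proof is correct and is essentially identical to the paper's: both decompose the squared gradient norm into per-state blocks and substitute the closed form from \cref{lem:policy_gradient_softmax}. (One minor remark: the block decomposition of the norm needs no appeal to $\frac{\partial \pi_\theta(a | s^\prime)}{\partial \theta(s,\cdot)} = \rvzero$ — it holds simply because the coordinates $(s,a)$ partition into disjoint blocks indexed by $s$; that vanishing-derivative fact is only needed inside the proof of \cref{lem:policy_gradient_softmax} itself.)
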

\begin{proof}
We have,
\begin{align}
    \left\| \frac{\partial V^{\pi_\theta}(\mu)}{\partial \theta }\right\|_2 &= \left[ \sum_{s,a} \left( \frac{\partial V^{\pi_\theta}(\mu)}{\partial \theta(s,a)} \right)^2 \right]^{\frac{1}{2}} \\
    &= \left[ \sum_{s}{\left\| \frac{\partial V^{{\pi_\theta}}(\mu) }{\partial \theta(s, \cdot)} \right\|_2^2 } \right]^\frac{1}{2} \\
    &= \frac{1}{1-\gamma} \cdot \left[ \sum_{s}{d_{\mu}^{\pi_\theta}(s)^2 \cdot \left\| H(\pi_\theta(\cdot | s)) Q^{\pi_\theta}(s,\cdot) \right\|_2^2 } \right]^\frac{1}{2}. \qquad \left( \text{by \cref{lem:policy_gradient_softmax}} \right) \qedhere
\end{align}
\end{proof}

\begin{lemma}[Performance difference lemma \citep{kakade2002approximately}]
\label{lem:performance_difference_general}
For any policies $\pi$ and $\pi^\prime$,
\begin{align}
    V^{\pi^\prime}(\rho) - V^{\pi}(\rho) &= \frac{1}{1 - \gamma} \sum_{s}{ d_\rho^{\pi^\prime}(s) \sum_{a}{ \left( \pi^\prime(a | s) - \pi(a | s) \right) \cdot Q^{\pi}(s,a) } }\\
    &= \frac{1}{1 - \gamma} \sum_{s}{ d_{\rho}^{\pi^\prime}(s)  \sum_{a}{ \pi^\prime(a | s) \cdot A^{\pi}(s, a) } }.
\end{align}
\end{lemma}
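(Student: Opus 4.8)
The plan is to establish the two equalities in order: I will derive the advantage-function form (the second line) first via a single telescoping identity along trajectories generated by $\pi^\prime$, and then obtain the $Q$-function form (the first line) by an elementary algebraic rearrangement. The only substantive tool is the telescoping of discounted value differences, together with the conversion of a time-indexed sum into the discounted state-occupancy measure $d_\rho^{\pi^\prime}$.

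First I would fix a start state $s_0$ and consider a trajectory $s_0, a_0, s_1, a_1, \dots$ generated by $\pi^\prime$, i.e.\ $a_t \sim \pi^\prime(\cdot|s_t)$ and $s_{t+1}\sim \gP(\cdot|s_t,a_t)$. By definition $V^{\pi^\prime}(s_0) = \expectation_{\pi^\prime}\big[\sum_{t=0}^{\infty}\gamma^t r(s_t,a_t)\big]$. I would then introduce the telescoping identity $-V^{\pi}(s_0) = \expectation_{\pi^\prime}\big[\sum_{t=0}^{\infty}\gamma^t\big(\gamma V^{\pi}(s_{t+1}) - V^{\pi}(s_t)\big)\big]$, which holds because the partial sum up to $T-1$ collapses to $\gamma^{T}V^{\pi}(s_T) - V^{\pi}(s_0)$ and the trailing term vanishes as $T\to\infty$, since $r(s,a)\in[0,1]$ forces $V^{\pi}$ to be bounded and $\gamma<1$. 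Adding the two expressions gives
\[
V^{\pi^\prime}(s_0) - V^{\pi}(s_0) = \expectation_{\pi^\prime}\Big[ \sum_{t=0}^{\infty}\gamma^t\big( r(s_t,a_t) + \gamma V^{\pi}(s_{t+1}) - V^{\pi}(s_t) \big)\Big].
\]

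Next I would take the conditional expectation of the per-step term given $s_t$. Using $Q^{\pi}(s,a) = r(s,a) + \gamma\sum_{s^\prime}\gP(s^\prime|s,a)V^{\pi}(s^\prime)$ and averaging over $a_t\sim\pi^\prime(\cdot|s_t)$ and $s_{t+1}\sim\gP(\cdot|s_t,a_t)$, the conditional expectation equals $\sum_a \pi^\prime(a|s_t)Q^{\pi}(s_t,a) - V^{\pi}(s_t) = \sum_a \pi^\prime(a|s_t)A^{\pi}(s_t,a)$, where I fold $V^{\pi}(s_t)$ into the advantage via $V^{\pi}(s_t) = \sum_a \pi^\prime(a|s_t)V^{\pi}(s_t)$. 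Substituting and interchanging the order of the (absolutely convergent) time-sum and expectation yields $V^{\pi^\prime}(s_0)-V^{\pi}(s_0) = \sum_{t=0}^{\infty}\gamma^t \expectation\big[\sum_a \pi^\prime(a|s_t)A^{\pi}(s_t,a)\big]$, with $s_t\sim\probability(s_t=\cdot\,|\,s_0,\pi^\prime,\gP)$. Recognizing $\sum_{t=0}^{\infty}\gamma^t\probability(s_t=s\,|\,s_0,\pi^\prime,\gP) = \tfrac{1}{1-\gamma}\,d_{s_0}^{\pi^\prime}(s)$ converts the time sum into a sum over states weighted by the discounted occupancy; taking expectation over $s_0\sim\rho$ and using $d_\rho^{\pi^\prime}(s)=\expectation_{s_0\sim\rho}[d_{s_0}^{\pi^\prime}(s)]$ gives the second equality.

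Finally, for the first equality I would rewrite $\sum_a \pi^\prime(a|s)A^{\pi}(s,a) = \sum_a \pi^\prime(a|s)Q^{\pi}(s,a) - V^{\pi}(s)$ and then substitute $V^{\pi}(s)=\sum_a \pi(a|s)Q^{\pi}(s,a)$, so the bracket becomes $\sum_a\big(\pi^\prime(a|s)-\pi(a|s)\big)Q^{\pi}(s,a)$, matching the first line. The main obstacle is not any single step, each being elementary, but the careful justification of the telescoping identity's convergence and of the interchange of the infinite time-sum with the expectation; I would state explicitly that both follow from the uniform boundedness of $V^{\pi}$ (guaranteed by $r\in[0,1]$) and $\gamma<1$.
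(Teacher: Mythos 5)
Your proof is correct, but it takes a genuinely different route from the paper's. The paper works entirely at the level of Bellman fixed-point equations: it decomposes
$V^{\pi^\prime}(s) - V^{\pi}(s) = \sum_{a}{ \left( \pi^\prime(a|s) - \pi(a|s) \right) Q^{\pi}(s,a) } + \gamma \sum_{a}{ \pi^\prime(a|s) \sum_{s^\prime}{ \gP(s^\prime|s,a) \left[ V^{\pi^\prime}(s^\prime) - V^{\pi}(s^\prime) \right] } }$,
i.e., a local policy-difference term plus a discounted recursion in the value difference itself, and then unrolls that recursion to produce the occupancy measure $d_{s}^{\pi^\prime}$ --- obtaining the $Q$-difference form first and the advantage form last. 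You instead telescope $V^{\pi}$ along trajectories drawn from $\pi^\prime$, identify the per-step conditional expectation as $\sum_a \pi^\prime(a|s_t) A^{\pi}(s_t,a)$, and collapse the time-indexed sum into $d_{\rho}^{\pi^\prime}$, getting the advantage form first and the $Q$-difference form by a one-line rearrangement. The two arguments are essentially dual (unrolling the Bellman recursion is the telescoping sum in disguise), but each buys something: your trajectory-level version makes the analytic bookkeeping explicit --- the vanishing of $\gamma^T V^{\pi}(s_T)$ and the legitimacy of interchanging the infinite time-sum with the expectation, both justified by $r \in [0,1]$ and $\gamma < 1$ --- which the paper leaves implicit in its terse jump from the recursive identity to the occupancy-weighted sum; the paper's version, in turn, is more compact and purely algebraic, never needing to reason about infinite trajectory sums or limits at all.
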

\begin{proof}
According to the definition of value function,
\begin{align}
    V^{\pi^\prime}(s) - V^{\pi}(s) &= \sum_{a}{ \pi^\prime(a | s) \cdot Q^{\pi^\prime}(s,a) } - \sum_{a}{ \pi(a | s) \cdot Q^{\pi}(s,a) } \\
    &= \sum_{a}{ \pi^\prime(a | s) \cdot \left( Q^{\pi^\prime}(s,a) - Q^{\pi}(s,a) \right) } + \sum_{a}{ \left( \pi^\prime(a | s) - \pi(a | s) \right) \cdot Q^{\pi}(s,a) } \\
    &= \sum_{a}{ \left( \pi^\prime(a | s) - \pi(a | s) \right) \cdot Q^{\pi}(s,a) } + \gamma \sum_{a}{ \pi^\prime(a | s) \sum_{s^\prime}{  \gP( s^\prime | s, a) \cdot \left[ V^{\pi^\prime}(s^\prime) -  V^{\pi}(s^\prime)  \right] } } \\
    &= \frac{1}{1 - \gamma} \sum_{s^\prime}{ d_{s}^{\pi^\prime}(s^\prime) \sum_{a^\prime}{ \left( \pi^\prime(a^\prime | s^\prime) - \pi(a^\prime | s^\prime) \right) \cdot Q^{\pi}(s^\prime, a^\prime) }  } \\
    &= \frac{1}{1 - \gamma} \sum_{s^\prime}{ d_{s}^{\pi^\prime}(s^\prime) \sum_{a^\prime}{ \pi^\prime(a^\prime | s^\prime) \cdot \left( Q^{\pi}(s^\prime, a^\prime) - V^{\pi}(s^\prime) \right) }  } \\
    &= \frac{1}{1 - \gamma} \sum_{s^\prime}{ d_{s}^{\pi^\prime}(s^\prime)  \sum_{a^\prime}{ \pi^\prime(a^\prime | s^\prime) \cdot A^{\pi}(s^\prime, a^\prime) } }. \qedhere
\end{align}
\end{proof}

\begin{lemma}[Value sub-optimality lemma]
\label{lem:value_suboptimality}
For any policy $\pi$,
\begin{align}
    V^*(\rho) - V^{\pi}(\rho) = \frac{1}{1 - \gamma} \sum_{s}{ d_\rho^{\pi}(s) \sum_{a}{ \left( \pi^*(a | s) - \pi(a | s) \right) \cdot Q^*(s,a) } }.
\end{align}
\end{lemma}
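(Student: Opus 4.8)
The plan is to obtain this identity as an immediate consequence of the performance difference lemma (\cref{lem:performance_difference_general}), which I may assume. The only real content is recognizing the correct assignment of policies, since the statement has a deliberate ``mismatch'': the state weighting is $d_\rho^\pi$ (the discounted state distribution of the evaluated policy $\pi$), whereas the action values are $Q^*$ (the optimal action values, i.e. those of $\pi^*$). This is precisely the shape produced by the performance difference formula when the \emph{primed} policy is taken to be $\pi$ and the \emph{base} policy is taken to be $\pi^*$.

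Concretely, I would start from the first form of \cref{lem:performance_difference_general}, namely $V^{\pi^\prime}(\rho) - V^{\pi}(\rho) = \frac{1}{1 - \gamma} \sum_{s} d_\rho^{\pi^\prime}(s) \sum_{a} \left( \pi^\prime(a | s) - \pi(a | s) \right) Q^{\pi}(s,a)$, and instantiate it with $\pi^\prime \gets \pi$ and $\pi \gets \pi^*$. This yields
\begin{align*}
    V^{\pi}(\rho) - V^{\pi^*}(\rho) = \frac{1}{1 - \gamma} \sum_{s}{ d_\rho^{\pi}(s) \sum_{a}{ \left( \pi(a | s) - \pi^*(a | s) \right) \cdot Q^{\pi^*}(s,a) } }.
\end{align*}
Multiplying both sides by $-1$ flips both the left-hand side difference and the sign of $\left( \pi(a|s) - \pi^*(a|s) \right)$ inside the sum, giving exactly the claimed right-hand side with $Q^{\pi^*}$ in place of $Q^*$.

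The final step is purely notational: by definition $\pi^*$ is an optimal policy, so $V^* = V^{\pi^*}$ and $Q^* = Q^{\pi^*}$, and substituting these identities turns the expression into the statement of the lemma. There is essentially no obstacle here; the only point requiring care is verifying that the state-distribution index survives as $d_\rho^\pi$ (coming from the primed policy $\pi$) rather than $d_\rho^{\pi^*}$, which is exactly why we apply the performance difference lemma in this orientation rather than the reverse. I would present this as a three-line derivation invoking \cref{lem:performance_difference_general} and then recording the $V^* = V^{\pi^*}$, $Q^* = Q^{\pi^*}$ substitution.
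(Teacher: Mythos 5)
Your proof is correct, but it takes a genuinely different route from the paper's. The paper proves \cref{lem:value_suboptimality} from scratch: it decomposes $V^*(s) - V^{\pi}(s) = \sum_a \left(\pi^*(a|s) - \pi(a|s)\right) Q^*(s,a) + \sum_a \pi(a|s)\left(Q^*(s,a) - Q^{\pi}(s,a)\right)$, rewrites the second term via the Bellman equation as a discounted expectation of $V^* - V^{\pi}$ at successor states, and unrolls the recursion to produce $d_s^{\pi}$ --- exactly mirroring, but never invoking, its own proof of \cref{lem:performance_difference_general} (which uses the symmetric grouping $\sum_a \pi'(a|s)\left(Q^{\pi'}(s,a)-Q^{\pi}(s,a)\right) + \sum_a \left(\pi'(a|s)-\pi(a|s)\right)Q^{\pi}(s,a)$ and therefore unrolls with $d^{\pi'}$ instead). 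You instead obtain the identity as an immediate corollary of \cref{lem:performance_difference_general}, instantiating the primed policy as $\pi$ and the base policy as $\pi^*$, negating both sides, and substituting $V^* = V^{\pi^*}$, $Q^* = Q^{\pi^*}$; your orientation of the roles is the correct one, and it is precisely what makes the state weighting come out as $d_\rho^{\pi}$ while the action values come out as $Q^*$. Your route is shorter and makes the structural point transparent: the sub-optimality lemma is literally the performance difference lemma with the two policies' roles exchanged. It does rely on \cref{lem:performance_difference_general} holding for arbitrary policy pairs and on the optimum being attained by some policy $\pi^*$ (both of which the paper provides, and there is no circularity since the performance difference lemma is proved independently); the paper's self-contained recursion buys independence from that lemma at the cost of duplicating the telescoping argument.
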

\begin{proof}
See the proof in \citep[Lemma 21]{mei2020global}. We include a proof for completeness.

We denote $V^*(s) \coloneqq V^{\pi^*}(s)$ and $Q^*(s,a) \coloneqq Q^{\pi^*}(s,a)$ for conciseness. We have, for any policy $\pi$,
\begin{align}
    V^*(s) - V^{\pi}(s) &= \sum_{a}{ \pi^*(a | s) \cdot Q^*(s,a) } - \sum_{a}{ \pi(a | s) \cdot Q^{\pi}(s,a) } \\
    &= \sum_{a}{ \left( \pi^*(a | s) - \pi(a | s) \right) \cdot Q^*(s,a) } + \sum_{a}{ \pi(a | s) \cdot \left(  Q^*(s,a) - Q^{\pi}(s,a) \right) } \\
    &= \sum_{a}{ \left( \pi^*(a | s) - \pi(a | s) \right) \cdot Q^*(s,a) } + \gamma \sum_{a}{ \pi(a | s) \sum_{s^\prime}{  \gP( s^\prime | s, a) \cdot \left[ V^{\pi^*}(s^\prime) -  V^{\pi}(s^\prime)  \right] } } \\
    &= \frac{1}{1 - \gamma} \sum_{s^\prime}{ d_{s}^{\pi}(s^\prime) \sum_{a^\prime}{ \left( \pi^*(a^\prime | s^\prime) - \pi(a^\prime | s^\prime) \right) \cdot Q^*(s^\prime, a^\prime) }  }. \qedhere
\end{align}
\end{proof}

\section{Non-convex (Non-concave) Examples for N\L{} Inequality}
\label{sec:examples_non_convex_nl_inequality}

We list some non-convex (or non-concave in maximization problems) functions which satisfy N\L{} inequalities here from literature. See corresponding references for details.

\paragraph{Expected reward, softmax parameterization.} 
As shown in \cref{lem:non_uniform_lojasiewicz_softmax_special} and \citet[Lemma 3]{mei2020global},
\begin{align}
    \left\| \frac{d \pi_\theta^\top r}{d \theta} \right\|_2 \ge \pi_\theta(a^*) \cdot ( \pi^* - \pi_\theta )^\top r.
\end{align}

\paragraph{Value function, softmax parameterization.} As shown in \cref{lem:non_uniform_lojasiewicz_softmax_general} and \citet[Lemma 8]{mei2020global},
\begin{align}
    \left\| \frac{\partial V^{\pi_\theta}(\mu)}{\partial \theta }\right\|_2 \ge \frac{ \min_s{ \pi_\theta(a^*(s)|s) } }{ \sqrt{S} \cdot  \left\| d_{\rho}^{\pi^*} / d_{\mu}^{\pi_\theta} \right\|_\infty } \cdot \left[ V^*(\rho) - V^{\pi_\theta}(\rho) \right].
\end{align}

\paragraph{Entropy regularized expected reward, softmax parameterization.}  As shown in \citet[Proposition 5]{mei2020global},
\begin{align}
    \left\| \frac{ d \{ \pi_\theta^\top ( r - \tau \log{\pi_\theta}) \} }{d \theta} \right\|_2 \ge \sqrt{2 \tau} \cdot \min_{a}{ \pi_\theta(a) } \cdot \left[ { \pi_\tau^* }^\top \left( r - \tau \log{ \pi_\tau^* } \right)  - \pi_{\theta}^\top \left( r - \tau \log{ \pi_{\theta} } \right) \right]^{\frac{1}{2}}.
\end{align}

\paragraph{Entropy regularized value function, softmax parameterization.} As shown in \citet[Lemma 15]{mei2020global},
\begin{align}
    \left\| \frac{\partial \tilde{V}^{{\pi_\theta}}(\mu) }{\partial \theta} \right\|_2 \ge \frac{\sqrt{2 \tau}}{\sqrt{S}} \cdot \min_{s}{\sqrt{ \mu(s) } } \cdot \min_{s,a}{ \pi_\theta(a | s)  } \cdot \left\| \frac{d_{\rho}^{\pi_\tau^*} }{ d_{\mu}^{\pi_\theta}} \right\|_\infty^{-\frac{1}{2}} \cdot \left[ \tilde{V}^{\pi_\tau^*}(\rho) - \tilde{V}^{{\pi_\theta}}(\rho) \right]^{\frac{1}{2}}.
\end{align}

\paragraph{Expected reward, escort parameterization.}
As shown in \citet[Lemma 3]{mei2020escaping},
\begin{align}
    \left\| \frac{d \pi_\theta^\top r}{d \theta } \right\|_2 \ge \frac{p}{\| \theta \|_p} \cdot \pi_\theta(a^*)^{1 - 1/p} \cdot ( \pi^* - \pi_\theta )^\top r.
\end{align}

\paragraph{Value function, escort parameterization.}
As shown in \citet[Lemma 7]{mei2020escaping},
\begin{align}
    \left\| \frac{\partial V^{\pi_\theta}(\mu)}{\partial \theta }\right\|_2 \ge \frac{p}{\sqrt{S}} \cdot \left\| \frac{d_{\rho}^{\pi^*}}{d_{\mu}^{\pi_\theta}} \right\|_\infty^{-1} \cdot \frac{ \min_s{ \pi_\theta(a^*(s)|s)^{1-1/p} } }{ \max_{s}  \left\| \theta(s, \cdot) \right\|_p } \cdot \left[ V^*(\rho) - V^{\pi_\theta}(\rho) \right].
\end{align}

\paragraph{Entropy regularized value function, escort parameterization.} 
As shown in \citet[Lemma 12]{mei2020escaping},
\begin{align}
    \left\| \frac{\partial \tilde{V}^{{\pi_\theta}}(\mu) }{\partial \theta} \right\|_2 \ge \frac{p \cdot \sqrt{2 \tau}}{\sqrt{S}} \cdot \min_{s}{\sqrt{ \mu(s) } } \cdot  \frac{ \min_{s,a}{\pi_\theta(a | s)^{1-1/p}} }{ \max_{s} \| \theta(s, \cdot ) \|_p} \cdot \left\| \frac{d_{\rho}^{\pi_\tau^*} }{ d_{\mu}^{\pi_\theta}} \right\|_\infty^{-\frac{1}{2}} \cdot \left[ \tilde{V}^{\pi_\tau^*}(\rho) - \tilde{V}^{{\pi_\theta}}(\rho) \right]^\frac{1}{2}.
\end{align}

\paragraph{Cross entropy, escort parameterization.} As shown in \citet[Lemma 17]{mei2020escaping},
\begin{align}
    \left\| \frac{d \{ \KL(y \| \pi_\theta) \} }{d \theta} \right\|_2 \ge \frac{p}{\| \theta \|_p} \cdot \min_{a}{\pi_\theta(a)}^{\frac{1}{2}-\frac{1}{p}} \cdot \KL(y \| \pi_\theta)^\frac{1}{2}.
\end{align}

\paragraph{Generalized linear models, sigmoid activation, mean squared error.} As shown in \cref{lem:non_uniform_lojasiewicz_glm_sigmoid_realizable},
\begin{align}
    \left\| \frac{\partial \gL(\theta)}{\partial \theta} \right\|_2 \ge 8 \cdot u(\theta) \cdot \min\left\{ u(\theta), v \right\} \cdot \sqrt{\lambda_\phi} \cdot \left[ \frac{1}{N} \cdot \sum_{i=1}^{N}{ \left( \pi_i - \pi_i^* \right)^2 } \right]^{\frac{1}{2}}.
\end{align}

\section{Additional Simulation Results}
\label{sec:additional_simulations_experiments}

\subsection{ $f: x \mapsto | x |^p,$ $p \in (1, 2)$}

As shown in \cref{prop:absulte_power_p}, with $p \in (1, 2)$, $f: x \mapsto | x |^p$ satisfies N\L{} inequality with $\xi = 1/p \in (1/2, 1)$, which is the case (3) in \cref{thm:general_optimization_main_result_1}. The function $f$ is differentiable, and the Hessian $\left|f^{\prime\prime}(x) \right| = p \cdot (p-1)\cdot \left|  x \right|^{p-2} \to \infty$, as $x \to 0$, which indicates GD with $\eta \in \Theta(1)$ does not converge.
\begin{figure*}[ht]
\centering
\includegraphics[width=0.8\linewidth]{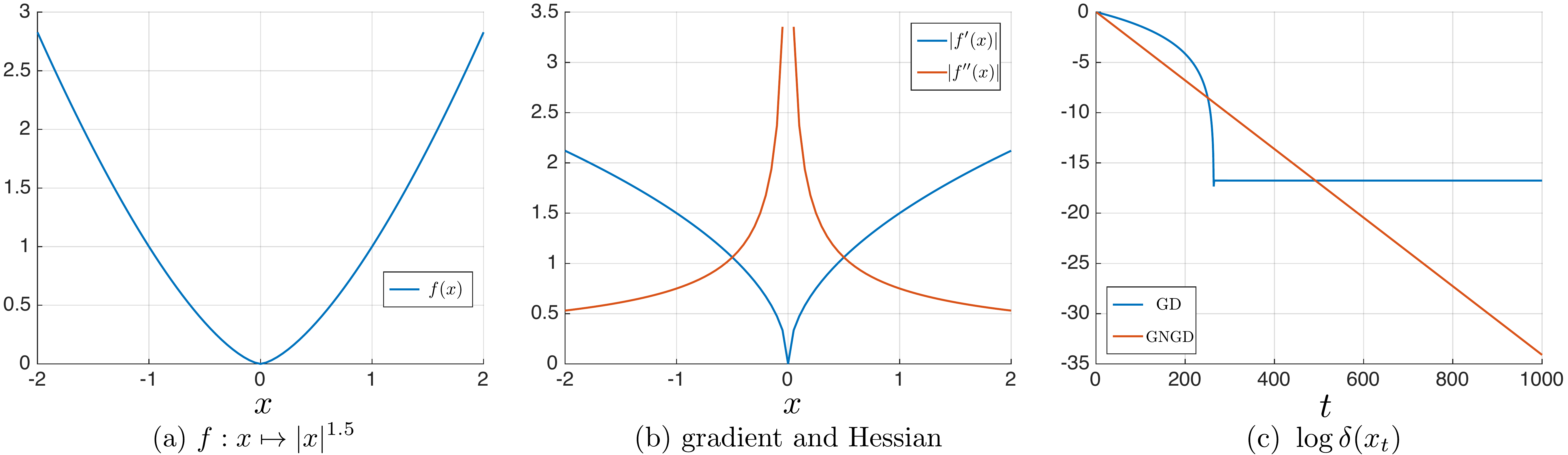}
\caption{GD and GNGD on $f: x \mapsto | x |^p$, $p = 1.5$.
} 
\label{fig:example_power_1p5}
\end{figure*}

\cref{fig:example_power_1p5}(a) shows the image of $f: x \mapsto | x |^{1.5}$. As shown in subfigure (b), the gradient of $f$ exists at $x = 0$, and the Hessian $\left| f^{\prime\prime}(x) \right| \to \infty$ as $x \to 0$. The results of GD with $\eta = 0.005$ and GNGD are presented in subfigure (c). The sub-optimality of GD update decreased for some time, and then it increased later. This is due to the Hessian is unbounded near $x = 0$, and thus constant learning rates cannot guarantee monotonic progresses for GD. On the other hand, GNGD with $\eta = 0.01$ enjoys $O(e^{- c \cdot t})$ convergence rate, verifying the results in the case (3) in \cref{thm:general_optimization_main_result_1}.

\subsection{$f: x \mapsto | x |^p,$ $p > 2$}

As shown in \cref{prop:absulte_power_p}, with $p \in (1, 2)$, $f: x \mapsto | x |^p$ satisfies N\L{} inequality with $\xi = 1/p \in (0, 1/2)$. As shown in \cref{fig:example_power_4}(a), the spectral radius of Hessian approaches $0$ as $x \to 0$, which is the case (1) in \cref{thm:general_optimization_main_result_1}.
\begin{figure}[ht]
\centering
\includegraphics[width=0.8\linewidth]{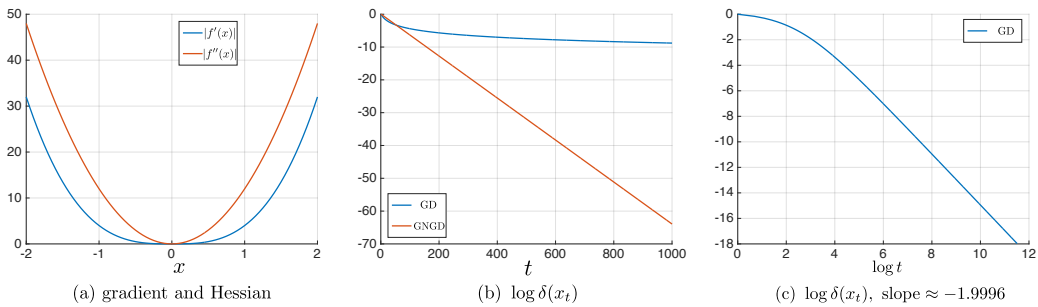}
\caption{GD and GNGD on $f: x \mapsto | x |^p$, $p = 4$.} 
\label{fig:example_power_4}
\end{figure}

Subfigure (c) shows that the standard GD with constant learning rate $\eta = 0.01$ achieves sublinear rate about $O(1/t^2)$, while subfigure (b) shows that GNGD with $\eta = 0.01$ enjoys linear rate $O(e^{-c \cdot t})$, verifying \cref{thm:general_optimization_main_result_1}.

\subsection{Convergence Rates on GLM}

\cref{thm:final_rates_normalized_glm_sigmoid_realizable} proves linear convergence rates $O(e^{-c \cdot t})$ for both GD and GNGD on GLM. We compare GD, NGD \citep{hazan2015beyond}, and GNGD on GLM, as shown in \cref{fig:glm_gd_ngd_gngd_convergence_rates}.
\begin{figure*}[ht]
\centering
\includegraphics[width=0.8\linewidth]{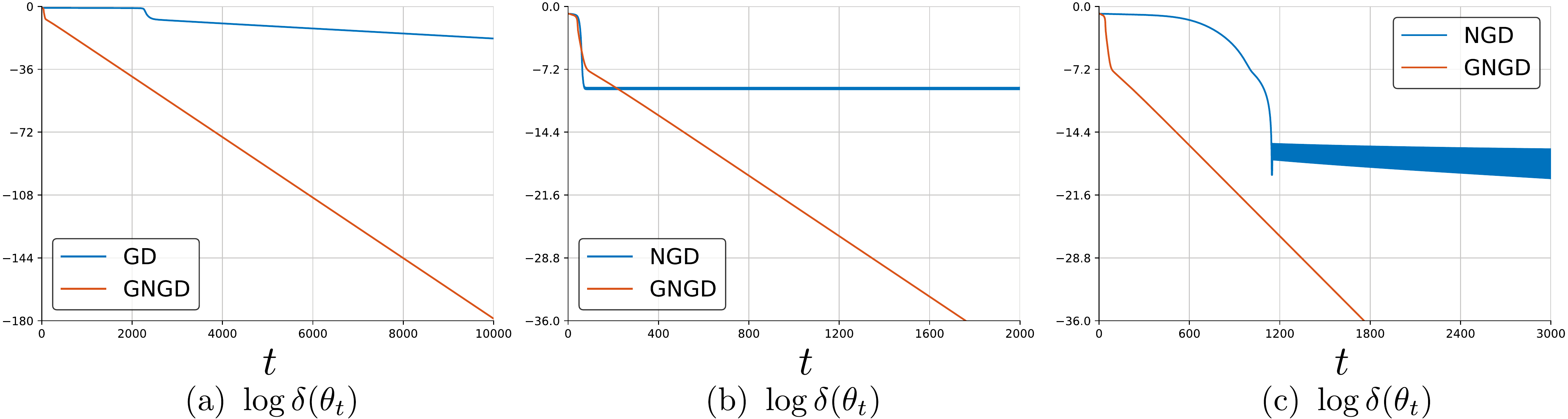}
\caption{Convergence rates for GD, NGD, and GNGD on GLM.
} 
\label{fig:glm_gd_ngd_gngd_convergence_rates}
\end{figure*}

Subfigure (a) presents the results of GD with $\eta = 0.09$ and GNGD with $\eta = 0.09$. Both GD and GNGD achieve linear $O(e^{-c \cdot t})$ rates, verifying \cref{thm:final_rates_normalized_glm_sigmoid_realizable}. GD suffers from the plateaus at the early-stage optimization, which is consistent with \cref{fig:example_glm_sigmoid} and the explanations after \cref{thm:final_rates_normalized_glm_sigmoid_realizable}. On the other hand, the slopes indicate that GNGD converges strictly faster than GD, which justifies the constant dependences ($C \ge C^2$) in \cref{thm:final_rates_normalized_glm_sigmoid_realizable}. Subfigure (b) shows that standard NGD \citep{hazan2015beyond} with constant learning rate $\eta  = 0.09$ does not converge. The NGD update keeps oscillating, which verifies our argument of using standard normalization for all $t \ge 1$ is not a good idea. Subfigure (c) presents the NGD using adaptive learning rate $\eta_t = \frac{0.09}{\sqrt{t}}$, which has faster convergence than NGD with constant $\eta$. However, GNGD still significantly outperforms NGD with $\eta_t = \frac{0.09}{\sqrt{t}}$, verifying the $O(e^{-c \cdot t})$ in \cref{thm:final_rates_normalized_glm_sigmoid_realizable} and $O(1/\sqrt{t})$ in \cref{thm:ngd_glm_sigmoid_convergence_rate}.

\subsection{Tree MDPs}

\cref{fig:synthetic_tree_pg_gnpg} shows the results for
PG and GNPG beyond one-state MDPs. The environment is a synthetic tree with height $h$ and branching
factor $b$. The total number of states is
\begin{align}
    S = \sum_{i=0}^{h-1}{b^i}.
\end{align}
The discount factor $\gamma = 0.99$, and we set $\mu = \rho$ (e.g., in \cref{alg:normalized_policy_gradient_softmax} and \cref{thm:final_rates_normalized_softmax_pg_general}), where $\rho(s_0) = 1$ for the root state $s_0$. For PG, in each iteration, we calculate the policy gradient (\cref{lem:policy_gradient_softmax}) to do one update. For GNPG, \cref{alg:normalized_policy_gradient_softmax} is used.

Subfigures (a) and (b) show the results for $h = b = 4$, and $S = 85$. The learning rate is $\eta = 0.02$ for PG and GNPG. Subfigures (c) and (d) show the results for $h = 5$ and $b = 4$, and $S = 341$. The learning rate is $\eta = 0.05$ for PG and GNPG. 
\begin{figure*}[ht]
\centering
\includegraphics[width=1.0\linewidth]{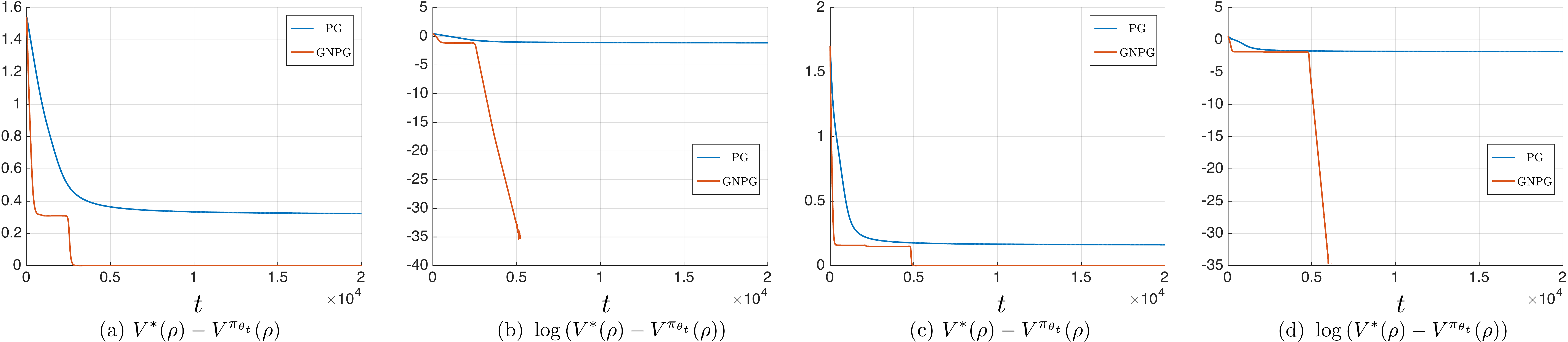}
\caption{Results for PG and GNPG on tree MDPs. In (a) and (b), $S = 85$. In (c) and (d), $S = 341$.
} 
\label{fig:synthetic_tree_pg_gnpg}
\end{figure*}

\end{document}